\documentclass[12pt,a4paper]{article}

\usepackage{amsthm}
\usepackage{amsmath}
\usepackage{mathtools}
\usepackage{amssymb}
\usepackage{cite}
\usepackage{ulem}
\usepackage{stackrel}
\usepackage{enumerate}
\usepackage[table,xcdraw]{xcolor}
\usepackage[printonlyused]{acronym}
\usepackage{dsfont}
\usepackage{xfrac}
\usepackage[acronym,toc,shortcuts, nohypertypes={acronym}]{glossaries}
\usepackage[hidelinks]{hyperref}       

\newtheorem{lemma}{Lemma}
\newtheorem{prop}{Proposition}
\newtheorem{theorem}{Theorem}
\newtheorem{cor}{Corollary}
\theoremstyle{definition}
\newtheorem{remark}{Remark}
\normalem

	\definecolor{blue-violet}{rgb}{0.54, 0.17, 0.89}

\usepackage{subcaption}
\usepackage{float}

\def\E{\mathbb{E}}
\def\RR{\mathbb{R}}
\def\eps{\epsilon}
\def\snr{\mathsf{SNR}}
\def\lambdamax{\lambda_{\text{max}}}
\def\Ytag{Y'}
\def\YY{Y}
\def\X{\bs{Y}}
\def\bX{\bs{X}}
\def\bY{\bs{Y}}
\def\Y{\bs{Y}'}
\def\x{\bs{y}}
\def\bx{\bs{x}}
\def\by{\bs{y}}
\def\U{\bs{U}}
\def\Q{\bs{Q}}
\def\H{\bs{X}}
\def\h{\bs{x}}
\def\D{\bs{D}}
\def\P{\bs{P}}
\def\Z{\bs{Z}}
\def\z{\bs{z}}
\def\V{\bs{V}}
\def\W{\bs{W}}
\def\I{\bs{I}}
\def\A{\bs{A}}
\def\B{\bs{B}}
\def\F{\bs{F}}
\def\a{\bs{a}}
\def\T{\bs{T}}
\def\t{\bs{t}}
\def\S{\bs{S}}
\def\G{\bs{G}}
\def\L{\bs{L}}
\def\s{\bs{s}}
\def\g{\bs{g}}
\def\SS{S}
\def\ss{s}
\def\GG{G}
\def\dW{\mathcal{W}}
\def\vmuQ{\bs{\mu}}
\def\muQ{\mu}
\DeclareMathOperator{\Noise}{\mathsf{Noise}}
\DeclareMathOperator{\pBias}{\mathsf{p-Bias}}

\DeclareMathOperator{\Var}{\mathsf{Var}}

\DeclareMathOperator{\trace}{Tr}

\newcommand{\bs}[1]{\boldsymbol{#1}}

\newcommand{\indfunc}[1]{\mathds{1}\left(#1\right)}
\DeclareMathOperator*{\argmin}{\arg\!\min}

\newif\ifpaperonly
\paperonlytrue
 \paperonlyfalse

\def\bt{\bs{\beta}}
\def\hbt{\hat{\bt}}
\def\hbtMN{\hbt_{\mathrm{MN}}}
\def\hbtB{\hbt_{\mathrm{BMN}}}

\def\hbtSB{\hbt_{\mathrm{SBMN}}}

\def\inAS{\overset{\text{a.s}}{\longrightarrow}}
\def\inP{\overset{P}{\longrightarrow}}
\def\inL1{\overset{L^1}{\longrightarrow}}

\newacronym{SGD}{SGD}{Stochastic Gradient Descent}
\newacronym{GD}{GD}{Gradient Descent}
\newacronym{min-norm}{min-norm}{minimum norm}
\newacronym{mse}{MSE}{minimum squared error}
\newacronym{snr}{SNR}{signal-to-noise ratio}
\newacronym{batch-min-norm}{BMN}{batch minimum norm}
\newacronym{s-batch-min-norm}{SBMN}{shrinkage batch minimum norm}
\newacronym{batch-ridge}{BR}{batch ridge regression}

\usepackage{scalerel,stackengine}
\stackMath
\newcommand\reallywidehat[1]{%
\savestack{\tmpbox}{\stretchto{%
  \scaleto{%
    \scalerel*[\widthof{\ensuremath{#1}}]{\kern-.6pt\bigwedge\kern-.6pt}%
    {\rule[-\textheight/2]{1ex}{\textheight}}
  }{\textheight}%
}{0.5ex}}%
\stackon[1pt]{#1}{\tmpbox}%
}
\parskip 1ex

\begin{document}

\title{Batches Stabilize the Minimum Norm Risk in High-Dimensional Overparametrized  Linear Regression}

\author{%
Shahar~Stein~Ioushua,    Inbar~Hasidim,  Ofer~Shayevitz and  Meir~Feder\thanks{The authors are with the Department of EE--Systems, Tel Aviv University, Tel Aviv, Israel \{steinioushua@mail.tau.ac.il,inbarhasidim@mail.tau.ac.il, ofersha@eng.tau.ac.il,meir@tauex.tau.ac.il\}. This work was supported by ISF grant no. 1766/22 and 819/20.}
}
\date{}

\maketitle

\begin{abstract}
    Learning algorithms that divide the data into batches are prevalent in many machine-learning applications, typically offering useful trade-offs between computational efficiency and performance. In this paper, we examine the benefits of batch-partitioning through the lens of a minimum-norm overparametrized linear regression model with isotropic Gaussian features. We suggest a natural small-batch version of the minimum-norm estimator and derive bounds on its quadratic risk. We then characterize the optimal batch size and show it is inversely proportional to the noise level, as well as to the overparametrization ratio. In contrast to minimum-norm, our estimator admits a stable risk behavior that is monotonically increasing in the overparametrization ratio, eliminating both the blowup at the interpolation point and the double-descent phenomenon. We further show that shrinking the batch minimum-norm estimator by a factor equal to the Weiner coefficient further stabilizes it and results in lower quadratic risk in all settings. Interestingly, we observe that the implicit regularization offered by the batch partition is partially explained by feature overlap between the batches. Our bound is derived via a novel combination of techniques, in particular normal approximation in the Wasserstein metric of noisy projections over random subspaces. 
\end{abstract}

\section{Introduction}

Batch-based algorithms are used in various machine-learning problems. Particularly, partition into batches is natural in distributed settings, where data is either collected in batches by remote sensors that can send a small number of bits to a central server, or collected locally but offloaded to multiple remote workers for computational savings, see e.g. \cite{zhang2012communication,zhang2015divide,richtarik2016distributed,dean2012large,newman2009distributed,verbraeken2020survey,dobriban2020wonder,dobriban2021distributed,mucke2022data,lee2015communication}.  Learning in batches is also employed in centralized settings; this is often done to reduce computational load but is also known (usually empirically) to sometimes achieve better convergence, generalization, and stability, see e.g.,~\cite{dean2012large,goyal2017accurate,hoffer2017train}.  One of the most basic and prevalent learning tasks is linear regression, which has been extensively studied in both centralized and distributed settings.
Linear regression is of particular contemporary interest in the overparametrized regime, where the number of parameters exceeds the number of samples. In this regime, there are infinitely many interpolators, and a common regularization method is to pick the \ac{min-norm} solution, i.e., the interpolator whose $\ell^2$ norm is minimal. However, this method requires inverting a matrix whose dimensions are the number of samples, a task that can be computationally costly and also result in a non-stable risk, growing unbounded close to the interpolation point~\cite{marchenko1967distribution,hastie2022surprises}. Performing linear regression separately in batches and combining the solutions (usually by averaging) can help with the computational aspects, and has been studied before mainly for large (linear in the number of samples) batches \cite{dobriban2020wonder,dobriban2021distributed,mucke2022data,lee2015communication}. However, such solutions break down and cannot control the risk for sublinear batch size; they also shed no light on the performance benefits heuristically known to be offered by small batches. Can the \ac{min-norm} solution benefit more from small batch partitioning? We answer this question in the affirmative, by suggesting a simple and natural \ac{min-norm}-based small-batch regression algorithm, and showing it stabilizes the \ac{min-norm} risk. We discuss the ramifications of our result in several settings. 

\subsection{Our Contribution}

We consider a linear model with isotropic features, in the overparametrized regime with $n$ data samples and $p > n$ parameters, where the $n\times p$ feature matrix is i.i.d.~Gaussian. The risk attained by \ac{min-norm} in this setting and related ones was previously analyzed in~\cite{hastie2022surprises}.  Here, we suggest the following small batch variation of \ac{min-norm}. First, the data is partitioned into small disjoint batches of equal size $b$, and a simple \ac{min-norm} estimator is computed separately for each batch. Then, the resulting $\frac{n}{b}$ weak estimators are pooled together to form a new $\frac{n}{b}\times p$ feature matrix for a modified ``linear'' model, with suitably weighted modified samples. The modified model is not truly linear, since both the new features and noise depend on the parameter. Finally, a \ac{min-norm} estimator is computed in this new setting, yielding our suggested \ac{batch-min-norm} estimator. Note that while the modified model is far more overparametrized than the original model (by a factor of $b$), its features are now favorably correlated / better aligned with the underlying parameters. We shall see that this trade-off can be beneficial. 
\begin{figure*}[ht]
\begin{minipage}{.5\textwidth}
  \centering
  \includegraphics[width=1\linewidth]{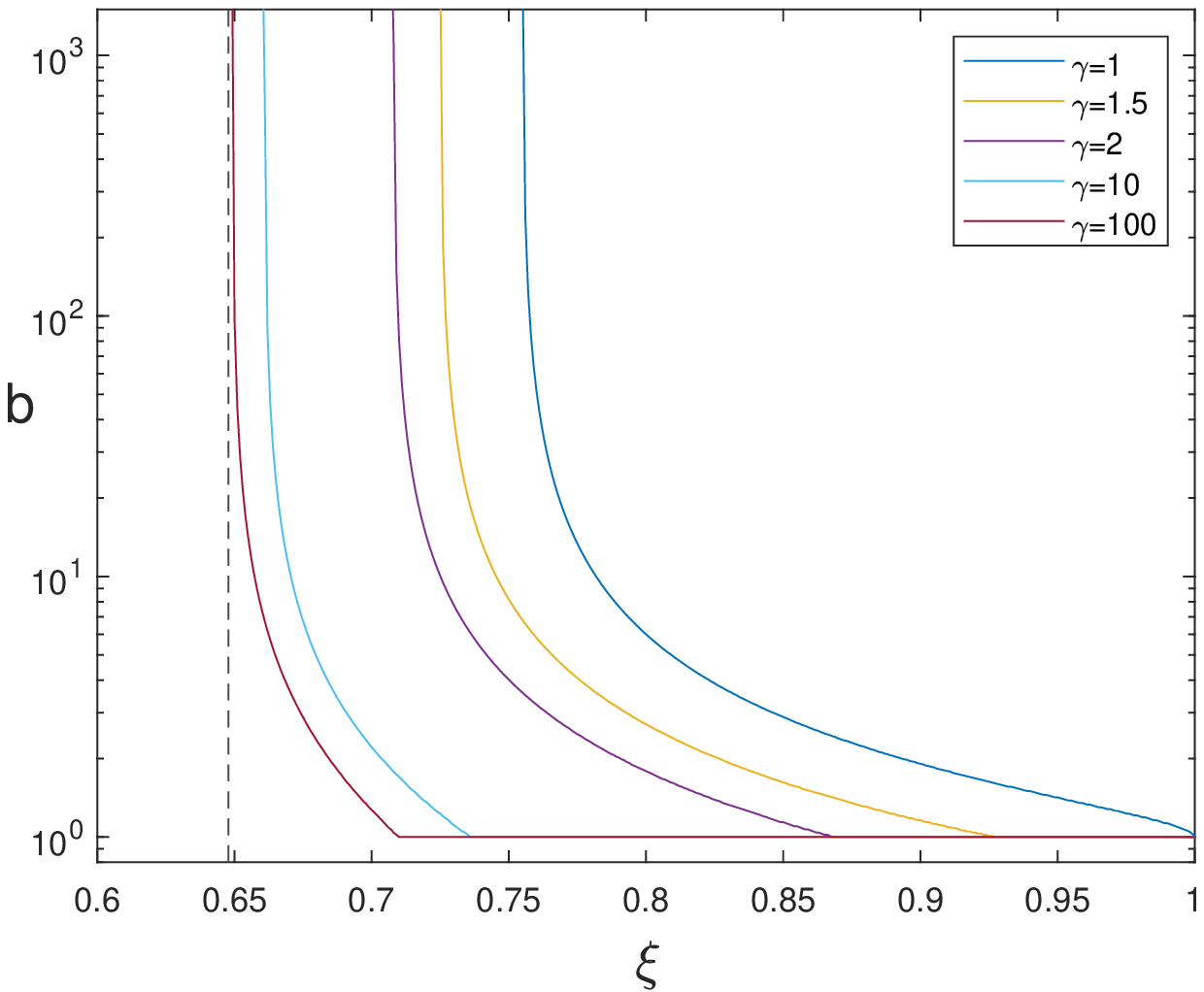}
  \caption*{(a)}
\end{minipage}%
\begin{minipage}{.5\textwidth}
  \centering
  \includegraphics[width=1\linewidth]{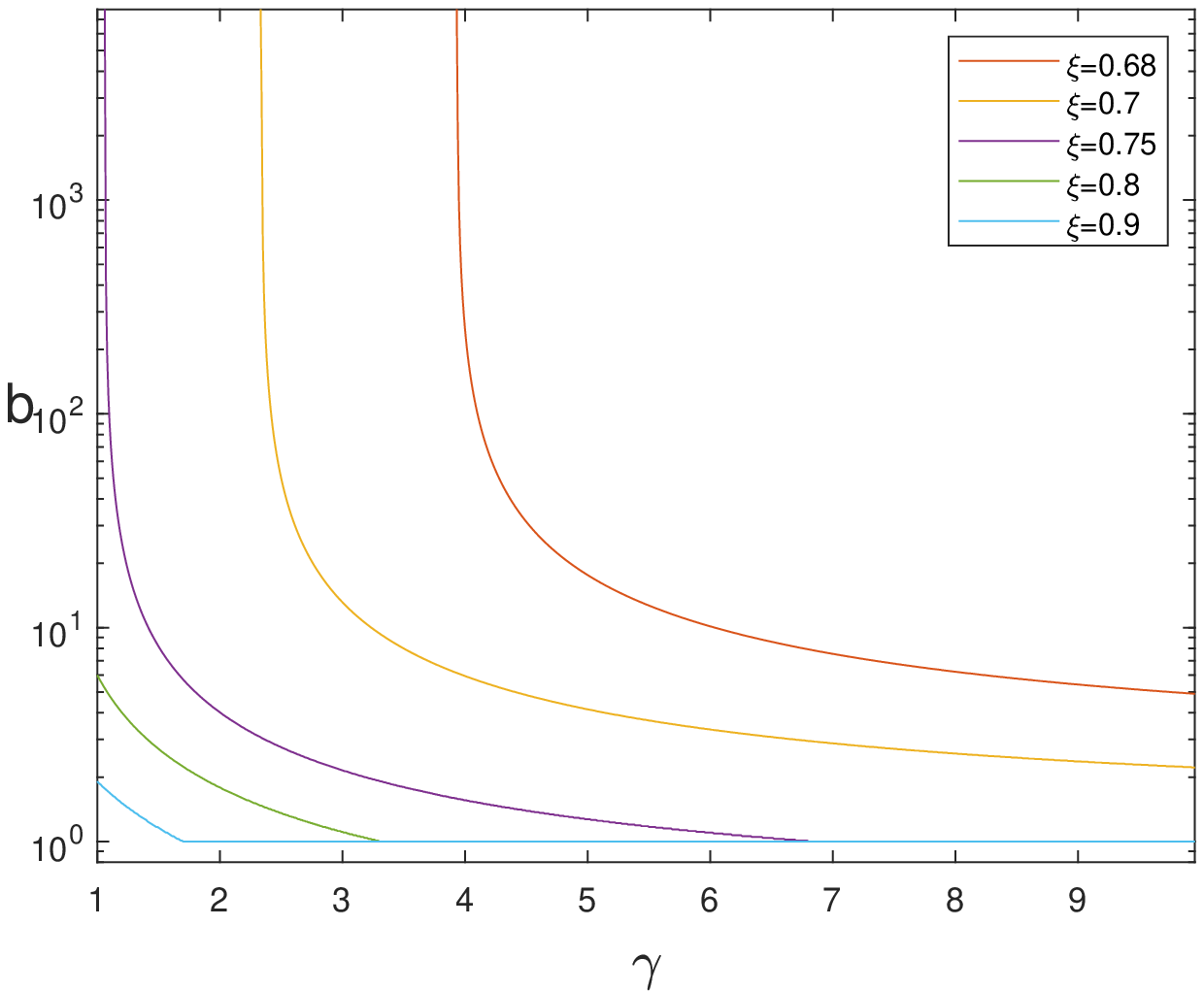}
  \caption*{(b)}
\end{minipage}
  \caption{\textit{Optimal batch size vs. (a) normalized signal-to-noise ratio  $\xi = \frac{\snr}{1+\snr}$, and (b) overparametrization ratio $\gamma$. When $\xi<0.6478$, the optimal batch size $b\rightarrow \infty$, for any $\gamma>1$.}}\label{fig:opt_b} 
\end{figure*}

 To that end, we derive upper and lower bounds on the risk obtained by our estimator, in the limit of $n,p\to \infty$ with a fixed overparametrization ratio $\gamma = p/n$, as a function of the $\snr$. When compared to simulations, our upper bound is demonstrated to be quite tight. We then analytically find the batch size minimizing the upper bound, and show that it is inversely proportional to both $\gamma$ and $\snr$; in particular, there is a low-$\snr$ threshold point below which increasing the batch size (after taking $n,p\to \infty$) is always beneficial (albeit at very low $\snr$ we can do worse than the null solution), see  Figure~\ref{fig:opt_b}. Unlike \ac{min-norm}, and similarly to optimally-tuned ridge regression \cite{nakkiranoptimal}, the risk attained by \ac{batch-min-norm} is generally stable; it is monotonically increasing in $\gamma$, does not explode near the interpolation point $\gamma=1$, and does not exhibit a \textit{double descent} phenomenon \cite{hastie2022surprises}  (all this assuming $\snr$ $ \geq 1$, see Figure~\ref{fig:opt_algs_intro}). It is (trivially) always at least as good (and often much better) than min-norm. Another interesting observation is that the batch algorithm exactly coincides with the regular \ac{min-norm} algorithm for any batch size, whenever the feature matrix has orthogonal rows. Thus, somewhat intriguingly, the reason that batches are useful can be partially attributed to the fact that feature vectors are slightly linearly dependent between batches, i.e., there is a small overlap between the subspaces spanned by the batches. From a technical perspective, as we shall later see, this overlap implicitly regularizes the noise amplification suffered by the standard min-norm. Finally, we describe a shrinkage variation of our \ac{batch-min-norm} estimator, which has a stable risk for all $\snr$ levels and always outperforms both \ac{min-norm} and the null solution.

 The derivation of the upper bound is the main technical contribution of the paper. The main difficulty lies in the second step of the algorithm, namely analyzing the \ac{min-norm} with the feature matrix comprised of per-batch \ac{min-norm} estimators. Note that this step is no longer under a standard linear model; in particular, the new feature matrix and the corresponding noise vector depend on the parameters, and the noise vector also depends on the feature matrix, in a generally non-linear way. This poses a significant technical barrier requiring the use of several nontrivial mathematical tools. To overcome this challenge, we write the risk of the algorithm as the sum of a parameter bias term and an excess noise term. To compute the bias of the algorithm, we first write it as a recursive perturbed projection onto a random per-batch subspace, drawn from the Haar measure on the Stiefel manifold. We show that the statistics of these projections are asymptotically close in the Wasserstein metric to i.i.d.~Gaussian vectors, a fact that allows us to obtain a recursive expression for the bias as batches are being added, with a suitable control over the error term. We then translate this recursion into a certain differential equation, whose solution yields the asymptotic expression for the bias. To bound the excess noise of the algorithm, we show that it converges to a variance of a Gaussian mixture with $\chi^2$-distributed weights, projected onto the row-space of a large Wishart matrix. 

\begin{figure*}[ht]
   \begin{minipage}[b]{1\textwidth}
  \begin{minipage}[b]{0.56\textwidth}
\includegraphics[width=\columnwidth]{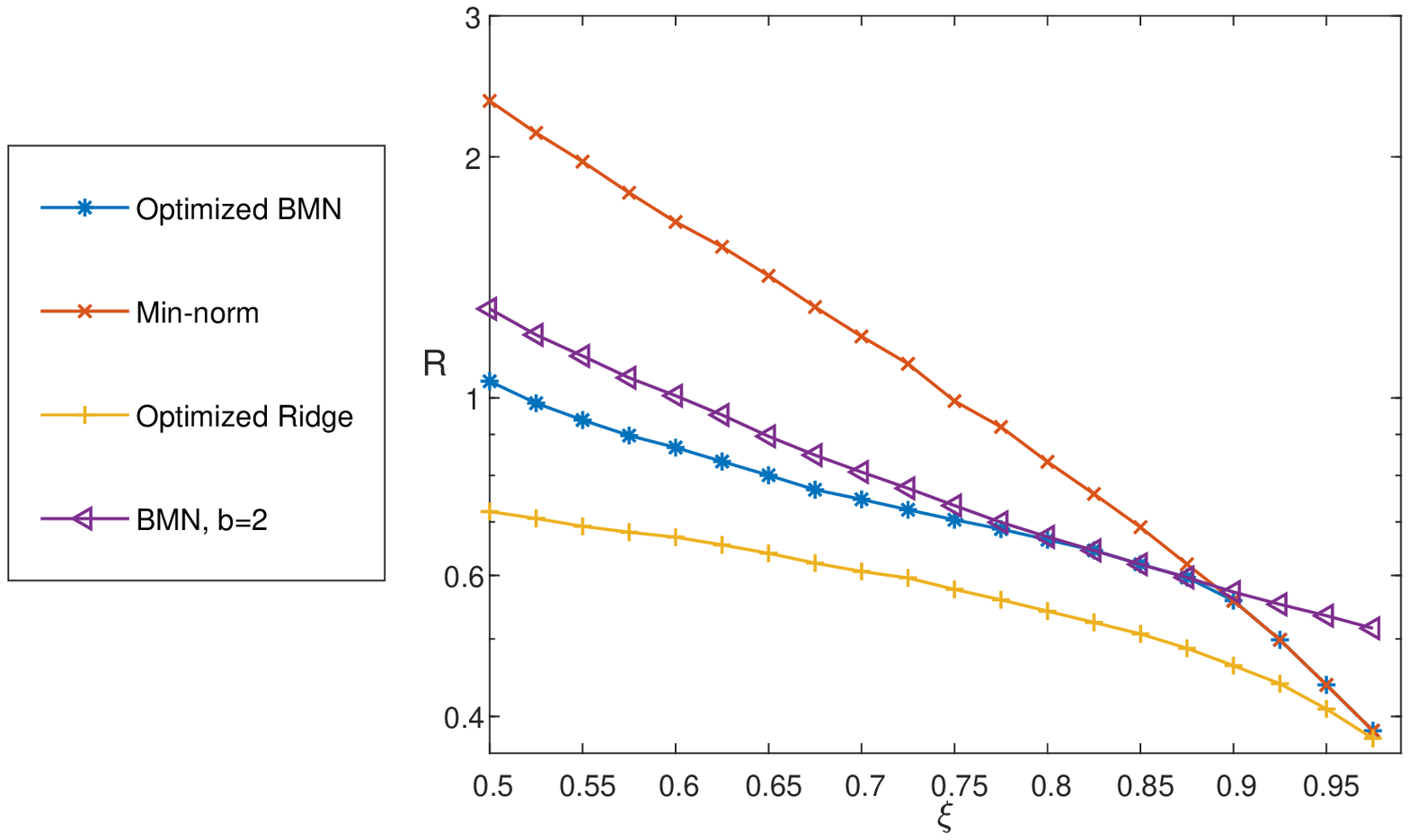}
\caption*{\qquad\qquad(a)}
\end{minipage}
\begin{minipage}[b]{0.5\textwidth}
\includegraphics[width=0.84\columnwidth]{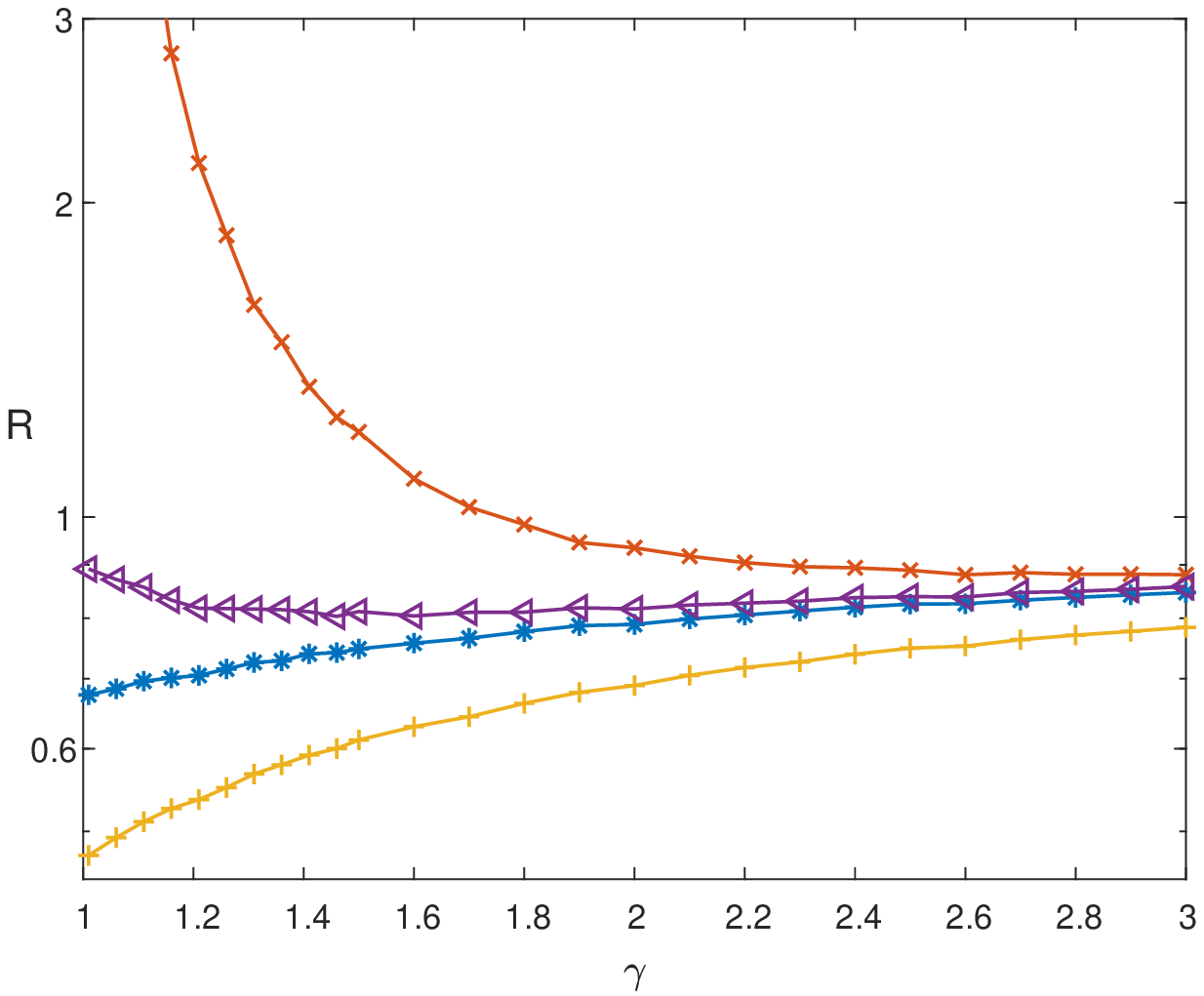}
\caption*{(b)}
\end{minipage}%
\end{minipage}%
\caption{\textit{Risk of \ac{batch-min-norm} with optimal batch size vs. (a) normalized signal-to-noise ratio  $\xi = \frac{\snr}{1+\snr}$, with $\gamma=1.5$, and (b) overparametrization ratio $\gamma$ with $\xi =0.7$. Optimized ridge is ridge regression with the optimal regularization parameter. 
\label{fig:opt_algs_intro}
}}
\end{figure*}

\subsection{Ramifications}\label{sec:remifications}
\textbf{Distributed linear regression.} 
In this setting, the goal is usually to offload the regression task from the main server by distributing it between multiple workers; the main server then merges the estimates given by the workers. This merging is typically done by simple averaging, e.g., \cite{zhang2012communication,zhang2015divide,dobriban2020wonder,mucke2022data}. The number of workers is typically large but fixed, hence the batch size is linear in the sample size. The regime of sublinear batch sizes is less explored in the literature, perhaps due to practical reasons. When the batch size is sublinear, the server-averaging approach breaks down since its risk is trivially dominated by the per-batch bias, and hence it attains the null risk asymptotically. In contrast, our algorithm projects the modified observations onto the subspace spanned by the entire collection of weak estimators. Hence, the resulting estimator is much less biased than each weak estimator separately. Therefore, our algorithm is far superior to server-averaging for fixed batch size. Numerical results indicate that this is also true in the general sublinear regime. 

\textbf{Linear regression under real-time communication constraints.} Consider a low-complexity sensor sequentially viewing samples (feature vectors and observations). The sensor cannot collect all the samples and perform the regression task, hence it offloads the task to a remote server, with which it can communicate over a rate-limited channel. Due to the communication constraint, the sensor cannot send all of its samples to the server. Our algorithm gives a natural solution: choose a batch size $b$ large enough such that sending $(p+1)/b$ samples per unit of time becomes possible, and send the \ac{min-norm} solution of each batch to the server, who in turn will run the second step of the algorithm. Our bounds show how the risk behaves as a function of $b$. For reasons already mentioned, this algorithm is far better than simple solutions such as averaging or trivially sending one in $b$ samples. We note that the general topic of statistical inference under communication constraints has been extensively explored (see e.g. \cite{amari1998statistical,zhang2013information,braverman2016communication,han2018geometric,zhang1988estimation,hadar2019communication,hadar2019distributed} but typically for a fixed parameter dimension, which does not cover the overparametrized setting we consider here.

\textbf{Mini-batch learning,} High-dimensional overparametrized linear regression is known to sometimes serve as a reasonable proxy (via linearization) to more complex settings such as deep neural networks \cite{jacot2018neural,du2018gradient,du2019gradient,allen2019convergence,chizat2019lazy}. Furthermore, \ac{min-norm} is equivalent to full \ac{GD} in linear regression, and even exhibits similar behavior observed when using \ac{GD} in complex models, e.g. the double-descent \cite{hastie2022surprises}. Learning using small batches, e.g. mini-batch SGD, is a common approach that originated from computational considerations\cite{robbins1951stochastic,goyal2017accurate}, but was also observed to improve generalization \cite{keskar2017large, masters2018revisiting, lin2019don,he2019control, kandel2020effect, lin2022analysis}. There is hence a clear impetus to study the impact of small batches on \ac{min-norm}-flavor algorithms in the linear regression setting. Indeed, mini-batch SGD for linear regression has been recently studied in~\cite{paquette2021sgd}, who gave closed-form solutions for the risk in terms of Volterra integral equations (see also \cite{gerbelot2022rigorous} for a more general setting using mean field theory). In particular, and in contrast to a practically observed phenomenon in deep networks,~\cite{paquette2021sgd} showed that the linear regression risk of mini-batch SGD does not depend on the batch size $b$ as long as  $b \ll n^{1/5}$. Our \ac{batch-min-norm} algorithm, while clearly not equivalent to mini-batch SGD in the linear regression setting, does exhibit the small-batch gain phenomenon, and hence could perhaps nevertheless shed some light on similar effects empirically observed in larger models. In particular, the batch regularization effect we observe can be traced back to a data ``overlap'' between the batches, and it is interesting to explore whether this effect manifests itself in other settings. Moreover, from a high-level perspective, our algorithm ``summarizes'' each batch to create a ``representative sample'', and then trains again via \ac{GD} only on these representative samples. It is interesting to explore whether this approach can be rigorously generalized to more complex models.

\section{Organization}

The paper is organized as follows. In Section~\ref{sec:perlim}, we introduce notation and review some required preliminary results. In Section~\ref{sec:problem_formulation}, we describe the isotropic Gaussian linear regression problem and the well-known \ac{min-norm} estimator. Then, in Section~\ref{sec:algorithm}, we introduce the \ac{batch-min-norm} estimator, our batch version of the \ac{min-norm} estimator. In Section~\ref{sec:main_result}, we present our main results -- upper and lower bounds on the limiting risk of the \ac{batch-min-norm} estimator as a function of the batch size, $\snr$ and overparametrization ratio. The optimal batch size is discussed in Subsection~\ref{sec:optimal_batch_size}. The main technical part of the paper is presented in Section~\ref{sec:proof_of_main_result}, where we prove the main result by deriving the bounds on the parameter bias  (Subsection~\ref{sec:semi_noisy}) and excess noise (Subsection~\ref{sec:noisy}) of our \ac{batch-min-norm} algorithm. The gap between the bounds is discussed in Subsection~\ref{sec:bounds_gap}.  We then proceed to suggest a shrinkage variation of the \ac{batch-min-norm} estimator in Section~\ref{sec:de_meaned}. In Section~\ref{sec:sims} we evaluate the performance of our estimators via numerical experiments, and in Section~\ref{sec:discussion} we discuss some of the limitations of our results and review possible future work.

\section{Preliminaries}\label{sec:perlim}
\subsection{Notation}
We write $x,\x$, and $A$ for scalars, vectors, and matrices, respectively. Vectors can be either row or column, as clear from the context. We use  $X$, $\bX$, and $\A$ to denote random variables, random vectors, and random matrices, respectively. We write $\|A\|_2$ and sometimes $\|A\|$ for the $\ell^2$ norm of the matrix $A$, and $\|A\|_{op}$, $\|A\|_F$ for the operator and Frobenius norms of $A$, respectively. 
For a real-valued p.s.d.~matrix $A$, we write $\lambda_i(A)$ to denote the $i$-th largest eigenvalue of $A$, and $\lambda_\text{min}(A)$, $\lambdamax(A)$ for the minimal and maximal eigenvalues of $A$, respectively. 
We say that $b$ orthonormal random column vectors $\U_1,\cdots,\U_b\in\mathbb{R}^n$ are \textit{uniform} or \textit{uniformly drawn},  if the matrix $\U=[\U_1,\cdots,\U_b]$ is drawn from the Haar measure on the Stiefel manifold $\mathbb{V}_b(\mathbb{R}^n) \triangleq \{\A\in\mathbb{R}^{b\times n}: \A\A^T=\I_{b}\}$, where
$\I_b$ is the $b\times b$ identity matrix.  We sometimes drop the subscript and write $\I$, when the dimension is clear from context.
We write $X_n\inP X$,  $X_n\inAS X$, and $X_n\inL1 X$ to indicate that the sequence $X_1,X_2,\cdots$ of random variables converges to the random variable $X$ in probability,  almost surely (a.s.) and in $L^1$, respectively. Similar notation for (finite-dim) random vectors / matrices means convergences of all entries.

\subsection{Wasserstein Distance}
The $p$-Wasserstein distance between two probability measures $\mu$ and $\nu$ on $\mathbb{R}^n$ is
\begin{align}\label{eq:Wp_distance_def}
    \dW_p(\mu,\nu) \triangleq \left(\,\inf\E\|\bX-\bY\|_2^p\,\right)^{\frac{1}{p}},
\end{align}
where the infimum is taken over all random vector pairs $(\bX,\bY)$ with marginals $\bX \sim \mu$ and $\bY \sim \nu$. 
With a slight abuse of notations, we write $\dW_p(\bX,\bY)$ to indicate the $p$-Wasserstein distance between the corresponding probability measures of $\bX$ and $\bY$. 
Throughout this paper, we say Wasserstein distance to mean the $1$-Wasserstein distance $\dW_1(\mu,\nu)$, unless explicitly mentioned otherwise. 
The following theorem is the well-known dual representation of the $1$-Wasserstein distance. 
\begin{theorem}[$\dW_1$ duality (\cite{Kantorovich58duality})]\label{thrm:Wasser_dual_rep}
For any two probability measures $\mu$ and $\nu$ over $\RR^n$, it holds that 
\begin{align}\label{eq:Wasser_dual_rep}
    \dW_1(\mu,\nu) =  \sup\, \E f(\bX)-\E f(\bY),
\end{align}    
where $\bX\sim \mu$, $\bY\sim \nu$, and the supremum is taken over all $1$-Lipschitz functions $f:\RR^n\to \RR$. 
\end{theorem}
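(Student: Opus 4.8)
The plan is to prove the two inequalities $\dW_1(\mu,\nu) \ge \sup_f (\E f(\bX) - \E f(\bY))$ and $\dW_1(\mu,\nu) \le \sup_f(\E f(\bX) - \E f(\bY))$ separately, the first being elementary and the second carrying the entire content of the theorem. First I would dispatch the easy direction. Fix any coupling $\pi$ of $(\mu,\nu)$ and any $1$-Lipschitz $f:\RR^n\to\RR$. Since $f(x) - f(y) \le \|x-y\|_2$ pointwise, integrating against $\pi$ gives $\E f(\bX) - \E f(\bY) = \int (f(x)-f(y))\,d\pi \le \int \|x-y\|_2\,d\pi$, because the marginals of $\pi$ are $\mu$ and $\nu$. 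Taking the supremum over $f$ on the left and the infimum over couplings $\pi$ on the right yields $\sup_f(\E f(\bX) - \E f(\bY)) \le \dW_1(\mu,\nu)$. One should first reduce to $\mu,\nu$ with finite first moments, since otherwise both sides are $+\infty$ and there is nothing to prove.

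For the reverse inequality I would establish the general Kantorovich dual and then specialize it. Writing the transport problem $\inf_\pi \int c\,d\pi$ with $c(x,y)=\|x-y\|_2$ as a linear program over the convex set of couplings, I introduce dual variables $f,g$ for the two marginal constraints and form the Lagrangian
\begin{align*}
L(\pi,f,g) = \int \big(c(x,y)-f(x)-g(y)\big)\,d\pi(x,y) + \int f\,d\mu + \int g\,d\nu .
\end{align*}
Minimizing over nonnegative $\pi$ shows that $\inf_\pi L$ equals $\int f\,d\mu + \int g\,d\nu$ when $f(x)+g(y)\le c(x,y)$ for all $x,y$, and equals $-\infty$ otherwise; hence the dual functional is $\sup_{f+g\le c}\big(\int f\,d\mu + \int g\,d\nu\big)$. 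The substance of the theorem is the absence of a duality gap, i.e. that one may interchange $\inf_\pi \sup_{f,g}$ with $\sup_{f,g}\inf_\pi$.

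Finally I would collapse the general dual pair $(f,g)$ to a single $1$-Lipschitz potential via the $c$-transform. Given any feasible $(f,g)$, set $\bar f(x) = \inf_y\big(\|x-y\|_2 - g(y)\big)$. As an infimum of the $1$-Lipschitz maps $x\mapsto \|x-y\|_2 - g(y)$, the function $\bar f$ is itself $1$-Lipschitz; feasibility of $(f,g)$ gives $\bar f \ge f$, while $\bar f(x)+g(y)\le \|x-y\|_2$ still holds by construction, so $(\bar f, g)$ is feasible with no smaller objective. Since $\bar f$ is $1$-Lipschitz, a short computation shows its $c$-transform satisfies $\inf_x\big(\|x-y\|_2-\bar f(x)\big)=-\bar f(y)$, whence $g \le -\bar f$, and replacing $g$ by $-\bar f$ again only increases $\int g\,d\nu$ while keeping feasibility (as $\bar f(x)-\bar f(y)\le\|x-y\|_2$). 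The dual therefore reduces to $\sup\{\int \bar f\,d\mu - \int \bar f\,d\nu : \bar f \text{ is } 1\text{-Lipschitz}\}$, which is exactly the claimed right-hand side.

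The main obstacle is the no-duality-gap step, where all the analytic difficulty resides. A clean route is a minimax theorem (e.g. Sion's) applied to the bilinear pairing $(\pi,(f,g)) \mapsto L$, but this is delicate because $\RR^n$ is non-compact and the relevant spaces of couplings and potentials are infinite-dimensional, so neither variable ranges over a compact convex set in an obvious topology. I would address this by exploiting that $\mu,\nu$ are tight Radon measures with finite first moments: restrict attention to couplings supported on large balls and to bounded continuous potentials, prove duality on these truncations by a compact minimax argument, and then pass to the limit, using tightness to remove the spatial truncation and monotone/dominated convergence to recover the unbounded potentials. An equivalent and perhaps more economical route is to invoke the Fenchel--Rockafellar theorem directly in the pairing between $C_b(\RR^n\times\RR^n)$ and finite signed measures, verifying the required continuity and constraint-qualification condition at the cost $c$. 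Either way, the crux is justifying the interchange of $\inf$ and $\sup$ in infinite dimensions.
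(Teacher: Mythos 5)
The paper does not prove this statement at all: Theorem~\ref{thrm:Wasser_dual_rep} is imported verbatim from the literature (cited to Kantorovich) and used as a black box, so there is no ``paper's own proof'' to compare against. Judged on its own terms, your outline follows the classical route and is essentially sound. The easy inequality $\sup_f(\E f(\bX)-\E f(\bY))\le \dW_1(\mu,\nu)$ is correctly dispatched by integrating $f(x)-f(y)\le\|x-y\|_2$ against an arbitrary coupling. The reduction of the general Kantorovich dual pair $(f,g)$ with $f(x)+g(y)\le\|x-y\|_2$ to a single $1$-Lipschitz potential via the $c$-transform is also correct, including the observation that for a metric cost the double transform of a $1$-Lipschitz function returns its negative, so one may take $g=-\bar f$. (One small point worth making explicit: $\bar f(x)=\inf_y(\|x-y\|_2-g(y))$ is $1$-Lipschitz only once you know it is finite at some point, which follows from feasibility of $(f,g)$ with $f$ not identically $-\infty$; and the reduction to measures with finite first moment should be stated before the easy direction as well, as you note.)

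The one place where the argument is a plan rather than a proof is exactly where you say it is: the absence of a duality gap. Naming Sion's minimax theorem or Fenchel--Rockafellar and sketching a truncation-and-limit strategy identifies the right tools, but the actual verification (the choice of function spaces, the constraint qualification or the compactness needed for the minimax, and the passage to the limit removing the spatial truncation) is the entire analytic content of the theorem and is not carried out. Since the paper itself only cites this result, I would regard your submission as a correct and well-organized proof \emph{sketch} of a standard theorem, with the central step deferred to standard references rather than proved; if a self-contained proof is required, the interchange of $\inf$ and $\sup$ must be completed in detail.
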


It was previously shown in \cite{chatterjee2007multivariate} that low-rank projection of uniformly random orthonormal matrix is close to Gaussian in Wasserstein distance. This result is summarized in the following theorem.

\begin{theorem}[Theorem 11 in \cite{chatterjee2007multivariate}]\label{thm:Chatterjeee_Wasserstein_thm}
Let $B_1, \cdots, B_k$ be linearly independent $p\times p$ matrices (i.e. the only linear combination of them which is equal to the zero matrix has all coefficients equal to zero) over $\mathbb{R}$ such that $\trace{(B_iB^T_i)}=p$ for  $i=1,\cdots, k$. 
Define $b_{ij} \triangleq\trace{(B_i B^T_j)}$, let $\G = [\GG_1, . . . , \GG_k]$ be a random vector whose components have Gaussian distribution, with zero mean and covariance matrix $C \triangleq \frac{1}{p}(b_{ij})$ and let $\U$ be a uniformly random orthogonal matrix. Then, for
\begin{align}
\S \triangleq \left[\trace{(B_1\U)}, \trace{(B_2\U)},\cdots , \trace{(B_k\U)}\right]\in \mathbb{R}^k,    
\end{align}
and any $p \geq 2$ we have
\begin{align}
    \dW_1(\S,\G)\leq \frac{k\sqrt{2\|C\|_{\textrm{op}}}}{p-1}.
\end{align}
\end{theorem}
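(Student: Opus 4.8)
Since the statement is Chatterjee and Meckes' multivariate normal approximation for linear functionals of a Haar-distributed orthogonal matrix, the route I would take is Stein's method for the multivariate Gaussian, implemented through the exchangeable-pair framework. The idea is to manufacture an exchangeable pair $(\U,\U')$ by applying a small random rotation to $\U$, say $\U'=R_\varepsilon\U$ where $R_\varepsilon$ rotates a uniformly chosen coordinate plane by a small fixed angle (equivalently, to pass to the infinitesimal limit and work with the generator $\mathcal{L}$ of Brownian motion on $O(p)$). Setting $\S'=[\trace(B_1\U'),\dots,\trace(B_k\U')]$, the pair $(\S,\S')$ is exchangeable, and the abstract Chatterjee--Meckes theorem bounds $\dW_1(\S,\G)$ in terms of (i) how closely $\E[\S'-\S\mid\S]$ is a linear function $-\Lambda\S$, (ii) how closely the conditional second moment of the increment reproduces the target covariance, and (iii) a third-moment remainder. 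By the duality of Theorem~\ref{thrm:Wasser_dual_rep} it is enough to produce the bound against every $1$-Lipschitz test function, so only first derivatives of the Stein solution need be controlled.

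Before invoking the abstract bound I would verify the two structural facts that make $\G\sim N(0,C)$ the correct limit. First, each coordinate $S_i=\trace(B_i\U)=\sum_{a,c}(B_i)_{ac}U_{ca}$ is a linear combination of entries of the defining representation of $O(p)$; these entries form a single eigenspace of $\mathcal{L}$, so $\E[\S'-\S\mid\S]=-\lambda_p\,\S$ exactly, with a spectral gap $\lambda_p$ that scales like $p-1$ under the standard normalization --- this eigenfunction property is the clean algebraic core of the proof and is precisely what furnishes the eventual $1/(p-1)$ rate. Second, a direct moment computation using $\E[U_{ca}U_{db}]=\tfrac1p\delta_{cd}\delta_{ab}$ gives $\E[S_iS_j]=\tfrac1p\trace(B_iB_j^T)=C_{ij}$, and by stationarity the rescaled conditional quadratic variation satisfies $\tfrac1{2\lambda_p}\E[(S_i'-S_i)(S_j'-S_j)\mid\S]\to C_{ij}$ in mean; the normalization hypothesis $\trace(B_iB_i^T)=p$ is exactly what pins the diagonal, and linear independence of the $B_i$ guarantees $C$ is nondegenerate so that $\|C\|_{op}$ is the right scale.

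It remains to assemble the quantitative bound. Fixing a $1$-Lipschitz $f$ and letting $\phi$ solve the associated Ornstein--Uhlenbeck Stein equation for $N(0,C)$, the error $\E f(\S)-\E f(\G)$ reduces to $\tfrac1{\lambda_p}\E\big[\sum_{i,j}(\Gamma_{ij}-\E\Gamma_{ij})\,\partial_i\partial_j\phi(\S)\big]$ plus a remainder from the curvature of the increment, where $\Gamma_{ij}$ is the rescaled conditional quadratic variation. The main obstacle is bounding these two error terms uniformly: one must control the fluctuation $\sum_{i,j}\Var^{1/2}(\Gamma_{ij})$ of the quadratic variation and the third-moment term $\sum_i\E|S_i'-S_i|^3$, each of which I expect to be smaller than the leading term by a factor of order $1/p$. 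Combined with the standard Hessian bound $\|\partial_i\partial_j\phi\|_\infty\lesssim\sqrt{\|C\|_{op}}$ for the OU Stein solution, summing over the $k$ coordinates and dividing by $\lambda_p\asymp p-1$ should collapse everything to $\tfrac{k\sqrt{2\|C\|_{op}}}{p-1}$. Tracking these constants sharply --- rather than the eigenfunction structure, which is transparent --- is the delicate part, and recovering exactly the factor $\sqrt2$ and the clean $p-1$ in the denominator is where the careful normalization on $O(p)$ must be respected.
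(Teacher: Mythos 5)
This statement is not proved in the paper at all: it is imported verbatim as Theorem~11 of the cited reference \cite{chatterjee2007multivariate}, so there is no internal proof to compare your proposal against. That said, your sketch correctly reconstructs the route taken in the original source: build an exchangeable pair $(\U,\U')$ by a small random rotation (or pass to the generator of Brownian motion on the orthogonal group), use the fact that the linear statistics $\trace(B_i\U)$ lie in the eigenspace of the generator corresponding to the defining representation so that $\E[\S'-\S\mid\U]=-\lambda\S$ holds exactly with $\lambda\asymp (p-1)/p^2$ up to normalization, compute $\E[S_iS_j]=\frac1p\trace(B_iB_j^T)=C_{ij}$ from $\E[U_{ca}U_{db}]=\frac1p\delta_{cd}\delta_{ab}$, and then feed everything into the abstract Chatterjee--Meckes exchangeable-pair bound for the multivariate normal.

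The gap is that the step you flag as "the delicate part" is in fact the entire technical content, and you only assert it. Concretely: (i) bounding $\sum_{i,j}\E\bigl|\Gamma_{ij}-\E\Gamma_{ij}\bigr|$, where $\Gamma_{ij}$ is the rescaled conditional quadratic variation, requires fourth-order mixed moments of the entries of a Haar-distributed orthogonal matrix (Weingarten-type identities), and it is only after this computation that one sees the fluctuation is $O(1/p)$ relative to the main term and that the constant works out to produce exactly $k\sqrt{2\|C\|_{\textrm{op}}}$; (ii) the claim that the relevant derivative bound for the Stein solution is $\lesssim\sqrt{\|C\|_{\textrm{op}}}$ needs care --- for a $1$-Lipschitz test function and target $N(0,C)$ the natural estimates on the Ornstein--Uhlenbeck solution involve $\|C^{-1/2}\|_{\textrm{op}}$ in general, and obtaining a bound that depends only on $\|C\|_{\textrm{op}}$ (and degrades gracefully when $C$ is ill-conditioned, which linear independence of the $B_i$ alone does not prevent) is one of the points the original paper has to handle; and (iii) passing from test functions with two bounded derivatives to genuine $1$-Lipschitz functions in $\dW_1$ requires a smoothing argument whose constants must be tracked to recover the clean $\sqrt2/(p-1)$. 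So your proposal is the right skeleton of the known proof, but as written it establishes the rate heuristically rather than the stated inequality.
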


The Wasserstein distance plays a key role in our proofs, mainly due to the following facts. First, our \ac{batch-min-norm} algorithm performs projections onto small random subspaces, and Wasserstein distance can be used to quantify how far these are from projections onto i.i.d.~Gaussian vectors using Theorem~\ref{thm:Chatterjeee_Wasserstein_thm}. Second, closeness in Wasserstein distance implies change-of-measure inequalities for expectations of Lipschitz functions via the Kantorovich-Rubinstein duality theorem (Theorem~\ref{thrm:Wasser_dual_rep}), which allows us to compute expectations in the more tractable Gaussian domain with proper error control. For further details on the properties of Wasserstein distance, see Appendix~\ref{appen:wassertein}.

\section{Problem setup}\label{sec:problem_formulation}

Let $\X=(\YY_1,\cdots, \YY_n)^T$ be a data samples vector obtained from the linear model 
\begin{align}\label{eq:model}
\X = \H\bt  + \W   
\end{align}
where $\bt  \in \RR^p$  is a vector of unknown parameters, $\H\in\RR^{n\times p}$ is a given feature matrix with i.i.d. standard Gaussian entries, and $\W\in\RR^n$ is a  noise vector independent of $\H$ with i.i.d.~$\mathcal{N}(0,\sigma^2)$ entries. We write 
\begin{align}
    \|\bt \|_2 = r,
\end{align} 
to denote the norm of $\bt$, which is assumed unknown unless otherwise stated. 
Define the overparametrization ratio $\gamma\triangleq \frac{p}{n}$. When $\gamma>1$ we call the problem \textit{overparametrized} and when $\gamma<1$ we say it is {\em underparameterized}. 
An \textit{estimator} $\hbt = \hbt(\X, \H)$ for $\bt$ from the samples and features is a mapping $\hbt:\RR^n\times \RR^{n\times p}\to \RR^p$. We measure the  performance of an estimator via the quadratic (normalized) \textit{risk} $R(\hbt)$ it attains: 
\begin{align}\label{eq:risk_definition}
R(\hbt) \triangleq \frac{1}{r^2}\,\E \|\hbt(\X, \H) - \bt \|^2.    
\end{align}
Note that while $R(\hbt)$ is the parameter estimation risk, it is also equal in this case (up to a constant) to the associated prediction risk, namely the mean-squared prediction error $\E \|\mathbf{x}^T\hbt - \mathbf{x}^T\bt\|^2$ when using $\hbt$ to estimate the response to a new i.i.d.~feature vector $\mathbf{x}$. 

\subsection{Minimum-Norm Estimation}
In the overparametrized case $\gamma>1$, there is an infinite number of solutions to the linear model $\H\bt=\x$, and to choose one we need to impose some regularization. A common choice is the $\ell^2$-norm regularization, which yields the {\em \ac{min-norm} estimator}, defined as 
\begin{align}\label{eq:LSE}
    \hbtMN \triangleq  \argmin\|\bt\|_2^2, \text{ s.t. } \H\bt=\x,
\end{align}
and explicitly given by
\begin{align}
      \hbtMN= \H^T(\H \H^T)^{-1}\x.
\end{align}
The risk of the \ac{min-norm} estimator is then 
\begin{align}\label{eq:LS_risk}
    R(\hbtMN) 
    = \left(1-\frac{1}{r^2}\mathbb{E}\left\|\L\bt\right\|^2\right)+\frac{1}{r^2}\mathbb{E}\left[\W^T(\H \H^T)^{-1}\W\right],  
\end{align}
where $\L=\H^T(\H \H^T)^{-1}\H$ is the orthogonal projection onto the row space of $\H$, and we used the fact that the matrix $\H$ is orthogonal to its null space $\I-\L$. 
\begin{remark}
If we replace the expectations in~\eqref{eq:LS_risk} with conditional expectations given $\H$, we get that the first and second terms are the (normalized) {\em bias} the {\em variance} of the estimator (given the features), respectively. Note that here we define the risk \eqref{eq:risk_definition} as the average over both the noise and features. Hence, the above bias-variance decomposition does not extend to this setting, since now the ``bias'' part also exhibits variance due to the random features $\H$. Nevertheless, in the sequel we will loosely use the term bias when referring to the first term in the risk of min-norm and later \ac{batch-min-norm} and its shrinkage variation, to mean the part of the risk caused by the overparameterization, and similarly the term variance when referring to the associated second term.
\end{remark}
As previously shown in \cite{hastie2022surprises}, the asymptotic risk of the \ac{min-norm} estimator under the above model is 
\begin{align}\label{eq:asymptotic_MN_risk}
  \lim_{p\rightarrow \infty }  R(\hbtMN) =  1-\gamma^{-1} + \frac{1-\xi}{\xi}\cdot\frac{1}{\gamma-1}.
\end{align}
where 
\begin{align}
    \xi \triangleq \frac{r^2}{r^2+\sigma^2} = \frac{\snr}{1+\snr},
\end{align}
is the \textit{normalized $\snr$}, with $\snr\triangleq r^2/\sigma^2$.
The parameter $\xi$ is also known as the Wiener coefficient.

\section{Batch Minimum-Norm Estimation}\label{sec:algorithm}
We proceed to suggest and study a natural batch version of the \ac{min-norm} estimator. 
Let us divide the samples $\X$ to $n/b$ batches of some fixed size $b$, and denote by $\X_j\in\mathbb{R}^b$ the $j$th batch.
From each batch $\X_j$ we can obtain a \ac{min-norm} estimate of $\bt$, given by 
    \begin{align}
        \hbt_j \triangleq \H_j^T(\H_j\H_j^T)^{-1}\X_j,\quad j=1,\cdots,n/b,
    \end{align}
    where $\H_j$ are the feature vectors that correspond to $\X_j$. We now have $n/b$ weak estimators for $\bt$, each predicting only a tiny portion of $\bt$'s energy. However, these estimators are clearly better correlated with $\bt$ compared to random features. Hence, it makes sense to think of each $\hbt_j^T$ as a \textit{modified feature vector} $\h'_j$ that summarizes what was learned from batch $j$. Since each modified feature vector is a linear combination of its batch's feature vectors, with coefficients $\X_j^T \cdot (\H_j\H_j^T)^{-1}$, we can construct the corresponding \textit{modified sample} for batch $j$, given by 
    \begin{align}\label{eq:modefied_observ_Yi}
        \Ytag_j \triangleq \X_j^T \cdot (\H_j\H_j^T)^{-1}\X_j = \hbt_j^T\bt + W'_j = \h'_j\bt+W'_j, 
    \end{align}
    where 
    \begin{align}
        W'_j = \X_j^T (\H_j\H_j^T)^{-1}\W_j, \;\; j= 1, \cdots, n/b
    \end{align} 
    is the corresponding \textit{modified noise}. 
    
    We can now pool all these modified quantities to form a new model:
    \begin{align}\label{eq:modified_linear_model}
        \Y = \H'\bt+\W',
    \end{align}
    where the \textit{modified feature matrix} $\H'$ and \textit{modified noise vector} $\W'$ are given by 
    \begin{align}\label{eq:modified_H_and_W}
        \H'&=\left[{\hbt}_1,\cdots,{\hbt}_{n/b}\right]^T = \left[{\h'}_1^T,\cdots,{\h'}_{n/b}^T\right]^T, \\  
        \W'&=\left[ W'_1,\cdots, W'_{n/b}\right]^T.
    \end{align}
    Of course, the above is not truly a linear model, since both the matrix $\H'$ and the noise $\W'$ depend on the parameter $\bt$. But we can nevertheless naturally combine all the batches estimators, by simply applying \ac{min-norm} estimation to~\eqref{eq:modified_linear_model}. This  yields our suggested \ac{batch-min-norm} estimator:      

   \begin{align}\label{eq:batch_estimator}
        \hbtB \triangleq \H'^T(\H'\H'^T)^{-1}\Y.
    \end{align}
    
The risk of the the \ac{batch-min-norm} estimator $\hbtB$ is then given by 
\begin{align}\label{eq:BMN_risk_bias_var_decomp}
        R(\hbtB) = &\left(1-\frac{1}{r^2}\mathbb{E}\left\|\L'\bt\right\|^2\right)  
         \quad+\frac{1}{r^2}\mathbb{E}\left[\W'^T(\H'\H'^T)^{-1}\W'\right],
\end{align}
where $\L'$ is now the projection operator onto the subspace spanned by the rows of $\H'$, again using the fact that $\H'$ is orthogonal to its null space $\I-\L'$.

The first term in~\eqref{eq:BMN_risk_bias_var_decomp}, denoted hereinafter as $\pBias$, is the part of the risk that is caused by the overparametrization, namely due to the insufficient number of samples. The second term, denoted as $\Noise$, is, loosely speaking, the energy of the modified noise filtered by the (inverse) features. 
Note that, unlike the \ac{min-norm} case, the modified noise $\W'$ has a non-zero mean that does not depend on the parameter vector $\bt$ but contributes to the bias of the estimator. This contribution is part of $\Noise(\hbtB)$.  Nonetheless, the decomposition of the risk into $\pBias$ and $\Noise$ is beneficial in terms of analysis, as we will later see in Sections~\ref{sec:semi_noisy} and~\ref{sec:noisy} when we prove the bounds on $R(\hbtB)$.  Note that in Section~\ref{sec:de_meaned}, we introduce a shrinkage variation of \ac{batch-min-norm} for which the modified noise has zero mean, thus eliminating the part of the estimator's bias that originates from the noise.

Unlike in the \ac{min-norm} estimator case,  the rows of $\H'$ depend on the parameter $\bt$, and the noise $\W'$ depends on $\H'$, which makes the analysis of the parameter bias and excess risk significantly more challenging. 

It is interesting to point out that if $\H$ happens to have orthogonal rows, then $\L=\L'$, which means the parameter bias of the batch estimator coincides with that of \ac{min-norm}. Moreover, in this case, the variance of both batch- and regular \ac{min-norm} is simply the variances of the noises $\bs{W}'$ and $\W$ respectively, which are identical. Therefore, the risk of both estimators coincides in the orthogonal case, for any batch size. However, as we show in the next section, in the general case the risk can benefit from batch partition. This suggests that the gain of \ac{batch-min-norm} can be partially attributed to the linear dependence between the feature vectors in different batches.

\section{Main Result}\label{sec:main_result}

Our main results are upper and lower bounds on the risk of \ac{batch-min-norm}. Throughout the paper, limits are taken as $n,p \rightarrow \infty$ and $\gamma=p/n$ held fixed.

\begin{theorem}\label{thm:min_norm_risk}
For any  $\gamma>1/b$, the asymptotic risk of \ac{batch-min-norm} is bounded by
\begin{align}\label{eq:UB}
 \left(1-\frac{1}{\gamma b}\right)^{1+(b-1)\xi} +  \frac{1-\xi}{\xi}&\cdot\frac{b - (b-1)\xi}{\gamma b-1}\cdot C_{\gamma,\xi,b}\leq \lim_{p\rightarrow \infty}  R(\hbtB) \\
 & \leq \frac{\gamma b-1}{\gamma b + (b-1)\xi} +  \frac{1-\xi}{\xi}\cdot\frac{b - (b-1)\xi}{\gamma b-1},
    \end{align}
    where the first (resp. second) addend upper bounds the asymptotic parameter bias (resp. excess noise) in \eqref{eq:BMN_risk_bias_var_decomp}, and
    \begin{align}
        C_{\gamma,\xi,b}= 1-\frac{ \frac{1+(b-1)\xi}{\gamma b-1}}{1+ \frac{1+(b-1)\xi}{\gamma b-1}}.
    \end{align}
 \end{theorem}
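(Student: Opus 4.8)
The two addends of the decomposition~\eqref{eq:BMN_risk_bias_var_decomp} are bounded separately: the parameter bias $\pBias = 1-\tfrac{1}{r^2}\E\|\L'\bt\|^2$ and the excess noise $\Noise = \tfrac{1}{r^2}\E[\W'^T(\H'\H'^T)^{-1}\W']$. For the bias I would grow the row space of $\H'$ one weak estimator at a time. Let $\P_k$ be the orthogonal projection onto $\mathrm{span}(\hbt_1,\dots,\hbt_k)$, so that $\P_{n/b}=\L'$, and set $\rho_k \triangleq \tfrac{1}{r^2}\|\P_k\bt\|^2$. Since each batch contributes a single new direction, the rank-one update of the projector gives the exact recursion
\[
\rho_k-\rho_{k-1} \;=\; \frac{1}{r^2}\,\frac{\big\langle \hbt_k,(\I-\P_{k-1})\bt\big\rangle^2}{\big\|(\I-\P_{k-1})\hbt_k\big\|^2}.
\]
The structural fact driving everything is that the $k$-th weak estimator lies in the uniformly random $b$-dimensional row space of its batch: $\hbt_k=\U_k\z_k$ with $\U_k$ Haar on $\mathbb{V}_b(\RR^p)$ and coordinates $\z_k=\U_k^T\bt+\bs{m}_k$, where $\bs{m}_k$ is the modified batch noise in coordinates (covariance $\approx\tfrac{\sigma^2}{p}\I_b$, independent of $\U_k$). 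Thus the increment is a ratio of quadratic forms in the three $b$-dimensional vectors $\U_k^T(\I-\P_{k-1})\bt$, $\U_k^T\P_{k-1}\bt$ and $\bs{m}_k$.

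\textbf{From recursion to ODE.} Writing the entries of $\U_k^T\bs{v}$ as traces $\trace(\bs{e}_i\bs{v}^T\U)$ against a Haar orthogonal $\U$ whose first $b$ columns form $\U_k$, Theorem~\ref{thm:Chatterjeee_Wasserstein_thm} lets me replace these projections by genuinely Gaussian vectors at cost $O(b/p)$ in $\dW_1$, after which the Kantorovich--Rubinstein duality (Theorem~\ref{thrm:Wasser_dual_rep}) transfers the expectation of the (rescaled, scale-invariant) increment into the Gaussian model. There the computation closes: the denominator concentrates at $1-\tfrac{k-1}{p}$, the modified noise merely isotropizes $\z_k$ and otherwise drops out, and a short calculation with jointly Gaussian vectors gives $\E[\rho_k-\rho_{k-1}\mid \P_{k-1}] = \tfrac1p\,g(\rho_{k-1},\tfrac{k-1}{p})+o(\tfrac1p)$ with
\[
g(\rho,x)=\frac{(1-\rho)\big(1+(b-1)\xi(1-\rho)\big)}{1-x}.
\]
After showing $\rho_k$ concentrates (so $g$ may be evaluated at its mean), this Euler scheme is driven to the ODE $\rho'(x)=g(\rho(x),x)$ on $x\in[0,\tfrac{1}{\gamma b}]$ with $\rho(0)=0$, which integrates in closed form to $1-\rho(x)=\tfrac{1-x}{1+(b-1)\xi x}$; evaluating at $x=\tfrac{1}{\gamma b}$ gives the bias upper bound $\tfrac{\gamma b-1}{\gamma b+(b-1)\xi}$. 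The power-law lower bound comes from the same recursion after the cruder relaxation $g(\rho,x)\le \tfrac{(1+(b-1)\xi)(1-\rho)}{1-x}$ (using $1-\rho\le 1$), whose linear ODE integrates to $1-\rho(x)=(1-x)^{1+(b-1)\xi}$, yielding $\pBias\ge(1-\tfrac{1}{\gamma b})^{1+(b-1)\xi}$.

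\textbf{Excess noise.} For $\Noise$ I would decompose each modified-noise coordinate as $W'_j=\langle\bt,\bs{n}_j\rangle+\W_j^T(\H_j\H_j^T)^{-1}\W_j$ with $\bs{n}_j=\H_j^T(\H_j\H_j^T)^{-1}\W_j$: conditioned on the batch subspaces the first term is a centered Gaussian, whose mixture over subspaces is the announced Gaussian mixture, while the second is a positive $\chi^2$-type self-noise whose mean is the noise-induced part of the bias. The form $\W'^T(\H'\H'^T)^{-1}\W'$ is then the energy of this mixture filtered by the inverse Gram of the weak estimators; since distinct batches draw independent subspaces, $\H'\H'^T$ is a diagonally dominant Wishart-type matrix whose spectrum I would control with random-matrix estimates. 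Combining the per-batch law of $W'_j$ with this spectral control gives the upper bound $\tfrac{1-\xi}{\xi}\cdot\tfrac{b-(b-1)\xi}{\gamma b-1}$, while reversing the spectral/mixture estimates produces the matching lower bound, which carries the extra factor $C_{\gamma,\xi,b}<1$.

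\textbf{Main obstacle.} The crux is that the modified model is not linear: $\H'$ and $\W'$ both depend on $\bt$, and $\W'$ additionally on $\H'$, so no standard min-norm risk identity applies. The hardest single step is making the increment rigorous, because it is the expectation of a \emph{ratio} of quadratic forms and hence not a Lipschitz functional: the Wasserstein transfer must be paired with a truncation/high-probability argument guaranteeing the denominator stays bounded away from zero (it concentrates at $1-x>0$), and the resulting $O(b/p^2)$ per-step errors must be summed over the $\Theta(p/(\gamma b))$ batches while the concentration of $\rho_k$ is propagated through the Lipschitz map $g$. This error accounting, together with the random-matrix control of the Wishart Gram in the noise term, is where the bulk of the technical work lies.
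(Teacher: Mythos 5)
Your overall strategy coincides with the paper's on both halves: the bias via a rank-one recursion for the projector onto the span of the weak estimators, a Chatterjee--Meckes Wasserstein approximation of the batch directions by Gaussians, a Kantorovich--Rubinstein transfer of the (truncated) increment, and integration of the resulting difference inequality into an ODE; the noise via passage to a Gaussian-mixture limit filtered by an inverse Wishart. Two issues, the first of which is a genuine gap in your writeup.

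First, you route the bias \emph{upper} bound through ``showing $\rho_k$ concentrates (so $g$ may be evaluated at its mean).'' That concentration, i.e.\ $\Var(B_j)\to 0$, is precisely what the paper could \emph{not} establish --- it is explicitly listed as an open problem in Subsection~\ref{sec:bounds_gap} --- so as written your upper bound rests on an unproved claim. It is also unnecessary: your $g(\rho,x)$ is convex in $\rho$, hence $\E\, g(\rho_{k-1},x)\ge g(\E\rho_{k-1},x)$ (equivalently, $(\E B_j)^2\le \E B_j^2$), and this one-sided Jensen step alone pushes the Euler scheme above the ODE solution and yields $\pBias\le\frac{\gamma b-1}{\gamma b+(b-1)\xi}$; this is exactly how the paper obtains \eqref{eq:r_j_upper_lower_bounds}. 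Your power-law lower bound via the relaxation $1-\rho\le 1$ matches the paper's.

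Second, the noise half is too vague to be checked and, as sketched, points at the wrong mechanism. The constants do not come from generic ``spectral control of a diagonally dominant Wishart-type matrix $\H'\H'^T$.'' The paper's route is: the per-batch min-norm coefficients satisfy $p\A_j\to\X_j$ almost surely, which permits replacing $\W'^T(\H'\H'^T)^{-1}\W'$ in $L^1$ by $\Q^T(\F\F^T)^{-1}\Q$, where (after rotating $\bt$ to $r\bs{e}_1$ and deleting the first column $\t$ of $\F$) the remaining matrix $\F'$ is exactly i.i.d.\ Gaussian and independent of $\Q$. The upper bound then falls out of $\E\bigl[(\tfrac{b}{n}\F'\F'^T)^{-1}\bigr]$ together with $\E\,\Q^T\Q$, and the factor $C_{\gamma,\xi,b}$ in the lower bound is produced specifically by the Sherman--Morrison correction $\Q^T\bs{H}\Q$ for the deleted column, bounded via Cauchy--Schwarz. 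Your sketch names the right limiting objects (the $\chi^2$-weighted Gaussian mixture and the Wishart Gram) but omits this reduction, which is where the stated constants --- and in particular $C_{\gamma,\xi,b}$ --- actually come from.
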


Note that while we are interested in the $\gamma > 1$ regime, our bound applies verbatim to $\gamma > 1/b$. Loosely speaking, the reason is that working in batches ``shifts'' the interpolation point from $1$ to $1/b$, similarly to what would happen if we naively discarded all but $n/b$ samples and applied \ac{min-norm} to the remaining ones (\ac{batch-min-norm} is superior to this naive approach, see Section~\ref{sec:sims}). 

The bounds on the parameter bias in Theorem~\ref{thm:min_norm_risk} are tight for either low $\snr$ ($\xi\rightarrow 0$), high overparametrization ratio ($\gamma\rightarrow \infty$) or minimal batch size, $b=1$, in which case they coincide with the \ac{min-norm} performance. The variance bounds also coincide in the high overparametrization ratio regime. Therefore, when $\gamma \to \infty$  the bounds \eqref{eq:UB} are tight. Moreover, when $b=1$ the upper bound on the variance coincides with \ac{min-norm}, therefore in this case the upper bound \eqref{eq:UB} is tight (although it does not coincide with the lower bound). The lower bound on the variance is equal to the upper bound up to the multiplicative factor $C_{\gamma,\xi,b}$, which indeed tends to $1$ as $\gamma$ grows. However, when $\gamma$ approaches the (effective) interpolation point $\gamma=1/b$, $C_{\gamma,\xi,b}$ tends to zero and the bound becomes trivial.   As the batch size $b$ grows large, $C_{\gamma,\xi,b}\to \frac{\gamma}{\gamma+\xi}$, which tend to $1$ as either $\gamma\to 1$  or $\xi\to 0$. 
The gap between the lower and upper bounds is further discussed in Section~\ref{sec:bounds_gap}.

Our upper bound \eqref{eq:UB} turns out to be quite tight for a wide range of parameters (see Section~\ref{sec:sims}). It can therefore be used to obtain a very good estimate for the optimal batch size (which we therefore loosely refer to as ``optimal'' in the sequel), by minimizing~\eqref{eq:UB} as a function of $b$. 
To that end, we need to assume that the $\snr$ (namely $\xi$) is known; this is often a reasonable assumption, but otherwise, the $\snr$ can be estimated well from the data for almost all $\bt$ (see, implicitly, in~\cite{hastie2022surprises} Section~7).
In the next subsection, we show that the minimization of the upper bound yields an explicit formula for the optimal batch size as a function of the overparametrization ratio $\gamma$ and the $\snr$. 
In particular, it can be analytically verified that the optimal batch size is inversely proportional to both $\gamma$ and $\snr$; more specifically, there is a low-$\snr$ threshold point below which increasing the batch size (after taking $n,p\to \infty$) is always beneficial. This can be seen in Figure~\ref{fig:opt_b}, which plots the optimal batch size for different values of $\snr$ and overparametrization ratio. For further discussion see Subsection~\ref{sec:optimal_batch_size}.

Let us briefly outline the proof of Theorem~\ref{thm:min_norm_risk}. We start with the parameter bias and write it as a recursive relation, where a single new batch is added each time, and its expected contribution to the bias reduction is quantified. In a nutshell, we keep track of the projection of $\bt$ onto the complementary row space of the modified feature vectors from all preceding batches. We then write the batch's contribution as a function of the inner products between the (random) batch's basis vectors and the basis of that space (Lemma~\ref{lem:proj_clos_to_fx}). We show that this collection of inner products is close in Wasserstein distance to a Gaussian vector with independent entries (Corollary~\ref{lem:Wasser_distance_XY}), and derive an explicit recursive rule for the bias as a function of the number of batches processed, under this approximated Gaussian distribution (Lemma~\ref{lem:Efy}). This function is then shown to be Lipschitz in a region where most of the distribution is concentrated, which facilitates the use of Wasserstein duality to show that the recursion rule is asymptotically correct under the true distribution (Lemma~\ref{lem:fx_close_to_fy}). Finally, we convert the recursive rule into a certain differential equation, whose solution yields the bias bound (Lemma~\ref{lem:semi_noisy_projection}). This is done in Subsection~\ref{sec:semi_noisy}.

To calculate the variance, we note that the $j$th modified sample $\Ytag_j$, features vector $\h'_j$, and noise $W'_j$, are all linear combinations of the corresponding batch elements, $\X_j$, $\H_j$ and $\W_j$, with the same (random) coefficients. Moreover, these coefficients converge a.s.~to the original samples $\X_j$. We use this to show that the variance converges to that of a Gaussian mixture noise with $\chi^2$-distributed  weights that is projected onto the rows of a Wishart matrix. This is done in Subsection~\ref{sec:noisy}.

 \subsection{Optimal Batch Size: Discussion}\label{sec:optimal_batch_size}

 As discussed before, given a pair $(\gamma,\xi)$ one can use standard function analysis tools to find the batch size $b$ that minimizes the upper bound in Theorem~\ref{thm:min_norm_risk}. 
Then, the optimal batch size will be given by
 \begin{align}
     b_\text{opt}(\gamma,\xi) = \argmin_{b\in\{1,\infty,t\}} \mathrm{UB}(b,\gamma,\xi).
 \end{align}
 where  $\mathrm{UB}(b,\gamma,\xi)$ denotes the upper bound \eqref{eq:UB}, and  
  \begin{align}
     t =\begin{cases}
        \max\{ b_1,1\},\;& \text{if $\mathcal{A}_1$ or $\mathcal{A}_2$}\\
         \max\{ b_2,1\},\;&\text{otherwise},
     \end{cases}
 \end{align}
 with 
 \begin{align}
     \mathcal{A}_1 &= c(\gamma,\xi)<0 \text{ and } b_1<b_2,\\
     \mathcal{A}_2 &= c(\gamma,\xi)>0 \text{ and } b_1>b_2,
 \end{align}
 and
\begin{align}
     b_1 &= \frac{\biggl(\splitdfrac{2\xi^2\gamma^2 - \xi^3\gamma^2 + \xi\gamma - \xi\gamma^2 - 3\xi^2\gamma}{ + 2\xi^3\gamma - \xi^4\gamma + \xi^2 - 2\xi^3 + \xi^4 }\biggr) }{\biggl(\splitdfrac{- \xi^4\gamma + \xi^4 - 2\xi^3\gamma^2 + 3\xi^3\gamma - 2\xi^3 + 2\xi^2\gamma^2}{ - 4\xi^2\gamma + \xi^2 - 2\xi\gamma^2 + 2\xi\gamma + \gamma^2}\biggr)} \\ 
     &\quad + \frac{\sqrt{-\xi(\xi - 1)(\xi\gamma - \xi + 1)(\xi + \gamma - \xi\gamma)^3} }{\biggl(\splitdfrac{- \xi^4\gamma + \xi^4 - 2\xi^3\gamma^2 + 3\xi^3\gamma - 2\xi^3 + 2\xi^2\gamma^2}{ - 4\xi^2\gamma + \xi^2 - 2\xi\gamma^2 + 2\xi\gamma + \gamma^2}\biggr)},\\
    b_2 &=  \frac{\biggl(\splitdfrac{2\xi^2\gamma^2 - \xi^3\gamma^2 + \xi\gamma - \xi\gamma^2 - 3\xi^2\gamma}{ + 2\xi^3\gamma - \xi^4\gamma + \xi^2 - 2\xi^3 + \xi^4 }\biggr) }{\biggl(\splitdfrac{- \xi^4\gamma + \xi^4 - 2\xi^3\gamma^2 + 3\xi^3\gamma - 2\xi^3 + 2\xi^2\gamma^2}{ - 4\xi^2\gamma + \xi^2 - 2\xi\gamma^2 + 2\xi\gamma + \gamma^2}\biggr)} \\ 
     &\quad - \frac{\sqrt{-\xi(\xi - 1)(\xi\gamma - \xi + 1)(\xi + \gamma - \xi\gamma)^3} }{\biggl(\splitdfrac{- \xi^4\gamma + \xi^4 - 2\xi^3\gamma^2 + 3\xi^3\gamma - 2\xi^3 + 2\xi^2\gamma^2}{ - 4\xi^2\gamma + \xi^2 - 2\xi\gamma^2 + 2\xi\gamma + \gamma^2}\biggr)},
 \end{align} 
and
\begin{align}
         c(\gamma,\xi) & = -\xi^4(1-\gamma)-\xi^3(-2\gamma^2+3\gamma-2)\\
         &\;\;\;-\xi^2(2\gamma^2-4\gamma+1)-\xi(2\gamma-2\gamma^2)-\gamma^2.
 \end{align}
Figure~\ref{fig:opt_b} demonstrates the optimal batch size as a function of $\gamma$ and $\xi$. 
 To understand the asymptote at $\xi=0.6478$ to which the curves in Figure~\ref{fig:opt_b}-(a) converge as $\gamma$ grows, note that, after some mathematical manipulations, we get  
 \begin{align}
     \mathsf{sign}&\left(\frac{d }{db}\mathrm{UB}(b,\gamma,\xi)\right) =\mathsf{sign}\Big( \gamma((2\xi^3-2\xi^2+2\xi-1)b-2\xi^2+4\xi-2) +o(\gamma) \Big)\\
     &\underset{\gamma\rightarrow\infty}{\longrightarrow} \mathsf{sign}\Big((2\xi^3-2\xi^2+2\xi-1)b-2\xi^2+4\xi-2\Big).
 \end{align}
 The term $2\xi^2+4\xi-2$ is negative for any $\xi\in(0,1)$ and the polynomial $2\xi^3-2\xi^2+2\xi-1$ has a unique real root at $\xi=0.6478$. Therefore, for any $\xi\leq0.6478$ the derivative is negative for any $b\geq 1$ hence the upper bound $\mathrm{UB}(b,\gamma,\xi)$ is monotonically decreasing with $b$. 
 For $\xi>0.6478$, the minimal risk will be attained at some $1\leq b\leq \infty$, as a function of $\xi$.
 
 Figure~\ref{fig:opt_risk} shows the upper bound on the risk of the \ac{batch-min-norm} algorithm with optimized batch size.  
 For moderate and high  $\snr$ the upper bound is monotonically increasing with the overparametrization ratio $\gamma$. This behavior is what we would expect from a ``good'' algorithm in this setting -- the more overparametrized the problem is, the larger the risk (as is the case, e.g., also with optimal ridge). This should be contrasted with the behavior of the \ac{min-norm} estimator, whose risk explodes around $\gamma=1$, reflecting the stabilizing effect of the batch partition.

\begin{figure*}[ht]
\centering
 \begin{minipage}{.5\textwidth}
  \centering
  \includegraphics[width=1\linewidth]{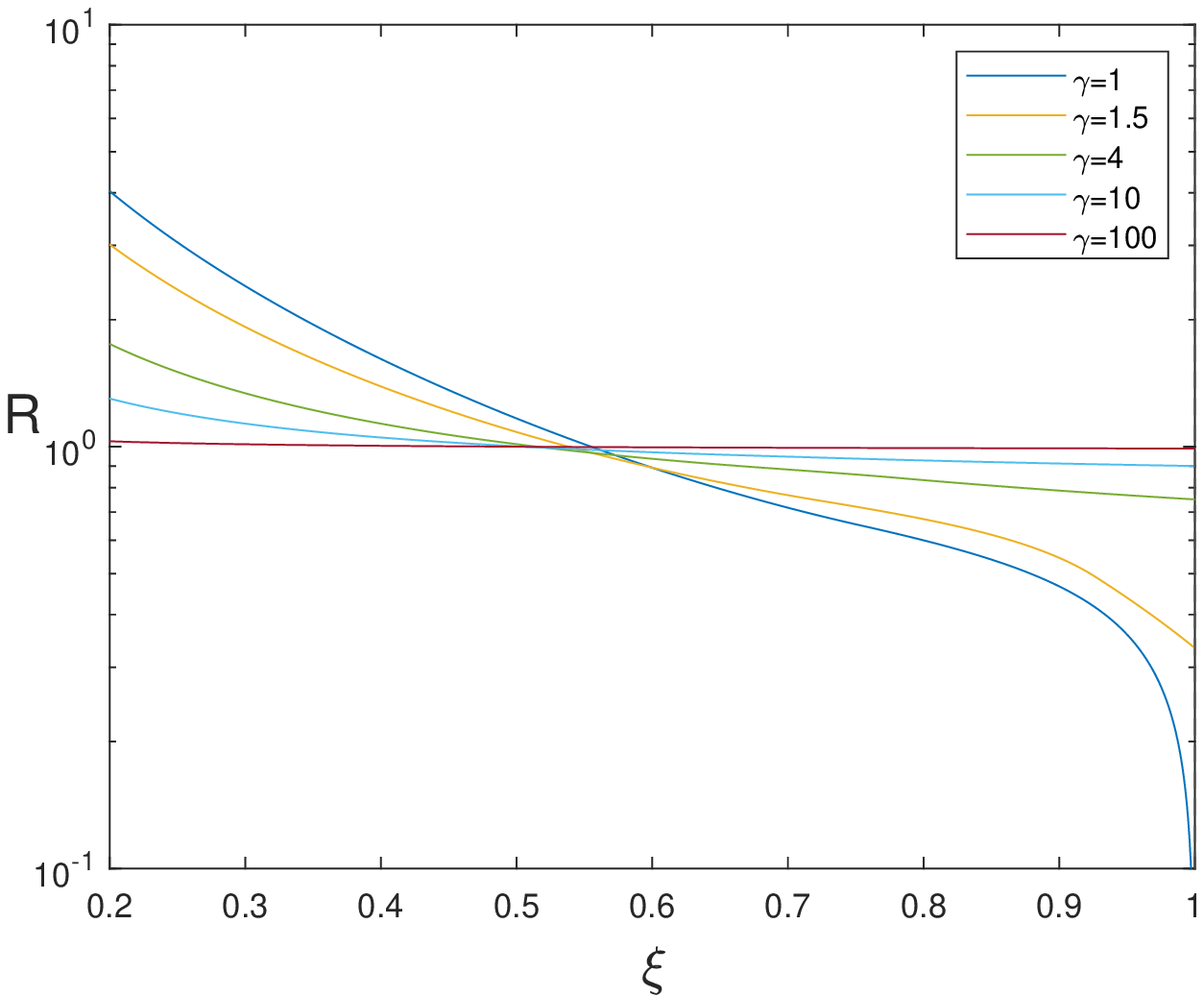}
  \caption*{(a)}
\end{minipage}%
\begin{minipage}{.5\textwidth}
  \centering
  \includegraphics[width=1\linewidth]{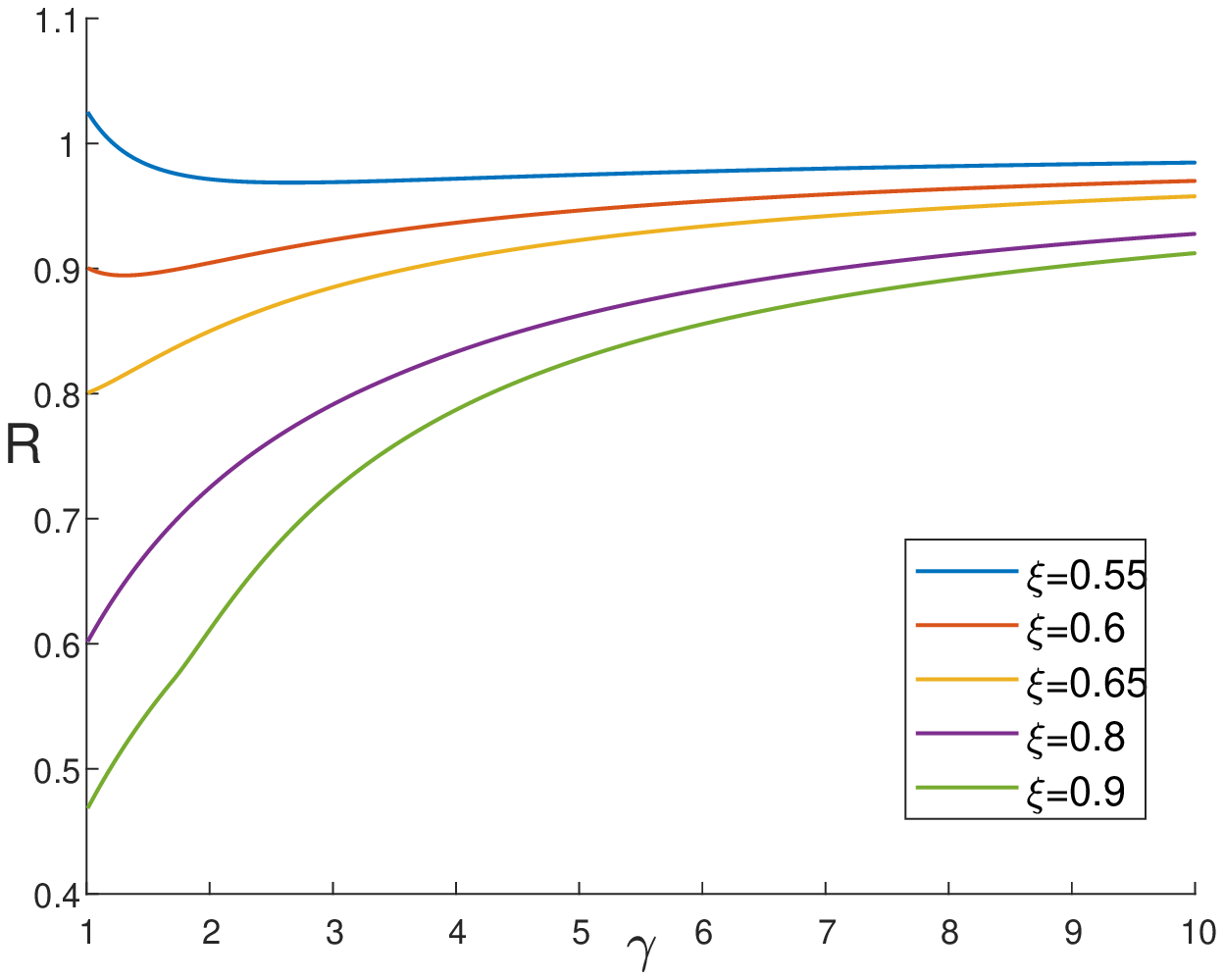}
  \caption*{(b)}
\end{minipage}
  \caption{\textit{Upper bound on the risk of the \ac{batch-min-norm} estimator with  optimized batch size vs. (a) normalized $\snr$ ($\xi$), and (b) overparametrization ratio $\gamma$. At moderate-to-high $\snr$, the bound monotonically increases with $\gamma$.}}\label{fig:opt_risk}
\end{figure*}

 Interestingly, the behavior changes when the $\snr$ is below $\approx 1.83$ ($\xi < 0.6478$). In this region, the optimal batch size $\rightarrow \infty$ and \ac{batch-min-norm} behaves similarly to \ac{min-norm}, producing phenomena such as double-decent and local risk minima (see $\xi=0.55$ and $\xi=0.6$ in Figure~\ref{fig:opt_risk}-(b)). In fact, when $\snr<1$ , ac{batch-min-norm} performs worse than the null estimator, resulting in (normalized) risk $\geq 1$, just like the standard \ac{min-norm}. As we will later see, this happens since the modified noise $\W'$ has a non-zero mean whose magnitude is inversely proportional to $\xi$.  In Section~\ref{sec:de_meaned} we introduce a shrinkage variation of the \ac{batch-min-norm} estimator that effectively eliminates this mean, and results in a risk that is monotonically increasing in $\gamma$ for all $\snr$ levels, and is also uniformly better than both \ac{min-norm} and the null estimator. 
 We further discuss the behavior of both estimators in various settings in Section~\ref{sec:sims}.

Note that our optimal batch size analysis is taken w.r.t the upper bound on the asymptotic risk, i.e., $n,p\rightarrow \infty$. In practice, one uses a finite number of data samples, in which case we need $b\ll n$ in order for $\mathrm{UB}(b,\gamma,\xi)$ to be reliable. Hence, there will be a finite (possibly very large) batch size $b$ that will minimize the risk for any pair $(\gamma,\xi)$.

\section{Proof of Main Result}\label{sec:proof_of_main_result}

\subsection{Asymptotic Parameter Bias} \label{sec:semi_noisy}

In order to estimate the asymptotic parameter bias, we rewrite $\pBias(\hbtB)$ from~\eqref{eq:BMN_risk_bias_var_decomp} as a recursive equation where at the $j$th step we add the $j$th batch $\X_j$, that corresponds to the matrix rows $\H_j$, and update the contribution of this batch to the overall projection.
Recall that 
\begin{align}
    \X_j = \H_j \bt+\W_j= \H_j (\bt+\Z_j),
\end{align} 
with $\Z_j\triangleq \H_j^T(\H_j\H_j^T)^{-1}\W_j$.
Then, the $j$th row in the modified feature matrix $\H'$ is 
\begin{align}
    \h'_j  =\hat{\bt}_j^T=(\bt^T+\Z_j^T) \bs{D}_j,
\end{align} 
with $\D_j$ the projection matrix onto the row space of $\H_j$.
Denote by 
\begin{align}
    \H'_j= [\h'^T_1,\cdots,\h'^T_j]^T,
\end{align} 
the modified feature matrix after the first $j$ steps and let $\L'_j$ be the projection onto the row space of $\H'_j$, that is 
\begin{align}
    \L'_j=  \H'^T_j(\H'_j\H'^T_j)^{-1}\H'_j.
\end{align}
Then, using the Schur complement \cite{zhang2006schur} on $(\H'_j\H'^T_j)^{-1}$ we get
\begin{align}\label{eq:P_j_recursive}
    \|\L'_j\bt&\|^2 = \|\L'_{j-1}\bt\|^2+
\mathbb{E}\left[\frac{((\bt^T+\Z_j^T)\D_j^T\left(\I-\L'_{j-1}\right)\bt)^2}{(\bt^T+\Z_j^T)\D_j(\I-\L'_{j-1})\D_j(\bt+\Z_j)}\right].
\end{align}

It can be seen that at each step the numerator of the update term in \eqref{eq:P_j_recursive} is the projection of the part of $\bt$ that lies in the null space of $\L'_{j-1}$, namely the part of $\bt$ that was not captured by the first $j-1$ rows of $\H'$, onto the row space of the new batch. However, the projection is affected by the noise $\Z_j$ of the current batch. We can view this as a noisy version of the projection $\D_j$, a perspective that will be made clear in the next lemma, which is the key tool for analyzing the recursive rule \eqref{eq:P_j_recursive}.

\begin{lemma}\label{lem:semi_noisy_projection}
Let $\P$ be a projection onto a subspace of dimension $\delta p$ for $\delta \in[0,1]$. Write $\|\bt\| = r$ and $\|\P\bt\|^2 = \alpha r^2$ for $\alpha \in[0,1]$. Let $\{\U_i\}_{i=1}^b$ be uniformly drawn orthonormal vectors, and $\tilde{\D}$ a noisy projection onto the span of $\{\U_i\}$ given by
\begin{align}\label{eq:noisy_proj}
    \tilde{\D}\bs{v} = \D\bs{v} +\sum_{i=1}^b \U_i Z_i,
\end{align} 
where $\D=\sum_{i=1}^b \U_i\U_i^T$,  $Z_i\sim \mathcal{N}(0,T_i\cdot \sigma^2/p)$, and $T_i$ are r.vs. mutually independent of $\{\U_i\}$ and concentrated in the interval $[1-o(1),1+o(1)]$ with probability at least $1-o(1/p)$. Then, the expected squared noisy projection of $\bt$ in the direction of $\P\tilde{\D}\bt$ is given by 
\begin{align}\label{eq:semi_proj_exp}
    \E\left[ \frac{\langle \P\tilde{\D}\bt, \bt\rangle^2}{\|\P\tilde{\D}\bt\|^2}\right] = \frac{1}{p} \left(\frac{\alpha r^2}{\delta}\left(1  + \frac{(b-1)\alpha r^2}{\sigma^2 + r^2} \right) + o(1)\right).
\end{align}
\end{lemma}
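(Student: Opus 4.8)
The plan is to peel off the algebraic structure of the ratio, reduce the denominator to a single clean quadratic form, replace the random inner products by Gaussians using the Wasserstein machinery already set up, and then evaluate the resulting Gaussian expectation \emph{exactly} by a conditioning argument.

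\textbf{Reduction.} Since $\tilde{\D}\bt=\sum_{i=1}^b\U_i w_i$ with $w_i\triangleq\U_i^T\bt+Z_i$, I would first introduce $c_i\triangleq\U_i^T\P\bt$, $d_i\triangleq\U_i^T(\I-\P)\bt$ (so that $\U_i^T\bt=c_i+d_i$), and the Gram entries $M_{ij}\triangleq\U_i^T\P\U_j$. Using that $\P$ is symmetric idempotent and $\langle\P\U_i,\bt\rangle=\langle\U_i,\P\bt\rangle$,
\begin{align}
\langle\P\tilde{\D}\bt,\bt\rangle=\sum_i w_i c_i,\qquad \|\P\tilde{\D}\bt\|^2=\sum_{i,j} w_i w_j M_{ij}.
\end{align}
Next I would argue that the diagonal entries concentrate, $M_{ii}=\|\P\U_i\|^2=\delta+O_P(p^{-1/2})$, while the off-diagonal entries satisfy $M_{ij}=O_P(p^{-1/2})$ for $i\neq j$. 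As $b$ is fixed and each $w_i=O_P(p^{-1/2})$, the off-diagonal and diagonal-fluctuation contributions to $\|\P\tilde{\D}\bt\|^2$ are smaller by a factor $p^{-1/2}$ than the leading term $\delta\sum_i w_i^2$. This reduces the target quantity to $\tfrac{1}{\delta}\,\E\big[(\sum_i w_i c_i)^2/\sum_i w_i^2\big]\,(1+o(1))$.

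\textbf{Gaussian substitution.} By Corollary~\ref{lem:Wasser_distance_XY}, the collection $(c_i,d_i)_{i=1}^b$ is close in $\dW_1$ to independent centred Gaussians with $\Var(c_i)=\alpha r^2/p$ and $\Var(d_i)=(1-\alpha)r^2/p$ (uncorrelated since $\P(\I-\P)=0$); together with $T_i=1+o(1)$ with probability $1-o(1/p)$, I may treat $Z_i\sim\mathcal{N}(0,\sigma^2/p)$ independent of everything. The integrand is not globally Lipschitz because of its vanishing denominator, so I would truncate it on the high-probability event $\{\sum_i w_i^2\geq \tau/p\}$, verify that the Lipschitz constant there is $O(p^{-1/2})$, and apply $\dW_1$-duality (Theorem~\ref{thrm:Wasser_dual_rep}); since the quantity itself is of order $1/p$, this produces an error of order $p^{-1/2}\cdot\dW_1=o(1/p)$, consistent with the stated $o(1)/p$.

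\textbf{Exact Gaussian evaluation.} Writing $w_i=c_i+e_i$ with $e_i\triangleq d_i+Z_i\perp c_i$, the pair $(c_i,w_i)$ is bivariate Gaussian with $\Var(c_i)=s^2\triangleq\alpha r^2/p$, $\Var(w_i)=s^2+t^2\triangleq(r^2+\sigma^2)/p$, and $\Cov(c_i,w_i)=s^2$. Conditioning on $\{w_i\}$ gives $c_i=\tfrac{s^2}{s^2+t^2}w_i+\eta_i$ with $\eta_i$ independent and $\Var(\eta_i)=s^2t^2/(s^2+t^2)$, so that
\begin{align}
\E\!\left[\frac{(\sum_i w_i c_i)^2}{\sum_i w_i^2}\,\Big|\,\{w_i\}\right]=\frac{s^4}{(s^2+t^2)^2}\sum_i w_i^2+\frac{s^2t^2}{s^2+t^2}.
\end{align}
Taking expectations with $\E\sum_i w_i^2=b(s^2+t^2)$ and dividing by $\delta$ yields $\tfrac{s^2}{\delta}\big(1+\tfrac{(b-1)s^2}{s^2+t^2}\big)$, which equals $\tfrac1p\tfrac{\alpha r^2}{\delta}\big(1+\tfrac{(b-1)\alpha r^2}{\sigma^2+r^2}\big)$ upon substituting $s^2=\alpha r^2/p$ and $s^2+t^2=(r^2+\sigma^2)/p$; a sanity check at $b=1$ gives the ratio $\hat c_1^2$ with expectation $1$, as expected.

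\textbf{Main obstacle.} I expect the delicate part to be the Gaussian substitution: the integrand is a ratio with a small-denominator singularity, so making the Wasserstein replacement rigorous requires a careful truncation together with a matching bound showing that both the neglected low-probability event and the residual Lipschitz error are $o(1/p)$ — which is tight precisely because the quantity being estimated is itself only of order $1/p$.
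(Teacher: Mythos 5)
Your proposal follows essentially the same route as the paper's proof: reduce the ratio to the clean quadratic form $(\sum_i w_ic_i)^2/\sum_i w_i^2$ via concentration of $\|\P\U_i\|^2$ and $\langle\P\U_i,\P\U_j\rangle$ (the paper's Lemma~\ref{lem:proj_clos_to_fx}), transfer to independent Gaussians by Theorem~\ref{thm:Chatterjeee_Wasserstein_thm} together with a truncation-plus-Lipschitz argument and $\dW_1$-duality (Corollary~\ref{lem:Wasser_distance_XY} and Lemma~\ref{lem:fx_close_to_fy}), and evaluate the Gaussian expectation exactly by conditioning on the $w_i$ (Lemma~\ref{lem:Efy}). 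The one point to tighten is your truncation threshold: with a fixed $\tau$ the event $\{\sum_i w_i^2<\tau/p\}$ is a $\chi^2_b$ lower tail at a constant argument and hence has constant probability, so you must let $\tau\to 0$ (the paper takes a threshold of order $p^{-2/b}$ after rescaling) and check that the correspondingly larger Lipschitz constant still beats the $O(p^{-3/2})$ Wasserstein bound.
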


The remainder of this section is dedicated to the proof of Lemma~\ref{lem:semi_noisy_projection}, via a Gaussian approximation technique. But first, we use this lemma to prove the bounds on the parameter bias in Theorem~\ref{thm:min_norm_risk}.

\begin{proof}[Proof of bias part in Theorem~\ref{thm:min_norm_risk}.] 
Write 
\begin{align}
    B_j= 1 -\frac{1}{r^2}\|\L'_j\bt\|^2
\end{align} 
for the parameter bias after $j$ steps. Since all of our results are normalized by $r$, we assume from now on that $r=1$.  Then $B_0 =1$, and the desired bias is given by 
\begin{align}
    \lim_{n,p\rightarrow\infty }\pBias\left(\hbtB\right)=\lim_{n,p\rightarrow\infty } \E B_{\frac{n}{b}}.
\end{align} 
Let $\U_i,\cdots ,\U_b$, be the orthonormal vectors that span  the row space of $\H_j$, hence 
\begin{align}
    \D_j=\sum_{i=1}^b \U_i\U_i^T.
\end{align} 
Since the entries of $\H_j$ are i.i.d.~$\mathcal{N}(0,1)$, then $\{\U_i\}$ are uniformly distributed. Moreover, $\D_j(\bt+\Z_j)$ can be written as a noisy projection as defined in the Lemma~\ref{lem:semi_noisy_projection} (see Proposition~\ref{prop:noisy_D} in Appendix~\ref{appen:semi_noisy}).
Denote $\bar{\L}_{j-1}\triangleq\I-\L'_{j-1}$, then using Lemma~\ref{lem:semi_noisy_projection} with $\P= \bar{\L}_{j-1}$ we get 
\begin{align}
    \mathbb{E}\left[\frac{((\bt^T+\Z_j^T)\D_j^T\bar{\L}_{j-1}\bt)^2}{\|\bar{\L}_{j-1}\D_j(\bt+\Z_j)\|^2}\right]&= \mathbb{E}\left[\frac{(\bt^T\tilde{\D}_j^T\bar{\L}_{j-1}\bt)^2}{\bt^T\tilde{\D}_j\bar{\L}_{j-1}\tilde{\D}_j\bt}\right]
    \\
    &\hspace{-2cm}=\E\left[ \frac{B_j(1+(b-1)\xi B_j)}{\frac{p-j}{p}\cdot p}\right] + o(1/p),\label{line:using_the_proj_lemma}
\end{align}
 where we set $\alpha= B_j$ and used the fact that with probability $1$ the dimension of $\bar{\L}_{j-1}$ is $\delta=1-j/p$.
 Then, from \eqref{eq:P_j_recursive} we obtain
 \begin{align}
    \E(B_{j+1}\mid B_j) =B_j - \frac{B_j(1+(b-1)\xi B_j)}{\frac{p-j}{p}\cdot p}+o(1/p),  
\end{align}
and hence 
\begin{align}
    \E B_{j+1} = \E B_j -  \frac{ \E B_j + (b-1)\xi\E B_j^2}{p-j}+o(1/p). 
\end{align}

Define $t_j \triangleq \E B_j$. Then, using the (trivial) bound  $\left(\E B_j\right)^2 \leq \E B_j^2 \leq \E B_j$ that holds for any random variable with support in the unit interval, we obtain
\begin{align}\label{eq:r_j_upper_lower_bounds}
     t_j\cdot \left(1- \frac{1+(b-1)\xi}{p-j}\right) &+o(1/p)\leq t_{j+1} \\
     &\hspace{-1cm}\leq t_j\cdot \left(1- \frac{1+(b-1)\xi t_j}{p-j}\right)+o(1/p). 
\end{align}
We are interested in $t_{n/b}$ under the initial condition $t_0 = 1$, as $n,p\to\infty$ and $p/n = \gamma$ is held fixed.
Noticing that over $n/b$ iterations of the recursive bound \eqref{eq:r_j_upper_lower_bounds} the error term $o(1/p)$ can grow to at most $o(1)$, we drop it hereafter and add it back later.  
Note that for any $j \leq n/b$
\begin{align}
    \frac{d}{dt} t\left(1- \frac{1+(b-1)\xi t}{p-j}\right)&= 1- \frac{1+2(b-1)\xi t}{p-j}
    \geq 1- \frac{1}{p}\cdot \frac{1+2(b-1)\xi}{(1-1/\gamma b)}.
\end{align}
Since the r.h.s~of the above is positive for any large enough $p$, we get that the upper bound in \eqref{eq:r_j_upper_lower_bounds} is monotonically increasing in $t_j$, and therefore we can use it iteratively. Specifically, any sequence of numbers $s_0,\cdots, s_{n/b}$ with $s_0=1$ that obeys the r.h.s inequality in \eqref{eq:r_j_upper_lower_bounds} (replacing $t_j\to s_j$), will dominate the sequence $t_0,\cdots,t_{n/b}$ in the sense that $t_j\leq s_j$, $j=0,\cdots,n/b$.
To find such a sequence, let us rewrite the upper bound \eqref{eq:r_j_upper_lower_bounds} as
\begin{align}\label{eq:difference_ineq}
 \frac{t_{j+1}- t_j}{1/p} \leq \frac{t_j+(b-1)\xi t_j^2}{1-j/p}.
\end{align}
Now, suppose there exists a nonnegative convex function $g$ over $[0,1]$ with $g(0)=1$, satisfying
\begin{align}\label{eq:ODT}
    g'(x) \leq \frac{g(x+1/p)-g(x)}{1/p}\leq -\frac{g(x)\cdot (1+(b-1)\xi g(x))}{1 -x} ,
\end{align}
where the first inequality is by convexity of $g(x)$ and the second translates~\eqref{eq:difference_ineq} to a differential inequality, replacing $t_j$ with $g(x)$ and $j/p$ with $x$. If we can find such a function, then taking $s_j=g(j/p)$ would yield the desired sequence. Let us show this is possible. 
First, rewrite~\eqref{eq:ODT} as 
\begin{align}
    \frac{ g'(x)}{g(x)(1+(b-1)\xi g(x))}& = \frac{d}{dx}\ln\left(\frac{g(x)}{1+(b-1)\xi g(x)}\right)     \leq  -\frac{1}{1-x}. 
\end{align}
The initial condition is $\ln(g(0)/(1+(b-1)\xi g(0))) = -\ln (1+(b-1)\xi )$. Integrating we get
\begin{align}
    \ln \frac{g(x)}{1+(b-1)\xi g(x)}& \leq
    -\ln (1+(b-1)\xi) - \int_0^x\frac{1}{1-x}dx\\
    &=  \ln \frac{1 -x}{1+(b-1)\xi } ,
\end{align}
yielding $g(x) \leq \frac{1-x}{1+(b-1)\xi \cdot x}$, which is indeed nonnegative and convex over $[0,1]$. 
Hence, we have 
\begin{align}
    t_j \leq g(j/p) \leq \frac{1-j/p}{1+(b-1)\xi \cdot j/p} . 
\end{align}
Then, taking the limit with $j=n/b$ and adding back the error term, we get 
\begin{align}
    \lim_{p\to\infty }\frac{1}{r^2}\pBias\left(\hbtB\right)&\leq \lim_{p\to\infty } g(1/(\gamma b)) +o(1)\leq \frac{\gamma b-1}{\gamma b + (b-1)\xi}.
\end{align}

To derive the lower bound  note that \eqref{eq:r_j_upper_lower_bounds} 
 yields 
\begin{align}
    \ln t_j  &\geq \sum_{\ell=0}^{j-1} \ln\left(1-\frac{1+(b-1)\xi}{p-\ell}\right) 
    \\&= -\sum_{\ell=0}^{j-1}\left(\frac{1+(b-1)\xi}{p-\ell}\right)
    \\&= -\sum_{\ell=0}^{j-1}\left(\frac{1}{p}\cdot \frac{1+(b-1)\xi}{1-\frac{\ell}{p}}\right)
    \\&\geq  -(1+(b-1)\xi)\int_0^{j/p}\frac{dx}{1-x}
     \\&=(1+(b-1)\xi)\cdot \ln(1-j/p).
\end{align}
After adding back the error term we get
\begin{align}
    t_j \geq (1-j/p)^{1+(b-1)\xi}\cdot e^{o(1)},
\end{align}
and therefore
\begin{align}
    \lim_{p\to\infty } \pBias\left(\hbtB\right)=\lim_{p\to\infty } t_{n/b} \geq \left(1-\frac{1}{\gamma b}\right)^{1+(b-1)\xi}.
\end{align}

\end{proof}

Next, we turn to prove Lemma~\ref{lem:semi_noisy_projection}. 
Let  $\S= \left[ \SS_1,\cdots,\SS_{2b}\right]$ be a random vector given by
\begin{align}\label{eq:X_i_defenition}
\SS_i\triangleq { \begin{cases}  \langle \P \U_i, \bt\rangle,& i=1,\cdots b,\\  \langle (\I-\P)\U_i, \bt\rangle,& i= b+1,\cdots, 2b \end{cases}},
\end{align}
and define the function
\begin{align}\label{eq:f}
     f(\s,\z) \triangleq \frac{(\sum_{i=1}^b  \ss_i\left(\ss_i+ \ss_{b+i}+z_i\right) )^2}{\sum_{i=1}^b \left(\ss_i+ \ss_{b+i}+z_i \right)^2}.      
\end{align}

The outline of the proof for Lemma~\ref{lem:semi_noisy_projection} is as follows. First, in Lemma~\ref{lem:proj_clos_to_fx} we show that the expected squared projection  \eqref{eq:semi_proj_exp} is approximately equal to $\E f(\S,\Z)$, for any noise $\Z$ that is sufficiently close in distribution to i.i.d.~$\mathcal{N}(0,1/p)$ noise. Then, we show that $\E f(\S,\Z)$ can be calculated with good accuracy by replacing the vector $\S$ with a Gaussian vector $\G$ with independent entries that have the same variance as the elements of $\S$. To do so, in Lemma~\ref{lem:fx_close_to_fy} we bound the Wasserstein distance between $\S$ and $\G$ using Corollary~\ref{lem:Wasser_distance_XY}, and show that $f$ is Lipschitz where $\S$ and $\G$ are concentrated, hence $|\E f(\S,\Z)-\E f(\G,\Z)| \lessapprox \dW_1(\S,\G)$.
Then in Lemma~\ref{lem:Efy} we explicitly calculate $\E f(\G,\Z)$ as a (random) weighted sum of MMSE estimators, yielding ~\eqref{eq:semi_proj_exp} and concluding the proof. 

First, we show that indeed $\E f(\S,\Z)$ approximates the mean-squared projection~\eqref{eq:semi_proj_exp}. 

\begin{lemma}\label{lem:proj_clos_to_fx}
    Let $\S$ be given by \eqref{eq:X_i_defenition},  $\Z=[Z_1,\cdots,Z_b]$ as in Lemma~\ref{lem:semi_noisy_projection}, and  $f(\S,\Z)$ as in~\eqref{eq:f}. 
    Then 
    \begin{align}  \label{eq:proj_clos_to_fx}
    \left|  \mathbb{E}\biggl[\frac{1}{\delta}f(\S,\Z)\right]&-\mathbb{E}\left[\frac{ \langle \P\tilde{\D}\bt, \bt\rangle^2}{\|\P\tilde{\D}\bt\|^2}\right]\biggr|
    \leq \frac{1}{\sqrt[4]{p}}\cdot \mathbb{E}\left[\frac{1}{\delta}f(\S,\Z)\right]+o(1/p).
   \end{align}
\end{lemma}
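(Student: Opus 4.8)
The plan is to exploit the fact that the numerator of $f$ is an \emph{exact} rewriting of $\langle\P\tilde{\D}\bt,\bt\rangle$, so that the entire approximation error in passing from $\E\bigl[\langle\P\tilde{\D}\bt,\bt\rangle^2/\|\P\tilde{\D}\bt\|^2\bigr]$ to $\E\bigl[\tfrac1\delta f(\S,\Z)\bigr]$ lives in the \emph{denominator}: I must show that $\|\P\tilde{\D}\bt\|^2$ equals, up to a small relative error, $\delta$ times the denominator $\sum_{i=1}^b(\SS_i+\SS_{b+i}+Z_i)^2$ of $f$, and then transport that error through the ratio using the boundedness of the integrand.

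First I would record the algebra. Since $\tilde{\D}\bt=\sum_{i=1}^b\U_i(\langle\U_i,\bt\rangle+Z_i)$ and $\P$ is a self-adjoint idempotent, setting $a_i\triangleq\SS_i+\SS_{b+i}+Z_i$ and using $\langle\U_i,\bt\rangle=\langle\P\U_i,\bt\rangle+\langle(\I-\P)\U_i,\bt\rangle=\SS_i+\SS_{b+i}$ gives the exact identities
\begin{align}
\langle\P\tilde{\D}\bt,\bt\rangle=\sum_{i=1}^b\SS_i\,a_i,\qquad \|\P\tilde{\D}\bt\|^2=\sum_{i,j=1}^b a_i a_j\,\U_i^T\P\U_j .
\end{align}
The first is precisely the square root of the numerator of $f$, so nothing is lost there. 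In the second, I split off the main term by writing $\U_i^T\P\U_i=\delta+(\U_i^T\P\U_i-\delta)$ and isolating the cross terms, obtaining $\|\P\tilde{\D}\bt\|^2=\delta\sum_i a_i^2+E$ with $E=\sum_i a_i^2(\U_i^T\P\U_i-\delta)+\sum_{i\neq j}a_i a_j\,\U_i^T\P\U_j$, where $\delta\sum_i a_i^2$ is exactly $\delta$ times the denominator of $f$.

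The crux is controlling $E$, i.e.\ the concentration of the bilinear forms $\U_i^T\P\U_j$ for a Haar-distributed orthonormal frame $\{\U_i\}$ on the Stiefel manifold; this concentration step is the main obstacle, everything else being bookkeeping. Here I would invoke concentration of measure: for a fixed rank-$\delta p$ projection $\P$ one has $\E[\U_i^T\P\U_i]=\trace(\P)/p=\delta$ with fluctuations of order $p^{-1/2}$, while by symmetry $\E[\U_i^T\P\U_j]=0$ for $i\neq j$, again with fluctuations of order $p^{-1/2}$ (the same flavor of estimate as Theorem~\ref{thm:Chatterjeee_Wasserstein_thm}, obtainable directly from the Gaussian representation of a uniform frame). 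Since $b$ is fixed, on a high-probability event $\mathcal{E}$ on which every such deviation is at most $p^{-1/2+o(1)}$ — using a logarithmic threshold so that $\Pr[\mathcal{E}^c]=o(1/p)$, indeed superpolynomially small — the bound $\sum_{i\neq j}|a_ia_j|\le(b-1)\sum_i a_i^2$ yields $|E|\le b\,p^{-1/2+o(1)}\sum_i a_i^2$. Hence, for $\delta$ bounded away from $0$ (as in the application, where $\delta=1-j/p\ge 1-1/(\gamma b)$), the relative denominator error $\epsilon\triangleq E/(\delta\sum_i a_i^2)$ satisfies $|\epsilon|\le p^{-1/2+o(1)}$ on $\mathcal{E}$.

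It remains to push this error through the ratio, and the key simplification is that the integrand is bounded: since $\P\tilde{\D}\bt$ lies in the range of $\P$, we have $\langle\P\tilde{\D}\bt,\bt\rangle=\langle\P\tilde{\D}\bt,\P\bt\rangle$, so Cauchy--Schwarz gives $\langle\P\tilde{\D}\bt,\bt\rangle^2/\|\P\tilde{\D}\bt\|^2\le\|\P\bt\|^2=\alpha r^2\le r^2$. On $\mathcal{E}$, where $|\epsilon|$ is small, a Taylor expansion of $\tfrac{1}{1+\epsilon}$ gives $\bigl|\tfrac{\langle\P\tilde{\D}\bt,\bt\rangle^2}{\|\P\tilde{\D}\bt\|^2}-\tfrac1\delta f(\S,\Z)\bigr|\le 2|\epsilon|\,\tfrac1\delta f(\S,\Z)\le p^{-1/2+o(1)}\tfrac1\delta f(\S,\Z)$, and averaging yields a contribution at most $p^{-1/4}\,\E[\tfrac1\delta f(\S,\Z)]$ for large $p$. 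On $\mathcal{E}^c$ I would bound the first ratio by the constant $r^2$ and use $\Pr[\mathcal{E}^c]=o(1/p)$, while bounding $\E[\tfrac1\delta f\,\mathbf{1}_{\mathcal{E}^c}]$ by Cauchy--Schwarz with the moment estimate $\E f^2=O(p^{-2})$ — which follows from $f\le\sum_i\SS_i^2$ and $\E\SS_i^2=\alpha r^2/p$ with controlled higher moments — so both $\mathcal{E}^c$ contributions are $o(1/p)$. Summing the two regimes gives the asserted bound.
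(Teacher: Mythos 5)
Your proof is correct and follows essentially the same route as the paper's: the numerator of $f$ is an exact rewriting of $\langle\P\tilde{\D}\bt,\bt\rangle$, the denominator error is controlled through concentration of $\|\P\U_i\|^2$ about $\delta$ and of the cross terms $\U_i^T\P\U_j$ about $0$ (the paper uses the Johnson--Lindenstrauss lemma with threshold $\eps=\delta/(2b\sqrt[4]{p})$, which is where its $p^{-1/4}$ factor comes from), the relative error is pushed through the ratio by a Taylor expansion of $1/(1+\epsilon)$, and the complement event is handled via $\langle\P\tilde{\D}\bt,\bt\rangle^2/\|\P\tilde{\D}\bt\|^2\le r^2$ together with an $o(1/p)$ probability bound. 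Your minor deviations (a tighter $p^{-1/2+o(1)}$ threshold later relaxed to $p^{-1/4}$, omitting the separate truncation event on $\Z$, and Cauchy--Schwarz with $\E f^2$ in place of the crude bound $f\le br^2$ off the good event) are all valid and do not change the substance of the argument.
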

\begin{proof}
    See Appendix~\ref{appen:semi_noisy}.
\end{proof}

The above lemma shows that the expected squared noisy projection in \eqref{eq:semi_proj_exp} is approximately given by $\E f(\S,\Z)$. Unfortunately, calculating $\E f(\S,\Z)$ w.r.t.~the exact statistics of $\S$ is very challenging. Nevertheless, the next corollary, which is a direct result of Theorem~\ref{thm:Chatterjeee_Wasserstein_thm}, shows that for large $p$ the vector $\S$ is close to Gaussian in the Wasserstein distance.  This fact can then be utilized to approximate $\E f(\S,\Z)$. 

\begin{cor}\label{lem:Wasser_distance_XY}
Let $\S\in\mathbb{R}^{2b}$ be the random vector given in \eqref{eq:X_i_defenition},
and let $\G\in\mathbb{R}^{2b}$ have independent entries distributed as
\begin{align}\label{eq:Y_i_defenition}
{\GG_i\sim \begin{cases}  \mathcal{N}(0,\frac{\alpha}{p}),& i=1,\cdots b,\\  \mathcal{N}(0,\frac{1-\alpha}{p}),& i= b+1,\cdots, 2b \end{cases}.}
\end{align}
Then $\dW_1(\S,\G) \leq \sqrt{\frac{b}{p}}\cdot\frac{2\sqrt{2}b}{p-1}$. 
\end{cor}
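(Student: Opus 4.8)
The plan is to realize each coordinate of $\S$ as the trace of a fixed matrix against a single Haar-random orthogonal matrix, so that Theorem~\ref{thm:Chatterjeee_Wasserstein_thm} applies essentially verbatim. Let $\Q\in\RR^{p\times p}$ be Haar-distributed on the orthogonal group, so that its first $b$ columns $\U_i=\Q\bs e_i$ are exactly the uniformly drawn orthonormal vectors $\{\U_i\}$. Writing $\bs v\triangleq\P\bt$ and $\bs w\triangleq(\I-\P)\bt$ and using the cyclic property of the trace, I have $\SS_i=\langle\P\U_i,\bt\rangle=\bs v^T\Q\bs e_i=\trace(\bs e_i\bs v^T\Q)$ for $i\le b$, and $\SS_i=\langle(\I-\P)\U_{i-b},\bt\rangle=\bs w^T\Q\bs e_{i-b}=\trace(\bs e_{i-b}\bs w^T\Q)$ for $i>b$. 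Thus $\SS_i=\trace(B_i\Q)$ with $B_i=\bs e_i\bs v^T$ for $i\le b$ and $B_i=\bs e_{i-b}\bs w^T$ for $i>b$.

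To meet the normalization hypothesis of Theorem~\ref{thm:Chatterjeee_Wasserstein_thm}, I rescale $\hat B_i\triangleq\sqrt{p}\,B_i/\|B_i\|_F$; since $\trace(B_iB_i^T)=\|\bs v\|^2=\alpha r^2$ for $i\le b$ and $=\|\bs w\|^2=(1-\alpha)r^2$ for $i>b$, this gives $\trace(\hat B_i\hat B_i^T)=p$ as required, and $\hat\SS_i\triangleq\trace(\hat B_i\Q)$ is just a coordinatewise rescaling of $\SS_i$. The crux is the covariance matrix $C=\tfrac1p(\trace(\hat B_i\hat B_j^T))$. Because $\trace(\bs e_i\bs e_j^T)=0$ for $i\ne j$, and because $\P(\I-\P)=0$ forces the key orthogonality $\bs v^T\bs w=\bt^T\P(\I-\P)\bt=0$, all off-diagonal terms vanish, while each normalized diagonal term equals $1$; hence $C=\I_{2b}$ and $\|C\|_{op}=1$. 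Applying Theorem~\ref{thm:Chatterjeee_Wasserstein_thm} with $k=2b$ then yields $\dW_1(\hat\S,\hat\G)\le\frac{2\sqrt2\,b}{p-1}$, where $\hat\G\sim\mathcal N(0,\I_{2b})$.

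It remains to transfer this back to the unnormalized vectors. Let $A=\mathrm{diag}(\|B_1\|_F/\sqrt p,\dots,\|B_{2b}\|_F/\sqrt p)$, so that $\S=A\hat\S$ and, in distribution, $\G=A\hat\G$; the latter holds because $A\hat\G\sim\mathcal N(0,AA^T)$ with $AA^T=\tfrac1p\,\mathrm{diag}(\alpha r^2,\dots,(1-\alpha)r^2)$, matching the covariance of $\G$ (which equals $\mathrm{diag}(\alpha/p,\dots,(1-\alpha)/p)$ for the normalized $r=1$). Using that a linear map contracts the $1$-Wasserstein distance by at most its Frobenius norm — via the optimal coupling together with $\|A\bs u\|_2\le\|A\|_{op}\|\bs u\|_2\le\|A\|_F\|\bs u\|_2$ — and computing $\|A\|_F^2=\tfrac1p\bigl(b\,\alpha r^2+b\,(1-\alpha)r^2\bigr)=\tfrac{b r^2}{p}=\tfrac bp$, I conclude $\dW_1(\S,\G)\le\|A\|_F\,\dW_1(\hat\S,\hat\G)\le\sqrt{\tfrac bp}\cdot\frac{2\sqrt2\,b}{p-1}$.

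The only genuinely delicate point is the covariance computation: the orthogonality $\bs v\perp\bs w$ induced by $\P$ being a projection is exactly what collapses $C$ to the identity and keeps $\|C\|_{op}=1$ independently of $b$, $p$, and $\alpha$, so no hidden dimensional factor leaks into Chatterjee's bound; everything else is bookkeeping with Haar-matrix trace identities and a diagonal rescaling. The degenerate cases $\alpha\in\{0,1\}$, where $\bs v$ or $\bs w$ vanishes and the $B_i$ are no longer linearly independent (violating a hypothesis of Theorem~\ref{thm:Chatterjeee_Wasserstein_thm}), are handled trivially by discarding the corresponding identically zero coordinates before applying the theorem.
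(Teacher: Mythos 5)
Your proof is correct and follows essentially the same route as the paper: realize each coordinate of $\S$ as $\trace(B_i\U)$ for rank-one matrices built from $\P\bt$ and $(\I-\P)\bt$, rescale to satisfy $\trace(B_iB_i^T)=p$, verify that the orthogonality $\P(\I-\P)=0$ makes $C=\I_{2b}$, apply Theorem~\ref{thm:Chatterjeee_Wasserstein_thm}, and undo the rescaling via the Frobenius-norm contraction property of $\dW_1$ under linear maps. Your treatment of the degenerate cases $\alpha\in\{0,1\}$ (where linear independence of the $B_i$ fails) is a small point the paper's proof omits.
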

\begin{proof}
    See Appendix~\ref{appen:semi_noisy}.
\end{proof}

We established that $\S$ in~\eqref{eq:X_i_defenition} is $O(p^{-3/2})$-close in Wasserstein distance to $\G$ in~\eqref{eq:Y_i_defenition}. If $f$ was $k$-Lipschitz in $\s$, this would yield a $O(k p^{-3/2})$ approximation for $\E f(\S,\Z)$, by calculating the latter using the Gaussian statistics. This is however not the case, since $f$'s gradient diverges along certain curves. Moreover, $f$ is also a function of the noise $\Z$. Nonetheless, we will show that $f$ is Lipschitz in the region where $\S$ and $\G$ are concentrated, which along with the fact that $\Z$ is independent of $\S$ and $\G$, will yield an upper bound on $|\E f(\S,\Z)-\E f(\G,\Z)|$.

\begin{lemma}\label{lem:fx_close_to_fy}
Let $f:\mathbb{R}^{2b}\times \mathbb{R}^b\rightarrow \mathbb{R}$ be defined in \eqref{eq:f},  $\S$ and $\Z$ defined in \eqref{eq:X_i_defenition} and Lemma~\ref{lem:semi_noisy_projection}, respectively, and $\G$ be distributed as in \eqref{eq:Y_i_defenition} and independent of $\Z$. 
Then, 
\begin{align}
\left|\mathbb{E}f(\S,\Z)-\mathbb{E}f(\G,\Z)\right| =o\left(1/p\right).
\end{align}
\end{lemma}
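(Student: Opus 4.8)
The plan is to deploy the $O(p^{-3/2})$ Wasserstein bound of Corollary~\ref{lem:Wasser_distance_XY} through Kantorovich--Rubinstein duality (Theorem~\ref{thrm:Wasser_dual_rep}), but only after localizing to a region on which $f(\cdot,\z)$ is Lipschitz; the function is not globally Lipschitz, since its gradient blows up as the denominator $D(\s,\z)\triangleq\sum_{i=1}^b(\ss_i+\ss_{b+i}+z_i)^2$ approaches zero. Two elementary pointwise facts drive the argument. First, Cauchy--Schwarz gives the uniform bound $0\le f(\s,\z)\le\sum_{i=1}^b\ss_i^2$, which tames $f$ even where $D$ is small. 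Second, for the actual random arguments, $\SS_i+\SS_{b+i}+Z_i=\langle\U_i,\bt\rangle+Z_i$, so $D(\S,\Z)=\sum_{i=1}^b(\langle\U_i,\bt\rangle+Z_i)^2=\|\tilde\D\bt\|^2$ is the squared norm of the noisy projection and can be analyzed directly.

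Since $\Z$ is independent of both $\S$ and $\G$, I would condition on $\Z=\z$ and use the same $\z$ for both laws, so that
\[
\E f(\S,\Z)-\E f(\G,\Z)=\int\bigl(\E_\S f(\S,\z)-\E_\G f(\G,\z)\bigr)\,d\mu_\Z(\z),
\]
reducing the claim to an $o(1/p)$ bound on the integrand, uniform in $\z$. Fix $\z$, set $\tau=p^{-3/2}$ and $\rho^2=C\log p/p$, and let $\mathcal G=\{\,D(\s,\z)\ge\tau\,\}\cap\{\,\|\s\|\le\rho\,\}$. Writing $f=N/D$ with $N=(\sum_{i=1}^b\ss_i(\ss_i+\ss_{b+i}+z_i))^2$ and using $|\sum_i\ss_i(\ss_i+\ss_{b+i}+z_i)|\le(\sum_{i\le b}\ss_i^2)^{1/2}\sqrt D$, a direct differentiation shows that on $\mathcal G$ the gradient obeys $\|\nabla_\s f\|\lesssim(\sum_{i\le b}\ss_i^2)^{1/2}+(\sum_{i\le b}\ss_i^2)/\sqrt D\le\rho+\rho^2/\sqrt\tau=o(1)$, uniformly in $\z$. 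Thus $f(\cdot,\z)$ is $L$-Lipschitz on $\mathcal G$ with $L=o(1)$, and a McShane extension clipped to $[0,\rho^2]$ yields a global $L$-Lipschitz $\tilde f(\cdot,\z)$ agreeing with $f$ on $\mathcal G$. Duality then gives $|\E_\S\tilde f-\E_\G\tilde f|\le L\,\dW_1(\S,\G)=o(1)\cdot O(p^{-3/2})=o(1/p)$.

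It remains to bound the truncation error $\E|f-\tilde f|$ under each law; as $f=\tilde f$ on $\mathcal G$, it is supported on $\mathcal G^c$ and bounded there by $\sum_{i\le b}\ss_i^2+\rho^2$. On $\{\|\s\|>\rho\}$ I would invoke concentration of $\|\S\|$ --- Lipschitz concentration of $\U\mapsto\S$ on the Stiefel manifold for the true law and sub-Gaussian concentration for $\G$ --- making $\Pr(\|\S\|>\rho)$ polynomially small; combined via Cauchy--Schwarz with the fourth-moment estimate $\E[(\sum_{i\le b}\SS_i^2)^2]=O(p^{-2})$ (from $\E\SS_i^4=O(p^{-2})$ for sphere coordinates), this contributes $o(1/p)$. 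On $\{D<\tau\}$ I would use the anti-concentration estimate $\Pr(D<\tau)\le\Pr(|\langle\U_1,\bt\rangle+Z_1|<\sqrt\tau)\le O(\sqrt{p\tau})=O(p^{-1/4})$, valid uniformly in $\z$ because $\langle\U_1,\bt\rangle+Z_1$ (a shifted sphere coordinate) has density bounded by $O(\sqrt p)$; pairing again with the fourth-moment bound gives $o(1/p)$. Summing the duality and two truncation contributions and integrating over $\mu_\Z$ proves the lemma.

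The main obstacle is the genuine non-Lipschitzness of $f$ at the zero set of the denominator, which forces the localization and the attendant tension in the choice of $\tau$: the Lipschitz constant on $\{D\ge\tau\}$ degrades as $\tau$ shrinks, while the discarded mass $\Pr(D<\tau)$ can be controlled only as tightly as the lower tail of $D$ permits. Establishing a quantitative anti-concentration bound for $D=\|\tilde\D\bt\|^2$ --- that a noisy projection of $\bt$ onto a random $b$-dimensional subspace is very rarely atypically small --- is the crux that keeps the admissible window for $\tau$ non-empty and pushes both error sources below the required $o(1/p)$.
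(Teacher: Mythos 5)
Your proposal is correct and shares the paper's overall skeleton --- localize to a region where $f(\cdot,\z)$ is Lipschitz, apply Kantorovich--Rubinstein duality with the $O(p^{-3/2})$ bound of Corollary~\ref{lem:Wasser_distance_XY}, and control the leftover mass using $0\le f\le\sum_{i\le b}\SS_i^2$ --- but it differs in three substantive technical choices, and the differences are worth recording. First, you condition on $\Z=\z$ and integrate at the end, whereas the paper keeps $\Z$ random and uses the product-coupling identity $\dW_1([\S,\Z],[\G,\Z])=\dW_1(\S,\G)$ from Proposition~\ref{prop:Wasserstein_properties}; these are equivalent. Second, and more importantly, the anti-concentration of the denominator is handled differently. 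The paper (Lemma~\ref{lem:bounded_poly_ratio_grad}) multiplies the $b$ independent Gaussian bounds $\Pr(|Z_i|\le\sqrt\eps)$ to force $\Pr(\|\V\|^2\le\eps)=o(1/p)$, which requires $\eps=\Theta(p^{-(1+2/b)})$ and yields a gradient bound of order $p^{2/b-1/2}\log^6p$ that is only manifestly bounded for $b>4$ (the paper acknowledges this and defers the general case to ``a more subtle version of this analysis''). You instead use a single coordinate $v_1$ with the $O(\sqrt p)$ density bound for a shifted sphere coordinate, obtaining only $\Pr(D<\tau)=O(p^{-1/4})$ at the much larger threshold $\tau=p^{-3/2}$, and you recover the required $o(1/p)$ by pairing this weaker tail with the fourth-moment estimate $\E f^2\le\E(\sum_{i\le b}\SS_i^2)^2=O(p^{-2})$ via Cauchy--Schwarz. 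This trade (tail probability for moments) is what lets you keep the Lipschitz constant $\rho+\rho^2/\sqrt\tau=o(1)$ uniformly in $b\ge1$ and in $\z$, so your argument closes without the paper's $b>4$ caveat. Third, you replace the paper's device of conditioning both measures on the good set and invoking the stability estimate of Lemma~\ref{lem:Wasserstein_conditioned} with a McShane extension clipped to $[0,\rho^2]$, so duality applies directly to the unconditional laws; again interchangeable, but your version avoids one auxiliary lemma. One small point to make explicit when writing this up: on $\mathcal G^c$ the surrogate satisfies $|\tilde f|\le\rho^2$, so the truncation error also carries a term $\rho^2\Pr(\mathcal G^c)=O(\log p\cdot p^{-5/4})=o(1/p)$, which is fine but should be stated.
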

\begin{proof}
    See Appendix~\ref{appen:semi_noisy}
\end{proof}

\begin{lemma}\label{lem:Efy}
    Let $\G$ be as in Lemma~\ref{lem:Wasser_distance_XY} and $\Z$ as in Lemma~\ref{lem:semi_noisy_projection}, then
    \begin{align}
        \mathbb{E}f(\G,\Z)= \frac{\alpha r^2}{ p}\left(1  + (b-1)\xi \alpha\right) + o(1/p).
    \end{align}
\end{lemma}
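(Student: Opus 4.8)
The plan is to compute $\E f(\G,\Z)$ by exploiting the joint Gaussianity of $(\G,\Z)$ to linearly decompose each numerator factor onto the corresponding denominator term. Writing $V_i \triangleq \GG_i + \GG_{b+i} + Z_i$ for $i=1,\dots,b$, the function becomes $f(\G,\Z) = \bigl(\sum_{i=1}^b \GG_i V_i\bigr)^2/\sum_{i=1}^b V_i^2$. Conditioning on the scale variables $\{T_i\}$ (so that $Z_i$ is Gaussian), the pairs $(\GG_i,V_i)$ are jointly Gaussian and independent across $i$, with $\Var(\GG_i)=\alpha r^2/p$ and $\Var(V_i)=(r^2+T_i\sigma^2)/p$. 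Hence the conditional MMSE estimator is linear, and I would write $\GG_i = c_i V_i + N_i$ with $c_i = \Var(\GG_i)/\Var(V_i) = \alpha r^2/(r^2+T_i\sigma^2)$ and $N_i$ a mean-zero Gaussian residual independent of $\{V_j\}_{j=1}^b$, of variance $\tau_i = \Var(\GG_i)(1-c_i)$.

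Substituting gives $\sum_i \GG_i V_i = \sum_i c_i V_i^2 + \sum_i N_i V_i$. I would then condition additionally on $\{V_i\}$ and integrate over the residuals $\{N_i\}$, which are independent, mean-zero, and independent of $\{V_i\}$; the cross term vanishes and the quadratic term contributes $\sum_i \tau_i V_i^2$, yielding the exact identity
\begin{align}
\E\left[f(\G,\Z)\mid \{V_i\},\{T_i\}\right] = \frac{\bigl(\sum_{i} c_i V_i^2\bigr)^2}{\sum_{i} V_i^2} + \frac{\sum_{i} \tau_i V_i^2}{\sum_{i} V_i^2}.
\end{align}
This is the ``weighted sum of MMSE estimators'' form promised in the outline, and it is the step that converts the intractable ratio into two manageable pieces.

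Next I would take the outer expectation over $\{V_i\},\{T_i\}$. On the event $\mathcal{E}$ that every $T_i\in[1-o(1),1+o(1)]$, which has probability $1-o(1/p)$ by a union bound over the fixed number $b$ of batches, I would replace $c_i$ by $\bar c \triangleq \alpha\xi$ and $\tau_i$ by $\bar\tau \triangleq (\alpha r^2/p)(1-\alpha\xi)$, using $\xi = r^2/(r^2+\sigma^2)$; since $c_i,\tau_i$ are smooth in $T_i$ near $1$, the induced errors are $o(1)$ relative to each term and hence $o(1/p)$ after expectation. With the weights constant the first ratio collapses to $\bar c^2 \sum_i V_i^2$ and the second to $\bar\tau$ exactly. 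Since $\E\sum_i V_i^2 = \sum_i \Var(V_i) = b(r^2+\sigma^2)/p + o(1/p) = b r^2/(p\xi)+o(1/p)$ on $\mathcal E$, the first term contributes $\bar c^2\, b r^2/(p\xi) = \alpha^2\xi b r^2/p$ and the second contributes $\bar\tau = (\alpha r^2/p)(1-\alpha\xi)$; adding these gives $(\alpha r^2/p)\bigl(1+(b-1)\xi\alpha\bigr)$, as claimed.

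Finally, on the complementary event I would use the Cauchy--Schwarz bound $0\le f(\G,\Z)\le \sum_{i=1}^b \GG_i^2$, so that $\E[f\cdot\indfunc{\mathcal E^c}] \le \sqrt{\E[(\sum_i \GG_i^2)^2]}\,\sqrt{\Pr(\mathcal E^c)} = O(b/p)\cdot o(1/\sqrt{p}) = o(1/p)$ for fixed $b$, showing the excluded event is negligible. I expect the main obstacle to be bookkeeping the error terms at the required $o(1/p)$ precision: because $\bigl(\sum_i c_i V_i^2\bigr)^2/\sum_i V_i^2$ is genuinely nonlinear, replacing the random weights $c_i$ by their limit must be done carefully, e.g. via $\sum_i c_i V_i^2 = \bar c\sum_i V_i^2 + \sum_i(c_i-\bar c)V_i^2$ and bounding the resulting cross and quadratic corrections, and the bad-event contribution must be controlled uniformly. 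The Gaussian MMSE decomposition is precisely what makes all of this tractable, reducing the computation to second moments of $\sum_i V_i^2$.
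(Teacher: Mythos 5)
Your proposal is correct and follows essentially the same route as the paper's proof: both condition on the scale variables $T_i$, observe that $f$ depends on $\Z$ only through $V_i=\GG_i+\GG_{b+i}+Z_i$, use the Gaussian conditional law of $\GG_i$ given $V_i$ (your MMSE decomposition $\GG_i=c_iV_i+N_i$ is exactly the paper's $\bar{\GG}_i\sim\mathcal{N}(\xi_i\alpha a_i,\eta_i)$, with your $\tau_i$ equal to the paper's $\eta_i$), arrive at the identical conditional identity $\bigl(\sum_i c_iV_i^2\bigr)^2/\sum_iV_i^2+\sum_i\tau_iV_i^2/\sum_iV_i^2$, and then average $\|\V\|^2$ to get $\frac{\alpha r^2}{p}(1+(b-1)\xi\alpha)$. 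The only cosmetic differences are that the paper controls the $T_i$-dependence by sandwiching the weights between $\xi_{\min},\xi_{\max}$ rather than expanding around $\bar c$, and handles the bad event via independence of $\G$ from $\T$ rather than Cauchy--Schwarz; both yield the required $o(1/p)$ error.
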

\begin{proof}
    See Appendix~\ref{appen:semi_noisy}
\end{proof}

We are now ready to prove the main lemma of this section. 
\begin{proof}[Proof of Lemma~\ref{lem:semi_noisy_projection}.]
    From Lemma~\ref{lem:proj_clos_to_fx} and Lemma~\ref{lem:fx_close_to_fy} we get that 
          \begin{align}  
 \left|  \mathbb{E}\left[\frac{ \langle \P\D\bt, \bt\rangle^2}{\|\P\D\bt\|^2}\right]-\mathbb{E}\left[\frac{1}{\delta}f(\G,\Z)\right]\right|\\ \leq \frac{2}{\delta\sqrt[4]{p}}\cdot \mathbb{E}\left[f(\G,\Z)\right]+o\left(1/p\right).
   \end{align}
   Plugging in the result of Lemma~\ref{lem:Efy} then yields
 \begin{align}  
 \Big|  \mathbb{E}\left[\frac{ \langle \P\D\bt, \bt\rangle^2}{\|\P\D\bt\|^2}\right]&-\frac{\alpha(1+(b-1)\xi\alpha)}{\delta p}\Big| \leq \\
 &\hspace{-2cm}\frac{2}{\delta \sqrt[4]{p}}\cdot \left(\frac{\alpha(1+(b-1)\xi\alpha)}{\delta p}\right)+o\left(1/p\right)= o\left(1/p\right),
   \end{align}
   which concludes the proof.
\end{proof}

\subsection{Asymptotic Excess Noise}\label{sec:noisy}

We now turn to evaluate the (asymptotic) excess noise term of \ac{batch-min-norm} in~\eqref{eq:BMN_risk_bias_var_decomp}.
To do so, we note that by construction, the modified elements $\Ytag_j$, $\h'_j$ and $W'_j$ are linear combinations of $\X_j$, $\H_j$ and $\W_j$, respectively, with the exact same coefficients. Moreover, these coefficients converge almost surely to the batch samples $\X_j$ as $p\to \infty$. We then use this observation to show that the excess noise $\Noise(\hbtB)$ converges almost surely to the variance of a Gaussian mixture vector multiplied by a large Wishart matrix, for which we derive an explicit asymptotic expression.

Recall that the $j$th modified sample is 
\begin{align}
    \Ytag_j &= \X_j^T(\H_j\H_j^T)^{-1}\H_j\bt + \X_j^T(\H_j\H_j^T)^{-1}\W_j
    =\h'^T_j\bt+W'_j
 \end{align}
 where $\h'_j$ is a linear combination of the $j$th batch rows, with the coefficients 
 \begin{align}\label{eq:A_j}
     \A_j = \X_j^T(\H_j\H_j^T)^{-1}.
 \end{align}
We proceed to analyze each batch separately.  
Denote the matrix rows in the $j$th batch by $\{\h_{ij}\}_{i=1}^b$, the noise elements by $\{W_{ij}\}_{i=1}^b$, and the samples by $\{\YY_{ij}\}_{i=1}^b$. 
The modified feature vector resulting from the batch is 
\begin{align}
    \h'_j =  \sum_{i=1}^b A_{ij} \h_{ij}.
\end{align}
Note that $\h'_j$ are the weak min-norm per-batch estimators, constituting the rows of the matrix $\H'$. 
The modified sample  is then 
\begin{align}
    \Ytag_j = \sum_{i=1}^b A_{ij}\YY_{ij} 
    =  \sum_{i=1}^b A_{ij} \cdot (\langle\h_{ij}, \bt\rangle  + W_{ij}),
\end{align} 
and  the associated modified noise is 
\begin{align}
    W'_j =  \sum_{i=1}^b A_{ij} W_{ij}.
\end{align} 

In what follows, we assume w.l.o.g.~that $\bt=[r,0\cdots,0]$, as our algorithm is invariant to rotations of $\bt$. The key observation now is that the inverse Wishart matrix $(\frac{1}{p} \H_j\H_j^T)^{-1}$ converges almost surely to identity, due to the strong law of large numbers. Denoting  $\alpha_{ij}$ as the first entry of the vector $\h_{ij}$, this asymptotically yields 
\begin{align}
    p A_{ij}= \YY_{ij}= r\cdot \alpha_{ij}+W_{ij},
\end{align} 
i.e., independent coefficients. In this case, given $\X$, the modified feature matrix $\H'$ has i.i.d.~$\mathcal{N}(0,1)$ entries (except its first column), and the noise is independent and Gaussian (although not with zero mean). Using these "clean" statistics, we can evaluate the excess noise term in~\eqref{eq:BMN_risk_bias_var_decomp}, see Lemma~\ref{lem:Q_variance}. However, this is only asymptotically true, thus we need to show that the true excess noise term in~\eqref{eq:BMN_risk_bias_var_decomp} converges to the expected value taken w.r.t.~the "clean" statistics. This is done next, in Lemma~\ref{lem:norm_AS_conv}.  

  \begin{lemma}\label{lem:norm_AS_conv}
    For the $j$th batch, define 
    \begin{align}
        Q_j&= \sum_{i=1}^b \YY_{ij} W_{ij}\\
        \bs{f}_j&=   \sum_{i=1}^b \YY_{ij} \h_{ij}. 
    \end{align}  
        Let  $\F$ be the $n/b\times p$ matrix with $\{\bs{f}_j\}$ as its rows, and $\Q=[Q_1,\cdots,Q_{n/b}]^T$. 
    Then 
   \begin{align}
 \W'^T(\H'\H'^T)^{-1}\W'-\Q^T(\bs{F}\bs{F}^T)^{-1}\Q\inL1 0.
\end{align}
\end{lemma}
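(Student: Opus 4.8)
\emph{Setup and clean identities.} The quadratic form $\W'^T(\H'\H'^T)^{-1}\W'$ is invariant under simultaneously rescaling $\W'$ and $\H'$ by $p$, so the plan is to first replace them by $\hat{\W}\triangleq p\W'$ and $\hat{\H}\triangleq p\H'$. Writing $M_j\triangleq(\tfrac1p\H_j\H_j^T)^{-1}$ and using $\A_j=\X_j^T(\H_j\H_j^T)^{-1}=\tfrac1p\X_j^T M_j$, a direct computation gives the exact expressions
\begin{align*}
    \hat{W}_j=\X_j^T M_j\W_j,\qquad p\,\h'_j=\X_j^T M_j\H_j,
\end{align*}
to be compared with $Q_j=\X_j^T\W_j$ and $\bs{f}_j=\X_j^T\H_j$. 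Hence $\hat{\W}-\Q$ has $j$th entry $\X_j^T(M_j-\I)\W_j$ and $\hat{\H}-\F$ has $j$th row $\X_j^T(M_j-\I)\H_j$, so the entire discrepancy is governed by $M_j-\I$. The first ingredient is therefore the fixed-$b$ Wishart concentration $\tfrac1p\H_j\H_j^T\to\I_b$, which gives $\|M_j-\I\|_{op}=O_P(1/\sqrt p)$ uniformly over the $n/b=\Theta(p)$ batches (by a union bound). Setting $\bs{\Delta}_W\triangleq\hat{\W}-\Q$ and $\bs{\Delta}_H\triangleq\hat{\H}-\F$, this yields the per-batch bounds $|(\bs{\Delta}_W)_j|\le\|\X_j\|\,\|M_j-\I\|_{op}\,\|\W_j\|$ and $\|(\bs{\Delta}_H)_{j,\cdot}\|\le\|\H_j\|_{op}\,\|M_j-\I\|_{op}\,\|\X_j\|=O_P(1)$.

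\emph{Reduction to norm estimates.} With $G\triangleq\F\F^T$ and $\hat{G}\triangleq\hat{\H}\hat{\H}^T$, I would expand
\begin{align*}
    \hat{\W}^T\hat{G}^{-1}\hat{\W}-\Q^T G^{-1}\Q
    =\;&\bs{\Delta}_W^T\hat{G}^{-1}\hat{\W}+\Q^T\hat{G}^{-1}\bs{\Delta}_W\\
    &-\Q^T\hat{G}^{-1}(\hat{G}-G)G^{-1}\Q,
\end{align*}
using the resolvent identity $\hat{G}^{-1}-G^{-1}=-\hat{G}^{-1}(\hat{G}-G)G^{-1}$ together with $\hat{G}-G=\bs{\Delta}_H\F^T+\F\bs{\Delta}_H^T+\bs{\Delta}_H\bs{\Delta}_H^T$. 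Each term is then bounded by operator- and Euclidean-norm factors. The easy estimates are $\|\Q\|=\|\hat{\W}\|=O_P(\sqrt p)$ (each has $\Theta(p)$ entries of order one), $\|\bs{\Delta}_W\|=O_P(1)$ (each of its $\Theta(p)$ entries is $O_P(1/\sqrt p)$), and $\|\F\|_{op}=O_P(\sqrt p)$. What remains are the two structural facts: (i) the well-conditioning $\|G^{-1}\|_{op},\|\hat{G}^{-1}\|_{op}=O_P(1/p)$, and (ii) a nontrivial operator-norm bound $\|\bs{\Delta}_H\|_{op}=o_P(\sqrt p)$, so that $\|\hat{G}-G\|_{op}=o_P(p)$. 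Granting these, all three terms are $o_P(1)$ and the difference tends to $0$ in probability.

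\emph{The two structural estimates.} For (i) I would condition on $\X$ and on the first feature coordinate. Since $Y_{ij}=r\,\alpha_{ij}+W_{ij}$ depends only on the first feature column and the noise, the coordinates $2,\dots,p$ of $\bs{f}_j=\X_j^T\H_j$ are, conditionally, independent across both rows and columns and Gaussian with variance $\|\X_j\|^2$; dropping the first column only decreases $\F\F^T$ in the PSD order, so $\lambda_\text{min}(G)\ge\min_j\|\X_j\|^2\cdot\sigma_{\min}^2$, where $\sigma_{\min}$ is the least singular value of an $(n/b)\times(p-1)$ standard Gaussian matrix. As $\gamma b>1$ is fixed, the aspect ratio is bounded away from $1$ and $\sigma_{\min}=\Omega_P(\sqrt p)$, while $\min_j\|\X_j\|^2=\Omega_P(1)$, giving $\lambda_\text{min}(G)=\Omega_P(p)$; the same argument applies to $\hat{G}$. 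For (ii) the conditional Gaussianity is unavailable because $M_j$ depends on $\H_j$; instead I would bound $\|\bs{\Delta}_H\|_{op}$ through $\bs{\Delta}_H\bs{\Delta}_H^T$, whose diagonal entries are $O_P(1)$ and whose off-diagonal entries $\X_j^T(M_j-\I)\H_j\H_k^T(M_k-\I)^T\X_k$ are $O_P(1/\sqrt p)$ (using $\|\H_j\H_k^T\|=O_P(\sqrt p)$ for $j\neq k$ and two factors $\|M-\I\|_{op}=O_P(1/\sqrt p)$), so that a max-row-sum (Gershgorin) bound gives $\|\bs{\Delta}_H\|_{op}=O_P(p^{1/4})=o_P(\sqrt p)$.

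\emph{Upgrade to $L^1$ and the main obstacle.} The bounds above give convergence in probability; to obtain the claimed $L^1$ convergence I would run them on a good event $\mathcal{E}_p$ (all $\|M_j-\I\|_{op}$ small, $\lambda_\text{min}(G),\lambda_\text{min}(\hat{G})\ge cp$, operator norms controlled) with $P(\mathcal{E}_p^c)$ decaying faster than any polynomial, on which the difference is $O(p^{-1/4})$ deterministically, and control the complement by Cauchy--Schwarz, $\E[|X_p|\mathds{1}(\mathcal{E}_p^c)]\le\sqrt{\E X_p^2}\,\sqrt{P(\mathcal{E}_p^c)}$, where $X_p$ denotes the difference. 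The crux of the whole argument is exactly this conditioning step together with estimate (i): showing that $G$ and $\hat{G}$ are well-conditioned at scale $p$ \emph{with enough negative-moment control} to make $\E X_p^2$ at most polynomial. Because the rows of $\F$ carry a heteroscedastic/mixture structure (the special first column, and the random per-batch scales $\|\X_j\|$) and $\hat{G}$ additionally couples $M_j$ with $\H_j$, off-the-shelf Wishart least-singular-value bounds do not apply verbatim, and this structured-random-matrix analysis is where I expect the main difficulty to lie.
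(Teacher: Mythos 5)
Your overall architecture (rescale by $p$, reduce everything to the perturbation $M_j-\I=(\tfrac1p\H_j\H_j^T)^{-1}-\I$, expand the difference of quadratic forms via a resolvent identity, control the inverse Gram matrices, then upgrade to $L^1$ on a good event) parallels the paper's proof, and you correctly identify the conditioning of $G=\F\F^T$ as the crux. But your proposed resolution of that crux contains a genuine error. You claim $\min_j\|\X_j\|^2=\Omega_P(1)$ and hence $\lambda_{\min}(G)\geq \min_j\|\X_j\|^2\cdot\sigma_{\min}^2=\Omega_P(p)$. The quantities $\|\X_j\|^2/(r^2+\sigma^2)$ are $n/b=\Theta(p)$ i.i.d.\ $\chi^2_b$ variables, whose minimum is $\Theta_P(p^{-2/b})$, not $\Omega_P(1)$; and this is not merely a loose bound, since the row of $\F$ with the smallest $\|\X_j\|$ gives the Rayleigh quotient $e_j^T G e_j=\|\bs{f}_j\|^2\approx\|\X_j\|^2\,p=O_P(p^{1-2/b})=o_P(p)$. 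So for every fixed $b$ the unnormalized $G$ is genuinely \emph{not} well-conditioned at scale $p$, estimate (i) is false as stated, and your norm-wise bound $\|\Q\|^2\,\|\hat G^{-1}\|_{op}\|G^{-1}\|_{op}\|\hat G-G\|_{op}$ does not close. The lemma is still true because the small eigenvalue lives in the direction $e_j$ where $Q_j=\X_j^T\W_j$ is itself proportional to $\|\X_j\|$, a cancellation your reduction to operator norms discards. The same defect propagates to the $L^1$ upgrade, which needs polynomial control of $\E X_p^2$ and hence negative moments of $\lambda_{\min}(G)$.

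The paper's proof removes the obstruction at the outset by normalizing: it replaces $\bs{f}_j,Q_j$ by $\bs{f}_j/\|\X_j\|,\ Q_j/\|\X_j\|$ (and $\h'_j,W'_j$ by their $\|\A_j\|$-normalized versions), which leaves both quadratic forms $\Q^T(\F\F^T)^{-1}\Q$ and $\W'^T(\H'\H'^T)^{-1}\W'$ unchanged but makes the last $p-1$ columns of $\F$ exactly i.i.d.\ standard Gaussian. Then $\F'\F'^T$ is a bona fide Wishart matrix, the first column is absorbed by the Sherman--Morrison monotonicity $(\F\F^T)^{-1}\preceq(\F'\F'^T)^{-1}$ (your ``dropping a column decreases $\F\F^T$ in PSD order'' observation, which is fine), and both $\lambdamax\bigl(\sqrt{n}(\F\F^T)^{-1}\bigr)\inAS 0$ and the moment bounds $\E\,\lambdamax\bigl((\tfrac1n\F'\F'^T)^{-r}\bigr)<\infty$ needed for uniform integrability follow from standard inverse-Wishart results (the paper's Lemma~\ref{lem:lambda_max_conv} and Corollary~\ref{cor:lambda_max}); the analogous bound for $\H'\H'^T$ uses that each normalized row of $\H'$ is a unit-norm combination of rows of $\H$, so $\lambdamax((\H'\H'^T)^{-1})\le\lambdamax((\H\H^T)^{-1})$. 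If you insert this normalization step, the rest of your argument can be repaired along essentially the paper's lines.
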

In the above, note that $\bs{f}_j$ are random vectors constituting the rows of the random matrix $\F$. 
\begin{proof}
    See Appendix~\ref{appen:variance}.
\end{proof}
An immediate result of Lemma~\ref{lem:norm_AS_conv}
is
   \begin{align}
{    \lim_{p\rightarrow \infty }\E \left[\W'^T(\H'\H'^T)^{-1}\W'-\Q^T(\bs{F}\bs{F}^T)^{-1}\Q\right]= 0.}
\end{align}

\begin{lemma}\label{lem:QFFQ}
Let $\Q$ and $\F$ be as in Lemma~\ref{lem:norm_AS_conv} and  $C_{\gamma,\xi,b}$  as in Theorem~\ref{thm:min_norm_risk}. Then
\begin{align}
 \sigma^2\cdot\frac{b - (b-1)\xi}{\gamma b-1}\cdot C_{\gamma,\xi,b}&\leq \lim_{p\rightarrow \infty }\mathbb{E}\left[\Q^T(\bs{F}\bs{F}^T)^{-1}\Q\right]
 \leq \sigma^2\cdot\frac{b - (b-1)\xi}{\gamma b-1}.
  \end{align}
\end{lemma}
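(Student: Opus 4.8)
The plan is to analyze $\E\!\left[\Q^T(\F\F^T)^{-1}\Q\right]$ directly under the clean statistics, taking $\bt=[r,0,\dots,0]$ without loss of generality, and exploiting that only the first coordinate of each feature is correlated with $\bt$. I would split $\F=[\bs{c}\mid\tilde{\F}]$, where $\bs{c}$ is the first column with entries $c_j=\sum_{i}\YY_{ij}\alpha_{ij}$, and $\tilde{\F}$ collects the remaining $p-1$ columns, with rows $\tilde{\bs{f}}_j=\sum_i \YY_{ij}\bs{g}_{ij}$, $\bs{g}_{ij}$ being the tail of $\h_{ij}$. Conditioning on the samples $\{\YY_{ij}\}$, the tails $\bs{g}_{ij}$ and the noises $W_{ij}$ stay independent, so $\tilde{\F}=\bs{D}_s\tilde{\U}$ with $\bs{D}_s=\mathrm{diag}(s_1,\dots,s_{n/b})$, $s_j^2=\sum_i\YY_{ij}^2$, where $\tilde{\U}$ has i.i.d.\ $\mathcal{N}(0,\I_{p-1})$ rows and is conditionally independent of both $\Q$ and $\bs{c}$. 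I would then record the Wiener/MMSE conditional moments $\E[W_{ij}\mid\YY_{ij}]=(1-\xi)\YY_{ij}$, $\Var(W_{ij}\mid\YY_{ij})=\sigma^2\xi$ and $\E[\alpha_{ij}\mid\YY_{ij}]=\tfrac{r}{r^2+\sigma^2}\YY_{ij}$, $\Var(\alpha_{ij}\mid\YY_{ij})=1-\xi$, which yield $\E[Q_j^2\mid\{\YY\}]=\sigma^2\xi\, s_j^2+(1-\xi)^2 s_j^4$ and $\E[c_j^2\mid\{\YY\}]=(1-\xi)s_j^2+\tfrac{\xi^2}{r^2}s_j^4$ (note the $1/s_j^2$ that will appear in the quadratic forms cancels, avoiding integrability issues).

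For the upper bound I would use $\F\F^T=\tilde{\F}\tilde{\F}^T+\bs{c}\bs{c}^T\succeq\tilde{\F}\tilde{\F}^T$, so by monotonicity of matrix inversion $\Q^T(\F\F^T)^{-1}\Q\le \Q^T(\tilde{\F}\tilde{\F}^T)^{-1}\Q$. Writing $\tilde{\F}\tilde{\F}^T=\bs{D}_s(\tilde{\U}\tilde{\U}^T)\bs{D}_s$ and invoking conditional independence with the inverse-Wishart mean $\E[(\tilde{\U}\tilde{\U}^T)^{-1}]=\tfrac{1}{p-n/b-2}\I$, I get $\E[\Q^T(\tilde{\F}\tilde{\F}^T)^{-1}\Q]=\tfrac{1}{p-n/b-2}\sum_j \E[Q_j^2/s_j^2]=\tfrac{1}{p-n/b-2}\big(\tfrac{n}{b}\sigma^2\xi+(1-\xi)^2 n(r^2+\sigma^2)\big)$. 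Using $(1-\xi)(r^2+\sigma^2)=\sigma^2$ and $\tfrac{n}{p-n/b-2}\to\tfrac{b}{\gamma b-1}$ collapses this to $\sigma^2\tfrac{b-(b-1)\xi}{\gamma b-1}$, which is exactly the claimed upper bound.

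For the lower bound I would apply Sherman--Morrison to the rank-one update $\F\F^T=\bs{M}+\bs{c}\bs{c}^T$ with $\bs{M}=\tilde{\F}\tilde{\F}^T$, giving $\Q^T(\F\F^T)^{-1}\Q=a-\tfrac{(\Q^T\bs{M}^{-1}\bs{c})^2}{1+d}$, where $a=\Q^T\bs{M}^{-1}\Q$ and $d=\bs{c}^T\bs{M}^{-1}\bs{c}$. Cauchy--Schwarz in the $\bs{M}^{-1}$ inner product bounds the cross term by $a\,d$, hence $\Q^T(\F\F^T)^{-1}\Q\ge\tfrac{a}{1+d}$. Repeating the moment computation for $d$ (now with $\E[c_j^2/s_j^2]=(1-\xi)+\tfrac{\xi^2}{r^2}s_j^2$) yields $\E[d]\to\rho\triangleq\tfrac{1+(b-1)\xi}{\gamma b-1}$. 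Since $C_{\gamma,\xi,b}=\tfrac{1}{1+\rho}$, the claim reduces to showing $\E[\tfrac{a}{1+d}]\to\tfrac{\lim\E[a]}{1+\rho}=\sigma^2\tfrac{b-(b-1)\xi}{\gamma b-1}\,C_{\gamma,\xi,b}$.

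The main obstacle is this last step: passing the expectation through the nonlinear ratio $\tfrac{1}{1+d}$, since $a$ and $d$ both depend on $\bs{M}$ and are correlated. I would resolve it by establishing that $d$ (and $a$) concentrate at deterministic limits, so that $d\inP\rho$ decouples the denominator from $a$. This rests on (i) concentration of $\|\bs{D}_s^{-1}\bs{c}\|^2$ and $\|\bs{D}_s^{-1}\Q\|^2$ via the law of large numbers over the $n/b$ batches and the $n$ within-batch samples, and (ii) concentration of the inverse-Wishart quadratic form $\bs{w}^T(\tilde{\U}\tilde{\U}^T)^{-1}\bs{w}$ about $\|\bs{w}\|^2/(p-n/b-2)$, which holds because the aspect ratio $(n/b)/p\to 1/(\gamma b)<1$ keeps $\tilde{\U}\tilde{\U}^T$ well-conditioned. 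With a uniform-integrability bound on $a$ (e.g.\ a bound on $\E[a^2]$), I can then replace $\tfrac{1}{1+d}$ by its limit $\tfrac{1}{1+\rho}$ and pull it out of the expectation, completing the lower bound.
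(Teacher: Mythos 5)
Your proposal is correct and follows essentially the same route as the paper: the same split of $\F$ into its first column and the remainder, the same p.s.d.-monotonicity (equivalently Sherman--Morrison with the rank-one term discarded) for the upper bound, the same Cauchy--Schwarz bound $(\Q^T\bs{M}^{-1}\bs{c})^2\le a\,d$ for the lower bound, and the same concentration-plus-uniform-integrability argument to pass the limit through the ratio, with your explicit conditional-moment computations reproducing what the paper packages in Lemmas~\ref{lem:Q_variance} and~\ref{lem:QFQ_converge}.
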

\begin{proof}
    See Appendix~\ref{appen:variance}.
\end{proof}

The proof of the excess noise bound in Theorem~\ref{thm:min_norm_risk} follows directly from Lemmas~\ref{lem:norm_AS_conv} and~\ref{lem:QFFQ},  and thus conclude the proof for our main result in Theorem~\ref{thm:min_norm_risk}.

\subsection{On the Gap between the Bounds} \label{sec:bounds_gap}
In this subsection, we highlight and discuss the loose steps in our bound derivations.
The gap between the upper and lower bound in \eqref{eq:UB} is rooted in two main reasons. In the parameter bias part, the gap between the bounds originates from   \eqref{eq:r_j_upper_lower_bounds}. There, we used the relation $(\E B_j)^2\leq \E B_j^2\leq \E B_j$ (which holds for any random variable on the unit interval), to (recursively) bound $B_j$, the parameter bias after adding $j$ batches. To obtain the upper bound, we used $(\E B_j)^2\leq \E B_j^2$, which is tight only for deterministic random variables. However, the upper bound on the bias appears to be rather tight in numerical experiments. This suggests that perhaps the per-batch bias indeed has vanishing variance (as $n,p\to\infty$) which results in a tight upper bound in \eqref{eq:r_j_upper_lower_bounds}. The lower bound in \eqref{eq:r_j_upper_lower_bounds} was derived using $\E B_j^2\leq \E B_j$, which is tight (for a unit interval support) only for Bernoulli random variables. Hence, even if indeed $B_j$ has vanishing variance, this inequality will not be tight (except if $\lim_{p\to\infty}\E B_j$ happens to be either $0$ or $1$, which is not the case whenever $1<\gamma <\infty$). A natural step toward closing the gap between the upper and lower bounds on the parameter bias is, therefore, to show that $\lim_{p\to\infty}\Var(B_j) =o(1/p)$, which will mean that the upper bound on the per-batch bias in \eqref{eq:r_j_upper_lower_bounds} is (asymptotically) tight.

The gap in the excess noise bounds is rooted in \eqref{eq:QFQ_QHQ}. In this step, we express the product of the (limiting) modified noise $\Q$ with the (limiting) modified features $\F$ as the sum $\Q(\F'\F'^T)^{-1}\Q-\Q\bs{H}\Q$ using the Woodbury identity. The first term, $\Q(\F'\F'^T)^{-1}\Q$, is using the "clean" features $\F'$ obtained from $\F$ by (loosely speaking) removing the ``direction of the parameter'', denoted as $\t$. The second term, $\Q\bs{H}\Q$, depends on the direction $\t$ through
\begin{align}\label{eq:QHQ}    \Q^T\bs{H}\Q&=\frac{\left(\bs{t}^T\left(\F'\F'^T\right)^{-1}\Q\right)^2}{1+\bs{t}^T\left(\F'\F'^T\right)^{-1}\bs{t}}.
\end{align}
To obtain the upper bound we discarded the second term. However, since the modified noise is a function of the observations and therefore of the parameter, it is correlated with $\t$. Hence, the upper bound is looser in settings where this correlation is dominant. For example, when the $\snr$ is low, this term is enhanced by the noise variance. To obtain the lower bound on the excess noise we upper bounded $\eqref{eq:QHQ}$ using Cauchy--Schwarz inequality (see \eqref{eq:CS}). This step is, however, excessive, since although the modified noise is correlated with $\t$, the two are not perfectly aligned. This is particularly meaningful around the interpolation limit, where the lower bound fails to capture the explosion of the excess noise. To improve the lower bound, a direct evaluation of the expected value of \eqref{eq:QHQ} is required. This is, however, challenging due to the non-polynomial dependence on $\t$.

\section{Improved Batch Minimum Norm via Shrinkage}\label{sec:de_meaned}

As we saw in Lemma~\ref{lem:Q_variance}, the (asymptotic) modified noise $\Q$ has a non-zero mean. Specifically, we have that $\E Q_j|\X =(1-\xi)\|\X_j\|^2$. This expected value contributes to the overall parameter bias of the estimator, and it therefore makes sense to remove it from the samples before performing the  \ac{min-norm} step on the modified linear model \eqref{eq:modified_linear_model}. This results in the estimator 
\begin{align}\label{eq:shrunk_batch_estimator}
        \hbtSB \triangleq \H'^T(\H'\H'^T)^{-1}(\Y-\vmuQ).
    \end{align}
    where
    \begin{align}
        \muQ_j = (1-\xi)\|\X_j\|^2, \;\; j= 1,\cdots, n/b.
    \end{align}
  Recalling that the modified samples $Y'_j\to \|\X_j\|^2$, $j=1,\cdots,n/b$, as $n,p\to \infty$ we get that
\begin{align}
    Y'_j-(1-\xi)\|\X_j\|^2\to \xi  Y'_j
\end{align}
and therefore 
   \begin{align}
        \hbtSB \to \xi \hbtB,
    \end{align}
   that is, a shrinkage variation of the \ac{batch-min-norm} estimator \eqref{eq:batch_estimator}, by a factor equal to the Weiner coefficient $\xi$. We, therefore, loosely refer to this estimator hereafter as \ac{s-batch-min-norm}. Note that unlike \ac{batch-min-norm}, \ac{s-batch-min-norm} requires knowing the $\snr$. As already explained in Subsection~\ref{sec:optimal_batch_size}, this is often a reasonable assumption. 
   The risk of \ac{s-batch-min-norm} is given by 
   \begin{align}\label{eq:SBMN_risk_decomp}
        R(\hbtSB) =& \left(1-\frac{1}{r^2}\mathbb{E}\left\|\L'\bt\right\|^2\right) 
        + \frac{1}{r^2}\mathbb{E}\left[(\W'-\vmuQ)^T(\H'\H'^T)^{-1}(\W'-\vmuQ)\right].
\end{align}

    Note that now $\W'-\vmuQ$ has zero mean (asymptotically), and therefore the second term in the above affects only the variance of the estimator.

The next theorem gives upper and lower bounds on $R(\hbtSB)$. 
\begin{theorem}\label{thm:shrink_batch_min_norm_risk}
For any  $\gamma>1/b$, the asymptotic risk of \ac{s-batch-min-norm} is upper bounded by
\begin{align}\label{eq:shrunk_UB}
 \left(1-\frac{1}{\gamma b}\right)^{1+(b-1)\xi} \hspace{-0.1cm}&+  \frac{1-\xi}{\gamma b-1}\cdot C_{\gamma,\xi,b}
 \leq \lim_{p\rightarrow \infty}  R(\hbtSB) 
  \leq \frac{\gamma b-1}{\gamma b + (b-1)\xi} +  \frac{1-\xi}{\gamma b-1},
    \end{align}
    where the first (resp. second) addend upper bounds the asymptotic bias (resp. variance), and $C_{\gamma,\xi,b}$ as in Theorem~\ref{thm:min_norm_risk}.
 \end{theorem}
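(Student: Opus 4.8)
The plan is to recycle the two-part analysis of Theorem~\ref{thm:min_norm_risk} almost verbatim, using that shrinkage alters only the \emph{output} of the final min-norm step, not the subspace onto which it projects. From the decomposition~\eqref{eq:SBMN_risk_decomp}, the parameter-bias term $1-\frac{1}{r^2}\E\|\L'\bt\|^2$ is \emph{identical} to the one in~\eqref{eq:BMN_risk_bias_var_decomp}: $\L'$ is still the projection onto the row space of $\H'$, and subtracting the (given-$\X$ deterministic) vector $\vmuQ$ from $\Y$ leaves it unchanged. Hence the bias bounds $\left(1-\frac{1}{\gamma b}\right)^{1+(b-1)\xi}\le\lim_p\pBias\le\frac{\gamma b-1}{\gamma b+(b-1)\xi}$ carry over with no modification, and the entire task reduces to the variance term.

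For the variance I would mirror Subsection~\ref{sec:noisy}, replacing the modified noise $\W'$ by $\W'-\vmuQ$. First, an analog of Lemma~\ref{lem:norm_AS_conv}: since $pA_{ij}\to \YY_{ij}$ a.s. and $\mu_j=(1-\xi)\|\X_j\|^2$ is exactly the asymptotic conditional mean $\E[Q_j\mid\X]$ computed in Lemma~\ref{lem:Q_variance}, I would show $(\W'-\vmuQ)^T(\H'\H'^T)^{-1}(\W'-\vmuQ)-\tilde{\Q}^T(\F\F^T)^{-1}\tilde{\Q}\inL1 0$, where $\F$ is the same clean feature matrix as in Lemma~\ref{lem:norm_AS_conv} and $\tilde{Q}_j\triangleq Q_j-(1-\xi)\|\X_j\|^2=\sum_{i=1}^b\YY_{ij}\big(W_{ij}-(1-\xi)\YY_{ij}\big)$. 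The structural gain over $\hbtB$ is that, conditioned on $\X$, the de-meaned $\tilde{Q}_j$ is zero-mean with variance $\sigma^2\xi\|\X_j\|^2$ (the Gaussian MMSE residual of $W_{ij}$ given $\YY_{ij}$) and is independent of the parameter-free part $\F'$ of $\F$, since it depends on the residuals $W_{ij}-(1-\xi)\YY_{ij}$, which are independent of the non-parameter feature coordinates building $\F'$.

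The second step is the analog of Lemma~\ref{lem:QFFQ}, namely evaluating $\lim_p\E[\tilde{\Q}^T(\F\F^T)^{-1}\tilde{\Q}]$. Using the Woodbury split of Section~\ref{sec:bounds_gap}, $\F\F^T=\F'\F'^T+\t\t^T$ with $\t$ the parameter direction, one has $\tilde{\Q}^T(\F\F^T)^{-1}\tilde{\Q}=\tilde{\Q}^T(\F'\F'^T)^{-1}\tilde{\Q}-\frac{(\t^T(\F'\F'^T)^{-1}\tilde{\Q})^2}{1+\t^T(\F'\F'^T)^{-1}\t}$ as in~\eqref{eq:QHQ}. For the upper bound I drop the nonnegative correction and compute $\E[\tilde{\Q}^T(\F'\F'^T)^{-1}\tilde{\Q}]$ by conditioning on $\X$: writing $\F'=\D_Y\G'$ with $\D_Y=\mathrm{diag}(\|\X_j\|)$ and $\G'$ an $(n/b)\times(p-1)$ standard Gaussian matrix independent of $\tilde{\Q}$, conditional zero-mean and independence collapse this to $\sigma^2\xi\,\E\,\trace((\G'\G'^T)^{-1})\to\sigma^2\xi/(\gamma b-1)$; dividing by $r^2$ and using $\sigma^2\xi/r^2=1-\xi$ gives the upper bound $\frac{1-\xi}{\gamma b-1}$. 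This is precisely the computation for $\hbtB$ with the mean contribution $\E[\bs{m}^T(\F'\F'^T)^{-1}\bs{m}]$, $\bs{m}_j=(1-\xi)\|\X_j\|^2$, removed — and it is this removed term, which equals $\sigma^2\tfrac{b(1-\xi)}{\gamma b-1}$ in the limit, that accounts for the difference between the $b-(b-1)\xi$ factor of $\hbtB$ and the clean $\xi$ here. For the lower bound I bound the correction by Cauchy--Schwarz exactly as in~\eqref{eq:CS}, producing the factor $1/(1+\t^T(\F'\F'^T)^{-1}\t)$; since $\t^T(\F'\F'^T)^{-1}\t\to\frac{1+(b-1)\xi}{\gamma b-1}$ (the same limit as in Lemma~\ref{lem:QFFQ}), this factor tends to $\frac{\gamma b-1}{\gamma b+(b-1)\xi}=C_{\gamma,\xi,b}$, yielding $\frac{1-\xi}{\gamma b-1}\cdot C_{\gamma,\xi,b}$.

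The main obstacle I anticipate lies in the first variance step: one must verify that subtracting $\mu_j=(1-\xi)\|\X_j\|^2$ cancels the asymptotic mean of the filtered modified noise at the correct $1/p$ scale, so that the $L^1$ limit is genuinely the zero-mean $\tilde{\Q}$ and not a residual-mean-contaminated version, and that this $\tilde{\Q}$ is indeed conditionally independent of $\F'$ (though \emph{not} of $\t$, which is why the Woodbury correction does not simply vanish). Promoting the a.s. convergence to $L^1$ will, as in Lemma~\ref{lem:norm_AS_conv}, require a uniform-integrability argument, and in the lower bound some care is needed to factor the limiting constant $C_{\gamma,\xi,b}$ out of the expectation of the ratio, which relies on the concentration of $\t^T(\F'\F'^T)^{-1}\t$.
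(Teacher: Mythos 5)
Your proposal matches the paper's proof essentially step for step: the bias bound is inherited unchanged, and the variance is handled by adapting Lemma~\ref{lem:norm_AS_conv} to the de-meaned noise $Q'_j=\sum_{i=1}^b \YY_{ij}W_{ij}-(1-\xi)\|\X_j\|^2$ (conditionally zero-mean Gaussian, independent of $\F'$ given $\X$), followed by the same Sherman--Morrison split with the correction dropped for the upper bound and controlled via Cauchy--Schwarz for the lower bound. Your limiting constants, including $\Q'^T\left(\F'\F'^T\right)^{-1}\Q'\to r^2\frac{1-\xi}{\gamma b-1}$ and the factor $C_{\gamma,\xi,b}$, agree with the paper's.
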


 The proof of Theorem~\ref{thm:shrink_batch_min_norm_risk} follows the same steps as in Theorem~\ref{thm:min_norm_risk}, with a few small differences. Since the first terms in \eqref{eq:SBMN_risk_decomp} and \eqref{eq:BMN_risk_bias_var_decomp} are identical, the derivation of the bias bound is the same as in Section~\ref{sec:semi_noisy}.  The variance part in \eqref{eq:SBMN_risk_decomp} is bounded using the same technique as the excess noise in \eqref{eq:BMN_risk_bias_var_decomp}, with the following adaptations. Lemma~\ref{lem:norm_AS_conv} is adapted (in a straightforward way) to show that   
   \begin{align}
 (\W'-\vmuQ)^T(\H'\H'^T)^{-1}(\W'-\vmuQ)-\Q'^T(\bs{F}\bs{F}^T)^{-1}\Q'\inL1 0,
\end{align}
with
    \begin{align}
        Q_j'&= \sum_{i=1}^b \YY_{ij} W_{ij}-(1-\xi)\|\X_j\|^2.
    \end{align}  
The limiting noise $\Q'$ has conditional distribution $Q'_j|\X_j\sim\mathcal{N}(0,\frac{\sigma^2 r^2}{\sigma^2+r^2})$ hence  we get 
\begin{align}
    \E (\Q'\Q'^T) =  \frac{\sigma^2 r^2 }{\sigma^2+r^2}\cdot \I. 
\end{align}
Then, following the same steps as the proof of Lemma~\ref{lem:QFQ_converge} we get 
\begin{align}
    \Q'^T\left(\F'\F'^T\right)^{-1}\Q'&\inL1 r^2 \frac{1-\xi}{\gamma b -1},
\end{align}
and the rest of the proof is the same as in Section~\ref{sec:noisy}.

 The optimal batch size of \ac{s-batch-min-norm} can be calculated by minimizing the upper bound in Theorem~\ref{thm:shrink_batch_min_norm_risk}, as was previously done for \ac{batch-min-norm} in Subsection~\ref{sec:optimal_batch_size}. In contrast to \ac{batch-min-norm}, the optimized \ac{s-batch-min-norm} exhibits a stable risk that is monotonically increasing in $\gamma$, and is always at least as good as the null risk, for all $\snr$ levels. This is the result of removing the constant bias term $\E \Q$ from the measurements. Another difference is that for \ac{s-batch-min-norm} the $\snr$ threshold below which the optimal batch size $\to\infty$ is $\xi=0.5$ (in contrast to $\xi = 0.6478$ for \ac{batch-min-norm}). As we will later see in Section~\ref{sec:sims}, at very low $\snr$, the performance of the optimized \ac{s-batch-min-norm} approaches that of optimized Ridge regression. The optimal batch size and its corresponding upper bound on the risk of \ac{s-batch-min-norm} are presented in Figures~\ref{fig:opt_b_shrunk} and~\ref{fig:opt_risk_shrunk}, respectively.  
Extended analysis of the \ac{s-batch-min-norm} estimator can be found in \cite{mythesis}.

\begin{figure*}[htbp]
\begin{minipage}{.5\textwidth}
  \centering
  \includegraphics[width=1\linewidth]{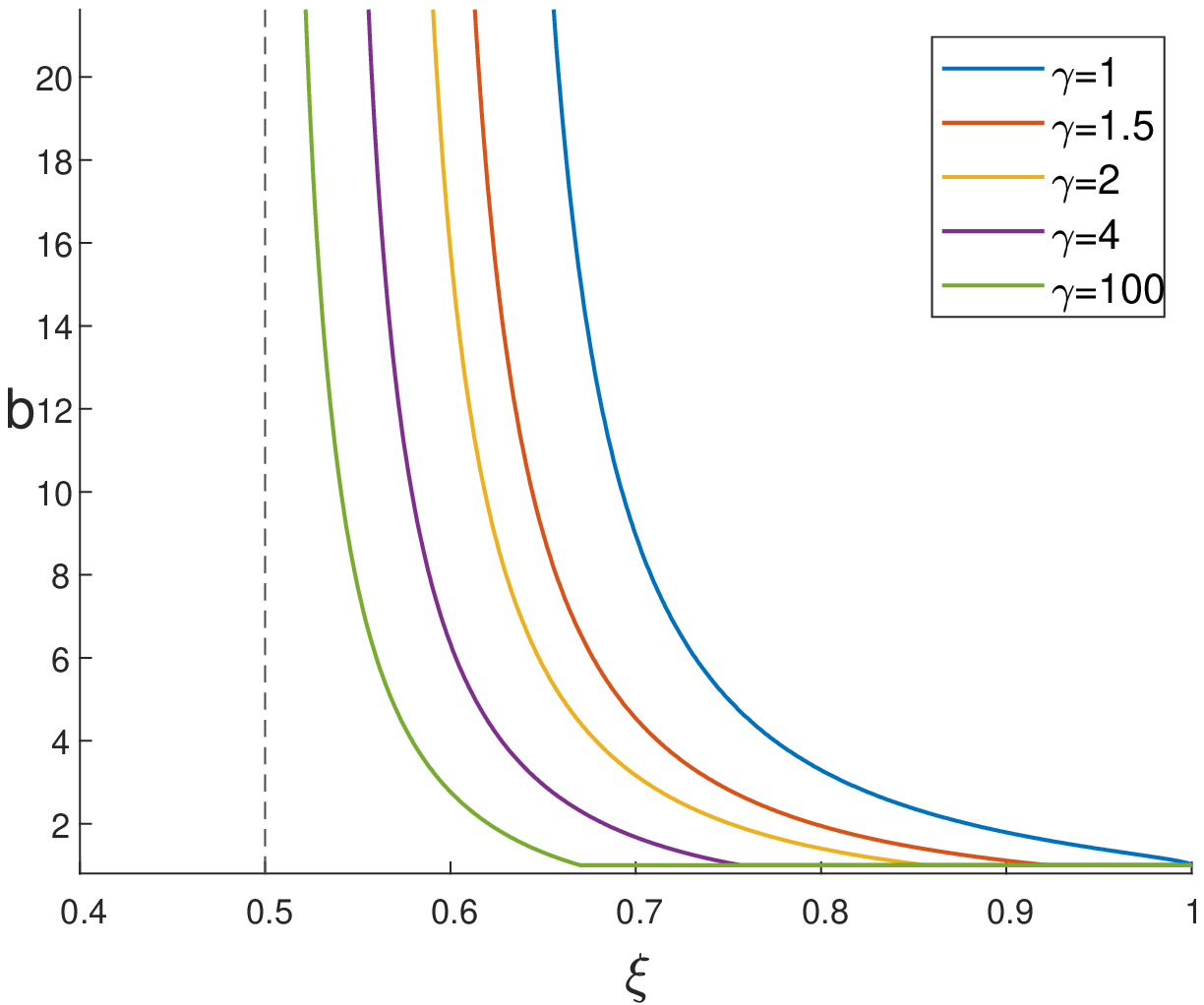}
  \caption*{(a)}
\end{minipage}%
\begin{minipage}{.5\textwidth}
  \centering
  \includegraphics[width=1\linewidth]{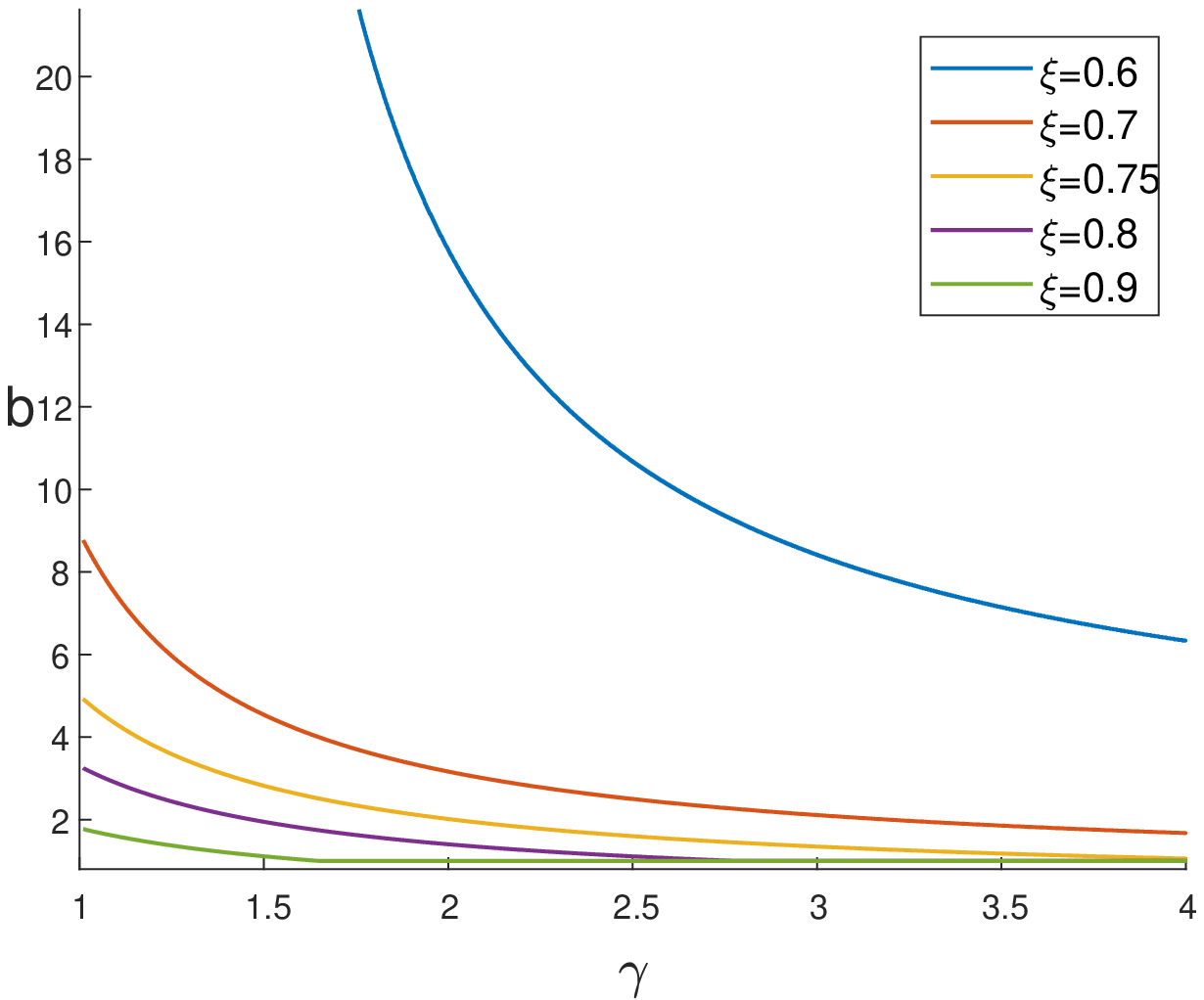}
  \caption*{(b)}
\end{minipage}
  \caption{\textit{Optimal batch size for \ac{s-batch-min-norm} vs. (a) the $\snr$ level $\xi$, and (b) overparametrization ratio $\gamma$. When $\xi<0.5$, the optimal batch size $b\rightarrow \infty$, for any $\gamma>1$.}}\label{fig:opt_b_shrunk} 
\end{figure*}

\begin{figure*}[ht]
\centering
\begin{minipage}{.5\textwidth}
  \centering
  \includegraphics[width=1\linewidth]{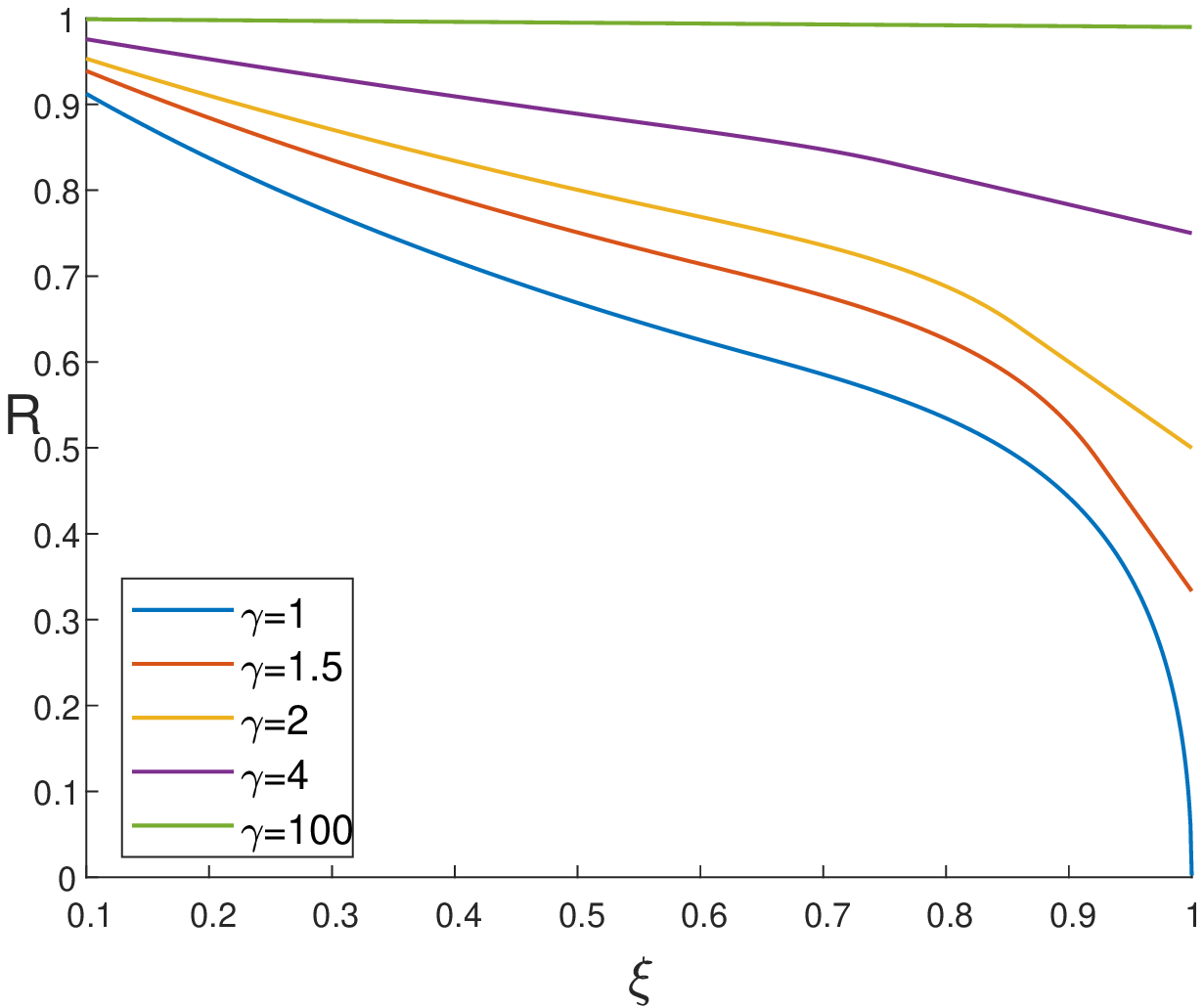}
  \caption*{(a)}
\end{minipage}%
\begin{minipage}{.5\textwidth}
  \centering
  \includegraphics[width=1\linewidth]{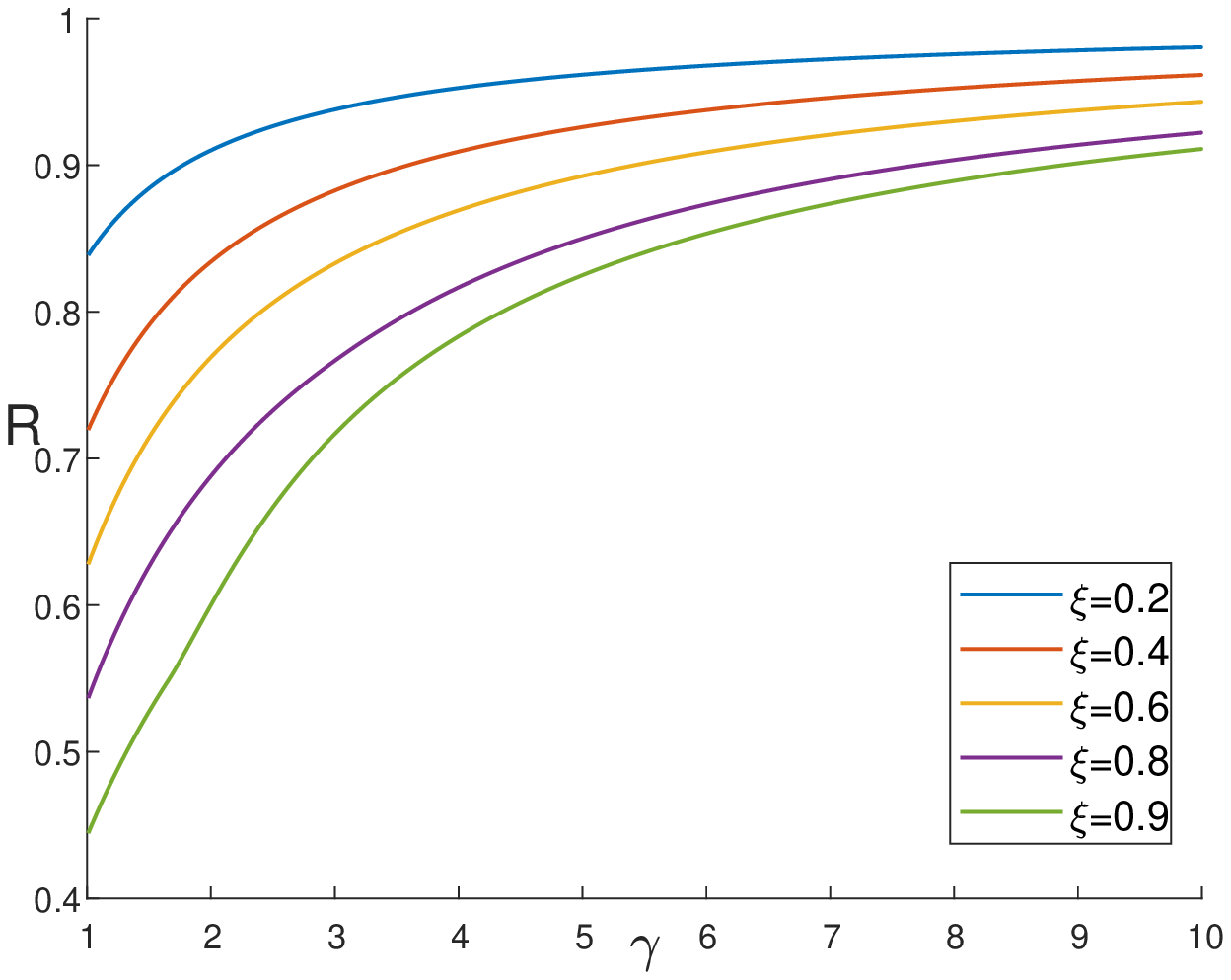}
  \caption*{(b)}
\end{minipage}
  \caption{\textit{Upper bound on the risk of \ac{s-batch-min-norm} with  optimized batch size vs. (a) normalized $\snr$ ($\xi$), and (b) overparametrization ratio $\gamma$. The bound monotonically increases with $\gamma$ for all $\snr$ levels $\xi$.}}\label{fig:opt_risk_shrunk}
\end{figure*}

\subsection{Asymptotic Comparison to Min-norm for $b=1$}
As explained above, the \ac{s-batch-min-norm} estimator is asymptotically equal to a shrinkage variation of \ac{batch-min-norm}. 
When $b=1$, this reduces to a shrinkage \ac{min-norm} estimator
   \begin{align}
        \hbtSB^1 \to \xi\hbtMN,
    \end{align}
    and for $\xi=1$ the two estimators coincide.
    The limiting risk of $\hbtSB^1$ is then 
\begin{align}
    R\left(\hbtSB^1\right) 
    &= \left(1-\frac{2\xi-\xi^2}{r^2}\mathbb{E}\left\|\L\bt\right\|^2\right)+
    \frac{\xi^2}{r^2}\mathbb{E}\left[\W^T(\H \H^T)^{-1}\W\right]\\
    &\longrightarrow 1-\frac{2\xi-\xi^2}{\gamma}+\cdot\frac{1-\xi}{\xi}\cdot\frac{\xi^2}{\gamma-1}.
\end{align}
It is easy to verify by comparing the above to \eqref{eq:asymptotic_MN_risk} that $\hbtSB^1$ has uniformly lower risk than $\hbtMN$ for any $\gamma>1$ and $\xi< 1$. This implies
that \ac{s-batch-min-norm} with the optimal batch size is uniformly better (in terms of quadratic risk) than \ac{min-norm} in the overparametrized regime.

\section{Numerical Results}\label{sec:sims}

Next, we demonstrate different aspects of the \ac{batch-min-norm} estimator (and its shrinkage variation) via numerical experiments.
The setup for the simulations matches the linear model of Section~\ref{sec:problem_formulation}. That is, isotropic feature matrix $\H$ with i.i.d.~Standard Gaussian entries, and i.i.d.~Gaussian noise vector $\W$, independent of $\H$, with zero mean and variance $\sigma^2$ that varies between the different experiments. The parameter vector $\bt$ is normalized to have $\|\bt\|=1$ unless explicitly mentioned otherwise.  In all figures, BMN and SBMN refer to \ac{batch-min-norm} and \ac{s-batch-min-norm}, respectively.

The first experiment compares the performance of the estimators to the asymptotic bounds \eqref{eq:UB} and~\eqref{eq:shrunk_UB}. Figure~\ref{fig:risk_and_UB} depicts 
the risk vs. the overparametrization ratio $\gamma$, for two batch sizes $b=2$ (Subfigures~\ref{fig:risk_and_UB}-(a) and~\ref{fig:risk_and_UB}-(c)) and $b=4$ (Subfigures~\ref{fig:risk_and_UB}-(b) and~\ref{fig:risk_and_UB}-(d)), and three different $\snr$ levels $\xi=0.2,0,6$ and $0.95$. 
The interpolation point for each batch size $b$ corresponds to $\gamma =1/b$. For \ac{batch-min-norm} it can be seen from Subfigures~\ref{fig:risk_and_UB}-(a) and~\ref{fig:risk_and_UB}-(b) that the upper bound \eqref{eq:UB} is tighter for smaller batch sizes.
This is likely because the mean of the modified noise $\Q$ grows with $b$, thus making the term \eqref{eq:QHQ} more dominant as the batch size grows, and therefore the upper bound less tight. On the contrary, for \ac{s-batch-min-norm}, the upper bound \eqref{eq:shrunk_UB} becomes tighter as the batch size grows, as can be seen in Subfigures~\ref{fig:risk_and_UB}-(c) and~\ref{fig:risk_and_UB}-(d). In this case, we reduce the mean of the modified noise, which eliminates the negative effect of the batch size on the variance.  
We further see that for both estimators the upper and lower bounds become tighter as either $\gamma$ or $\xi$ grow. As discussed in Section~\ref{sec:bounds_gap}, when $\gamma\to1/b$, the lower bound becomes trivial. 

It can be further seen in Subfigures~\ref{fig:risk_and_UB}-(a) and~\ref{fig:risk_and_UB}-(b) that when the $\snr$ is low the double-descent phenomenon occurs: the risk of the algorithm decreases as $\gamma$ grows.  This was previously explained by \cite{hastie2022surprises} as the result of the additional degrees of freedom allowing for a solution $\bt$ with a smaller $\ell^2$-norm to the linear model  \eqref{eq:modified_linear_model}. As the $\ell^2$-norm of the estimator decreases, it converges to the null solution. Interestingly, this phenomenon is almost completely eliminated in the \ac{s-batch-min-norm} estimator, whose risk eventually increases with $\gamma$ even for very low $\snr$, as demonstrated in Subfigure~\ref{fig:risk_and_UB}-(d). This is likely because the shrinkage increases the estimator's bias and decreases the noise. This makes the bias, which grows with $\gamma$,  the dominant term in the risk.
Then, a local minima can be observed beyond the interpolation limit (also known as the second bias-variance tradeoff). 
This phenomenon also occurs with the \ac{batch-min-norm} estimator in the high $\snr$ cases. 
\begin{figure*}
\centering
\begin{minipage}{.5\textwidth}
  \centering
  \includegraphics[width=1\linewidth]{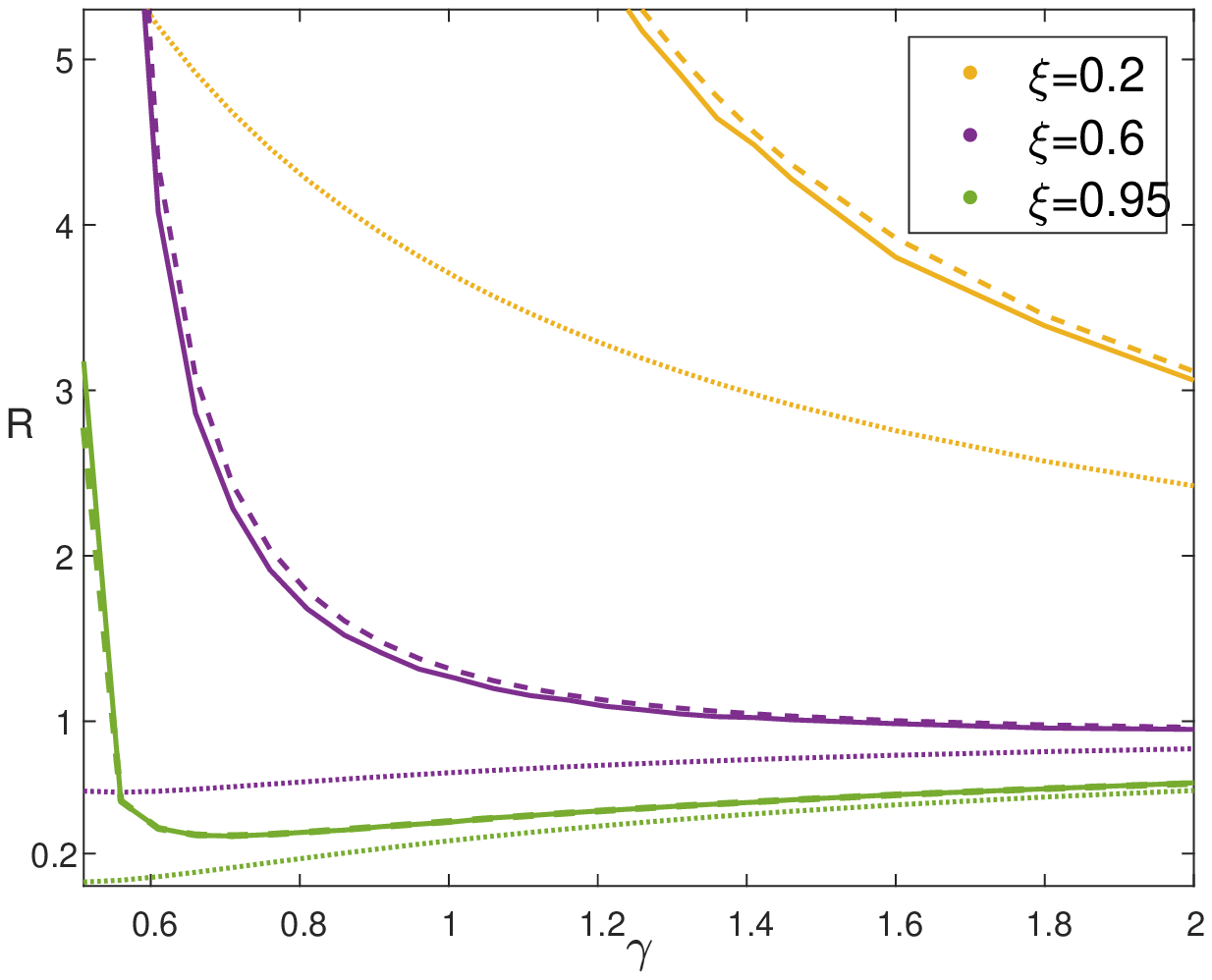}
  \caption*{(a)}
  \includegraphics[width=1\linewidth]{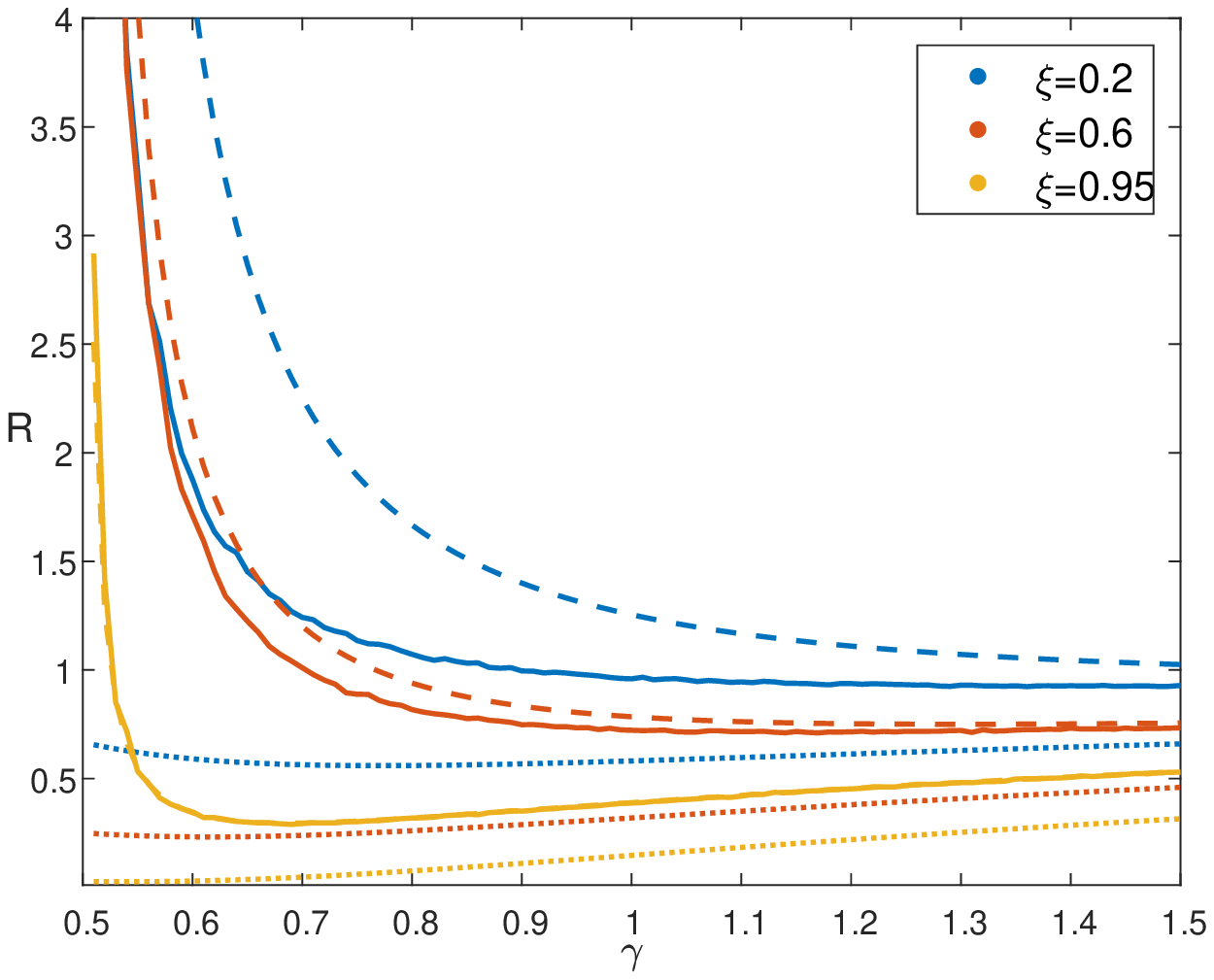}
  \caption*{(c)}
\end{minipage}%
\begin{minipage}{.5\textwidth}
  \centering
  \includegraphics[width=1\linewidth]{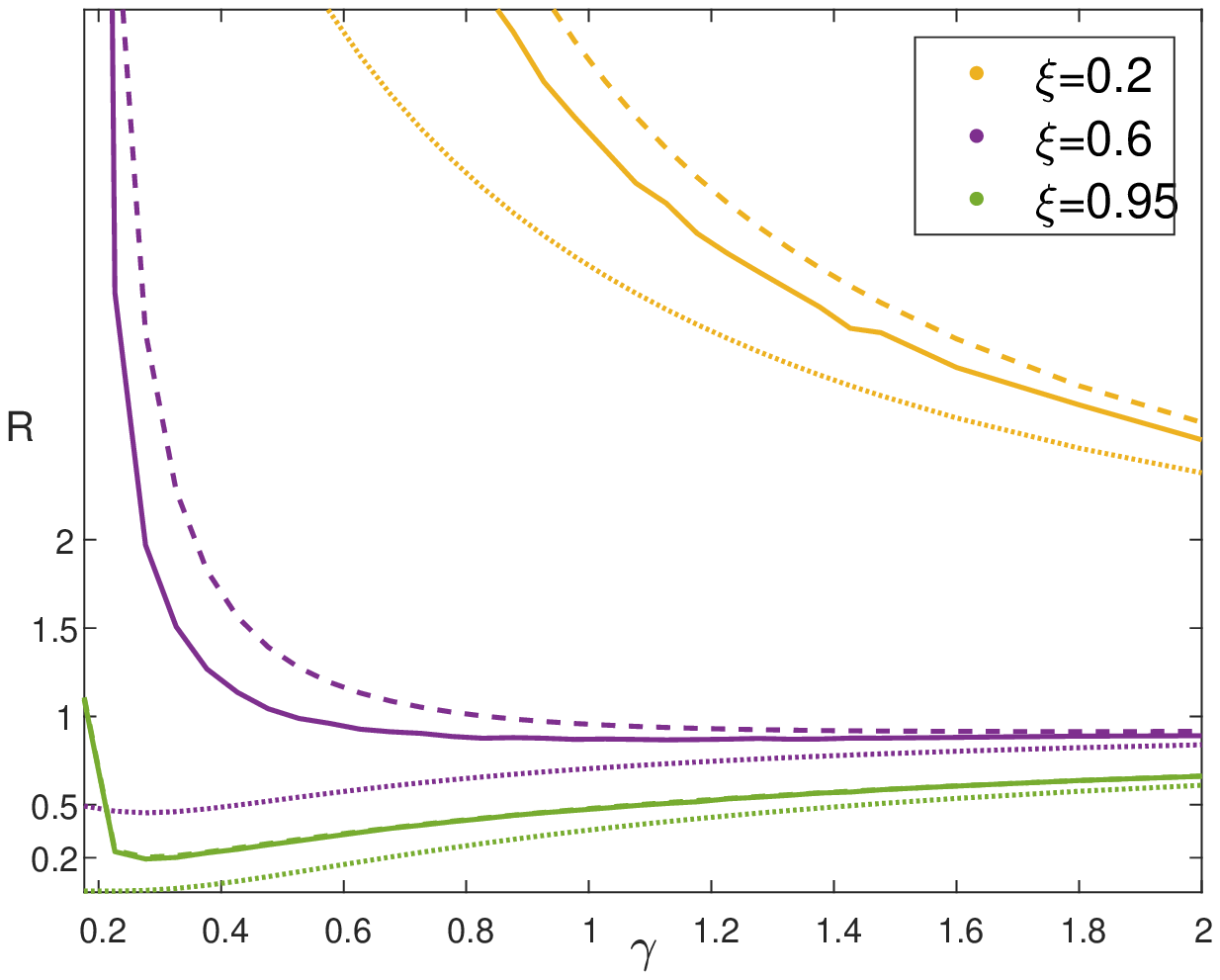}
  \caption*{(b)}
  \includegraphics[width=1\linewidth]{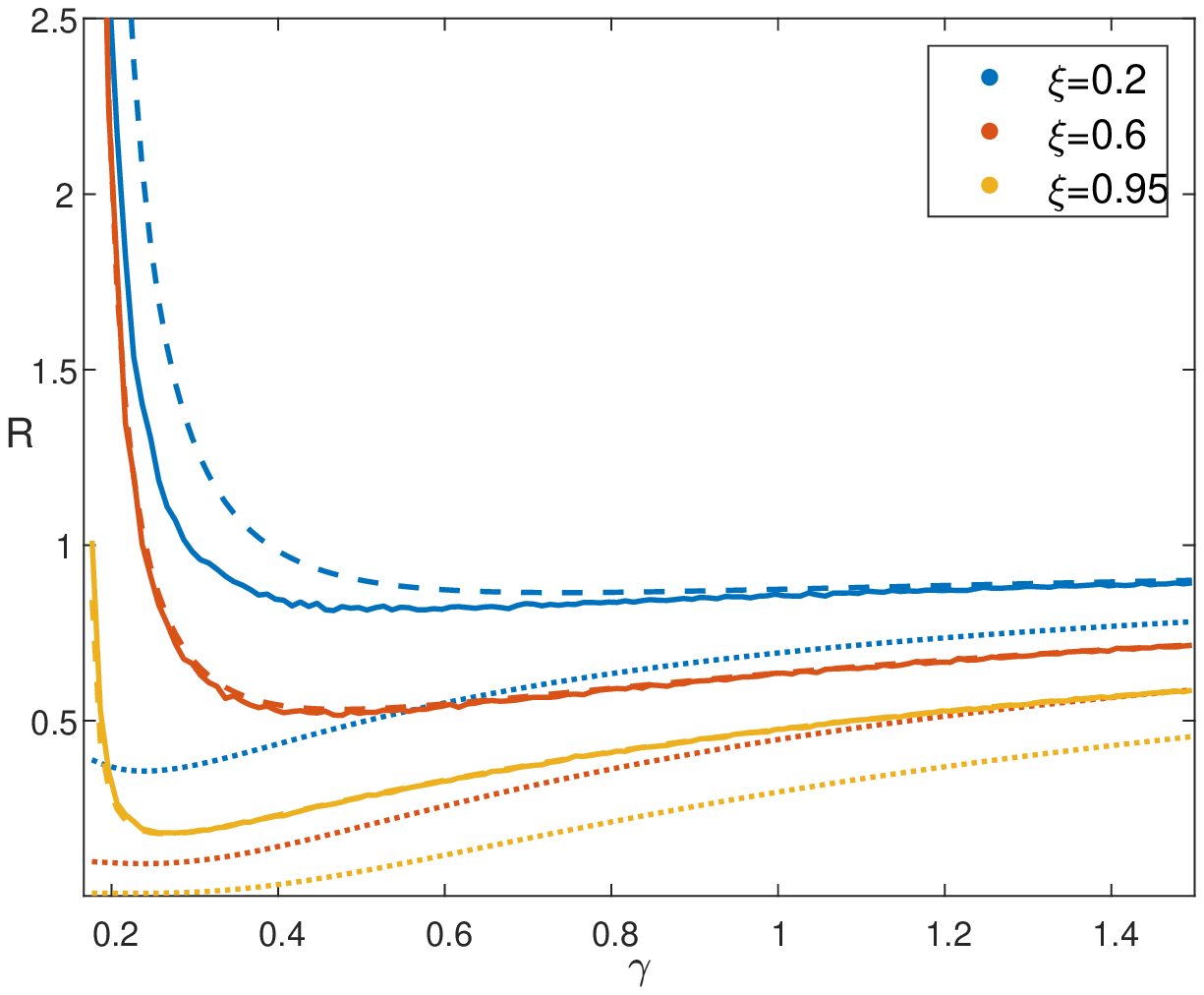}
  \caption*{(d)}
\end{minipage}
 \caption{\textit{Batch-min-norm risk and bounds. The algorithm performance appears as continuous lines, the upper bound as broken lines and the lower bounds are dotted lines. 
 Subfigures~\ref{fig:risk_and_UB}-(a) and \ref{fig:risk_and_UB}-(b) show the risk of \ac{batch-min-norm} alongside the upper and lower bounds of Theorem~\ref{thm:min_norm_risk}, for (a) $b=2$ and (b) $b=6$ . The upper bound is tight for small batch sizes, high $\snr$ levels, and large $\gamma$ values.
 Subfigures~\ref{fig:risk_and_UB}-(c) and \ref{fig:risk_and_UB}-(d) show the risk of \ac{s-batch-min-norm} alongside the upper and lower bounds of Theorem~\ref{thm:shrink_batch_min_norm_risk}, for (c) $b=2$ and (d) $b=6$ . The upper bounds are tighter for larger batch sizes, high $\snr$ levels, and large $\gamma$ values.
For both algorithms the lower bound is tighter for large $\gamma$ values
but as $\gamma\to 1/b$ the bound approaches $0$.
}}\label{fig:risk_and_UB}
\end{figure*}

Next, we study the performance of different estimators as a function of the overparametrization ratio $\gamma$.  The results are presented in Figure~\ref{fig:different_BMN_estimators}.
For each of the methods ( \ac{batch-min-norm} and \ac{s-batch-min-norm}), three estimators are tested, corresponding to batch sizes  $b=1$, $b=2$, and $b=10$. The interpolation point for each of the estimators depends on the batch size. 
For completeness, we plot the risk below the interpolation points $\gamma= 1/ b$ as well, though the analysis in this paper focuses on $\gamma >1/b$. 
The performance of regular \ac{min-norm} is also presented. As expected, the risk of \ac{min-norm} coincides with \ac{batch-min-norm} when $b=1$, since the two are equivalent. 
 It can be seen that the risk of \ac{s-batch-min-norm} is uniformly lower than the corresponding \ac{batch-min-norm} estimator with the same batch size. The gap between the two is especially notable near the interpolation limit $\gamma=1/b$. Again, we see that in the \ac{s-batch-min-norm} case the double-decent phenomenon is eliminated and a second bias-variance tradeoff appears. 
For $b=10$, a local minimum point of the risk in the overparametrized regime appears for both estimators. This is because, as the batch size $b$ increases the number of measurements $n/b$ in the second \ac{min-norm} step of the algorithms decreases. 
Then, although the modified features $\H'$ of the batch algorithm are favorably aligned with the parameter vector, the increased overparametrization ratio of the new problem still results in a larger bias. Therefore, as the weight of the bias in the risk grows (i.e., the $\snr$ grows), the optimal batch size decreases.  Above some $\snr$ threshold (that depends on $\gamma$), the minimal batch size $b=1$ is always preferable. 

\begin{figure*}[ht]
\centering
\includegraphics[width=0.8\textwidth]{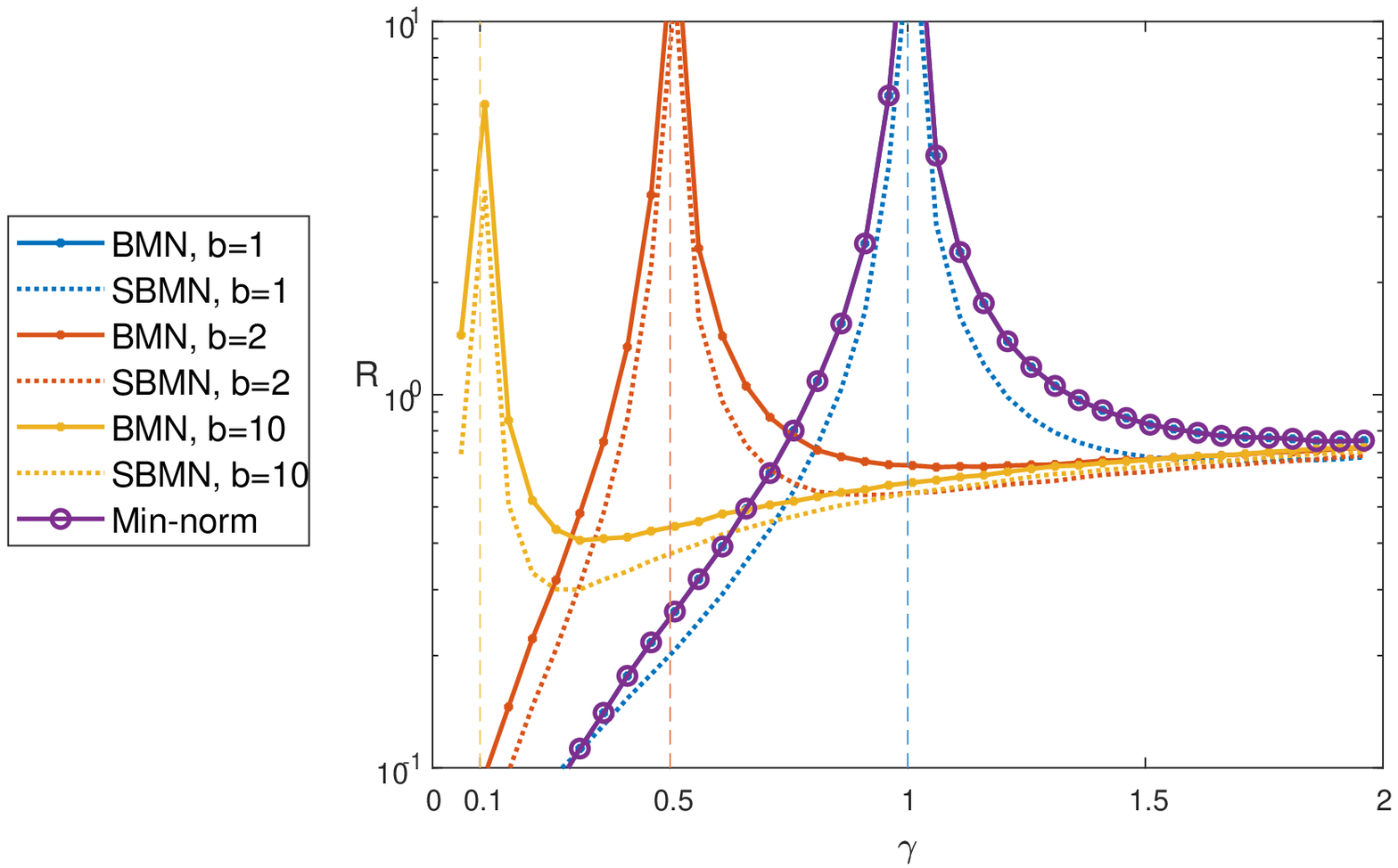} \\
  \caption{\textit{ Risk of \ac{batch-min-norm} vs. overparametrization ratio $\gamma$ with $\xi=0.8$. The interpolation limit is shifted to $\gamma=1/b$ due to the batch partition. For $b=1$ \ac{batch-min-norm} coincides with \ac{min-norm}. For $b=10$, a local minimum point appears for both estimators in the overparametrized regime. This is because, as the batch size $b$ increases, the bias becomes more dominant in the overall risk.}}\label{fig:different_BMN_estimators}
\end{figure*}

The next experiment examines the risk of the \ac{batch-min-norm} estimator vs. the batch size, with $\gamma=2$ and different $\snr$ scenarios. The results are presented in Figure~\ref{fig:b_size2_sim}.
Here we used $n=1000$. The $\snr$ level was controlled by fixing $\sigma^2$ and varying the signal energy $r^2$. 
Recall that in Section~\ref{sec:main_result} the batch size was optimized using the upper bound. We saw that as the $\snr$ decreases the optimal batch size grows. Specifically, below the $\snr$ threshold $\xi=0.6478$, increasing the batch size is always beneficial.  This is demonstrated in Figure~\ref{fig:b_size2_sim} by the two curves corresponding to $\xi=0.5$ and $\xi=0.6$, where the risk is monotonically decreasing (although very slowly) with $b$.
We further see that as discussed in the previous experiment, above some high-$\snr$ threshold that depends on $\gamma$ (in this case the threshold lies somewhere in the interval $(0.8,0.9]$), the minimal batch size $b=1$ is optimal. 

\begin{figure}[ht]
\centering
\hspace{-0.1cm}\includegraphics[width=1.05\columnwidth]{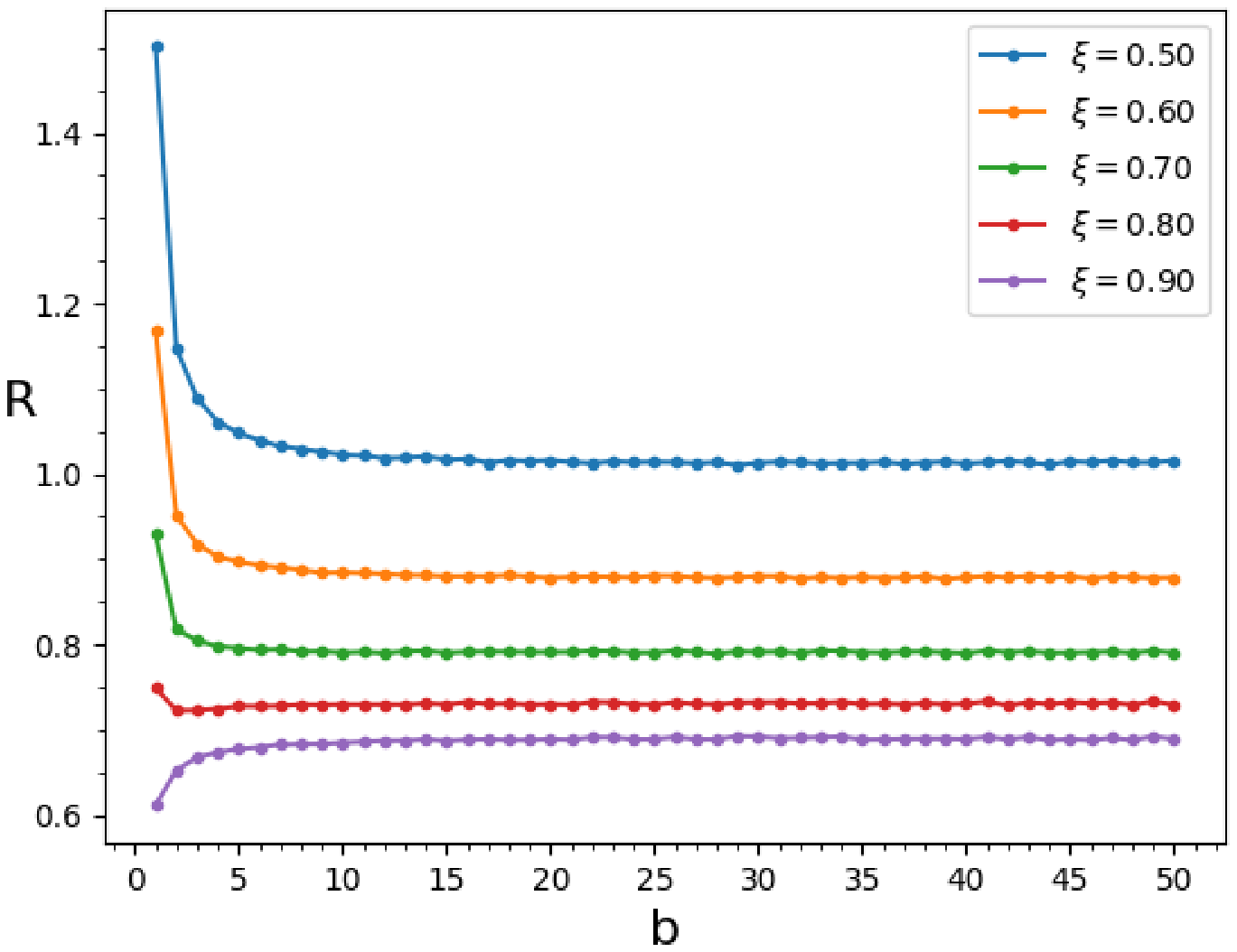}
\caption{\textit{\ac{batch-min-norm} risk vs. batch size for different $\snr$ levels $\xi$, with $n=1000$ and $\gamma=2$. The $\snr$ levels were controlled by fixing $\sigma$ and varying $r$, hence the null risk $r^2$ is different between the scenarios. For $\xi=0.5,0.6$ the risk monotonically decreases with $b$, as discussed in Section~\ref{sec:main_result}. For $\xi = 0.7,0.8$, the optimal batch size is  $b>1$, and for $\xi=0.9$ the minimal risk is obtained at $b=1$. 
}}\label{fig:b_size2_sim}
\end{figure}

Next, we demonstrate the performance of the \ac{batch-min-norm} and \ac{s-batch-min-norm} estimators with optimized batch size.
Figure~\ref{fig:optimized_algs} presents the risk of the optimized estimators vs. (a) normalized $\snr$ $\xi$ with $\gamma= 1.5$, and (b) overparametrization ratio $\gamma$ with $\xi = 0.7$. The estimators are compared with four other algorithms: \ac{min-norm}, \ac{batch-min-norm} with $b=2$, ridge regression with optimal regularization parameter, and a naively stabilized version of \ac{min-norm}, where samples are discarded to obtain the best possible risk (that is, optimizing the overparametrization ratio among all ratios $\geq \gamma$).  
It was previously shown in \cite{hastie2022surprises} that the optimal overparametrization ratio for \ac{min-norm} is 
\begin{align}
    \gamma_{\mathrm{opt}}=\begin{cases}
        \infty, &\;\; \xi\leq 1/2,\\
        \sqrt{\snr}/(\sqrt{\snr}-1), &\;\; \xi> 1/2.
    \end{cases}
\end{align}
Moreover, in the second case, the risk of \ac{min-norm} monotonically increases with $\gamma$ after the point $\gamma_{\mathrm{opt}}$. 
Therefore, if $\gamma < \gamma_{\mathrm{opt}}$ we discard some of the samples to obtain $\gamma_{\mathrm{opt}}$. Otherwise, we keep all the samples.
As expected, optimized \ac{batch-min-norm} achieves lower risk than \ac{min-norm} and \ac{batch-min-norm} with $b=2$, as they are both special cases of \ac{batch-min-norm}. As the $\snr$ grows, the optimal batch size decreases, and the risk of the optimized \ac{batch-min-norm} approaches that of \ac{min-norm}. The two coincide for $\xi\geq 0.93$. The naively stabilized \ac{min-norm} (trivially) outperforms \ac{min-norm} for any $\snr$. In high $\snr$, $\xi\geq 0.5$, (i.e., whenever \ac{min-norm} and \ac{batch-min-norm} can do better than the null risk), optimized \ac{batch-min-norm} is better than optimized \ac{min-norm}. When $\xi \leq 0.5$,  all algorithms except \ac{s-batch-min-norm} and ridge regression are no better than the null estimator. Hence in this range, the optimized \ac{min-norm} outperforms the optimized batch one, as it produces the null estimate. \ac{s-batch-min-norm} has uniformly lower risk than all other algorithms except for optimized ridge.  In low $\snr$ ($\xi\leq 0.4$), optimized \ac{s-batch-min-norm} approaches the performance of optimized ridge.

\begin{figure*}[ht]
 \centering
  \begin{minipage}[b]{0.55\textwidth}
\includegraphics[width=\textwidth]{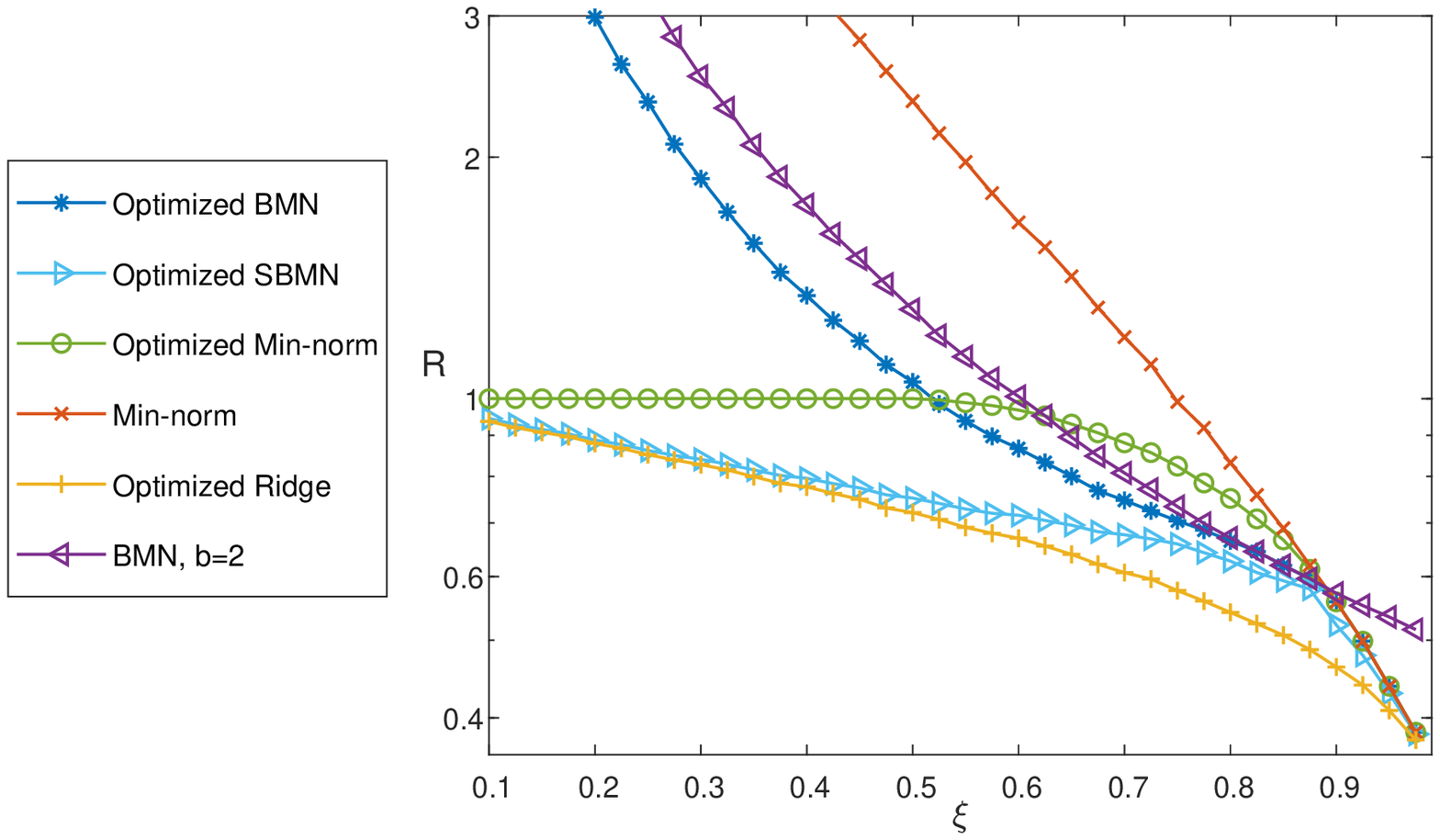}
\caption*{\qquad\qquad(a)}
\end{minipage}
\hfill
\begin{minipage}[b]{0.4\textwidth}
\hspace{-0.5cm} \includegraphics[width=1.05\textwidth]{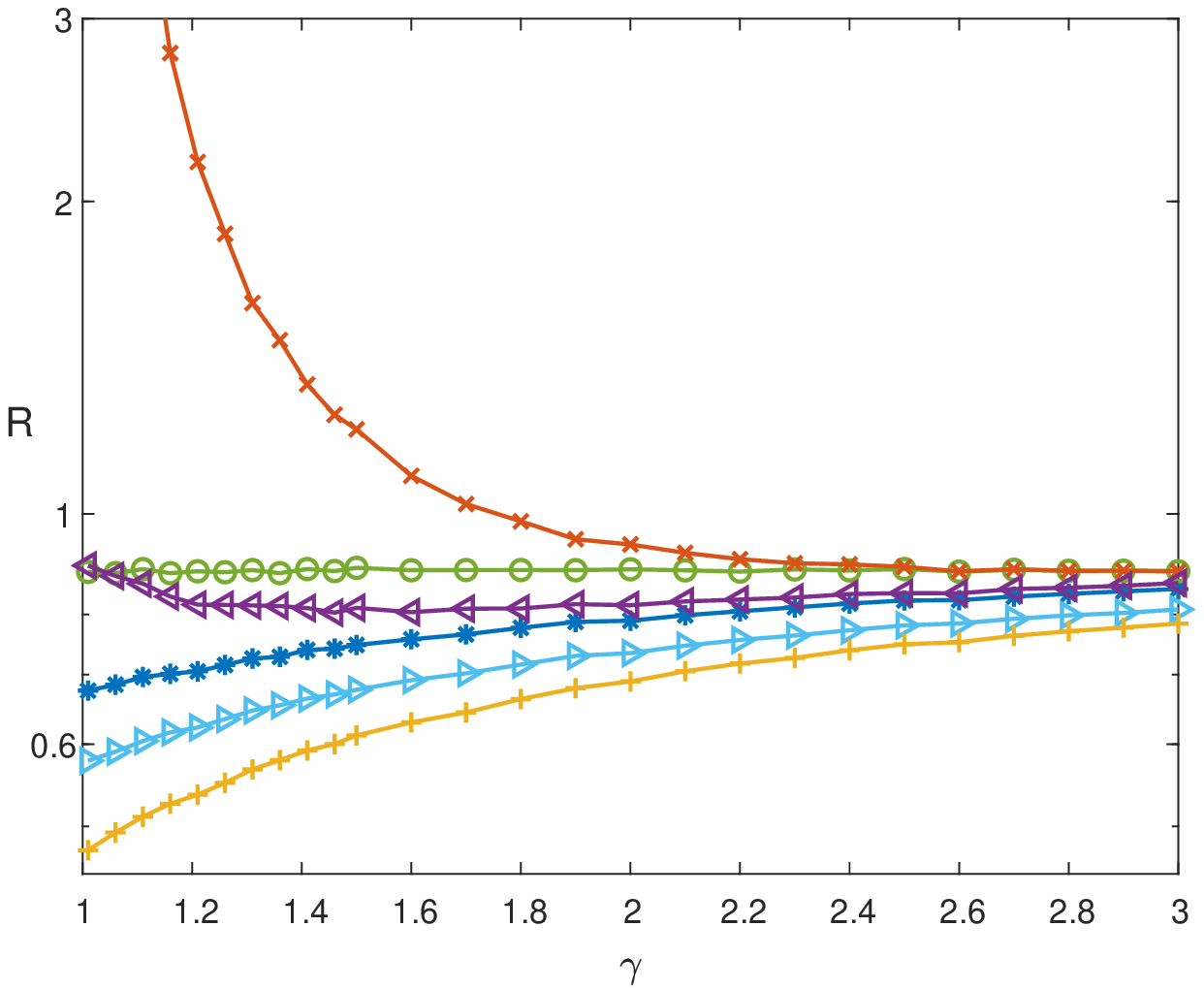}
\caption*{(b)}
\end{minipage}%
\caption{\textit{Risk of \ac{batch-min-norm} with optimal batch size vs. (a) normalized $\snr$ $\xi$, with $\gamma=1.5$, and (b) overparametrization ratio $\gamma$ with $\xi =0.7$. For $\xi\geq 0.5$ ($\snr\geq 1$), optimized \ac{batch-min-norm} outperforms the \ac{min-norm} algorithms. \ac{s-batch-min-norm} outperforms all other algorithms except for optimized ridge, in all settings.}}\label{fig:optimized_algs} 
\nonumber
\end{figure*}

\subsection{Comparison with Server Averaging}

As detailed in Subsection~\ref{sec:remifications}, one application of our algorithm is in distributed linear regression, where a main server distributes the data between multiple workers, and then merges the estimates given by each of the workers. The merging is typically done by simple averaging \cite{zhang2012communication,zhang2015divide,dobriban2020wonder,mucke2022data}. We now compare the performance of \ac{batch-min-norm} applied to this problem, with that of the server-averaging approach, and show that our \ac{s-batch-min-norm} estimator outperforms traditional server-averaging in almost all settings. 
 Here, the server-averaging is done with equal weights assigned to all the per-batch estimators $\hbt_i$ (which is optimal among all the fixed weights that sum up to $1$), that is
\begin{align}
    \hbt_{\mathrm{avg}} \triangleq \frac{b}{n}\sum_{i=1}^{\frac{n}{b}}\hbt_i.
\end{align}
This choice of weights is a common setup, previously considered, e.g., in \cite{mucke2022data}, which also studied distributed \ac{min-norm} linear regression in the overparametrized regime. 

Since all the per-batch estimators are independent and identically distributed, the bias and variance of the server-averaging estimator are given by
\begin{align}\label{eq:avg_bias}
    \mathsf{Bias}\left(\hbt_{\mathrm{avg}}\right)= \frac{b}{n}\sum_{i=1}^{\frac{n}{b}}  \mathsf{Bias}\left(\hbt_i\right) = \mathsf{Bias}\left(\hbt_1\right),
\end{align}
and
\begin{align}\label{eq:avg_var}
    \Var\left(\hbt_{\mathrm{avg}}\right)= \left(\frac{b}{n}\right)^2\sum_{i=1}^{\frac{n}{b}}  \Var\left(\hbt_i\right) = \frac{b}{n}\Var\left(\hbt_1\right).
\end{align}
Note that in \eqref{eq:avg_bias} and \eqref{eq:avg_var} the notations $\mathsf{Bias}$ and $\Var$ in the above refer to the actual bias and variance of the averaging estimator.
For any batch size $b$ that is sub-linear in $n$, the per-batch bias $\mathsf{Bias}(\hbt_i)\rightarrow 1$ as $n,p\rightarrow \infty$, therefore for any fixed batch size the risk of server-averaging tends to the null risk asymptotically.  
For batch sizes that are linear in $n$ it can be easily shown by adapting the result \eqref{eq:asymptotic_MN_risk} to our framework (that is, to the case when the expectation is taken over both the noise and the features) that
\begin{align} \label{eq:avg_risk}
  \lim_{n,p\rightarrow \infty }  R(\hbt_{\mathrm{avg}}) =  \left(1-\tilde{\gamma}^{-1}\right) \frac{\gamma+\tilde{\gamma}(\tilde{\gamma}-1)}{\tilde{\gamma}^2} + \frac{1-\xi}{\xi}\cdot\frac{\gamma}{\tilde{\gamma}}\frac{1}{\tilde{\gamma}-1},
\end{align}
where 
\begin{align}\label{eq:tilde_gamma}
\tilde{\gamma}\triangleq \frac{p}{b}.    
\end{align}
 The batch size that minimizes the above can be found numerically or analytically, for any fixed $(\gamma,\xi)$ pair. Here we drop the analytical expression and optimize the batch size numerically over all integer batch sizes.

Figure~\ref{fig:bmn_vs_avg_gamma} compares the performance of \ac{batch-min-norm} and \ac{s-batch-min-norm} with server-averaging for batch sizes $b=2$ and $b=200$. Subfigures~\ref{fig:bmn_vs_avg_gamma}-(a) and \ref{fig:bmn_vs_avg_gamma}-(b) plot the estimators' risk against $\gamma$ with $\xi=0.75$.  Subfigures~\ref{fig:bmn_vs_avg_gamma}-(c) and \ref{fig:bmn_vs_avg_gamma}-(d) show the risk vs. the $\snr$ $\xi$ with $\gamma=1.2$. Here, we used $n=400$, therefore the batch size $b=200$ corresponds to $n/2$. As expected, for the small batch size $b=2$ the risk of server-averaging is close to the null risk for all $\gamma$ values, due to the large per-batch bias. The effective per-batch overparametrization is  $\tilde{\gamma}_b=\frac{p}{b}=\frac{400\gamma}{b}$, and even for $\gamma=0.55$, we get that $\tilde{\gamma}_2=110$, therefore, the par-batch estimators are very biased. The \ac{batch-min-norm} and \ac{s-batch-min-norm} estimators are much less biased due to the second \ac{min-norm} step. Near the effective interpolation limit $\gamma=1/b$ the risk of both estimators explodes, but when $\gamma$ is sufficiently large the estimator enjoys good noise averaging. For $b=200$, server-averaging outperforms \ac{batch-min-norm} in all but very high $\snr$ levels. \ac{s-batch-min-norm} has lower risk than both \ac{batch-min-norm}
 and server-averaging in all settings except for very near the (effective) interpolation point. 
 It can be seen that the risk of server-averaging with $b=200=n/2$ explodes at $\gamma=1/2$. This is because $\tilde{\gamma}_{200}=\tilde{\gamma}_{n/2}=2\gamma$ and the effective interpolation limit of server-averaging $\tilde{\gamma}=1$ translates to $\gamma=0.5$. 
It can also be seen that the performance of \ac{batch-min-norm} with batch sizes $b$ and 
 $n/b$ are approximately the same (as demonstrated by the \ac{batch-min-norm} curves $b=2$ and $b=n/2$), and the two curves explode at the same point $\gamma=0.5$. This is since every \ac{batch-min-norm} estimator performs two minimum-norm estimation steps, one per-batch with effective overparametrization ratio $\frac{p}{b}$, and the second is joint for all per-batch estimators with effective overparametrization ratio $\frac{p}{n/b}$. 

\begin{figure*}
\centering
  \includegraphics[width=0.2\linewidth]{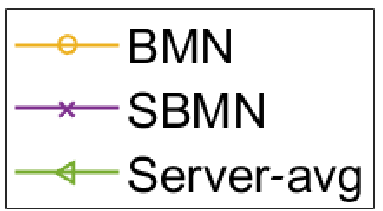}
\begin{minipage}{.5\textwidth}
  \centering
  \includegraphics[width=1\linewidth]{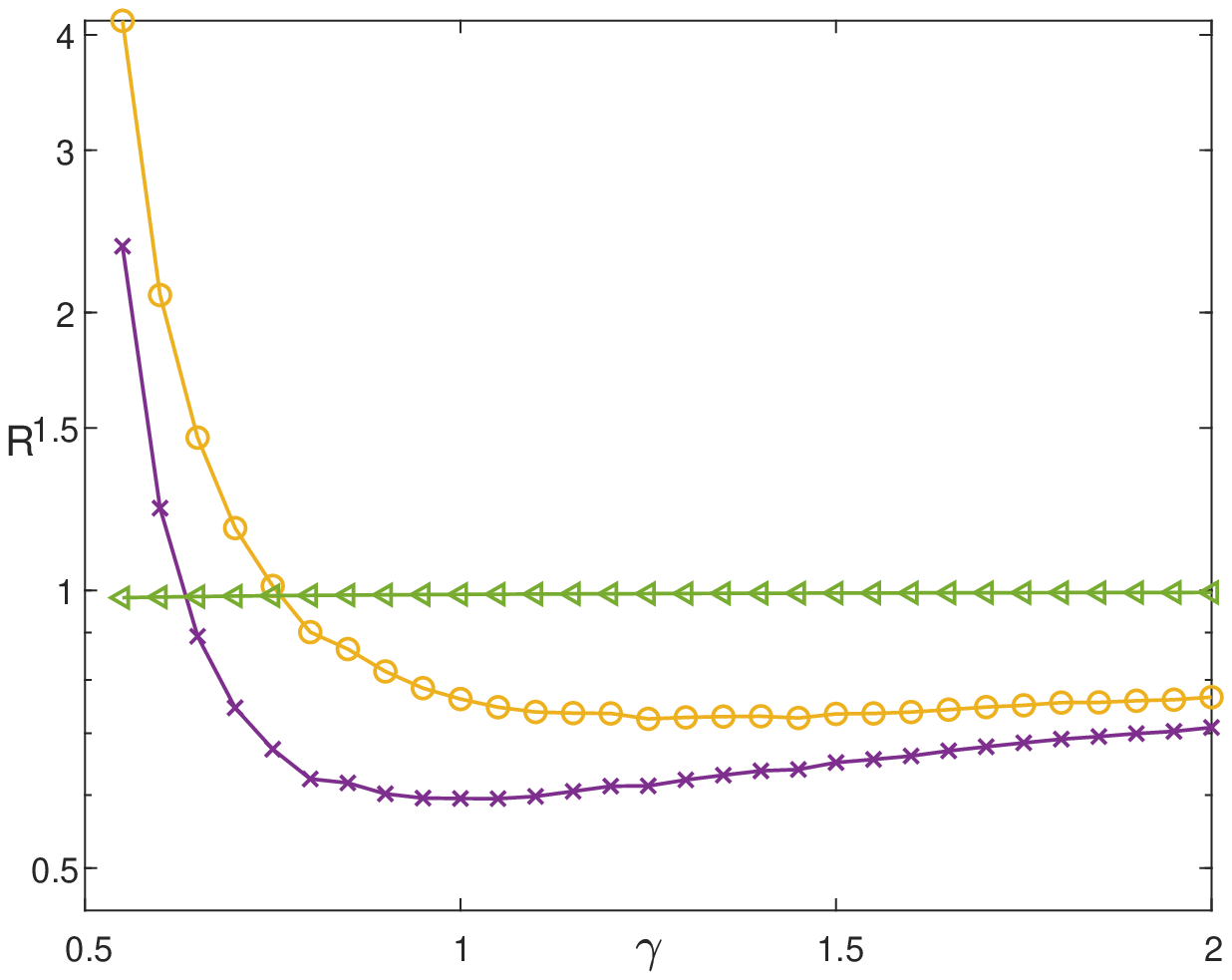}
  \caption*{(a)}
  \includegraphics[width=1\linewidth]{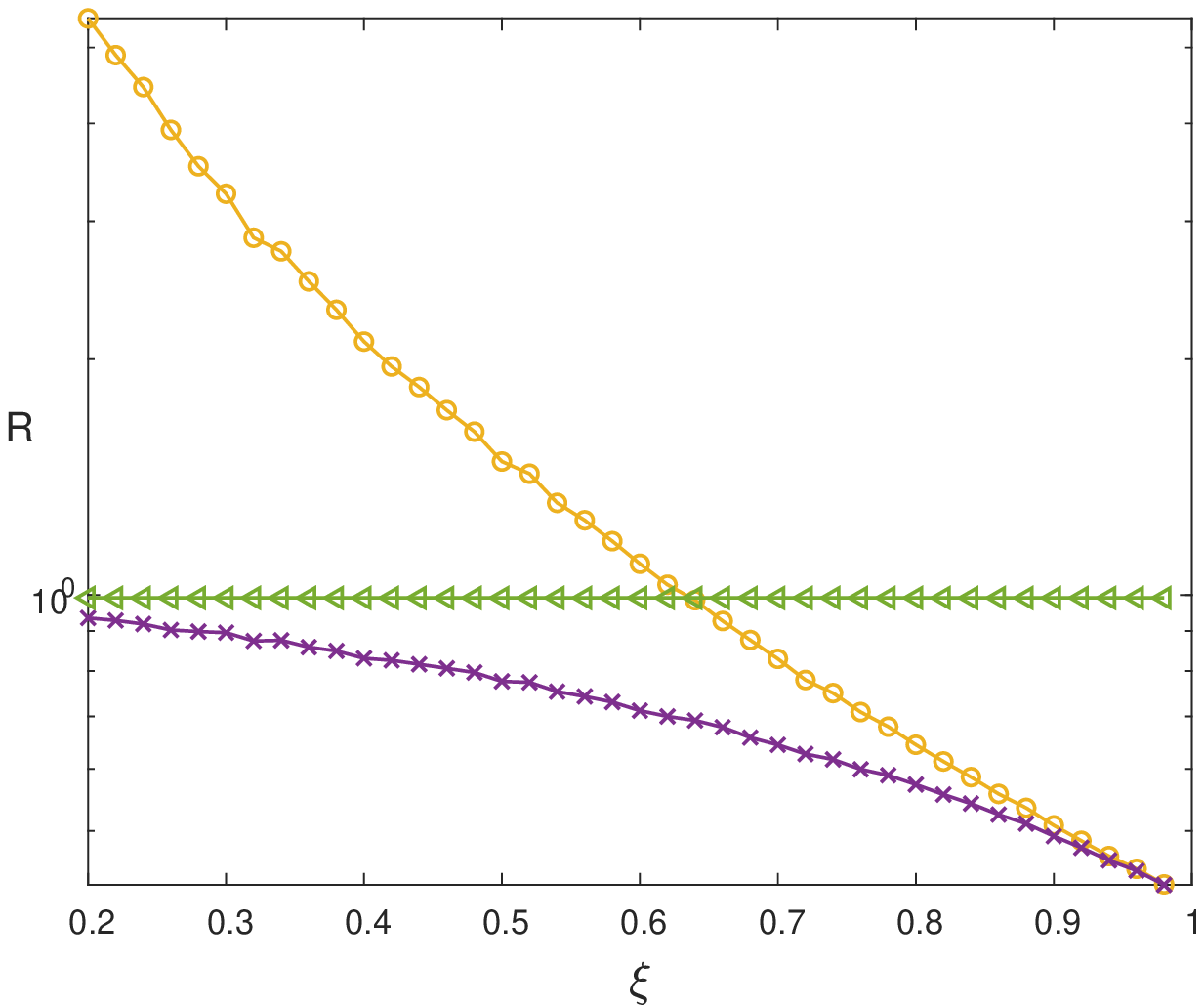}
  \caption*{(c)}
\end{minipage}%
\begin{minipage}{.5\textwidth}
  \centering
  \includegraphics[width=1\linewidth]{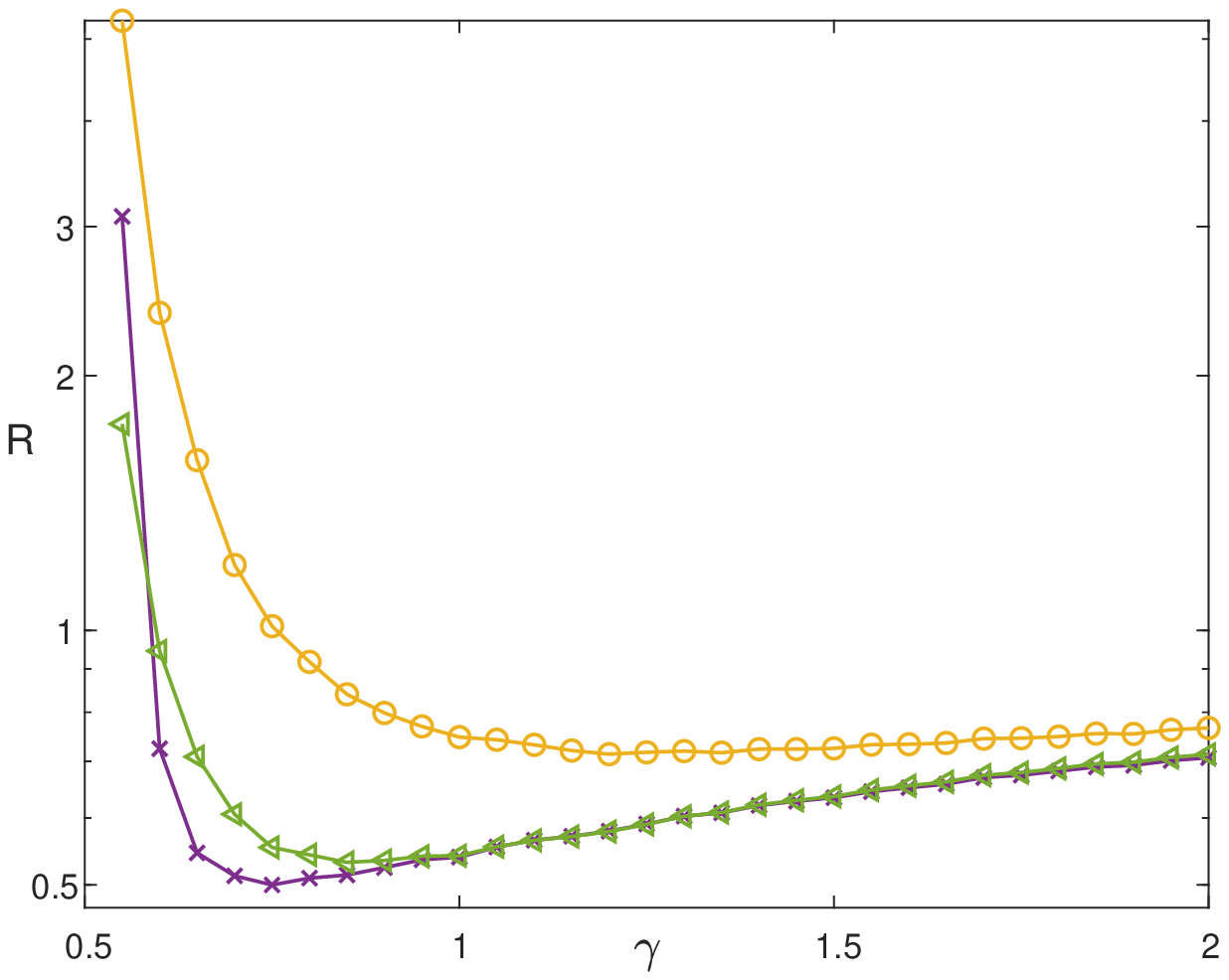}
  \caption*{(b)}
  \includegraphics[width=1\linewidth]{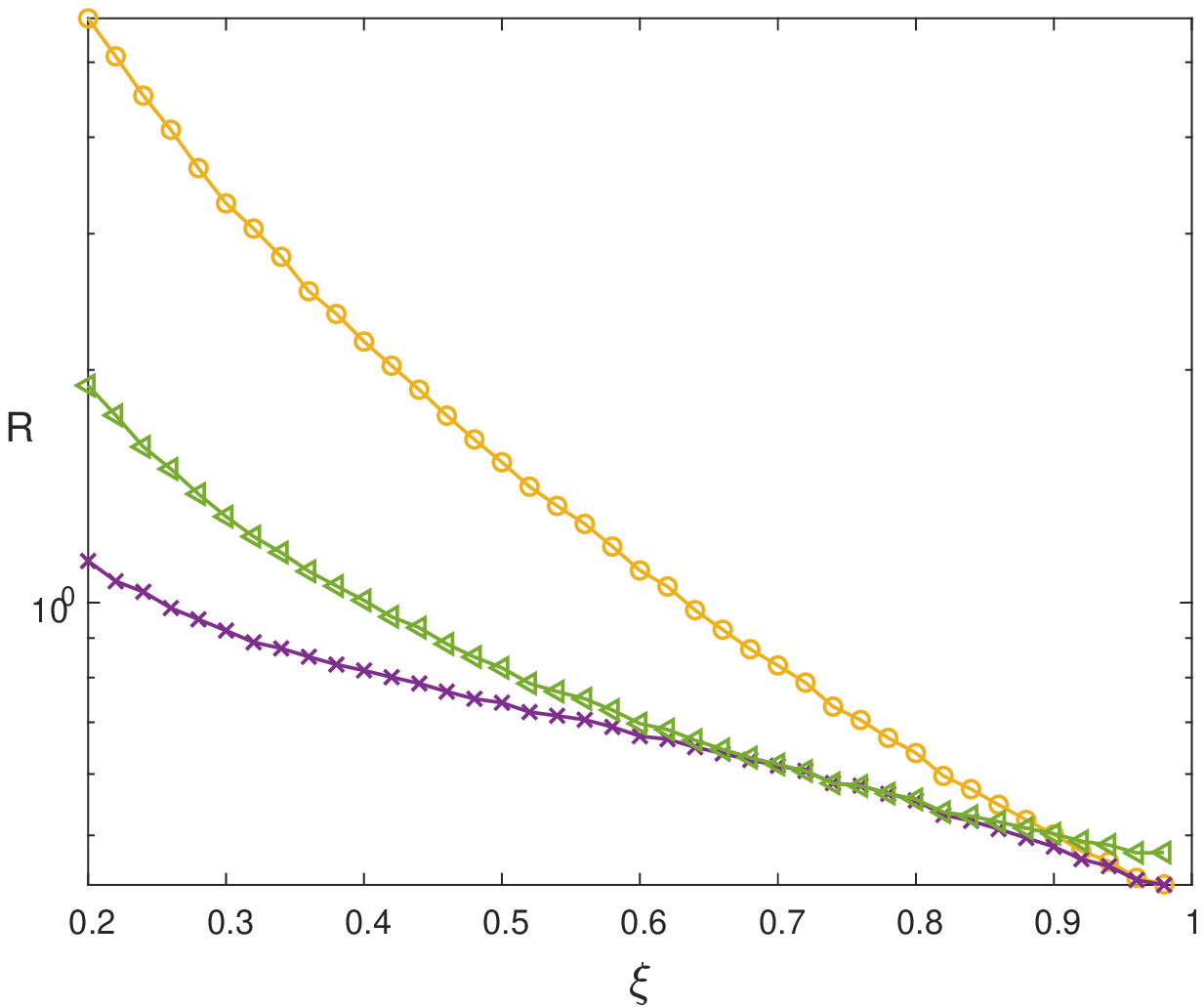}
  \caption*{(d)}
\end{minipage}
 \caption{\textit{ Risk of \ac{batch-min-norm} and server-averaging for (a) $b=2$ and $\xi=0.7$, (b) $b=200$ and $\xi=0.75$, (c) $b=2$ and $\gamma=1.2$ and (d) $b=200$ and $\gamma=1.2$. For the small batch size $b=2$ the risk of server-averaging is close to the null risk. The risk of  \ac{batch-min-norm} is below the null risk for $\xi\geq 0.6478$. When the batch size is linear in the number of samples server-averaging outperforms \ac{batch-min-norm} for low to moderate $\snr$ levels. \ac{s-batch-min-norm} has lower risk than both the other algorithms for all settings except near the (effective) interpolation limit $\gamma=1/b$.
}}\label{fig:bmn_vs_avg_gamma}
\end{figure*}

\begin{figure*}[ht]
\centering
\begin{minipage}{1\textwidth}
   \centering
  \begin{minipage}{0.5\textwidth}
  \centering
  \includegraphics[width=1.05\textwidth]{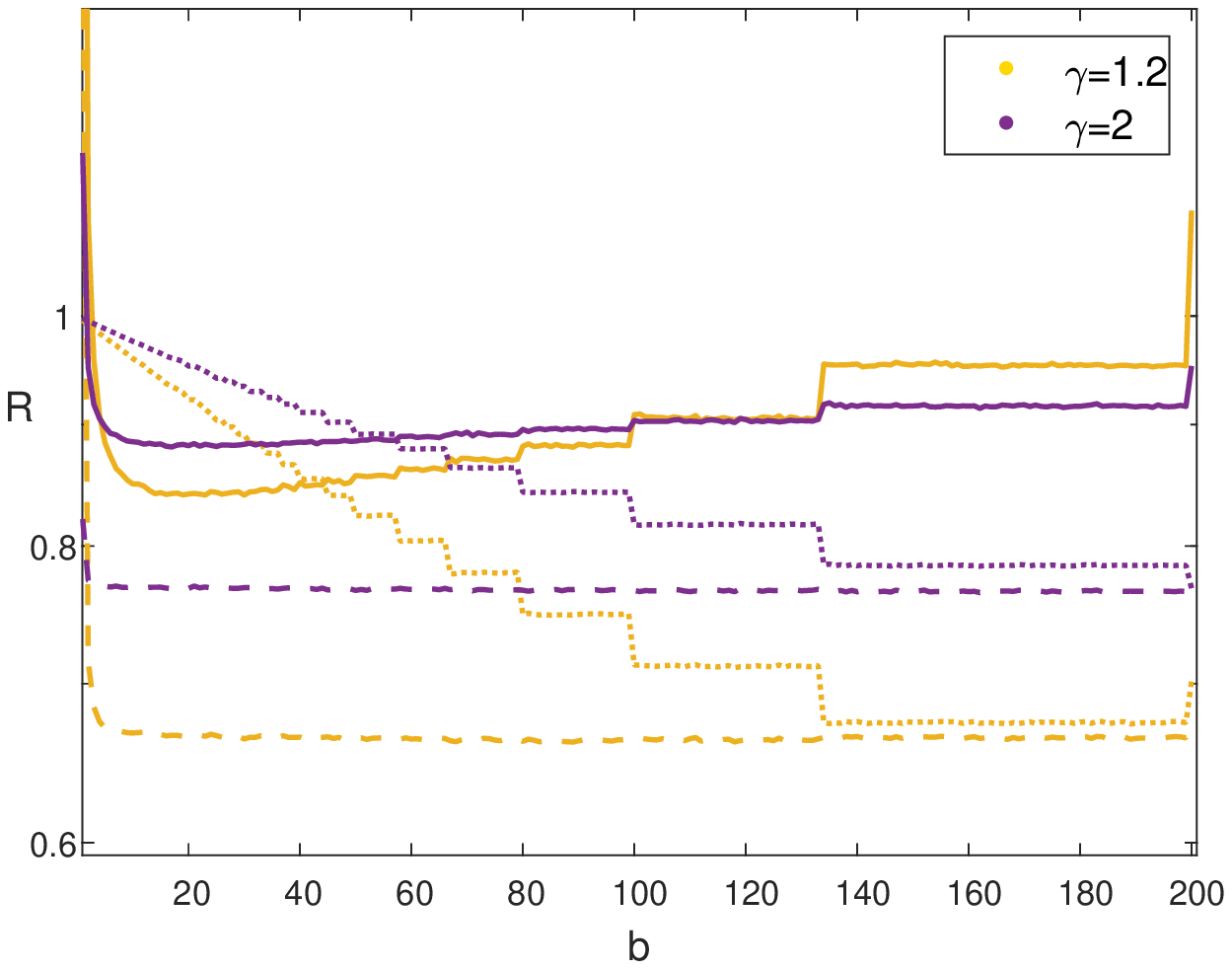}
    \caption*{(a)}
  \end{minipage}
\begin{minipage}{0.5\textwidth}
  \centering
     \includegraphics[width=1.05\textwidth]{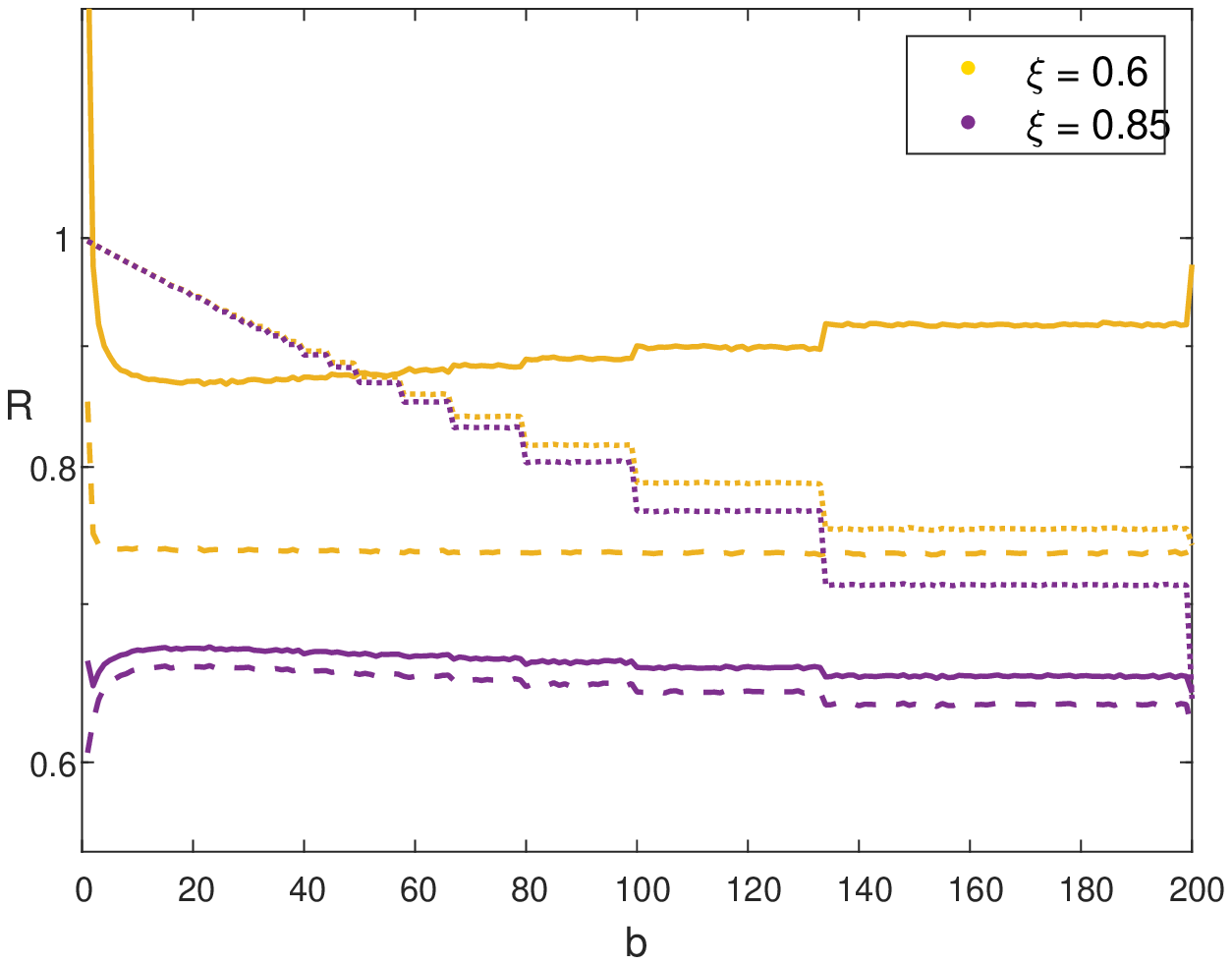}
  \caption*{(b)}
  \end{minipage}
\end{minipage}
\caption{\textit{Risk  \ac{batch-min-norm} (continuous line), \ac{s-batch-min-norm} (broken line) and server-averaging (dotted line), vs. the batch size $b$, for (a) overparametrization ratios $\gamma =1.2$ and $\gamma =2$ with $\xi=0.6$, and (b) $\snr$ levels $\xi=0.6$ and $\xi =0.85$ with $\gamma=1.7$. For small batch sizes, \ac{batch-min-norm} outperforms server-averaging. For linear batch size, server-averaging exhibits a lower risk than \ac{batch-min-norm} in low- to moderate-$\snr$. \ac{s-batch-min-norm} outperforms both other algorithms in all settings.  }}\label{fig:bmn_vs_avg_b_1} 
\nonumber
\end{figure*}

Next, we study the performance of \ac{batch-min-norm}, \ac{s-batch-min-norm}, and server-averaging as a function of the batch size, as presented in Figure~\ref{fig:bmn_vs_avg_b_1}. The number of samples for each $b$ value is set to be $n_b=\lceil 400/b\rceil \cdot b$ (the smallest integer divisible by $b$ that is greater or equal to $400$).
Subfigure~\ref{fig:bmn_vs_avg_b_1}-(a) presents the performance of the estimators for two overparametrization ratios: $\gamma=1.2$ and $\gamma=2$, with $\xi=0.6$. \ac{batch-min-norm} outperforms server-averaging for $b<45$ when $\gamma=1.2$ and for $b<55$ when $\gamma=2$. As the batch size decreases, the risk of server-averaging approaches the null risk. For $\gamma=1.2$ the minimal averaging risk is obtained at $b=133 \approx\frac{n}{3}$, while for $\gamma=2$ the optimal averaging batch size is $b= 200=\frac{n}{2}$.  \ac{s-batch-min-norm} outperforms the other estimators for all batch sizes. It can be observed that the risk of the algorithms exhibits steps in the points $b=n/s$, for $s=1,2,3,4$. This artifact is caused by our method for setting the number of samples $n$. Effectively, at each such $s$ step, the number of modified samples in the second \ac{min-norm} step of the algorithm increases by one (and is, in fact, equal to $s$ over the entire interval $b\in(\frac{n}{s}, \frac{n}{s+1})$).  
Subfigure~\ref{fig:bmn_vs_avg_b_1}-(b) presents the performance of the estimators for $\xi=0.6$, and $\xi=0.85$, with $\gamma=1.7$. 
 In high $\snr$, $\xi=0.85$, \ac{batch-min-norm} outperforms server-averaging for all batch sizes. For moderate $\snr$, $\xi=0.6$, server-averaging is batter than \ac{batch-min-norm} for $b>50$. Again, \ac{s-batch-min-norm} is uniformly better than both others.

\begin{figure*}[ht]
\centering
\begin{minipage}{1\textwidth}
   \centering
  \begin{minipage}{0.5\textwidth}
  \centering
  \includegraphics[width=1\textwidth]{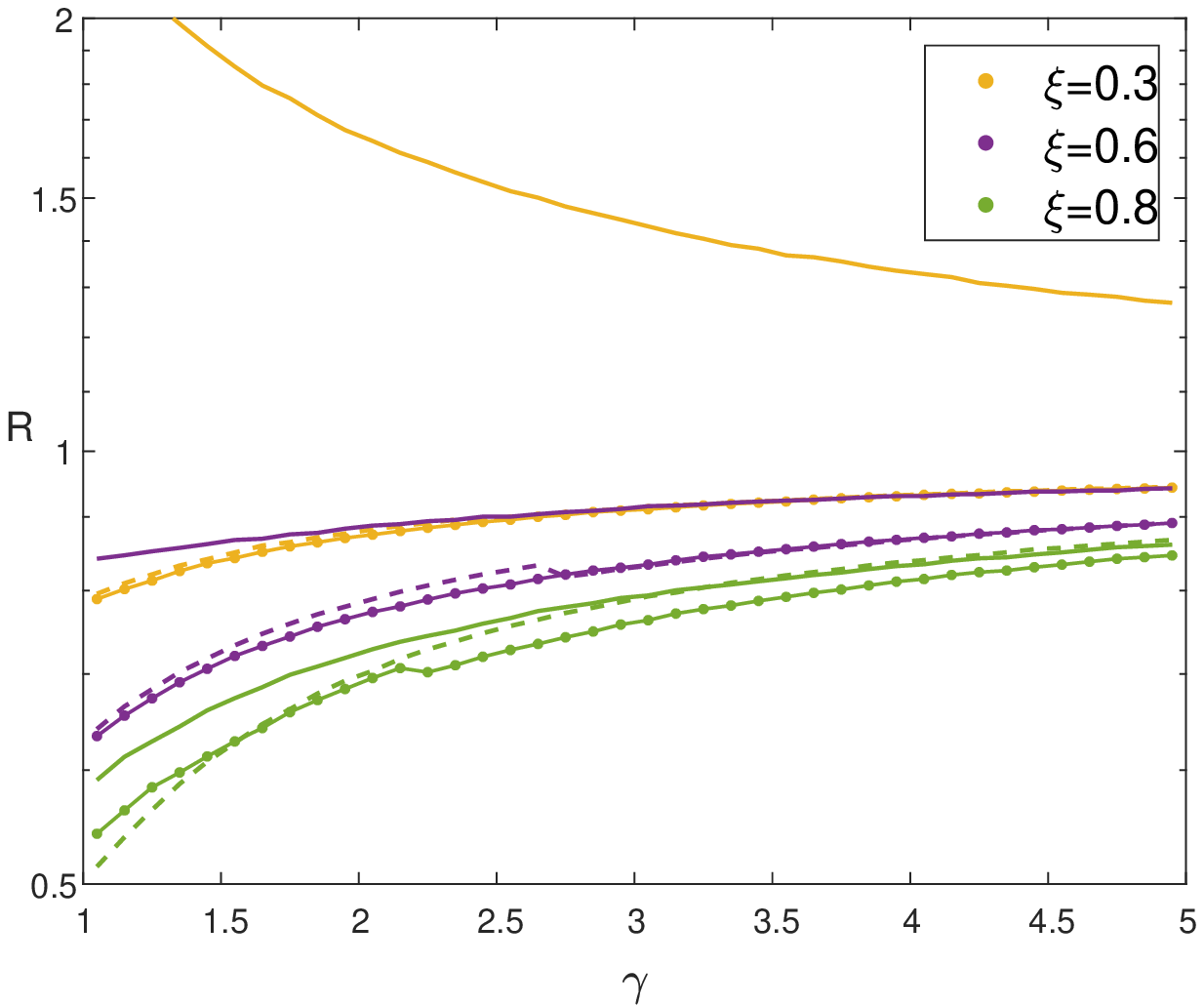}
    \caption*{(a)}
  \end{minipage}
\begin{minipage}{0.5\textwidth}
  \centering
      \includegraphics[width=1\textwidth]{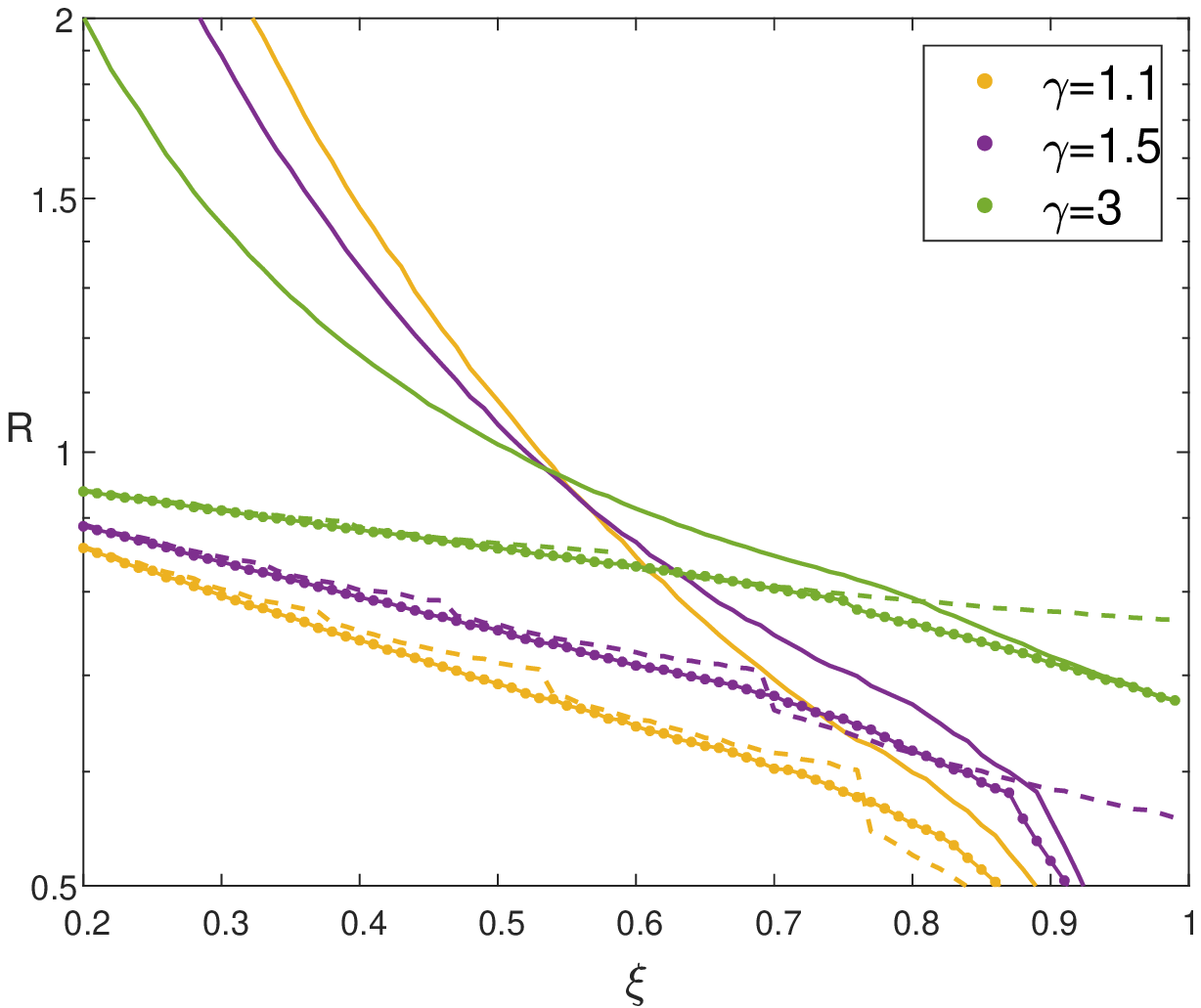}
  \caption*{(b)}
  \end{minipage}
\end{minipage}
\caption{\textit{Risk of optimized \ac{batch-min-norm} (continuous lines), optimized \ac{s-batch-min-norm} (dotted lines) and optimized server-averaging (broken lines). The optimal batch size of server-averaging is chosen as the integer minimizer of \eqref{eq:avg_risk}, and always lies in the linear batch size regime. In high $\snr$, \ac{batch-min-norm} outperforms server-averaging. \ac{s-batch-min-norm} has lower or equal risk compared to both algorithms} in all settings except for a small range around $(\gamma,\xi)=(1,0.85)$ and $(\gamma,\xi)=(1.5,0.75)$}. \label{fig:opt_bmn_vs_opt_avg_} 
\nonumber
\end{figure*}

Finally, we compare the performance of \ac{batch-min-norm}, \ac{s-batch-min-norm}, and server-averaging with optimized batch sizes. The optimal batch size of \ac{batch-min-norm} and \ac{s-batch-min-norm} are derived by minimizing the upper bounds \eqref{eq:UB} and \eqref{eq:shrunk_UB}, respectively, over all integer values. The optimal batch size of server-averaging is chosen as the integer minimizer of \eqref{eq:avg_risk} in the range $[1,\frac{n}{2}]$. Figure~\ref{fig:opt_bmn_vs_opt_avg_} shows the performance of the optimized algorithms vs. (a) the overparametrization ratio $\gamma$, for $\xi=0.3,0.6$, and $\xi=0.8$, and (b) the $\snr$ level $\xi$, for $\gamma=1.1,1.5$ and $\gamma=3$. Optimized \ac{batch-min-norm} outperforms optimized server-averaging only above some high-$\snr$ threshold (that depends on $\gamma$). The optimal batch size of server-averaging is always in the linear batch size regime, while the optimal \ac{batch-min-norm} size is often very small. The optimized \ac{s-batch-min-norm} outperforms \ac{batch-min-norm} in all settings, and except for a small range around $(\gamma,\xi)=(1,0.85)$ and $(\gamma,\xi)=(1.1,0.75)$, it also has a lower risk than server-averaging.

\section{Discussion} \label{sec:discussion}

In this paper, we proposed a batch version of the \ac{min-norm} algorithm, based on first linearly combining the elements inside each batch to obtain a new linear-like model, which is more overparametrized yet better aligned with the parameters, and then performing \ac{min-norm} on it. The coefficients of the per batch linear combination are chosen as the \ac{min-norm} weights. Therefore, our \ac{batch-min-norm} estimator has low parameter bias while still enjoying a high noise-averaging effect. We derived bounds on the limiting risk of \ac{batch-min-norm}, and then optimized the batch size to minimize the upper bound. The resulting optimized \ac{batch-min-norm} risk is stable for all overparametrization ratios (for sufficiently high $\snr$). We further suggested a shrinkage variation of \ac{batch-min-norm}. The risk of \ac{s-batch-min-norm} (with optimal batch size) is stable for all overparametrization ratios and $\snr$ levels. Via numerical experiments, we showed that \ac{s-batch-min-norm} is superior to server-averaging in the distributed linear regression problem, in almost all settings. Furthermore, as the  $\snr$ decreases or as the overparametrization ratio increases, \ac{s-batch-min-norm} approaches the performance of optimized ridge regression, which is optimal in the Gaussian isotropic model. 

Here, we defined the risk as the mean squared error of the parameter vector, where the expectation is taken over the features $\H$ and the noise $\W$. We note that taking the expectation over the feature matrix results in weaker results compared to what is known for standard min-norm and ridge regression, where the conditional risk given the features, $R_{\H}(\hbt)\triangleq\E[\|\hbt -\bt\| \big|\H]$, is considered, and the limiting results hold in the almost sure sense over the features~\cite{hastie2022surprises}. The reason we were only able to obtain expected bounds rather than almost sure ones is rooted in the parameter bias part. While our excess noise bounds hold in the $L^1$ sense, our bias bounds are limited to bounding the expected bias. This is inherent to our Gaussian approximation technique in Subsection~\ref{sec:semi_noisy}. To calculate the per-batch contribution to the overall bias, we approximated the distribution of the underlying inner products that define it via clean Gaussian statistics, in the $1$-Wasserstein distance.  This translates to convergence in distribution plus convergence in the first moment (as $n,p\to\infty$). We then used the dual form of the  $1$-Wasserstein distance to obtain a bound on the expected per-batch bias. This technique is therefore not suitable for obtaining a stronger result such as $L^1$ or almost sure convergence. 

Another limitation of our result is that it is restricted to the isotropic Gaussian setting, again in contrast to \cite{hastie2022surprises} where more general settings like latent space features and nonlinear model were considered. Due to the batch partition and the nonlinear dependence of the modified model on the parameter and features, the seemingly simple isotropic Gaussian case already posed significant new technical challenges when analyzing our estimators. Nevertheless, our results are extendable to the misspecified isotropic case of \cite{hastie2022surprises} by treating misspecified energy as noise, resulting in a risk that can have a minimum point for some $\gamma >1$ like \ac{min-norm}, but does not explode at the interpolation limit (due to the shift caused by the batch partition). The latent space case and nonlinear model are more challenging. Simulations indicate that in the latent case, our algorithm behaves similarly to \ac{min-norm} (i.e., attains a global minimum at $\gamma\to\infty$, see~\cite{mythesis}), while again avoiding explosion at $\gamma=1$. It thus appears that perhaps not much new insight into the batch problem can be gleaned by considering this more general setting.

A natural question that follows from our work is whether the performance of the batch algorithm can be further improved by optimizing the coefficients \eqref{eq:A_j} of the per-batch linear combination, replacing the per-batch \ac{min-norm} coefficients $\X_j^T(\H_j\H_j^T)^{-1}$ we used here. While this can be beneficial, the upside of our current choice stems from the projection property of \ac{min-norm}, and deriving upper bounds on the resulting risk for general coefficients (that depend on the measurements and features) can prove difficult. 

Another interesting question that arises is whether our algorithm can benefit from iterations. That is, after constructing the new linear-like model \eqref{eq:modified_linear_model}, partitioning it again into batches and performing another step of per-batch \ac{min-norm} estimation. This can be done repeatedly for the desired number of iterations before the centralized \ac{min-norm} step \eqref{eq:batch_estimator} is performed.  While this variation is again difficult to analyze (for example, since starting from the second iteration, the weights of the per-batch linear combination do not necessarily converge to the batch observations), it is intuitive to expect that the performance of iterative-\ac{batch-min-norm} (resp. iterative-\ac{s-batch-min-norm}) algorithm will be similar to the regular \ac{batch-min-norm} (resp. \ac{s-batch-min-norm}) with batch size that is equal to the product of the batch sizes at each of the iterations. For example, if we apply two iterations of \ac{batch-min-norm} with $b_1=4$ and $b_2=3$ the performance should be similar to a single iteration of \ac{batch-min-norm} with $b=12$. This is because the final step in both algorithms has the same overparametrization ratio and hence the same interpolation limit. Also, the modified features at the final step span approximately the same subspace (each modified vector is a linear combination of the same features), and the noise-averaging effect should also be similar due to the same reasons. This conjecture is supported by numerical results, see Figure~\ref{fig:iterativeBMN} below.  

\begin{figure*}
\centering
\begin{minipage}{.5\textwidth}
  \centering
  \includegraphics[width=1\linewidth]{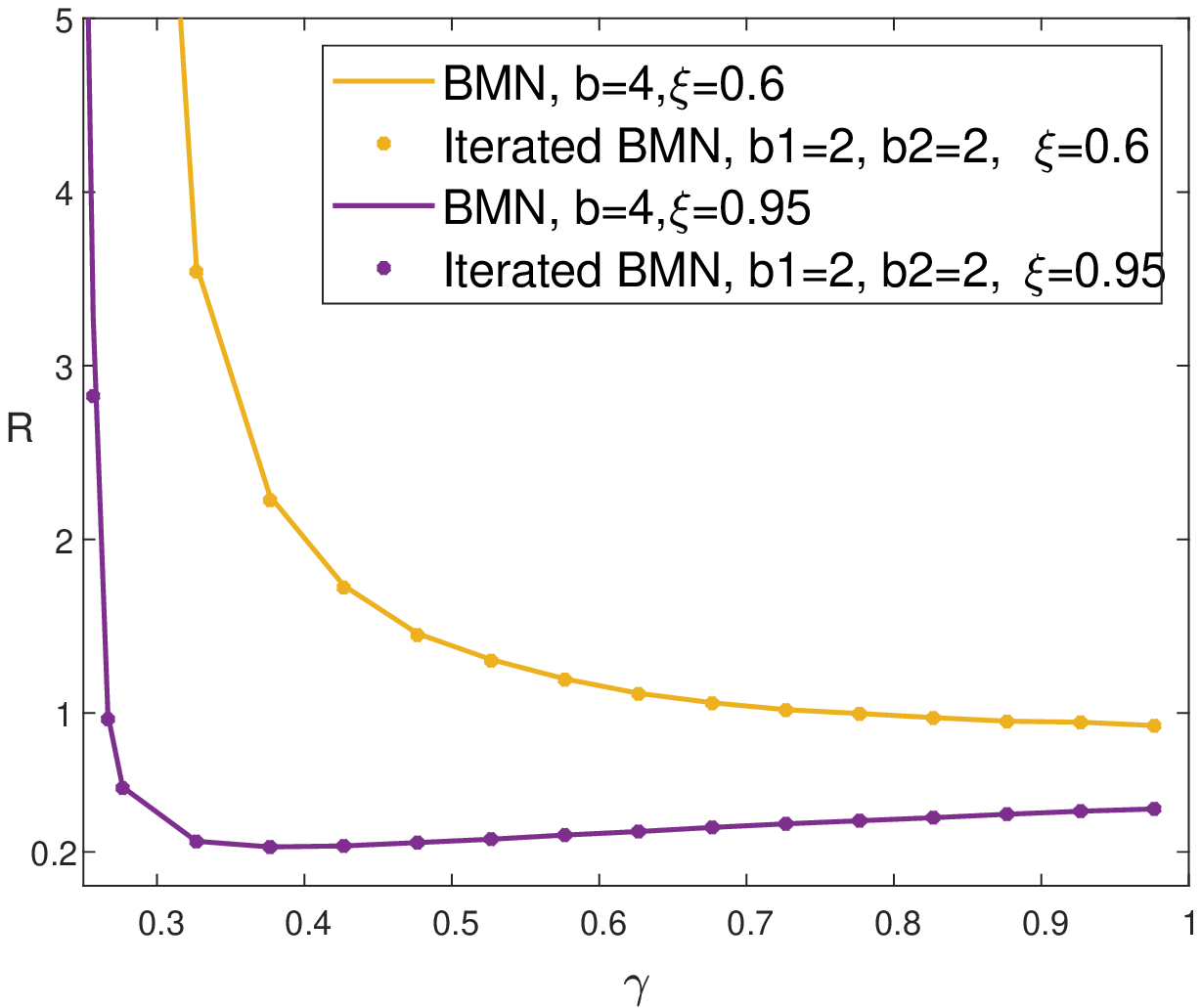}
    \caption*{(a)}
  \end{minipage}%
\begin{minipage}{.5\textwidth}
  \centering
  \includegraphics[width=1\linewidth]{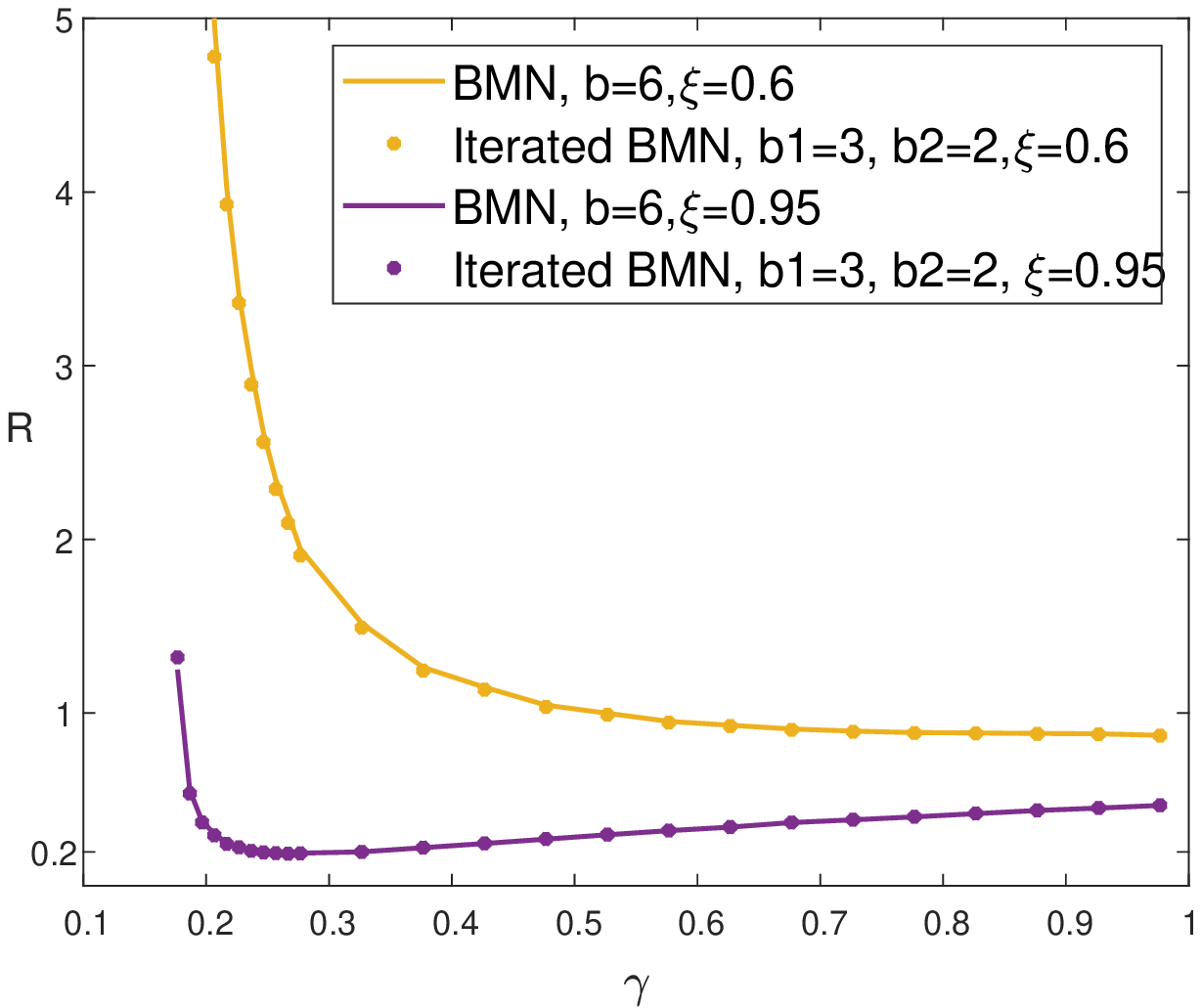}
  \caption*{(b)}
\end{minipage}
 \caption{Comparison between \ac{batch-min-norm} with batch size $b$ and Iterative-\ac{batch-min-norm} with two iterations and batch sizes $b_1$ and $b_2$, in each of the iterations, respectively. The batch sizes are set as (a) $b=4$ and $b_1=b_2= 2$, and (b) $b=6$, $b_1=3$, and $b_2=2$. It can be seen that in both cases the risk of the two estimators coincides. 
}\label{fig:iterativeBMN}
\end{figure*}

\appendix 
\section{Additional Proofs and Results}

\subsection{The Wasserstein Distance}\label{appen:wassertein}

We state and prove some properties of the Wasserstein distance used in Section~\ref{sec:semi_noisy}.

\begin{prop}[basic properties of Wasserstein distance]\label{prop:Wasserstein_properties}
    The following properties hold for the $1$-Wasserstein distance
    \begin{enumerate}
        \item (triangle inequality) $\dW_1(\mu,\nu)\leq \dW_1(\mu,\gamma)+\dW_1(\gamma,\nu)$.
        \item (stability under linear transformations) Let $A$ be a deterministic matrix and $\bX,\bY$ two random vectors. Then  
        \begin{align}
        \dW_1(A\bX,A\bY)\leq \|A\|_F\dW_1(\bX,\bY).
        \end{align}
        \item If $\bX$, $\bY\in\mathbb{R}^n$ and $\bs{Z}\in\mathbb{R}^m$ be mutually independent random vectors, then
                \begin{align}
                     \dW_1\left([\bX,\Z],[\bY,\Z]\right)= \dW_1(\bX,\bY).
                \end{align}       
    \end{enumerate}
\end{prop}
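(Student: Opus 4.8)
The plan is to prove each of the three properties by playing the primal (coupling) definition~\eqref{eq:Wp_distance_def} off against the Kantorovich--Rubinstein dual (Theorem~\ref{thrm:Wasser_dual_rep}), using whichever representation is more convenient for the item at hand. For the triangle inequality I would argue entirely on the dual side, which sidesteps any gluing-lemma construction. Fix a $1$-Lipschitz $f:\RR^n\to\RR$ and let $\bX\sim\mu$, $\V\sim\gamma$, $\bY\sim\nu$. Then the telescoping identity $\E f(\bX)-\E f(\bY)=\big(\E f(\bX)-\E f(\V)\big)+\big(\E f(\V)-\E f(\bY)\big)$ bounds the two brackets by $\dW_1(\mu,\gamma)$ and $\dW_1(\gamma,\nu)$ respectively, since each is dominated by the corresponding dual supremum. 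As this holds for every $1$-Lipschitz $f$, taking the supremum on the left and invoking Theorem~\ref{thrm:Wasser_dual_rep} gives $\dW_1(\mu,\nu)\le \dW_1(\mu,\gamma)+\dW_1(\gamma,\nu)$.

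For stability under linear maps I would instead use the primal definition directly. Given any coupling of $\bX,\bY$, the pair $(A\bX,A\bY)$ is a valid coupling of the laws of $A\bX$ and $A\bY$, so that $\dW_1(A\bX,A\bY)\le \E\|A\bX-A\bY\|\le \|A\|_{op}\,\E\|\bX-\bY\|$. Taking the infimum over all couplings of $\bX,\bY$ on the right, and using $\|A\|_{op}\le\|A\|_F$, yields $\dW_1(A\bX,A\bY)\le\|A\|_F\,\dW_1(\bX,\bY)$. (Note this in fact proves the sharper operator-norm bound; the stated Frobenius version follows immediately.)

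For the third property I would establish the two inequalities separately. For ``$\le$'', from any coupling of $(\bX,\bY)$ I append an \emph{independent} copy of $\Z$ to both coordinates; the mutual-independence hypothesis is exactly what guarantees this appended coupling has the correct marginals $[\bX,\Z]$ and $[\bY,\Z]$. Since the shared $\Z$ cancels in the difference, $\E\|[\bX,\Z]-[\bY,\Z]\|=\E\|\bX-\bY\|$, and passing to the infimum gives $\dW_1([\bX,\Z],[\bY,\Z])\le\dW_1(\bX,\bY)$. For the reverse ``$\ge$'', I would go back to the dual: given a $1$-Lipschitz $f$ on $\RR^n$, its lift $\tilde f(\bx,\z)\triangleq f(\bx)$ is $1$-Lipschitz on $\RR^{n+m}$ and ignores the second block, so $\E\tilde f([\bX,\Z])-\E\tilde f([\bY,\Z])=\E f(\bX)-\E f(\bY)$; taking the supremum over $f$ and applying Theorem~\ref{thrm:Wasser_dual_rep} recovers $\dW_1(\bX,\bY)$ as a lower bound, and combining the two directions yields the equality.

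None of these steps is deep, and I do not expect a genuine obstacle. The one point that warrants care is verifying, in the third property, that the coupling built by sharing a single $\Z$ across both coordinates actually has the prescribed joint laws; this is precisely where mutual independence is invoked, and it would fail without it. A second minor subtlety is that the infimum in~\eqref{eq:Wp_distance_def} need not be attained, so I would phrase every primal argument as holding for an arbitrary coupling and only afterward pass to the infimum (or equivalently work with $\eps$-optimal couplings and let $\eps\to 0$), thereby avoiding any appeal to the existence of optimal couplings.
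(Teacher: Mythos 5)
Your proposal is correct and follows essentially the same route as the paper: the coupling-based argument for stability under linear maps (via $\|A(\bx-\by)\|\le\|A\|_{op}\|\bx-\by\|\le\|A\|_F\|\bx-\by\|$) and the product coupling $P_{\bX,\bY}\otimes P_{\Z,\Z'}$ with $\Z=\Z'$ a.s.\ for the third property are exactly what the paper does, while the paper simply omits the triangle inequality as trivial and calls the reverse inequality in item 3 trivial. Your handling of the possibly non-attained infimum by working with arbitrary (or $\eps$-optimal) couplings is slightly cleaner than the paper's, which assumes an optimal coupling exists and remarks afterward that a limiting sequence fixes this.
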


\begin{proof}
The proof of the first property is trivial and therefore omitted.
For the second property, let $\gamma^*$ be the coupling that achieves the $1$-Wasserstein distance between the marginals of $\bX$ and $\bY$, then
\begin{align}
    \|A\|_F\dW_1(\bX,\bY) &=\|A\|_F\E_{\gamma^*}\|\bX-\bY\|^p
    \geq \E_{\gamma^*}\|A\bX-A\bY\|_2 
    \geq\dW_1(A\bX,A\bY).
\end{align}
While proven here for $p=1$, the above property holds for any $p$ with the Euclidean metric. 
Next, we prove the third property.
Denote $\bX\sim P_{\bX}$, $\bY\sim P_{\bY}$ and $\Z\sim P_{\Z}$.
Let $P_{\bX,\bY}$ be the coupling achieving the Wasserstein distance $\dW_1(\bX,\bY)$ between $\bX$ and $\bY$,  let $P_{\Z,\Z'}$ be a coupling of $\Z$ and $\Z'$ that set $\Pr(\Z=\Z')=1$, and let $\Gamma$ be the set of all couplings of $[\bX,\Z]$ and $[\bY,\Z]$. 
Then, $P_{\bX,\bY}\otimes P_{\Z,\Z'}$ is a valid coupling for the marginals $P_{\bX}\otimes P_{\Z}$ and $P_{\bY}\otimes P_{\Z}$ and therefore in $\Gamma$. 
Hence
\begin{align}
    \dW_1([\bX,\Z]&,[\bY,\Z]) \triangleq \inf_{\zeta\in \Gamma}\E_\zeta \sqrt{\|\bX-\bY\|^2_2 +\|\Z-\Z'\|_2^2} \\
   &\leq \E_{P_{\bX,\bY}\otimes P_{\Z,\Z'}} \sqrt{\|\bX-\bY\|^2_2 +\|\Z-\Z'\|_2^2} \\
   &= \E_{P_{\bX,\bY}} \sqrt{\|\bX-\bY\|^2_2 }\\
   &= \dW_1(\bX,\bY).\label{eq:same_marginals}
\end{align}
In the above, we assumed that there exists a coupling $P_{\bX,\bY}$ that achieves $\dW_1(\bX,\bY)$. In the case that the infimum is obtained only in the limit, one can adjust the proof by taking the sequence of couplings that converge to the infimum.
Since $  \dW_1\left([\bX,\Z],[\bY,\Z]\right)$ is also trivially lower bounded by $  \dW_1\left(\bX,\bY\right)$, we get equality.
\end{proof}

The next lemma will demonstrate that when two measures are heavily concentrated on a bounded region, then conditioning on being in this region does not change the Wasserstein distance much.  
\begin{lemma}[stability]\label{lem:Wasserstein_conditioned}
Let $\bX$ and $\bY$ be random vectors taking values in $\mathbb{R}^n$,  
and $\mathcal{D}\subset\mathbb{R}^n$ be a compact set such that $\Pr(\bX\notin\mathcal{D})\leq \epsilon$ and $\Pr(\bY\notin\mathcal{D})\leq \delta$. 
Let $\bX_\mathcal{D}$ and $\bY_{\mathcal{D}}$ denote the conditional random variables obtained from $\bX$ and $\bY$ by conditioning on the events $\bX\in\mathcal{D}$ and $\bY\in\mathcal{D}$, respectively.  Then 
\begin{align}
    \dW_1(\bX_\mathcal{D},\bY_\mathcal{D}) \leq \dW_1(\bX,\bY)+(\eps+\delta)\cdot \max_{\bx,\by\in\mathcal{D}}\|\bx-\by\|_2.
\end{align}
\end{lemma}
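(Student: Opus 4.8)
My plan is to argue entirely on the dual side, using the Kantorovich--Rubinstein representation of Theorem~\ref{thrm:Wasser_dual_rep}. Write $\mu,\nu$ for the laws of $\bX,\bY$ and set $p\triangleq\Pr(\bX\in\mathcal D)\ge 1-\eps$ and $q\triangleq\Pr(\bY\in\mathcal D)\ge 1-\delta$; if $\eps+\delta\ge 1$ the bound is trivial, since $\bX_{\mathcal D},\bY_{\mathcal D}$ both live on a set of diameter $M\triangleq\max_{\bx,\by\in\mathcal D}\|\bx-\by\|_2$, so I assume $p,q>0$. The conditional laws are $\mu_{\mathcal D}(\cdot)=\mu(\cdot\cap\mathcal D)/p$ and $\nu_{\mathcal D}(\cdot)=\nu(\cdot\cap\mathcal D)/q$, both supported on $\mathcal D$. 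By duality it suffices to bound $\E_{\mu_{\mathcal D}}f-\E_{\nu_{\mathcal D}}f=\frac1p\int_{\mathcal D} f\,d\mu-\frac1q\int_{\mathcal D} f\,d\nu$ by $\dW_1(\mu,\nu)+M(\eps+\delta)$ uniformly over $1$-Lipschitz $f$, and then pass to the supremum.

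The essential preliminary step is to replace $f$ by a bounded, nonnegative surrogate. Since $\mu_{\mathcal D}$ and $\nu_{\mathcal D}$ are supported on $\mathcal D$, only the values of $f$ on $\mathcal D$ influence the two conditional expectations. Let $m\triangleq\min_{\mathcal D}f$; as $f$ is $1$-Lipschitz, its oscillation $L\triangleq\max_{\mathcal D}f-m$ satisfies $L\le M$. Clipping $f$ to the interval $[m,\max_{\mathcal D}f]$ and subtracting $m$ produces a function $g$ that is still $1$-Lipschitz (clipping to an interval is a contraction), obeys $0\le g\le L$ everywhere, and agrees with $f-m$ on $\mathcal D$. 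The additive constant cancels between the two conditional expectations, so $\E_{\mu_{\mathcal D}}g-\E_{\nu_{\mathcal D}}g=\E_{\mu_{\mathcal D}}f-\E_{\nu_{\mathcal D}}f$, and it remains to control the difference for the nonnegative $g$.

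Nonnegativity is precisely what yields the clean constant. Using $g\ge 0$ together with $\int_{\mathcal D} g\,d\mu\le Lp$, I would write $\frac1p\int_{\mathcal D} g\,d\mu=\int_{\mathcal D} g\,d\mu+\frac{1-p}{p}\int_{\mathcal D} g\,d\mu\le\int_{\mathcal D} g\,d\mu+L\eps$, and, since $q\le 1$ and $g\ge0$, $-\frac1q\int_{\mathcal D} g\,d\nu\le-\int_{\mathcal D} g\,d\nu$. Dropping the nonnegative tail $\int_{\mathcal D^c}g\,d\mu$ and bounding $\int_{\mathcal D^c}g\,d\nu\le L\delta$ then gives
$$\int_{\mathcal D} g\,d\mu-\int_{\mathcal D} g\,d\nu\le\int g\,d\mu-\int g\,d\nu+L\delta\le\dW_1(\mu,\nu)+L\delta,$$
the final step being duality for the $1$-Lipschitz $g$. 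Collecting the three bounds yields $\E_{\mu_{\mathcal D}}f-\E_{\nu_{\mathcal D}}f\le\dW_1(\bX,\bY)+L(\eps+\delta)\le\dW_1(\bX,\bY)+M(\eps+\delta)$, and taking the supremum over $f$ finishes the argument.

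The one genuinely delicate point is the surrogate construction. A naive telescoping through the unconditional expectations, $\E_{\mu_{\mathcal D}}f-\E_\mu f+\E_\mu f-\E_\nu f+\E_\nu f-\E_{\nu_{\mathcal D}}f$, breaks down because $\E_\mu f$ sees $f$ off $\mathcal D$, where it is uncontrolled; and even after clipping, a symmetric bound $|g|\le M$ loses a spurious factor of two. Recentering $g$ to be nonnegative is what aligns the signs of the two error terms and recovers exactly the stated $(\eps+\delta)$, so I expect that recentering, rather than any estimate, to be the crux of getting the constant right.
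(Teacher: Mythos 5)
Your proof is correct, but it takes a genuinely different route from the paper's. The paper argues on the primal (coupling) side: it takes an optimal coupling $(\bX^*,\bY^*)$ for $\dW_1(\bX,\bY)$, builds a coupling of the conditional laws by keeping $(\bX^*,\bY^*)$ on the event $\{(\bX^*,\bY^*)\in\mathcal{D}^2\}$ and resampling the offending coordinate independently from the appropriate conditional law otherwise, then verifies the marginals and bounds the probability of the bad event by $\eps+\delta$, paying the diameter there. You instead work entirely on the dual side via Kantorovich--Rubinstein, and your key move --- clipping a $1$-Lipschitz test function to its range on $\mathcal{D}$ and recentering it to be nonnegative with oscillation at most the diameter --- is exactly what makes the two error terms carry the same sign and produces the clean $(\eps+\delta)$ constant rather than the factor of two a symmetric bound $|g|\le M$ would give; this is a real subtlety and you handle it correctly (the additive constant cancels between the two conditional expectations, clipping preserves the Lipschitz constant, and compactness of $\mathcal{D}$ gives the min and max). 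What each approach buys: your dual argument avoids the measure-theoretic bookkeeping of constructing and validating the modified coupling, but it is tied to $p=1$, where duality over $1$-Lipschitz functions holds in this clean form; the paper's coupling construction is more hands-on but would extend essentially verbatim to $\dW_p$ for $p>1$. Both yield the identical bound.
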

\begin{proof}
Denote by $\mu$, $\nu$, $\tilde{\mu}$ and $\tilde{\nu}$ the pdf of $\bX$, $\bY$, $\bX_\mathcal{D}$ and $\bY_\mathcal{D}$, respectively.
Let $(\bX^*,\bY^*)\sim \zeta^*(\bx,\by)$ be a coupling that attains the infimum in \eqref{eq:Wp_distance_def} (if there is no minimizer one can adjust the proof by taking a sequence of couplings that converges to the infimum). Define a new pair of random vectors $(\bar{\bX},\bar{\bY})$ such that 
\begin{align}
    (\bar{\bX},\bar{\bY})=\begin{cases}(\bX^*,\bY^*),& (\bX^*,\bY^*)\in \mathcal{D}^2\\
                                        (\A,\bY^*), & \bX^*\notin \mathcal{D},\bY^*\in\mathcal{D},\\
                                        (\bX^*,\B), & \bX^*\in \mathcal{D},\bY^*\notin\mathcal{D},\\
                                        (\A,\B), & \bX^*\notin \mathcal{D},\bY^*\notin\mathcal{D},
    \end{cases}
\end{align}                                                 
with $\A\sim\tilde{\mu}$, $\B\sim\tilde{\nu}$ being two random vectors  such that $(\bX^*,\bY^*), \A, \B$ are mutually independent.
Denote by  $a$ the probability that $(\bX^*,\bY^*)\in \mathcal{D}^2$. Then on the one hand 
\begin{align}
    \dW_1(\bX,\bY)&= \E\left[\|\bX^*-\bY^*\|\right]\\
    &=a\cdot \E\left[\|\bX^*-\bY^*\|\big|(\bX^*,\bY^*)\in\mathcal{D}^2\right]+\\
    &\;\;\;(1-a)\cdot \E\left[\|\bX^*-\bY^*\|\big|(\bX^*,\bY^*)\notin\mathcal{D}^2\right]\\
    &\geq a\cdot \E\left[\|\bX^*-\bY^*\|\big|(\bX^*,\bY^*)\in\mathcal{D}^2\right],
\end{align}
and on the other hand 
\begin{align}
    \dW_1(&\bX_\mathcal{D},\bY_\mathcal{D})= \inf_{\zeta\in\Gamma[\tilde{\mu},\tilde{\nu}]} \E_{(\bX',\bY')\sim \zeta} \|\bX'-\bY'\| \\
    &\leq \E \| \bar{\bX}-\bar{\bY}\|\label{line:correct_marginals}\\
    & =a\cdot \E\left[\|\bar{\bX}-\bar{\bY}\|\big|(\bX^*,\bY^*)\in\mathcal{D}^2\right]\\
    &\;\;\;+(1-a)\cdot \E\left[\|\bar{\bX}-\bar{\bY}\|\big|(\bX^*,\bY^*)\notin\mathcal{D}^2\right]\label{line:total_law_of_E}\\
    &\leq a\cdot \E\left[\|\bar{\bX}-\bar{\bY}\|\big|(\bX^*,\bY^*)\in\mathcal{D}^2\right]+(1-a)\cdot R\label{line:maximal_radius}\\
    &= a\cdot \E\left[\|\bX^*-\bY^*\|\big|(\bX^*,\bY^*)\in\mathcal{D}^2\right]+(1-a)\cdot R.\label{line:XY_equalttildeXY}\\ 
&\leq \dW_1(\bX,\bY) + (1-a)\cdot R,
\end{align}
where in \eqref{line:correct_marginals} we used the fact that the coupling $(\bar{\bX},\bar{\bY})$ has marginals $\tilde{\mu}$ and $\tilde{\nu}$, in \eqref{line:total_law_of_E} we used the total law of expectation, in \eqref{line:maximal_radius} we used the fact that for any $\bx,\by\in\mathcal{D}$ we have $\|\bx-\by\|_2\leq R$, and in \eqref{line:XY_equalttildeXY} we used the fact that when $(\bX^*,\bY^*)\in\mathcal{D}^2$ then $(\bar{\bX},\bar{\bY})=(\bX^*,\bY^*)$.
Let us now bound $a$ from below
\begin{align}
    1-\delta&\leq \Pr(\bY\in D)=\Pr(\bY^*\in D)
    = a + \Pr(\bY^*\in D, \bX^*\not\in D) \leq a + \eps, 
\end{align}
hence $a\geq 1-\eps-\delta$ and we get
\begin{align}
    \dW_1(\bX_\mathcal{D},\bY_\mathcal{D})&\leq \dW_1(\bX,\bY) + (\eps+\delta)\cdot R.
\end{align}
\end{proof}

\subsection{Proof of the Lemmas in Subsection~\ref{sec:semi_noisy}}\label{appen:semi_noisy}

To prove Lemma~\ref{lem:proj_clos_to_fx} we will use the following proposition. 
\begin{prop}\label{prop:noisy_D}
Let $\D_j$ be the projection onto the row space of $\H_j$, and let $\Z_j\triangleq \H_j^T(\H_j\H_j^T)^{-1}\W_j$, then 
$\D_j(\bt+\Z_j)$ can be written as a noisy projection of the form \eqref{eq:noisy_proj}.
\end{prop}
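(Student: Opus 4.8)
The plan is to exhibit an explicit orthonormal basis of the row space of $\H_j$ for which $\D_j(\bt+\Z_j)$ matches the noisy-projection form \eqref{eq:noisy_proj} term by term. The starting observation is that $\Z_j=\H_j^T(\H_j\H_j^T)^{-1}\W_j$ already lies in the row space of $\H_j$ (it is $\H_j^T$ times a vector), so $\D_j\Z_j=\Z_j$ and hence $\D_j(\bt+\Z_j)=\D_j\bt+\Z_j$. Writing $\Z_j$ in any orthonormal basis $\{\U_i\}_{i=1}^b$ of that row space gives $\Z_j=\sum_{i=1}^b\U_i Z_i$ with $Z_i\triangleq\U_i^T\Z_j$, so that $\D_j(\bt+\Z_j)=\D_j\bt+\sum_{i=1}^b\U_i Z_i$ is exactly of the form \eqref{eq:noisy_proj} with $\D=\D_j=\sum_i\U_i\U_i^T$. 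It then remains to identify the conditional law of the $Z_i$ and to verify the two requirements of Lemma~\ref{lem:semi_noisy_projection}: that $Z_i\sim\mathcal{N}(0,T_i\sigma^2/p)$ with $T_i$ concentrated near $1$, and that the $T_i$ are independent of $\{\U_i\}$.

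For the conditional law I would take $\{\U_i\}$ to be the right singular vectors of $\H_j$, i.e.\ fix a singular value decomposition $\H_j=\bs{O}\bs{\Lambda}^{1/2}\V^T$ with $\bs{O}\in\mathbb{R}^{b\times b}$ orthogonal, $\bs{\Lambda}=\mathrm{diag}(\lambda_1,\dots,\lambda_b)$ the eigenvalues of $\H_j\H_j^T$, and $\V=[\U_1,\dots,\U_b]$. A direct computation using $\H_j\U_i=\sqrt{\lambda_i}\,\bs{O}e_i$ and $(\H_j\H_j^T)^{-1}=\bs{O}\bs{\Lambda}^{-1}\bs{O}^T$ collapses the coefficient vector to $\U_i^T\H_j^T(\H_j\H_j^T)^{-1}=\lambda_i^{-1/2}e_i^T\bs{O}^T$, whence $Z_i=\lambda_i^{-1/2}(\bs{O}^T\W_j)_i$. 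Conditioned on $\H_j$, the vector $\bs{O}^T\W_j$ is $\mathcal{N}(0,\sigma^2\I_b)$ by orthogonal invariance, so the $Z_i$ are (conditionally) independent with $Z_i\sim\mathcal{N}(0,\sigma^2/\lambda_i)$. Setting $T_i\triangleq p/\lambda_i$ then yields the required variance $T_i\sigma^2/p$.

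The remaining two points follow from standard facts about Gaussian matrices. Independence of $T_i$ from $\{\U_i\}$ holds because $T_i$ is a function of the singular values $\bs{\Lambda}$ alone, while $\{\U_i\}$ are the columns of $\V$, and for an i.i.d.\ Gaussian $\H_j$ the singular values and the right singular vectors are independent (with $\V$ distributed Haar on the Stiefel manifold, consistent with the ``uniformly drawn'' hypothesis). For concentration, $\tfrac1p\H_j\H_j^T\to\I_b$ a.s., and the Davidson--Szarek bound on the extreme singular values of a Gaussian matrix gives $\lambda_i/p\in[1-o(1),1+o(1)]$ simultaneously for all $i$; choosing a deviation threshold of order $\sqrt{(\log p)/p}$ makes the failure probability decay like $p^{-c}$ for any fixed $c$, hence $o(1/p)$, and on that event $T_i=p/\lambda_i\in[1-o(1),1+o(1)]$. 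The only mildly delicate step is pinning the concentration rate so that the failure probability is genuinely $o(1/p)$ rather than merely $o(1)$; everything else is a short SVD calculation combined with the orthogonal invariance of the Gaussian noise.
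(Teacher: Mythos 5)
Your proof is correct and follows essentially the same route as the paper's: both decompose $\Z_j$ in the (Haar-distributed) orthonormal basis of the row space of $\H_j$, identify $Z_i$ as a Gaussian with conditional variance $\sigma^2/\lambda_i(\H_j\H_j^T)$, set $T_i=p/\lambda_i$, and invoke the independence of singular values and singular vectors of a Gaussian matrix together with sub-Gaussian concentration of the extreme singular values (the paper cites Vershynin's Corollary 5.35, which gives an exponentially small failure probability, comfortably $o(1/p)$). Your SVD computation is in fact slightly more explicit than the paper's, but the argument is the same.
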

\begin{proof}
Let $\U_1,\cdots,\U_b$ be the orthonormal basis vectors for the subspace spans by the rows $\H_j$, then
\begin{align}
    \D_j(\bt+\Z_j)&=\D_j\bt +\D_j  \H_j^T(\H_j\H_j^T)^{-1}\W_j\\
    &= \D_j\bt +\sum_{i=1}^{b} \U_i \lambda_i W_{ij},
\end{align}
with  $\lambda_i \triangleq\lambda_i((\H_j\H_j)^{-1/2})$ and $W_{ij}$ the $i$th element of $\W_j$. Since $\H_j$ is i.i.d.~Gaussian, it is well-known that the eigenvalues $\{\lambda_i\}$ are mutually independent of the eigenvectors $\{\U_i\}$. 
Define $T_i \triangleq p\lambda_i^2$ and $Z_i \triangleq\lambda_i W_{ij}$, then $Z_i\sim \mathcal{N}(0,T_i\cdot \frac{\sigma^2}{p})$. 
As required in the conditions of Lemma~\ref{lem:semi_noisy_projection}, $T_i$, $i=1,\cdots,b$, are highly concentrated around $1$, and specifically we have (see for example Corollary 5.35 in~\cite{vershynin2010introduction})
\begin{align}
    \Pr\left(\sqrt{\frac{T_i}{p}}\in \left[\frac{1}{\sqrt{p}}-\frac{1}{p^{-\frac{7}{8}}},\frac{1}{\sqrt{p}}+\frac{1}{p^{-\frac{7}{8}}}\right]\right)\geq 1-2e^{-\frac{\sqrt[4]{p}}{2}}.
\end{align}    
\end{proof}

\begin{proof}[Proof of Lemma~\ref{lem:proj_clos_to_fx}.]
The proof for this lemma is based on the fact that for a large $p$, the length $\|\P \U_i\|^2\approx\delta$ and $\langle \P \U_i, P\U_j\rangle\approx 0$. 
If this was true with equality, then by 
noticing that 
\begin{align}\label{eq:proj_nomerator}
    \langle \P\tilde{\D}\bt, \bt\rangle &= \langle \sum_i \P \U_i\left(\U_i^T\bt + Z_i\right) , \bt\rangle
    \\&= \sum_i \left(\langle\U_i,\bt\rangle+Z_i \right)\cdot  \langle \P \U_i, \bt\rangle
    \\&= \sum_{i=1}^b \SS_i\cdot (\SS_i+\SS_{b+i}+Z_i)
\end{align}
and 
\begin{align}\label{eq:proj:denom}
    \|\P&\tilde{\D}\bt\|^2 = \langle \P\tilde{\D}\bt, \P\tilde{\D}\bt \rangle 
    \\&= \langle \sum_i \P \U_i\left(\U_i^T\bt + Z_i\right), \sum_i \P \U_i\left(\U_i^T\bt + Z_i\right)\rangle
    \\&= \langle \sum_i (\langle \U_i, \bt\rangle +Z_i)\cdot \P \U_i \,,  \sum_i (\langle \U_i, \bt\rangle +Z_i) \cdot \P \U_i\rangle
    \\ & = \sum_i (\SS_i+\SS_{b+i}+Z_i)^2 \cdot \|\P \U_i\|^2\\
    &\quad + \sum_{i\neq \ell} (\langle \U_i, \bt\rangle +Z_i)\cdot (\langle \U_\ell, \bt\rangle +Z_\ell)\cdot \langle \P \U_i, P\U_\ell\rangle,
\end{align}
we would have got $\frac{1}{\delta}f(\S,\Z)=\frac{ \langle \P\tilde{\D}\bt, \bt\rangle^2}{\|\P\tilde{D}\bt\|^2}$.

To show that this is asymptotically true let us define  $E_i\triangleq \|\P \U_i\|^2-\delta $, $i=1,\cdots,b$,  and $ C_{i\ell}\triangleq \langle \P \U_i, P\U_\ell\rangle $, $i\neq j\in\{1,\cdots,b\}$.
Then, define the events
\begin{align}
    \mathcal{E}&= \left\{ \{|\E_i|\leq \eps,i=1\cdots,b\}\;\cap \;\{|C_{i\ell}|\leq\ \eps, \;\forall i\neq \ell\}\right\},\\
    \mathcal{Z}&= \left\{|Z_i|<a, i=1,\cdots,b\right\}.
\end{align}
Denote by $\bar{\mathcal{E}}$ and $\bar{\mathcal{Z}}$ the complement events of $\mathcal{E}$ and $\mathcal{Z}$, respectively,
and note that $\mathcal{E}$ and $\mathcal{Z}$ are independent.
From the triangle inequality, we get
\begin{align}\label{eq:proj_f_triangle_ineq}
     \Bigg|  \mathbb{E}\Big[&\frac{1}{\delta}f(\S,\Z)\Big]-\mathbb{E}\Big[\frac{ \langle \P\tilde{\D}\bt, \bt\rangle^2}{\|\P\tilde{\D}\bt\|^2}\Big]\Bigg| \\
     &\leq
     \underset{T_1}{\underbrace{\left| \mathbb{E}\left[\frac{ \langle \P \tilde{\D}\bt, \bt\rangle^2}{\|\P \tilde{\D}\bt\|^2}\right]-\mathbb{E}\left[\frac{ \langle \P \tilde{\D}\bt, \bt\rangle^2}{\|\P \tilde{\D}\bt\|^2}\Big| \mathcal{E}\cap \mathcal{Z}\right]\right|}}\\
     &+\underset{T_2}{\underbrace{ \left| \mathbb{E}\left[\frac{ \langle \P \tilde{\D}\bt, \bt\rangle^2}{\|\P \tilde{\D}\bt\|^2}\Big| \mathcal{E}\cap \mathcal{Z}\right]- \mathbb{E}\left[\frac{1}{\delta}f(\S,\Z)\Big| \mathcal{E}\cap \mathcal{Z}\right]\right| }}
     \\&+\underset{T_3}{ \underbrace{\left|\mathbb{E}\left[\frac{1}{\delta}f(\S,\Z)\Big| \mathcal{E}\cap \mathcal{Z}\right]-\mathbb{E}\left[\frac{1}{\delta}f(\S,\Z)\right]\right|}}.
\end{align}
We now estimate the above terms $T_1,T_2,T_3$. 

First, let us look at $T_1$. 
Let $\eps = \frac{\delta}{2b\sqrt[4]{p}}$ , then using the total law of expectation we get
\begin{align}
  \mathbb{E}\left[\frac{ \langle \P \tilde{\D}\bt, \bt\rangle^2}{\|\P \tilde{\D}\bt\|^2}\right] - &\mathbb{E}\left[\frac{ \langle \P \tilde{\D}\bt, \bt\rangle^2}{\|\P \tilde{\D}\bt\|^2}\Big| \mathcal{E}\cap \mathcal{Z}\right]\leq  \Pr(\bar{\mathcal{E}}\cup \bar{\mathcal{Z}}) \times\\
  &\hspace{-2.5cm}\left(\mathbb{E}\left[\frac{ \langle \P \tilde{\D}\bt, \bt\rangle^2}{\|\P \tilde{\D}\bt\|^2}\Big|\bar{\mathcal{E}}\cup \bar{\mathcal{Z}}\right]- \mathbb{E}\left[\frac{ \langle \P \tilde{\D}\bt, \bt\rangle^2}{\|\P \tilde{\D}\bt\|^2}\Big| \mathcal{E}\cap \mathcal{Z}\right]\right).
\end{align}
Using Cauchy-Schwarz and the union bound we get that for any event $\mathcal{A}$
\begin{align}
    \Pr(\bar{\mathcal{E}}&\cup \bar{\mathcal{Z}}) \mathbb{E}\left[\frac{ \langle \P \tilde{\D}\bt, \bt\rangle^2}{\|\P \tilde{\D}\bt\|^2}\Big|\mathcal{A}\right]\\
    &\leq  \left(\Pr(\bar{\mathcal{E}})+\Pr(\bar{\mathcal{Z}})\right) \mathbb{E}\left[\frac{\|\P \tilde{\D}\bt\|^2\|\bt\|^2}{\|\P \tilde{\D}\bt\|^2}\Big| \mathcal{A}\right]
    \\&= \left(\Pr(\bar{\mathcal{E}})+\Pr(\bar{\mathcal{Z}})\right)\cdot r^2.
\end{align}
From the Johnson–Lindenstrauss lemma (see for example Lemma 2.2 in~\cite{dasgupta2003elementary}) we get that
\begin{align}
    \Pr\left(\left|\|\P \U_i\|^2-\delta\right|\geq \eps\right)&\leq e^{-\frac{\eps^2 p}{2\delta}}, \;\;\Pr\left(|\langle \P \U_i, \P\U_j\rangle|\geq \eps\right)\\
    &\leq e^{-\frac{\eps^2 p}{8}}.
\end{align}
Therefore, for our choice of $\eps$ we get
\begin{align}
    \Pr\left(\bar{\mathcal{E}}\right) \leq 2b^2 e^{-\frac{\delta\sqrt{p}}{64b}}
\end{align}
Also from the concentration properties of $T_i$ we have that 
\begin{align}
    \Pr(|Z_i|>a)& = \E\Pr(|Z_i|>a|T_i)\\
  &  \leq 1\cdot \Pr\left(|Z_i|>a\big| T_i = 1+o(1)\right) \\
  &\;\;\;+ o(1/p)\cdot \Pr\left(|Z_i|>a\big| T_i =1+\omega(1)\right)\\
   & \leq e^{-\frac{a^2(p+o(p))}{2\sigma^2}}+o(1/p) = o(1/p),\label{line:bound_concentration_Z}
\end{align}
where in \eqref{line:bound_concentration_Z} we used the bound $\Pr(\mathcal{N}(0,1)\geq x)\leq \exp\{-x^2/2\}$ and the concentration property of $T_i$.
Therefore we get
\begin{align}\label{eq:dist_proj_conditioned_proj}
   \left|\mathbb{E}\left[\frac{ \langle \P \tilde{\D}\bt, \bt\rangle^2}{\|\P \tilde{\D}\bt\|^2}\right] - \mathbb{E}\left[\frac{ \langle \P \tilde{\D}\bt, \bt\rangle^2}{\|\P \tilde{\D}\bt\|^2}\Big| \mathcal{E}\cap \mathcal{Z}\right]\right| = o(1/p).
\end{align}
In a similar way, and using the fact that $0\leq f(\S,\Z)\leq b r^2$, we get for $T_3$ 
\begin{align}\label{eq:dist_fx_conditioned_fx}
 \left|  \mathbb{E}\left[\frac{1}{\delta}f(\S,\Z)\right]-\mathbb{E}\left[\frac{1}{\delta}f(\S,\Z)\Big| \mathcal{E}\cap \mathcal{Z}\right]\right| = o(1/p)
   \end{align}
   
 In order to bound $T_2$ note that from \eqref{eq:proj_nomerator} and \eqref{eq:proj:denom} we get 
\begin{align}
   \frac{ \langle \P \tilde{\D}\bt, \bt\rangle^2}{\|\P \tilde{\D}\bt\|^2}= &\frac{1}{\delta}f(\S,\Z)\times \\
   &\hspace{-2cm}\frac{1}{1  + \frac{\sum_i E_i \left(\langle \U_i, \bt\rangle+Z_i\right)^2}{\delta \sum_i\left( \langle \U_i, \bt\rangle+Z_i\right)^2} + \frac{\sum_{i\neq j} C_{ij}\left(\langle \U_i, \bt\rangle+Z_i\right)\cdot\left( \langle \U_j, \bt\rangle+Z_j\right)}{\delta \sum_i \left(\langle \U_i, \bt\rangle+Z_i\right)^2}}.
\end{align}
Hence, using the Taylor expansion $\frac{1}{1+x}=\sum_{\ell=0}^\infty (-x)^\ell$ we get 
\begin{align}\label{eq:dist_proj_conditioned_fx}
    \Big| \mathbb{E}&\Big[\frac{ \langle \P \tilde{\D}\bt, \bt\rangle^2}{\|\P \tilde{\D}\bt\|^2}\Big| \mathcal{E}\cap \mathcal{Z}\Big]- \mathbb{E}\left[\frac{1}{\delta}f(\S,\Z)\Big| \mathcal{E}\cap \mathcal{Z}\right]\Big|\\
    &\leq \Big|\mathbb{E}\Big[\frac{1}{\delta}f(\S,\Z)\Big(\frac{\sum_i |E_i| (\langle \U_i, \bt\rangle+Z_i)^2}{\delta \sum_i (\langle \U_i, \bt\rangle+Z_i)^2} \\
    &+ \frac{\sum_{i\neq j} |C_{ij}|(\langle \U_i, \bt\rangle+Z_i)\cdot( \langle \U_j, \bt\rangle+Z_j)}{\delta \sum_i (\langle \U_i, \bt\rangle+Z_i)^2}\Big)\Big| \mathcal{E}\cap \mathcal{Z}\Big]\Big|\\
   & \leq \frac{2b\eps}{\delta}\mathbb{E}\left[\frac{1}{\delta}f(\S,\Z)\Big| \mathcal{E}\cap \mathcal{Z}\right]\\
   &\leq  \frac{1}{\sqrt[4]{p}}\left(  \mathbb{E}\left[\frac{1}{\delta}f(\S,\Z)\right]+ o(1/p)\right).
\end{align}

   Combining \eqref{eq:dist_proj_conditioned_proj}, \eqref{eq:dist_proj_conditioned_fx} and \eqref{eq:dist_fx_conditioned_fx} we get
   \begin{align}   \label{eq:dist_proj_fx}
 \biggl|  \mathbb{E}\left[\frac{1}{\delta}f(\S,\Z)\right]&-\mathbb{E}\left[\frac{ \langle \P \tilde{\D}\bt, \bt\rangle^2}{\|\P \tilde{\D}\bt\|^2}\right]\biggr| 
 \leq\\
 &\frac{1}{\sqrt[4]{p}}\cdot \mathbb{E}\left[\frac{1}{\delta}f(\S,\Z)\right]+o(1/p)
   \end{align}
\end{proof}

\begin{proof}[Proof of Corollary~\ref{lem:Wasser_distance_XY}.]
This corollary is  a direct result of Theorem~\ref{thm:Chatterjeee_Wasserstein_thm}, since $\S$ is a low-rank projection of the uniformly random orthonormal matrix $\U\triangleq [\U_1,\cdots,\U_b]$ onto a subspace of dimension $2b$.
Let $\A_i$ be a $p\times p$ matrix with all zeros except for the $i$th row which is given by $\sqrt{\frac{p}{\alpha}}\bt^T \P$ and $\B_i$ be a $p\times p$ matrix with all zeros except for the $i$th row which is given by $\sqrt{\frac{p}{1-\alpha}}\bt^T (\I-\P)$.
Note that $\SS_i = \sqrt{\frac{\alpha}{p}}\trace{(\A_i \U)}$,  and $\SS_{b+i} = \sqrt{\frac{1-\alpha}{p}}\trace{(\B_i \U)}$, $i=1,\cdots,b$, and define the length $2b$ random vector $\tilde{\S}$ such that  $\tilde{\SS}_i \triangleq \trace{(\A_i \U)}= \sqrt{\frac{p}{\alpha}}\cdot \SS_i$, and $\tilde{\SS}_{b+i} \triangleq \trace{(\A_i \U)}= \sqrt{\frac{p}{1-\alpha}}\cdot \SS_{b+i}$, $i=1,\cdots ,b$.
Then, since we have
\begin{align}
    \trace{(\A_i \A_j^T)} &= p \cdot\indfunc{i=j},\\
    \trace{(\B_i \B_j^T)} &= p \cdot \indfunc{i=j},\\
    \trace{(\A_i \B_j^T)} &= 0, \forall i,j.
\end{align}
we get from Theorem~\ref{thm:Chatterjeee_Wasserstein_thm} that 
\begin{align}
    \dW_1(\tilde{\S},\tilde{\G}) \leq \frac{2\sqrt{2}b}{p-1}.
\end{align}
where $\tilde{\G}$ is i.i.d.~Gaussian with zero mean and unit variance. Let $\Lambda$ be a $b\times b$ diagonal matrix with first $b$ diagonal elements equal to $\sqrt{\frac{\alpha}{p}}$ and last $b$ elements equal to $\sqrt{\frac{1-\alpha}{p}}$.
Then, by noting that $\S=\Lambda\tilde{\S}$, $\G=\Lambda \tilde{\G}$  we get that
\begin{align}\label{eq:XY_Wasserstein_distance}
    \dW_1(\S,\G) &\leq \Vert \Lambda\Vert_F \dW_1(\tilde{\S},\tilde{\G})\\
    &=  \sqrt{b\cdot\frac{\alpha}{p}+b\cdot\frac{1-\alpha}{p}} \dW_1(\tilde{\S},\tilde{\G}) \\
    &\leq \sqrt{\frac{b}{p}}\cdot\frac{2\sqrt{2}b}{p-1}.
\end{align}
\end{proof}

The following Lemmas shows that indeed $f$ is Lipschitz over the region where $\S$ and $\G$ are concentrated on. This is needed for the proof of Lemma~\ref{lem:fx_close_to_fy}.

\begin{lemma}\label{lem:bounded_poly_ratio_grad}
Let $\G$ be as in \eqref{eq:Y_i_defenition}, $\S$ be as in \eqref{eq:X_i_defenition} and $\Z$ be as in Lemma~\ref{lem:semi_noisy_projection}, and let $f$ be as in \eqref{eq:f}. 
Then there exists a constant $C$ that does not depend on $p$, and a set $\mathcal{D}$ with diameter no larger than $C$, over which $f$ is $C$-Lipschitz and concentrated: 
\begin{align}
    \Pr\left((\S,\Z)\in \mathcal{D}\right) &= 1-o(1/p),\;\; \Pr\left((\G,\Z)\in \mathcal{D}\right) \\
    &= 1-o(1/p),
\end{align}
\end{lemma}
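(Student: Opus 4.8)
The plan is to exhibit $\mathcal{D}$ as the intersection of two constraints: a small ball that captures the bulk of $(\S,\Z)$ and $(\G,\Z)$, and a lower bound on the denominator of $f$ that keeps us away from its singularity. Writing $u_i = \ss_i + \ss_{b+i} + z_i$, $N(\s,\z) = \sum_{i=1}^b \ss_i u_i$ and $Q(\s,\z) = \sum_{i=1}^b u_i^2$, so that $f = N^2/Q$, I would take
\begin{align}
\mathcal{D} = \Big\{ (\s,\z) : \textstyle\sum_{i=1}^{2b}\ss_i^2 \le \tfrac{M}{p},\ \sum_{i=1}^b z_i^2 \le \tfrac{M}{p},\ Q(\s,\z) \ge \tfrac{c}{p^2} \Big\},
\end{align}
for constants $M,c$ to be fixed. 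Since every coordinate of a point in $\mathcal{D}$ is $O(1/\sqrt{p})$, the diameter is $O(1/\sqrt{p}) \le C$ automatically, so the two nontrivial claims are the Lipschitz bound on $\mathcal{D}$ and the two concentration estimates.

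For concentration, I would argue that all entries of $\S$, $\G$, $\Z$ live at scale $1/\sqrt{p}$. For $\G$ and $\Z$ this is a Gaussian tail bound combined with the concentration of the $T_i$ assumed in Lemma~\ref{lem:semi_noisy_projection}. For $\S$, note $\sum_{i=1}^b \SS_i^2 = \|\D\P\bt\|^2$ and $\sum_{i=b+1}^{2b}\SS_i^2 = \|\D(\I-\P)\bt\|^2$, where $\D=\sum_{i=1}^b\U_i\U_i^T$ is the projection onto the random $b$-dimensional span of $\{\U_i\}$; these are squared lengths of fixed vectors projected onto a uniformly random subspace, so they concentrate around $\tfrac{b\alpha}{p}$ and $\tfrac{b(1-\alpha)}{p}$ with exponentially small upper tails, putting $(\S,\Z)$ in the ball with probability $1-o(1/p)$. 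The delicate part is the denominator bound: each $u_i\approx\mathcal{N}(0,(r^2+\sigma^2)/p)$, so $Q$ is asymptotically a scaled $\chi^2_b$ variable and $\Pr(Q<c/p^2)$ is governed by the lower tail $\Pr(\chi^2_b < c'/p)=O(p^{-b/2})$.

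For the Lipschitz bound I would differentiate $f = N^2/Q$ directly: each partial derivative has the form $\tfrac{2N\,\partial_\bullet N}{Q} - \tfrac{N^2\,\partial_\bullet Q}{Q^2}$, and using the Cauchy--Schwarz bound $|N| \le (\sum_{i=1}^b \ss_i^2)^{1/2}\,Q^{1/2}$ together with $|u_j|\le Q^{1/2}$, every entry of $\nabla f$ is seen to be $\lesssim (\sum_{i=1}^b\ss_i^2)^{1/2} + (\sum_{i=1}^b\ss_i^2)/Q^{1/2}$, the second term dominating on $\mathcal{D}$. There it is at most $\tfrac{M/p}{\sqrt{c/p^2}} = M/\sqrt{c} = O(1)$, the desired $C$-Lipschitz property; the same Cauchy--Schwarz step gives $f\le \sum_{i=1}^b\ss_i^2\le M/p$ on $\mathcal{D}$, so $f$ is also uniformly (and in fact negligibly) small there. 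The case $b=1$ is immediate, since then $f=\SS_1^2$ and the denominator cancels entirely, so there is no singularity.

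The main obstacle is precisely this denominator control: the gradient of the rational $f$ blows up like $Q^{-1/2}$ as $Q\to 0$, so one must bound $Q$ from below while retaining probability $1-o(1/p)$. Because the numerator scale is $\sum_{i=1}^b\ss_i^2 = O(1/p)$, the threshold forcing an $O(1)$ Lipschitz constant is $Q\gtrsim 1/p^2$, and the price of excluding $\{Q<c/p^2\}$ is the $\chi^2_b$ lower tail $O(p^{-b/2})$. The argument is therefore cleanest for larger batch sizes and must be handled carefully for the smallest ones; the constants $M$ and $c$ are chosen to balance the excluded probability against the Lipschitz constant, and if necessary a slightly smaller denominator threshold can be traded against a mildly larger Lipschitz constant, so long as the product $C\cdot\dW_1(\S,\G)$ stays $o(1/p)$ as needed in Lemma~\ref{lem:fx_close_to_fy}.
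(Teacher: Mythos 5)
Your overall strategy is the same as the paper's: restrict to a high-probability set on which the coordinates are small and the denominator $Q=\sum_i u_i^2$ is bounded below, then bound $\nabla f$ there. Your gradient estimate via Cauchy--Schwarz ($|N|\le\|\s_{1:b}\|Q^{1/2}$, $|u_\ell|\le Q^{1/2}$, giving $\|\nabla f\|_\infty\lesssim \|\s_{1:b}\|+\|\s_{1:b}\|^2/Q^{1/2}$) is cleaner and in fact tighter than the paper's, which bounds the same term by $\|\bs{v}\|_\infty\|\S\|^2/\|\bs{v}\|^2$ and then inserts the worst-case $\|\bs{v}\|_\infty\le \log^2p/\sqrt p$; your version tolerates a denominator threshold of order $p^{-2}$ where the paper needs $p^{-(1+2/b)}$ and consequently only certifies boundedness for $b>4$ (deferring smaller $b$ to an unproven ``more subtle version''). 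You also correctly identify the anti-concentration of $Q$ as the crux and correctly dispose of $b=1$, where $f=\SS_1^2$.

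Two concrete gaps remain. First, the ball $\sum_i \ss_i^2\le M/p$ with \emph{constant} $M$ does not carry $1-o(1/p)$ of the mass of $\S$: by the Johnson--Lindenstrauss tail, $\Pr\bigl(\|\D\P\bt\|^2> Mb/p\bigr)=e^{-\Theta(bM)}$, which for fixed $M$ and $b$ does not decay in $p$ at all. You must let $M$ grow like $\log p$ (the paper effectively takes per-coordinate radius $\log^2p/\sqrt p$), and then your Lipschitz constant $M/\sqrt c$ is polylogarithmic in $p$ rather than constant. Second, and more seriously, your denominator bound $\Pr(Q<c/p^2)=O(p^{-b/2})$ is $o(1/p)$ only for $b\ge 3$; for $b=2$ it is $\Theta(1/p)$ and the stated concentration $\Pr((\S,\Z)\in\mathcal D)=1-o(1/p)$ fails. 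Your proposed repair --- shrink the threshold and let $C$ grow mildly with $p$, keeping $C\cdot\dW_1(\S,\G)=o(1/p)$ --- does salvage the downstream use in Lemma~\ref{lem:fx_close_to_fy} (there is room for $C=o(\sqrt p)$ since $\dW_1(\S,\G)=O(p^{-3/2})$, and your set's $O(1/\sqrt p)$ diameter keeps the other error terms small), but it proves a weaker statement than the lemma as written, which asserts a $p$-independent $C$. To be fair, the paper's own proof suffers from the same small-$b$ deficiency and is arguably less complete than yours; but as a proof of the lemma as stated, the $b=2$ case and the constant-radius concentration claim are genuine holes.
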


\begin{proof}
We will show that the gradient of $f$ is bounded with sufficiently high probability under both distributions and that the corresponding high probability set is bounded. 
To that end, let us write $v_i = s_i+s_{b+i}+z_i$. The partial derivatives of $f$ are given by 
\def\v{\bs{v}}
\def\bv{\bs{V}}
\begin{align}
    \frac{\partial f(\s,\z)}{\partial s_\ell}  &= 2(s_\ell +v_\ell)\cdot \frac{\langle \bs{s}, \v\rangle}{\|\v\|^2}- 2v_\ell\cdot \frac{|\langle \bs{s}, \v\rangle |^2}{\|\v\|^4},\label{line:dfdsi}\\
    \frac{\partial f(\s,\z)}{\partial s_{b+\ell}  } &= \frac{\partial f(\s,\z)}{\partial z_{\ell} } = 2 s_\ell\cdot \frac{\langle \bs{s}, \v\rangle}{\|\v\|^2} - 2v_\ell \cdot \frac{|\langle \bs{s}, \v\rangle |^2}{\|\v\|^4}.\label{line:dfdz}
\end{align}
For brevity, we perform the analysis using the more difficult random vector $\S$, and note that all our derivations hold verbatim for the Gaussian $\G$. Also, we will only analyze the term $v_\ell\cdot |\langle \bs{s}, \v\rangle |^2 / \|\v\|^4$ in the partial derivatives, as the other terms follow similarly (and are in fact statistically smaller, hence easier). 

First, recalling that $\Z$ is a zero mean i.i.d.~ Gaussian vector and variances $\sigma^2/p$, we have 
\begin{align}
    \Pr(\|\bv\|^2 \leq \eps ) &\leq \Pr(\|\bv\|_\infty \leq \sqrt{\eps} ) \\
    &= \E \prod_{i=1}^b \Pr(|V_i|  \leq \sqrt{\eps} | S_i) \\ 
    & = \E \prod_{i=1}^b \Pr(|Z_i+S_i|  \leq \sqrt{\eps} | S_i) \label{line:z+s}\\
    & \leq \prod_{i=1}^b \Pr(|Z_i|  \leq \sqrt{\eps}) \label{line:just_z}\\
    & \leq \left(\frac{\eps p}{\sigma^2}\right)^{b/2}
\end{align}
where we have used the standard anti-concentration inequality for the Gaussian distribution. Let us therefore pick $\eps = \Theta(p^{-(1+2/b)})$, under which this event has $o(1/p)$ probability. Furthermore, note that $\Pr(\|\S\|_\infty > \frac{\log^2p}{\sqrt{p}}) = o(1/p)$ as well as $\Pr(\|\Z\|_\infty > \frac{\log^2p}{\sqrt{p}}) = o(1/p)$, and hence $\Pr(\|\V\|_\infty > \frac{\log^2p}{\sqrt{p}}) = o(1/p)$, which all remain order-wise the same when conditioned on the high-probability $1-o(1/p)$ event where $\|\V\|^2 > \eps = \Theta(p^{-(1 + 2/b)})$. Thus, with probability $1-o(1/p)$ we have that 
\begin{align}
    |V_\ell| \cdot \frac{|\langle \S, \V\rangle |^2}{\|\v\|^4} & \leq \|\V\|_\infty \cdot \frac{\|\S\|^2}{\|\V\|^2} \\
    &=  O(\frac{\log^2p}{\sqrt{p}}) \cdot O(\frac{\log^4p}{p}) \cdot  \Theta(p^{1 + 2/b}) \\
    &= O\left(p^{2/b - 1/2}\cdot \log^6 p\right), 
\end{align}
which is bounded for $b>4$. It can be trivially verified that a similar analysis shows the boundedness of the first term in the partial derivatives, and a more subtle version of this analysis yields the same result for any $b\geq 1$. This immediately implies a bounded gradient (and hence Lipschitzness) over the set defined by the norm constraints above, and it is easy to see that this set is also bounded, concluding the proof. 

\end{proof}

 \begin{proof}[Proof of Lemma~\ref{lem:fx_close_to_fy}.]
First, note that that $Z_i$ is independent of $\U_i$, $i=1,\cdots,b$, and therefore  the vectors $\S$ and $\Z$ are independent.
From Lemma~\ref{lem:bounded_poly_ratio_grad} we get that there exist a set $\mathcal{D}$ on which $f$ is $C$-Lipschitz and both $t \triangleq \Pr((\G,\Z)\notin\mathcal{D})= o(1/p)$ and $t' \triangleq \Pr((\S,\Z)\notin\mathcal{D})= o(1/p)$. Let $(\G,\Z)_{\mathcal{D}}$ (resp. $(\S,\Z)_{\mathcal{D}}$) be the random vector $(\G,\Z)$ (resp. $(\S,\Z)$) conditioned on the event $(\G,\Z)\in\mathcal{D}$ (resp. $(\S,\Z)\in\mathcal{D}$) and  $(\G,\Z)_{\bar{\mathcal{D}}}$ (resp. $(\S,\Z)_{\bar{\mathcal{D}}}$) be $(\G,\Z)$ (resp. $(\S,\Z)$) conditioned on the complementary event $(\G,\Z)\notin\mathcal{D}$ (resp. $(\S,\Z)\notin\mathcal{D}$). Then, 
\begin{align}\label{eq:EfX_EfY_diff}
    \Big|\mathbb{E}f(\S,\Z)&-\mathbb{E}f(\G,\Z)\Big| 
    = \Big| \E f((\S,\Z)_{\mathcal{D}})-\E f((\G,\Z)_{\mathcal{D}}) \\
    &+t'\left(\E f((\S,\Z)_{\bar{\mathcal{D}}})-\E f((\S,\Z)_{\mathcal{D}})\right) \\
    &+ t\left(\E f((\G,\Z)_{\mathcal{D}})-\E f((\G,\Z)_{\bar{\mathcal{D}}})\right)\Big|.
\end{align}
Recall that
\begin{align}
    f(\S,\Z)\leq \sum_{i=1}^{b}\SS_i^2\leq b r,
\end{align}
hence
\begin{align}
   \left|t'\left(\E f((\S,\Z)_{\bar{\mathcal{D}}})-\E f((\S,\Z)_{\mathcal{D}})\right)\right|\leq  2b r^2o(1/p).\label{eq:bound_fX}
\end{align}

From Lemma~\ref{lem:bounded_poly_ratio_grad} we get that $f(\s,\z)$ is Lipschitz over $\mathcal{D}\times \mathbb{R}^b$ with some Lipschitz constant $C(a,b)$ that does not depend on $p$. 
Since $\mathcal{D}$ has bounded diameter $C$ and $f$ is $C$-Lipschitz on $\mathcal{D}$ we get
\begin{align}
   \Big| \E f(\S,&\Z)_{\mathcal{D}}-\E f(\G,\Z)_{\mathcal{D}}\Big| \leq  C \cdot \dW\left((\S,\Z)_{\mathcal{D}},(\G,\Z)_{\mathcal{D}}\right)\label{line:using_dual_Wasser}\\
      &\leq C\biggl( \dW_1((\S,\Z),(\G,\Z)) +(t+t')\times \\
      &\qquad\max_{(\s,\z),(\g,\z')\in\mathcal{D}}\|(\s,\z)-(\g,\z')\|\biggr)\label{line:bounding_Wasser_distance_with_R}\\
   &\leq C\biggl( \dW_1(\S,\G) +(t+t')\times \\
   &\qquad\max_{(\s,\z),(\g,\z')\in\mathcal{D}}\|(\s,\z)-(\g,\z')\|\biggr)\label{line:subadditive_wasser}\\
  & \leq  C\cdot \sqrt{\frac{b}{p}}\cdot\frac{2\sqrt{2}b}{p-1}+ C\cdot o(1/p)\label{eq:difference_of_conditional_f}
\end{align}
where in \eqref{line:using_dual_Wasser} we used the dual representation~\eqref{eq:Wasser_dual_rep} of the Wasserstein distance from Theorem~\ref{thrm:Wasser_dual_rep},
 in \eqref{line:bounding_Wasser_distance_with_R} we used  Lemma~\ref{lem:Wasserstein_conditioned} along with the fact that $\mathcal{D}$ has bounded diameter,
in \eqref{line:subadditive_wasser} we used  Proposition~\ref{prop:Wasserstein_properties} along with the fact that $\Z$ is independent of $\S$ and $\G$,
 and in \eqref{eq:difference_of_conditional_f} we used \eqref{eq:XY_Wasserstein_distance}.
Finally, note that since $\G$ is independent Gaussian and the diameter $C$ of $\mathcal{D}$  does not depend on $p$, we have  $t =e^{-\omega(p)}$, we get
\begin{align}\label{eq:bound_f_Y_bar_bar}
    t\E f((\G,\Z)_{\bar{\mathcal{D}}})&\leq t\cdot \sum_{i=1}^b \E \GG_{\bar{\mathcal{D}},i}^2 = b \cdot 2\int_a^\infty y^2e^{-\frac{py^2}{2\alpha}}dy \\
    &\leq \frac{2b\alpha}{p}\cdot(a+1)\cdot e^{-\frac{pa^2}{\alpha}}  .
\end{align}
and also 
\begin{align}\label{eq:bound_f_Y_bar_bar2}
    t\E f((\G,\Z)_{{\mathcal{D}}})\leq t' bC^2 = o(1/p)  .
\end{align}
where we used the relation
\begin{align}
    \int_a^\infty y^2e^{-\frac{py^2}{\alpha}}dy &= -y\cdot \frac{\alpha}{p}e^{-\frac{py^2}{2\alpha}}\Big|_a^\infty+\int_a^\infty \frac{\alpha}{p}e^{-\frac{py^2}{2\alpha}}dy\\
    &\leq a\cdot \frac{\alpha}{p}e^{-\frac{pa^2}{2\alpha}}+ \frac{\alpha^2}{p^2}e^{-\frac{pa^2}{2\alpha}} 
\end{align}
Plugging \eqref{eq:bound_fX}, \eqref{eq:difference_of_conditional_f}, \eqref{eq:bound_f_Y_bar_bar} and \eqref{eq:bound_f_Y_bar_bar2}  back to  \eqref{eq:EfX_EfY_diff} we get 
\begin{align}
    \left|\mathbb{E}f(\S,\Z)-\mathbb{E}f(\G,\Z)\right|  
    &=o(1/p)
\end{align}
\end{proof}

\begin{proof}[Proof of Lemma~\ref{lem:Efy}.]
Recall that $    f(\G,\Z) = \frac{\left(\sum_{i=1}^b  \GG_i\left(\GG_i+ \GG_{b+i}+Z_i\right) \right)^2}{\sum_{i=1}^b \left(\GG_i+ \GG_{b+i}+Z_i \right)^2}$ , where $Z_i\sim\mathcal{N}(0,T_i \sigma^2/p)$, and $T_i$ are concentrated on the interval $[t_\mathrm{min},t_\mathrm{max}]$ with probability $1-o(1/p)$, where
\begin{align}
    t_\mathrm{min}=1-o(1),\;\; t_\mathrm{max}=1+o(1).
\end{align}
Denote $\T = [T_1,\cdots,T_b]$, let $\mathcal{U}$ be the event that $\T\in [t_\mathrm{min},t_\mathrm{max}]^b$, and let  $\bar{\mathcal{U}}$ be its complement. Then, 
\begin{align}
    \E f(\G,\Z) =  \E \left[f(\G,\Z)\Big|\mathcal{U}\right]&+ \Pr(\bar{\mathcal{U}}) \biggl(\E \left[f(\G,\Z)\Big|\bar{\mathcal{U}}\right]\\
    &-\E \left[f(\G,\Z)\Big|\mathcal{U}\right]\biggr).
\end{align}
Using the bound $f(\G,\Z)\leq \sum_{i=1}^b \GG_i^2$ and the fact that $\{\GG_i\}$ are independent of $\T$ and therefore of $\mathcal{U}$, we get
\begin{align}
    \biggl|\Pr(\bar{\mathcal{U}}) \biggl(\E \left[f(\G,\Z)\Big|\bar{\mathcal{U}}\right]-&\E \left[f(\G,\Z)\Big|\mathcal{U}\right]\biggr)\biggr|
    \leq  2\Pr(\bar{\mathcal{U}}) \E \left[\sum_{i=1}^b \GG_i^2\right]\\
    &\leq  4\frac{\alpha}{p}\cdot o(1/p),
\end{align}
which implies that 
\begin{align}\label{eq:fYZ_vs_fYZ_given_U}
  \left|  \E f(\G,\Z) - \E \left[f(\G,\Z)\Big|\mathcal{U}\right]\right| = o(1/p^2).
\end{align}

To estimate $\E[f(\G,\Z)|\mathcal{U}]$, note that $f(\G,\Z)$ depends on $\Z_i$ only through the sum $\GG_i+\GG_{b+i}+Z_i\triangleq A_i$, hence we can write $f(\G,\Z)=\bar{f}(\G,\A)$, for $\A=[A_1,\cdots, A_b]$. Further note that given $\T=[t_1,\cdots,t_b]\triangleq \t$, the variables $A_i$, $i=1,\cdots,b$, are mutually independent and Gaussian. 
 Denote by $\bar{\GG}_{i}$ the random variable obtain from $\GG_i$ by conditioning on the events $\mathcal{U}$, $\A=\a$ and $\T=\t$, then
\begin{align}
   \bar{\GG}_{i} &\sim \mathcal{N}\left(\frac{\alpha r^2/p}{r^2/p+\sigma^2t_i/p}\cdot  a_i,\frac{\frac{\alpha r^2}{p}\left(\frac{(1-\alpha)r^2}{p}+\sigma^2t_i/p\right)}{\frac{r^2}{p} + \sigma^2t_i/p}\right) \\
    & = \mathcal{N}\left(\xi_i \alpha a_i,\, \frac{\alpha \xi_i }{p}\cdot \left((1-\alpha)r^2 + \sigma^2t_i\right)\right),
\end{align}
where $\xi_i=r^2/(r^2+\sigma^2t_i)$. 
Denote $\eta_i \triangleq \frac{\alpha \xi_i }{p}\cdot \left((1-\alpha)r^2 + \sigma^2t_i\right)$, then
\begin{align}
    \E \left[f(\G,\Z)\mid  \mathcal{U}\right]&=\E\left[\E \left[\bar{f}(\bar{\G},\a)\right]\Big|\mathcal{U}\right], 
    \end{align}
    and
    \begin{align}\label{Ef(y|a,t)}
     \E \left[\frac{1}{\|\a\|^2}\left(\sum_{i=1}^b a_i \bar{\GG}_i\right)^2\right]
     &=
     \frac{1}{\|\a\|^2}\left(\left(\sum a_i^2\xi_i\right)^2\alpha^2 + \sum a_i^2\eta_i\right) ,
\end{align}
where we used the fact that $\sum_{i=1}^b a_i \bar{\GG}_i\sim \mathcal{N}(\sum a_i^2\xi_i\alpha , \sum a_i^2 \eta_i) $.
Next, note that $\xi_i\in [\xi_\mathrm{min},\xi_\mathrm{max}]$ and $\eta_i\in [\eta_\mathrm{min},\eta_\mathrm{max}]$, for all $i=1,\cdots,b$, where
\begin{align}
    \xi_\mathrm{min}&\triangleq \frac{r^2}{r^2+t_\mathrm{max}\sigma^2},\\
    \xi_{\mathrm{max}}&\triangleq \frac{r^2}{r^2+t_\mathrm{min}\sigma^2},\\
    \eta_\mathrm{min}&\triangleq \frac{\alpha \xi_\mathrm{min} }{p}\cdot \left((1-\alpha)r^2 + \sigma^2t_\mathrm{min}\right),\\
    \eta_\mathrm{max}&\triangleq \frac{\alpha \xi_\mathrm{max} }{p}\cdot \left((1-\alpha)r^2 + \sigma^2t_\mathrm{max}\right).
\end{align}
Using the above to bound \eqref{Ef(y|a,t)} we get
\begin{align}
    \|\a\|^2\xi_\mathrm{min}^2\alpha^2+\eta_\mathrm{min}\leq \E \bar{f}(\bar{\G},\a) \leq \|\a\|^2\xi_\mathrm{max}^2\alpha^2+\eta_\mathrm{max} ,
\end{align}
Since given $\bs{T}=\bs{t}$ the variables $A_i\sim \mathcal{N}(0,(r^2+t_i\sigma^2)/p)$ are mutually independent, we get  
\begin{align}
    \E \Big[\bar{f}(\G,\A)\mid \bs{T}=\bs{t},\mathcal{U}\Big]  
    &\leq  \xi_\mathrm{max}^2\alpha^2\sum_{i=1}^b\E \left[A_i^2\Big|  \bs{T}=\bs{t},\mathcal{U}\right] + \eta_\mathrm{max} 
    \\& =  \biggl(\xi_\mathrm{max}^2\alpha^2\cdot \sum_{i=1}^b\left(r^2/p+\sigma^2t_i/p\right) \\
    &\quad +  \frac{\alpha \xi_\mathrm{max} }{p}\cdot \left((1-\alpha)r^2 + \sigma^2 t_\mathrm{max}\right)\biggr)
    \\&\leq    \biggl(\xi_\mathrm{max}^2\alpha^2\cdot b\left(r^2/p+\sigma^2t_\mathrm{max}/p\right) \\
    &\quad+  \frac{\alpha \xi_\mathrm{max} }{p}\cdot \left((1-\alpha)r^2 + \sigma^2 t_\mathrm{max}\right)\biggr)
    \\&=   \biggl(\frac{r^2\alpha^2b}{p}\cdot\frac{r^2+\sigma^2t_\mathrm{max}}{\left(r^2+\sigma^2t_\mathrm{min}\right)^2}  \\
    &\quad+  \frac{\alpha \xi_\mathrm{max} }{p}\cdot \biggl((1-\alpha)r^2+ \sigma^2 t_\mathrm{max}\biggr)\biggr)
    \\& = \frac{\alpha r^2}{ p}\left(1  + (b-1)\xi \alpha\right)+o(1/p),
\end{align}
where $\xi$ is the normalized $\snr$, and we used the fact that $t_\mathrm{max}= 1+o(1)$, and
\begin{align}
    \xi_\mathrm{max}& = \frac{r^2}{r^2+\sigma^2-\sigma^2\cdot o(1)}
    =\xi\cdot \frac{1}{1-\frac{\sigma^2}{r^2+\sigma^2}\cdot o(1)}\\
    &= \xi+o(1),\\
    \frac{r^2+\sigma^2t_\mathrm{max}}{\left(r^2+\sigma^2t_\mathrm{min}\right)^2}&=\frac{r^2+\sigma^2}{(r^2+\sigma^2(1-o(1))}+ \frac{\sigma^2\cdot o(1)}{(r^2+\sigma^2(1-o(1))}\\
    &=\frac{1}{r^2+\sigma^2}+o(1).
\end{align}

Similarly, we get the lower bound 
\begin{align}
  \E \left[\bar{f}(\G,\A)\mid \bs{T}=\bs{t},\mathcal{U}\right] &  \geq  \frac{\alpha r^2}{ p}\left(1  + (b-1)\xi \alpha\right)+o(1/p).    
\end{align}
Since the bounds we got does not depend on $\bs{t}$, we get 
\begin{align}
  \E \left[f(\G,\Z)\mid \mathcal{U}\right]&=\E \left[\bar{f}(\G,\A)\mid \mathcal{U}\right]  =  \frac{\alpha r^2}{ p}\left(1  + (b-1)\xi \alpha\right)+o(1/p),   
\end{align}
and combining with \eqref{eq:fYZ_vs_fYZ_given_U} we get the result of lemma.
\end{proof}

\begin{proof}[Proof of Lemma~\ref{lem:semi_noisy_projection}.]
    From Lemma~\ref{lem:fx_close_to_fy} and Lemma~\ref{lem:proj_clos_to_fx} we get that 
          \begin{align}  
 \biggl|  \mathbb{E}\biggl[\frac{ \langle \P\D\bt, \bt\rangle^2}{\|\P\D\bt\|^2}\biggr]&-\mathbb{E}\left[\frac{1}{\delta}f(\G,\Z)\right]\biggr| \leq \frac{2}{\delta\sqrt[4]{p}}\cdot \mathbb{E}\left[f(\G,\Z)\right]+o\left(1/p\right).
   \end{align}
   Plugging in the result of Lemma~\ref{lem:Efy} yields
 \begin{align}  
 \biggl|  \mathbb{E}\left[\frac{ \langle \P\D\bt, \bt\rangle^2}{\|\P\D\bt\|^2}\right]&-\frac{\alpha(1+(b-1)\xi\alpha)}{\delta p}\biggr| \leq \frac{2}{\delta \sqrt[4]{p}}\cdot \left(\frac{\alpha(1+(b-1)\xi\alpha)}{\delta p}\right)+o\left(1/p\right)
 \\&= o\left(1/p\right).
   \end{align}
\end{proof}


\subsection{Proof of the Lemmas in Subsection~\ref{sec:noisy}}\label{appen:variance}

Before we can prove Lemma~\ref{lem:QFFQ}, we need some preliminary results.
In Propositions~\ref{prop:prod_of_sequences} and~\ref{prop:Sherman_morison_ineq} we state two useful technical results. In Lemma~\ref{lem:lambda_max_conv} and Corollary~\ref{cor:lambda_max} we characterize the maximal eigenvalue of an inverse-Wishart matrix, which we then use in Lemma~\ref{lem:norm_AS_conv} to show that the expected excess noise $\E[\W'^T(\H'\H'^T)^{-1}\W']$ converges to the expected value of the asymptotic "clean" statistics $\E[\Q^T(\bs{F}\bs{F}^T)^{-1}\Q]$. Finally, we prove Lemma~\ref{lem:QFFQ} that gives an exact limiting expression for the expected excess noise.

\begin{prop}\label{prop:prod_of_sequences}
    Let $X_n$ and $Y_n$ be two sequences of random variables such that 
    \begin{align}
        X_n \overset{P}{{\longrightarrow}} 0, \;\;\;
         \E|X_n|^2 \longrightarrow 0,\;\text{ and }\; \E Y_n^2 \longrightarrow C,
    \end{align}
    for some constant $C$, then
    \begin{align}
        X_nY_n \overset{P}{{\longrightarrow}} 0. 
    \end{align}
\end{prop}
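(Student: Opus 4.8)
The plan is to prove the stronger claim that $X_nY_n \inL1 0$, since convergence in $L^1$ immediately yields convergence in probability. The key observation is that the hypothesis $X_n \inP 0$ will actually turn out to be superfluous: the $L^2$ control $\E|X_n|^2 \to 0$ is already enough to drive the product to zero, once it is combined with the boundedness of $\E Y_n^2$.

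First I would record that the sequence $\{\E Y_n^2\}$ is bounded. Indeed, since $\E Y_n^2 \to C$, there exists a finite $M$ with $\E Y_n^2 \leq M$ for all $n$. This is the only place where the assumption $\E Y_n^2 \to C$ is used -- we do not need the exact value of the limit, only that the second moments of $Y_n$ do not blow up.

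Next I would apply the Cauchy--Schwarz inequality to the product, giving
\begin{align}
    \E|X_nY_n| \leq \left(\E X_n^2\right)^{1/2}\left(\E Y_n^2\right)^{1/2} \leq \sqrt{M}\cdot \left(\E X_n^2\right)^{1/2}.
\end{align}
Since $\E|X_n|^2 \to 0$ by hypothesis, the right-hand side tends to $0$, which establishes $X_nY_n \inL1 0$. Finally, I would pass from $L^1$ convergence to convergence in probability via Markov's inequality: for any $\eps > 0$,
\begin{align}
    \Pr\left(|X_nY_n| > \eps\right) \leq \frac{\E|X_nY_n|}{\eps} \longrightarrow 0,
\end{align}
which gives the claimed $X_nY_n \inP 0$.

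There is no genuine obstacle in this argument; the only points requiring a moment's care are the boundedness of $\E Y_n^2$ (needed to control the second factor in Cauchy--Schwarz) and the observation that the stated convergence-in-probability hypothesis on $X_n$ plays no role, being already implied by $\E|X_n|^2 \to 0$. If one instead wished to make essential use of $X_n \inP 0$ and avoid Cauchy--Schwarz, an alternative would be to split on the event $\{|Y_n| \leq K\}$ and use tightness of $\{Y_n\}$ (which follows from the bounded second moments) to control the complementary event; however, the Cauchy--Schwarz route above is cleaner and strictly stronger, so that is the approach I would adopt.
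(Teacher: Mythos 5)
Your proposal is correct and is essentially identical to the paper's own proof, which likewise bounds $\Pr(|X_nY_n|\geq\eps)$ via Markov's inequality and then controls $\E|X_nY_n|$ by Cauchy--Schwarz using the two moment hypotheses. Your side remark that the hypothesis $X_n\inP 0$ is redundant is also accurate.
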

\begin{proof}
    Let $\eps>0$, then
    \begin{align}
        \Pr\left(|X_nY_n|\geq \eps\right) \leq \frac{\E |X_nY_n|}{\eps}\leq \frac{\sqrt{\E X_n^2\E Y_n^2}}{\eps} \rightarrow 0,
    \end{align}
    where the first transition due to Markov's inequality and the second is due to Cauchy--Schwarz.
\end{proof}

\begin{prop}\label{prop:Sherman_morison_ineq}
    Let $\F$ be as in Lemma~\ref{lem:norm_AS_conv} and denote by $\bs{t}$ its first column, and by $\F'$ its last $p-1$ columns i.e.,
\begin{align}
    \F= \left[\bs{t}\;\; \F'\right].
\end{align}
Then, for any vector $\bx$, 
\begin{align}
    \bx^T\left(\bs{F}\bs{F}^T\right)^{-1}\bx \leq \bx^T\left(\F'\F'^T\right)^{-1}\bx.
\end{align}

\end{prop}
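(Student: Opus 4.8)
The plan is to recognize that peeling off the first column of $\F$ writes $\F\F^T$ as a rank-one update of $\F'\F'^T$, after which the Sherman--Morrison formula finishes the argument. First I would expand $\F\F^T$ as the sum of outer products of the columns of $\F$. Since $\F = [\bs{t}\;\;\F']$, this gives
\begin{align}
    \F\F^T = \bs{t}\bs{t}^T + \F'\F'^T .
\end{align}
Setting $\A \triangleq \F'\F'^T$, the claim is thus equivalent to the statement that subtracting the rank-one term $\bs{t}\bs{t}^T$ can only increase the quadratic form of the inverse, i.e. $\bx^T(\A+\bs{t}\bs{t}^T)^{-1}\bx \leq \bx^T\A^{-1}\bx$.

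Next I would invoke the Sherman--Morrison identity
\begin{align}
    (\A+\bs{t}\bs{t}^T)^{-1} = \A^{-1} - \frac{\A^{-1}\bs{t}\bs{t}^T\A^{-1}}{1+\bs{t}^T\A^{-1}\bs{t}},
\end{align}
which is valid provided $\A$ is invertible and the denominator is nonzero. Both hold in our regime: since $\gamma > 1/b$ we have $p-1 \geq n/b$, so $\F'$ has full row rank almost surely and $\A = \F'\F'^T$ is positive definite; consequently $\bs{t}^T\A^{-1}\bs{t}\geq 0$ and the denominator is at least $1$. Contracting the identity against $\bx$ yields
\begin{align}
    \bx^T(\F\F^T)^{-1}\bx = \bx^T\A^{-1}\bx - \frac{(\bx^T\A^{-1}\bs{t})^2}{1+\bs{t}^T\A^{-1}\bs{t}} .
\end{align}

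Finally, the subtracted term is the ratio of a nonnegative numerator $(\bx^T\A^{-1}\bs{t})^2$ and a strictly positive denominator, hence nonnegative, which gives $\bx^T(\F\F^T)^{-1}\bx \leq \bx^T\A^{-1}\bx = \bx^T(\F'\F'^T)^{-1}\bx$, as claimed. The argument is essentially mechanical once the outer-product decomposition is noticed; the only point needing care is the invertibility of $\F'\F'^T$, which I expect to be the sole (mild) obstacle and which is handled by the full-rank property guaranteed by $\gamma > 1/b$.
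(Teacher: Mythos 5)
Your proof is correct and follows essentially the same route as the paper's: both decompose $\F\F^T = \F'\F'^T + \bs{t}\bs{t}^T$, apply the Sherman--Morrison identity, and observe that the subtracted rank-one term contributes a nonnegative quantity to the quadratic form (the paper phrases this as the correction matrix $\bs{H}$ being p.s.d., you as the ratio $(\bx^T\A^{-1}\bs{t})^2/(1+\bs{t}^T\A^{-1}\bs{t})\geq 0$, which is the same fact). Your explicit check that $\F'\F'^T$ is invertible is a welcome bit of extra care that the paper leaves implicit.
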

\begin{proof}
   Denote 
\begin{align}
    \bs{H} = \frac{1}{1+\bs{t}^T\left(\F'\F'^T\right)^{-1}\bs{t}}\left(\F'\F'^T\right)^{-1}\bs{t}\bs{t}^T\left(\F'\F'^T\right)^{-1},
\end{align}
and note that since $\left(\F'\F'^T\right)^{-1}$ and $\bs{t}\bs{t}^T$ are p.s.d.~matrices, then $\bs{H}$ is also p.s.d.
Then
\begin{align}
    \bx^T\left(\bs{F}\bs{F}^T\right)^{-1}\bx
   & =  \bx^T\left[\left(\F'\F'^T\right)^{-1}-\bs{H}\right]\bx\label{line:sherman_morisson}\\
   &= \bx^T\left(\F'\F'^T\right)^{-1}\bx-\bx^T\bs{H}\bx\label{line:QFFQ}\\
   &\leq \bx^T\left(\F'\F'^T\right)^{-1}\bx\label{line:psd_matrix},
\end{align}
where in \eqref{line:sherman_morisson} we used the Sherman Morrison inversion lemma and in \eqref{line:psd_matrix} we used that fact that $\bs{H}$ is a p.s.d.~matrix.
\end{proof}

\begin{lemma}\label{lem:lambda_max_conv}
     Let $W_n$ be an $n\times n$  Wishart matrix with covariance $\I$ and $\gamma n$ degrees of freedom, $\gamma>1$. Denote by
    \begin{align}
        \lambda_n \triangleq \frac{\lambda_{\text{min}}(W_n)}{n},
    \end{align}
    the (normalized) minimal eigenvalue of $W_n$. 
    Let $g(x):\RR^+\to\RR$ be a continuous monotonically decreasing function with $\lim_{x\rightarrow\infty}g(x) = 0^+$.
    Then
    \begin{align}
        g(\lambda_n)\inAS g\left(\gamma\left(1-\frac{1}{\sqrt{\gamma}}\right)^2\right).
    \end{align}
    Moreover, if there exist two constants constant $c_1>0$ and $0<c_2<\sqrt{\gamma}-1$ that depends only on $\gamma$ such that
    \begin{align}
    \textbf{Condition $1$:}\quad  &g(c_1/n) = o\left(e^{\frac{(\sqrt{\gamma}-1-c_2)^2n}{2}}\right),\\
    \textbf{Condition $2$:}\quad &    \int_0^{c_1} g\left(\lambda/n\right)\lambda^{-\frac{1}{2}((\gamma-1)n-1)}e^{-\frac{\lambda}{2}}d\lambda \\
    &\qquad=o\left(n^{\frac{1}{2}(\gamma-1)n}\right),
    \end{align}
    then $g(\lambda_n)$ are uniformly integrable and
    \begin{align}
       \E g(\lambda_n)\longrightarrow g\left(\gamma\left(1-\frac{1}{\sqrt{\gamma}}\right)^2\right).
    \end{align}
\end{lemma}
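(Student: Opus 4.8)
The plan is to first pin down the almost-sure limit of $\lambda_n$ via the classical hard-edge behaviour of Gaussian (Wishart) matrices, and then to upgrade this to convergence in expectation by establishing uniform integrability, with Conditions $1$ and $2$ serving precisely to neutralize the contribution of atypically small eigenvalues.

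\textbf{Almost-sure convergence.} First I would write $W_n = XX^T$ with $X \in \RR^{n\times \gamma n}$ having i.i.d.\ standard Gaussian entries, so that $\lambda_{\text{min}}(W_n) = \sigma_{\text{min}}(X)^2$. The aspect ratio is $n/(\gamma n) = 1/\gamma$, and the non-asymptotic concentration of extreme singular values of Gaussian matrices (e.g.~\cite{vershynin2010introduction}) together with Borel--Cantelli gives $\frac{1}{\gamma n}\sigma_{\text{min}}(X)^2 \inAS (1-1/\sqrt{\gamma})^2$. Hence $\lambda_n = \frac{1}{n}\lambda_{\text{min}}(W_n) \inAS \gamma(1-1/\sqrt{\gamma})^2 = (\sqrt{\gamma}-1)^2 =: \ell$, which is strictly positive for $\gamma>1$. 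Since $g$ is continuous, the continuous mapping theorem immediately yields $g(\lambda_n)\inAS g(\ell)$, proving the first claim.

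\textbf{Reduction to a tail bound.} Because a.s.\ convergence plus uniform integrability implies $L^1$ convergence (and hence convergence of expectations), it suffices to show $\{g(\lambda_n)\}$ is uniformly integrable. As $g$ is decreasing with $g(x)\to 0^+$, it is bounded on $[c_2^2,\infty)$ for any $0<c_2<\sqrt{\gamma}-1$ (note $c_2^2<\ell$), so on $\{\lambda_n\ge c_2^2\}$ we have $0\le g(\lambda_n)\le g(c_2^2)<\infty$, a uniformly bounded family. The entire difficulty is therefore confined to showing $\E\!\left[g(\lambda_n)\,\mathbf{1}\{\lambda_n<c_2^2\}\right]\to 0$, which I would split at the scale $\lambda_n = c_1/n$, i.e.\ $\lambda_{\text{min}}(W_n)=c_1$.

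\textbf{The two sub-regimes.} On $\{c_1/n \le \lambda_n < c_2^2\}$ monotonicity gives $g(\lambda_n)\le g(c_1/n)$, so the contribution is at most $g(c_1/n)\Pr(\lambda_n<c_2^2)$. The event $\{\lambda_n<c_2^2\}=\{\sigma_{\text{min}}(X)<\sqrt{n}\,c_2\}$ is a deviation of $\sigma_{\text{min}}$ below its typical value $\sqrt{\gamma n}-\sqrt{n}=\sqrt{n}(\sqrt{\gamma}-1)$ by $\sqrt{n}(\sqrt{\gamma}-1-c_2)$, so Gaussian concentration of the smallest singular value gives $\Pr(\lambda_n<c_2^2)\le e^{-(\sqrt{\gamma}-1-c_2)^2 n/2}$; multiplying by $g(c_1/n)$ and invoking Condition $1$ makes this $o(1)$. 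On $\{\lambda_n<c_1/n\}=\{\lambda_{\text{min}}(W_n)<c_1\}$ the factor $g$ may blow up, so I would integrate directly against the marginal density of $\lambda_{\text{min}}(W_n)$, whose behaviour near the origin is governed by the Laguerre weight $\lambda^{((\gamma-1)n-1)/2}e^{-\lambda/2}$ up to the ensemble normalization constant $K_n$. This produces a bound of the form
\begin{align}
\E\!\left[g(\lambda_n)\,\mathbf{1}\{\lambda_{\text{min}}(W_n)<c_1\}\right]\;\le\; K_n\int_0^{c_1} g(\lambda/n)\,\lambda^{((\gamma-1)n-1)/2}e^{-\lambda/2}\,d\lambda,
\end{align}
where $K_n=\Theta\!\big(n^{-(\gamma-1)n/2}\big)$; Condition $2$ asserts exactly that the integral is $o\!\big(n^{(\gamma-1)n/2}\big)$, so the product is $o(1)$. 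Combining both sub-regimes gives $\E[g(\lambda_n)\mathbf{1}\{\lambda_n<c_2^2\}]\to 0$, establishing uniform integrability and the claimed limit $\E g(\lambda_n)\to g(\ell)$.

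\textbf{Main obstacle.} I expect the genuinely delicate step to be the truly-small regime $\lambda_{\text{min}}(W_n)<c_1$, where $g$ is unbounded and a crude probability estimate is useless. Here one must use the explicit marginal density of the smallest Wishart eigenvalue and carefully track its exponential-in-$n$ normalization constant, verifying that its scale is precisely $n^{-(\gamma-1)n/2}$ so that it cancels against the growth allowed on the right-hand side of Condition $2$; one must also confirm the near-origin density bound is uniform in $n$. By contrast, the moderate-deviation regime is routine, following from standard non-asymptotic singular-value estimates.
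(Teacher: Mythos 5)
Your proposal is correct and follows essentially the same route as the paper's proof: almost-sure convergence via the known hard-edge limit of the smallest Wishart eigenvalue plus continuity of $g$, then uniform integrability by splitting the small-eigenvalue region at the scale $\lambda_{\min}(W_n)=c_1$, controlling the moderate regime with the smallest-singular-value concentration bound and Condition~1, and the near-origin regime with Edelman's density bound $f_n(\lambda)\leq C_n\lambda^{\frac{1}{2}((\gamma-1)n-1)}e^{-\lambda/2}$, $C_n=O(n^{-\frac{1}{2}(\gamma-1)n})$, together with Condition~2. The only cosmetic differences are that you rederive the a.s.\ limit from non-asymptotic concentration and Borel--Cantelli where the paper cites Silverstein's result directly, and you correctly use the positive exponent on $\lambda$ in the near-origin integral (the sign in the lemma's statement of Condition~2 appears to be a typo).
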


\begin{proof}
    The almost sure convergence is immediate from the fact that \cite{silverstein1985smallest}
    \begin{align}
        \lambda_n \inAS \gamma\left(1-\frac{1}{\sqrt{\gamma}}\right)^2,
    \end{align}
    and $g(x)$ is continuous. 
    Next, we show that $g(\lambda_n)$ are uniformly integrable. 
    If $\lim_{x\to 0}g(x)<\infty$ then $g$ is bounded and uniform integrability is trivial. Otherwise, since the pdf $f_n(\lambda)$ of $\lambda_n$ exists and continuous for any $n$ \cite{edelman1988eigenvalues}, it is enough to verify that for some $a>0$ that depends only on $\gamma$ 
    \begin{align}\label{eq:UIcond}
        \lim_{n\rightarrow \infty}\E\left[|g(\lambda_n)|\cdot \indfunc{|g(\lambda_n)|\geq a}\right]=0.
    \end{align}
    Since $g$ is monotonically decreasing, this is equivalent to showing that for some $\eps>0$ that depends only on $\gamma$
        \begin{align}
        \lim_{n\rightarrow \infty}\int_{0}^{\eps\cdot n}g\left(\lambda/n\right)f_{n}(\lambda)d\lambda= 0.
    \end{align}
 Let us write
\begin{align}\label{eq:integral_breaking}
    \int_{0}^{\eps\cdot n}g\left(\lambda/n\right)f_n(\lambda)d\lambda &= \int_{0}^{c_1}g\left(\lambda/n\right)f_n(\lambda)d\lambda\\&
    \quad+\int_{c_1}^{\eps\cdot n}g\left(\lambda/n\right)f_n(\lambda)d\lambda.
\end{align}
We have
\begin{align}
    \int_{c_1}^{\eps\cdot n}g\left(\lambda/n\right)f_n(\lambda)d\lambda\leq g\left(c_1/n\right)\cdot \Pr\left(\lambda_n\leq \epsilon n\right)
\end{align}
    Since $\lambda_n$ is the minimal eigenvalue of a Wishart matrix we have that for any $0<\eps<(\sqrt{\gamma}-1)^2$ \cite{vershynin2010introduction}
    \begin{align}
        \Pr\left(\lambda_n\leq \epsilon n\right) \leq 2e^{-\frac{(\sqrt{\gamma}-1-\sqrt{\eps})^2n}{2}}.
    \end{align}
    Then, from Condition~1 we get that for any $\eps\leq c_2^2$     \begin{align}
      g\left(1/n\right)\cdot \Pr\left(\lambda_n\leq \epsilon n\right) =o(1).
    \end{align}
    To show the decay of the first addend in \eqref{eq:integral_breaking}, note that from \cite{edelman1988eigenvalues} we have
    \begin{align}
        f_{n}(\lambda)\leq C_n \lambda^{\frac{1}{2}((\gamma-1)n-1)}e^{-\frac{\lambda}{2}},
    \end{align}
    where 
    \begin{align}
        C_n = \frac{\pi^\frac{1}{2}2^{-\frac{1}{2}((\gamma-1)n+1)}\Gamma\left(\frac{\gamma n -1}{2}\right)}{\Gamma\left(\frac{n }{2}\right)\Gamma\left(\frac{(\gamma-1) n +1}{2}\right)\Gamma\left(\frac{(\gamma-1) n +2}{2}\right)} = O\left(n^{-\frac{1}{2}(\gamma-1)n}\right).
    \end{align}
     Hence if Condition~2 holds we have
    \begin{align}
        \int_0^{c_1}g&\left(\frac{\lambda}{n}\right)f_{\lambda,n}d\lambda\leq \\ &C_n   \int_0^{c_1} g\left(\frac{\lambda}{n}\right)\lambda^{\frac{1}{2}((\gamma-1)n-1)}e^{-\frac{\lambda}{2}}d\lambda =o\left(1\right),
    \end{align}
    which concludes the proof.
\end{proof}

\begin{cor}\label{cor:lambda_max}
    Let $Q_n$ be an $n\times n$ matrix distributed according to the inverse-Wishart distribution with covariance $\I$ and $\gamma n$ degrees of freedom. Then for any fixed $r\geq 1$ we have 
    \begin{align}
        \left(n\lambdamax\left(Q_n\right)\right)^r \inAS  \left(\frac{1}{\gamma\left(1-\sqrt{\frac{1}{\gamma}}\right)^2}\right)^r ,
    \end{align}
    and 
        \begin{align}
       \E \left(n\lambdamax\left(Q_n\right)\right)^r \longrightarrow \left(\frac{1}{\gamma\left(1-\sqrt{\frac{1}{\gamma}}\right)^2}\right)^r .
    \end{align}
\end{cor}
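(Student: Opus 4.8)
The plan is to reduce the statement to a direct application of Lemma~\ref{lem:lambda_max_conv}. First I would use the defining relationship between the inverse-Wishart and Wishart laws: if $Q_n$ is inverse-Wishart with covariance $\I$ and $\gamma n$ degrees of freedom, then $W_n \triangleq Q_n^{-1}$ is Wishart with the same parameters, and consequently $\lambdamax(Q_n) = 1/\lambda_{\text{min}}(W_n)$. Writing $\lambda_n \triangleq \lambda_{\text{min}}(W_n)/n$ as in Lemma~\ref{lem:lambda_max_conv}, this gives the clean identity $n\lambdamax(Q_n) = 1/\lambda_n$, so that $(n\lambdamax(Q_n))^r = g(\lambda_n)$ with $g(x) = x^{-r}$. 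It therefore suffices to apply Lemma~\ref{lem:lambda_max_conv} to this particular $g$.

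Next I would verify that $g(x) = x^{-r}$ satisfies the hypotheses of Lemma~\ref{lem:lambda_max_conv}: it is continuous and strictly decreasing on $\RR^+$, with $\lim_{x\to\infty} g(x) = 0^+$. The almost-sure convergence claim is then immediate, since the lemma yields $g(\lambda_n) \inAS g(\gamma(1-1/\sqrt{\gamma})^2) = (\gamma(1-\sqrt{1/\gamma})^2)^{-r}$, which is exactly the asserted limit.

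For the convergence in expectation I would check Conditions~1 and~2 of the lemma. Condition~1 is easy: $g(c_1/n) = (n/c_1)^r$ grows only polynomially in $n$, hence is $o(e^{(\sqrt{\gamma}-1-c_2)^2 n/2})$ for any admissible $0<c_2 < \sqrt{\gamma}-1$. Condition~2 is the main point: substituting $g(\lambda/n) = (n/\lambda)^r$ into the integral against the density bound $f_n(\lambda) \leq C_n \lambda^{\frac{1}{2}((\gamma-1)n-1)} e^{-\lambda/2}$ gives a quantity of order $n^r \int_0^{c_1} \lambda^{\frac{1}{2}((\gamma-1)n-1)-r} e^{-\lambda/2}\, d\lambda$. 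Because $\gamma > 1$, the exponent of $\lambda$ is large and positive for large $n$, so the integrand is integrable near the origin; bounding $e^{-\lambda/2}\le 1$ and choosing $c_1 < 1$ gives a bound of order $n^r\, c_1^{\frac{1}{2}(\gamma-1)n}$ divided by a polynomial in $n$. Comparing with the required $n^{\frac{1}{2}(\gamma-1)n}$, the ratio is essentially $(c_1/n)^{\frac{1}{2}(\gamma-1)n}$, which tends to $0$ super-exponentially, so Condition~2 holds.

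With both conditions verified, Lemma~\ref{lem:lambda_max_conv} guarantees that $g(\lambda_n)$ is uniformly integrable and $\E g(\lambda_n) \to (\gamma(1-\sqrt{1/\gamma})^2)^{-r}$. Translating back via $g(\lambda_n) = (n\lambdamax(Q_n))^r$ yields the expectation claim, completing the proof. I expect the only delicate step to be the order comparison in Condition~2, where one must carefully balance the polynomial prefactor $n^r$ and the shrinking integral against the super-exponential growth of $n^{\frac{1}{2}(\gamma-1)n}$; everything else is routine once the inverse-Wishart/Wishart reduction is in place.
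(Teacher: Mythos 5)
Your proposal is correct and is essentially the paper's own proof: the paper likewise obtains the corollary as an immediate application of Lemma~\ref{lem:lambda_max_conv} with $g(x)=1/x^r$ (taking $c_1=1$ and any $c_2<\sqrt{\gamma}-1$), via the same identification $n\lambdamax(Q_n)=1/\lambda_n$. You merely spell out the verification of Conditions~1 and~2, which the paper leaves implicit, and your order-of-magnitude check there is sound.
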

\begin{proof}
    An immediate result of Lemma~\ref{lem:lambda_max_conv} with $g(x) = 1/x^r$, $c_1 =1$ and any $c_2<\sqrt{\gamma}-1$. 
\end{proof}

\begin{proof}[Proof of Lemma~\ref{lem:norm_AS_conv}.]
For convenience, we normalize the coefficient $\A_j$ such that the modified elements are given by
    \begin{align}\label{eq:normalized_Wtag}
      \Ytag_j &= \frac{1}{\|\A_j\|}\sum_{i=1}^b A_{ij}\YY_{ij}, \;\;   \h'_j \\
      &= \frac{1}{\|\A_j\|}\sum_{i=1}^b A_{ij}\h_{ij}, \;\;  W'_j \\
      &= \frac{1}{\|\A_j\|}\sum_{i=1}^b A_{ij}W_{ij}, 
    \end{align}
    and similarly define 
        \begin{align}\label{eq:normalized_Q}
        \bs{f}_j = \frac{1}{\|\X_j\|}\sum_{i=1}^b \YY_{ij}\h_{ij}, \;\;  Q_j = \frac{1}{\|\X_j\|}\sum_{i=1}^b \YY_{ij}W_{ij}. 
    \end{align}
    Note that these normalization does not change $\Q^T\left(\F\F^T\right)^{-1}\Q$ nor \\$\W'^T\left(\H'\H'^T\right)^{-1}\W'$.
Next, by applying the strong law of large numbers per entry to the $b\times b$ matrix $\frac{1}{p}\H_j\H_j^T$ we get that 
\begin{align}
    \frac{1}{p}\H_j\H_j^T\inAS \I,
\end{align}
and since the $b$ eigenvalues of $\frac{1}{p}\H_j\H_j^T$ are all equal to $1$ almost surely   we also get
\begin{align}
    \left(\frac{1}{p}\H_j\H_j^T\right)^{-1}\inAS \I.
\end{align}
This yields   
\begin{align}
  \left \| p\A_j-\X_j\right\|_F^2&=\left \| p(\H_j\H_j^T)^{-1} \X_j-\X_j\right\|_F^2\\
  &= \left\| \left((\frac{1}{p}\H_j\H_j^T)^{-1} -I\right)\X_j\right\|_F^2\\
 & \leq \left\| \X_j\right\|_F^2\left\|\left((\frac{1}{p}\H_j\H_j^T)^{-1} -I\right)\right\|_F^2\\
 &\inP 0,
\end{align}
where we used Proposition~\ref{prop:prod_of_sequences} along with the fact that $\X_j\sim \mathcal{N}(\bs{0},(r^2+\sigma^2)\I)$, hence $\E\|\X_j\|_F^4$ is finite and does not depend on $p$, and
\begin{align}
  p\A_j\inP \X_j.
\end{align}
Denote $\bar{\X}_j =\X_j/\|\X_j\|$ and $\bar{\A}_j =p\A_j/\|p\A_j\|$. Since $\|p\A_j\|^2$ and $\|\X_j\|^2$ are continuous mappings of $\A_j$ and $\X_j$, and are also bounded away from zero almost surely, we get   
\begin{align}
  \bar{\A}_j\inP\bar{\X}_j.
\end{align}
Moreover, since $\bar{\A}_j$ and $\bar{\X}_j$ have unit norms, their elements are bounded with probability $1$ and therefore they have bounded $r$th moment, for any $r\geq 0$. Hence, using Theorem 4.6.2 in~\cite{durrett2019probability} we can deduce that $\bar{\A}_j$  converges to $\bar{\X}_j$ in the $r$th mean, that is, for any $r>0$
\begin{align}
    \|\bar{\A}_j-\bar{\X}_j\|^r\inP 0,&\;j=1,\cdots,n/b\\
        \E\|\bar{\A}_j-\bar{\X}_j\|^r\longrightarrow 0&,\;j=1,\cdots,n/b.
\end{align}
Next, note that for any $r\geq 2$
\begin{align}
 0\leq   \left \|\frac{1}{\sqrt{p}}\h'_j-\frac{1}{\sqrt{p}}\bs{f}_j\right\|_F^r &= \left\|\left(\bar{\A}_j-\bar{\X}_j\right)^T \cdot  \frac{1}{\sqrt{p}} \H_j\right\|_F^r
    \\&  \leq \left\|\bar{\A}_j-\bar{\X}_j\right\|_F ^r\cdot \left\|\frac{1}{\sqrt{p}} \H_j\right\|_F^r \\ &\inP 0,
\end{align}
where we used the fact that $ \|\frac{1}{\sqrt{p}}\H_j\|^r$ is almost surely bounded since $\H_j$ are i.i.d.~standard Gaussian.
In a similar way 
\begin{align}
 0\leq   \E\left \|\frac{1}{\sqrt{p}}\h'_j-\frac{1}{\sqrt{p}}\bs{f}_j\right\|_F^r 
    &\leq \E\left\|\bar{\A}_j-\bar{\X}_j\right\|_F ^r \E\left\|\frac{1}{\sqrt{p}} \H_j\right\|_F^r \\
    &{\longrightarrow}0,
\end{align}
where we used the fact that $\E \|\frac{1}{\sqrt{p}}\H_j\|^r\rightarrow b^r$ for any $r\geq 2$.
One important observation from the above is that for any $r\geq 2$, there exists some constant $C_r$ that does not depend on $p$, such that  $\E\|\frac{1}{\sqrt{p}}\h'_j-\frac{1}{\sqrt{p}}\bs{f}_j\|^r\leq C_r$ for any $p$.
This will yield that
\begin{align}\label{eq:F_H_conv}
    \left\|\frac{1}{\sqrt{\sqrt{n}p}}\F-\frac{1}{\sqrt{\sqrt{n}p}}\H'\right\|_F^2&=\frac{1}{\sqrt{n}}\sum_{j=1}^{n/b}\left \|\frac{1}{\sqrt{p}}\h'_j-\frac{1}{\sqrt{p}}\bs{f}_j\right\|_F^2\\
    &\inP 0.
\end{align}
where we used the fact that 
\begin{align}
    \E\frac{1}{\sqrt{n}}\sum_{j=1}^{n/b}\left \|\frac{1}{\sqrt{p}}\h'_j-\frac{1}{\sqrt{p}}\bs{f}_j\right\|_F^2&=0,
   \end{align} 
   and
    \begin{align}
        \E\biggl(\frac{1}{\sqrt{n}}\sum_{j=1}^{n/b}\biggl \|&\frac{1}{\sqrt{p}}\h'_j-\frac{1}{\sqrt{p}}\bs{f}_j\biggr\|_F^2\biggr)^2=\frac{1}{b}\E\left( \left \|\frac{1}{\sqrt{p}}\h'_1-\frac{1}{\sqrt{p}}\bs{f}_1\right\|_F^2\right)^2\longrightarrow 0.
   \end{align} 
In a similar way 
\begin{align}
          \|  W'_j- Q_j\|_2^r &=\left\|\left(\bar{\X}_j-\bar{\A}_j\right) \cdot \W_j\right\|_F^r \inAS 0,\\
                    \E\|  W'_j- Q_j\|_2^r &=\E\left\|\left(\bar{\X}_j-\bar{\A}_j\right) \cdot \W_j\right\|_F^r \inAS 0.\label{eq:bounded_noise_var}
\end{align}
 where we used the fact that for any $j$ the vector $\W_j$ is a $b\times 1$ i.i.d.~Gaussian vector.
Let us write
\begin{align}
    \left\|\frac{1}{\sqrt[4]{n}}\W'-\frac{1}{\sqrt[4]{n}}\Q\right\|_F^2 =\frac{1}{\sqrt{n}} \sum_{j=1}^{n/b} \|W'_j- Q_j\|^2.
\end{align}
 From \eqref{eq:bounded_noise_var} we get that the mean of the above is $\E\frac{1}{\sqrt{n}} \sum_{j=1}^{n/b} \|W'_j- Q_j\|^2 =0$ and the second moment is 
 \begin{align}
    \E\left( \frac{1}{\sqrt{n}} \sum_{j=1}^{n/b} \|W'_j- Q_j\|^2\right)^2= \frac{1}{b}\E\left(\|W'_1- Q_1\|^2\right)^2\longrightarrow 0,
 \end{align}
where we used the facet that $\W'$ and $\Q$ have i.i.d.~entries. 
Hence
\begin{align}
   \left\|\frac{1}{\sqrt[4]{n}}\W'-\frac{1}{\sqrt[4]{n}}\Q\right\|_F^2 \inP 0,
\end{align}
and therefore
\begin{align}\label{eq:QW_conv}
   \left\|\frac{1}{\sqrt[4]{n}}\W'-\frac{1}{\sqrt[4]{n}}\Q\right\|_F^2 \inP 0,
\end{align}
From \eqref{eq:F_H_conv} it easily follow that
\begin{align}
    \left\|\frac{1}{n^{\frac{3}{2}}}\F\F^T-\frac{1}{n^{\frac{3}{2}}}\H'\H'^T\right\|_F^2\inP 0.
\end{align}
Then, note that using the relation $\|\A\B\|_F^2\leq\lambdamax(\A)\|\B\|_F^2$ we get
\begin{align}\label{eq:l2_lF_decomposition}
    \Big\|\left(\frac{1}{\sqrt{n}}\F\F^T\right)^{-1}&-\left(\frac{1}{\sqrt{n}}\H'\H'^T\right)^{-1}\Big\|_F^2 \leq  
    \\ &\hspace{-1.5cm}\lambdamax\left(\sqrt{n}\left(\F\F^T\right)^{-1}\right)\lambdamax\left(\sqrt{n}\left(\H'\H'^T\right)^{-1}\right) \\
    &\times\left\|\frac{1}{n^{\frac{3}{2}}}\F\F^T-\frac{1}{n^{\frac{3}{2}}}\H'\H'^T\right\|_F^2.
    \end{align}
Note that with the exception of the first column, $\F$ is i.i.d.~standard-Gaussian. Then, from Proposition~\ref{prop:Sherman_morison_ineq} we get 
\begin{align}\label{eq:lambda_max_F_Ftag}
    \lambdamax\left(\sqrt{n}\left(\F\F^T\right)^{-1}\right)\leq  \lambdamax\left(\sqrt{n}\left(\F'\F'^T\right)^{-1}\right).
\end{align}
The entries of $\F'$ are i.i.d $\mathcal{N}$(0,1), therefore, $\F'\F'^T$ is a Wishart matrix with $p-1$ degrees of freedom. From Corollary~\ref{cor:lambda_max}
\begin{align}     \lambdamax\left(\sqrt{n}\left(\F'\F'^T\right)^{-1}\right)\inAS 0,
\end{align}
which implies 
\begin{align}
     \lambdamax\left(\sqrt{n}\left(\F\F^T\right)^{-1}\right)\inAS 0.
\end{align}
Since each row of $\H'$ is a linear combination of $b$ rows of $\H$ with coefficients of unit norm, and $\H\H^T$ is Wishart matrix, we also have  
\begin{align}
  \lambdamax\left(\sqrt{n}\left(\H'\H'^T\right)^{-1}\right)\leq \lambdamax\left(\sqrt{n}\left(\H\H^T\right)^{-1}\right)\inAS 0,
\end{align}
and therefore, we get from  \eqref{eq:l2_lF_decomposition} that
\begin{align}\label{eq:inv_conv}
    \left\|\left(\frac{1}{\sqrt{n}}\F\F^T\right)^{-1}-\left(\frac{1}{\sqrt{n}}\H'\H'^T\right)^{-1}\right\|_F^2\inP 0.
\end{align}
Finally,  we combine \eqref{eq:QW_conv} and \eqref{eq:inv_conv} to get 
\begin{align}
    \biggl\|\frac{1}{\sqrt[4]{n}}&\Q^T\left(\frac{1}{\sqrt{n}}\F\F^T\right)^{-1}\frac{1}{\sqrt[4]{n}}\Q\\
    &- \frac{1}{\sqrt[4]{n}}\W'^T\left(\frac{1}{\sqrt{n}}\H'\H'^T\right)^{-1}\frac{1}{\sqrt[4]{n}}\W'\biggr\|_F^2
    \inP 0.
\end{align}
which implies 
\begin{align}\label{eq:var_conv}
    \Q^T\left(\F\F^T\right)^{-1}\Q-\W'^T\left(\H'\H'^T\right)^{-1}\W'\inP 0.
\end{align}
To deduce $L^1$ convergence note the from \eqref{eq:lambda_max_F_Ftag} and Corollary~\ref{cor:lambda_max} we get
\begin{align}
    \E\lambdamax\left(\left(\frac{1}{n}\F\F^T\right)^{-1}\right)\leq  \E\lambdamax\left(\left(\frac{1}{n}\F'\F'^T\right)^{-1}\right)\leq \infty,
\end{align}
and
\begin{align}
    \E\lambdamax\left(\left(\frac{1}{n}\H'\H'^T\right)^{-1}\right)&\leq \E\lambdamax\left(\left(\frac{1}{n}\H\H^T\right)^{-1}\right) \\
    &\leq\infty.
\end{align}
Then
\begin{align}
    \E\|\Q^T&\left(\F\F^T\right)^{-1}\Q- \W'^T\left(\H'\H'^T\right)^{-1}\W'\|^2
    \\
    &\leq  \E \|\Q^T\left(\F\F^T\right)^{-1}\Q\|^2 \label{line:positive}\\
  \;\;& \qquad +\E \|\W'^T\left(\H'\H'^T\right)^{-1}\W'\|^2\\
    &\leq  \E\left[\frac{1}{n^2/b^2}\|\Q\|^4\lambdamax^2\left(\left(\frac{1}{n/b}\F\F^T\right)^{-1}\right)\right]
  \\&  \;\;+\E\left[\frac{1}{n^2/b^2}\|\W'\|^4\lambdamax^2\left(\left(\frac{1}{n/b}\H'\H'^T\right)^{-1}\right)\right]\label{line:max_lambda}\\
    &\leq  \E\left[\frac{1}{n^2/b^2}\|\Q\|^4\lambdamax\left(\left(\frac{1}{n/b}\F\F^T\right)^{-2}\right)\right]
  \\&  \;\;+\E\left[\frac{1}{n^2/b^2}\|\W'\|^4\lambdamax\left(\left(\frac{1}{n/b}\H\H^T\right)^{-2}\right)\right]\\
    &\leq  \E\left[\frac{b^4}{n^4}\|\Q\|^8\right]\E\left[\lambdamax\left(\left(\frac{1}{n/b}\F\F^T\right)^{-4}\right)\right]
  \\&  \;\;+\E\left[\frac{b^4}{n^4}\|\W'\|^8\right]\E\left[\lambdamax\left(\left(\frac{1}{n/b}\H\H^T\right)^{-4}\right)\right]\label{line:CS2}\\
    &<\infty\label{line:lambda_prop},
\end{align}
where in \eqref{line:positive} we used the fact that $\Q^T\left(\F\F^T\right)^{-1}\Q$ and $\W'^T\left(\H'\H'^T\right)^{-1}\W'$ are non-negative, in \eqref{line:max_lambda} we used $\x^TA\x\leq \|\x\|^2\lambdamax(A)$ for any p.s.d $A$,  in \eqref{line:CS2} we used Cauchy--Schwarz inequality, and in \eqref{line:lambda_prop} we used Corollary~\ref{cor:lambda_max}.
Hence we get that   \eqref{eq:var_conv} is uniformly integrable and
\begin{align}
    |\Q^T\left(\F\F^T\right)^{-1}\Q- \W'^T\left(\H'\H'^T\right)^{-1}\W'|\inL1 0,
\end{align}
as desired.
\end{proof}

The following lemma characterizes the covariance of the modified noise vector $\Q$.

\begin{lemma}\label{lem:Q_variance}
    Let $\Q$ be as in Lemma~\ref{lem:norm_AS_conv}. Then 
    \begin{align}
        \mathbb{E}\left[\Q\Q^T\right]=\sigma^2\cdot \left(\frac{\sigma^2 b + r^2 - \sigma^2q^2}{\sigma^2+r^2}\cdot I +  \frac{\sigma^2q^2}{\sigma^2+r^2}\cdot \bs{1}\bs{1}^T\right),
\end{align}
with $q\triangleq \mathbb{E}\sqrt{B}$, where $B$ is a random variable that is distributed according to the $\chi^2$ distribution with $b$ degrees of freedom.
\end{lemma}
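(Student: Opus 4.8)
The plan is to compute the entries of $\mathbb{E}[\Q\Q^T]$ directly from the Gaussian structure of the clean per-batch model, exploiting that distinct batches are statistically independent. Recall that in the clean statistics we take $Q_j = \|\X_j\|^{-1}\sum_{i=1}^b \YY_{ij}W_{ij}$ as in \eqref{eq:normalized_Q}, and that we have fixed $\bt=[r,0,\dots,0]$, so that $\YY_{ij} = r\alpha_{ij}+W_{ij}$ with $\alpha_{ij}$ the first coordinate of $\h_{ij}$. Since distinct batches use disjoint samples, $Q_j$ and $Q_k$ are independent for $j\neq k$, so the off-diagonal entries satisfy $\mathbb{E}[Q_jQ_k]=\mathbb{E}[Q_j]\,\mathbb{E}[Q_k]$, and it therefore suffices to evaluate the two single-batch quantities $\mathbb{E}[Q_j]$ and $\mathbb{E}[Q_j^2]$.

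First I would record the one-dimensional Gaussian conditioning. For each $i$, the pair $(\YY_{ij},W_{ij})$ is jointly Gaussian with $\mathrm{Var}(\YY_{ij})=r^2+\sigma^2$, $\mathrm{Var}(W_{ij})=\sigma^2$ and $\mathrm{Cov}(\YY_{ij},W_{ij})=\sigma^2$, which gives $\mathbb{E}[W_{ij}\mid \YY_{ij}]=\tfrac{\sigma^2}{r^2+\sigma^2}\YY_{ij}=(1-\xi)\YY_{ij}$ and $\mathrm{Var}(W_{ij}\mid \YY_{ij})=\sigma^2-\tfrac{\sigma^4}{r^2+\sigma^2}=\xi\sigma^2$. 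The crucial structural observation is that the blocks $(\YY_{ij},W_{ij})$ are independent across $i$, so conditioning on the whole batch $\X_j=(\YY_{1j},\dots,\YY_{bj})$ leaves the noise entries $W_{ij}$ conditionally independent, with each $W_{ij}$ depending only on $\YY_{ij}$.

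Using this, I would compute the conditional moments $\mathbb{E}[Q_j\mid\X_j]=(1-\xi)\|\X_j\|$ and, by expanding $Q_j^2$ and separating the $i=k$ and $i\neq k$ terms (the latter factorizing by conditional independence into products of conditional means), $\mathbb{E}[Q_j^2\mid\X_j]=\xi\sigma^2+(1-\xi)^2\|\X_j\|^2$. Taking expectations over $\X_j$ then needs only the first two moments of $\|\X_j\|$: since $\|\X_j\|^2=(r^2+\sigma^2)B$ with $B\sim\chi^2_b$, we have $\mathbb{E}\|\X_j\|^2=b(r^2+\sigma^2)$ and $\mathbb{E}\|\X_j\|=\sqrt{r^2+\sigma^2}\,q$ with $q=\mathbb{E}\sqrt{B}$. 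Hence $\mathbb{E}[Q_j^2]=\xi\sigma^2+(1-\xi)^2b(r^2+\sigma^2)$ and $\mathbb{E}[Q_j]=(1-\xi)\sqrt{r^2+\sigma^2}\,q$. Substituting $\xi=r^2/(r^2+\sigma^2)$ and $1-\xi=\sigma^2/(r^2+\sigma^2)$ reduces these to $\mathbb{E}[Q_j^2]=\sigma^2(\sigma^2 b+r^2)/(\sigma^2+r^2)$ and $\mathbb{E}[Q_jQ_k]=\sigma^4 q^2/(\sigma^2+r^2)$; writing the diagonal as (off-diagonal)$\,+\,\sigma^2(\sigma^2 b+r^2-\sigma^2q^2)/(\sigma^2+r^2)$ and collecting the $\bs{1}\bs{1}^T$ and $I$ contributions yields exactly the claimed form.

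There is no genuinely hard step: the argument is routine Gaussian conditioning once the independence-across-$i$ structure is noted. The two points requiring care are (i) the conditional independence of the $W_{ij}$ given the batch, which is what makes the $i\neq k$ cross terms of $Q_j^2$ factor into products of conditional means and thereby produce the rank-one $\bs{1}\bs{1}^T$ piece, and (ii) the appearance of $\mathbb{E}\|\X_j\|=\sqrt{r^2+\sigma^2}\,\mathbb{E}\sqrt{B}$ in the off-diagonal. The latter is the only non-elementary ingredient, since $\mathbb{E}\sqrt{B}$ for $B\sim\chi^2_b$ admits no simple closed form, which is precisely why the off-diagonal entries are expressed through the constant $q$ rather than elementarily.
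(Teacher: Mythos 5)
Your proposal is correct and follows essentially the same route as the paper: condition on the batch samples to get $W_{ij}\mid\X$ Gaussian and conditionally independent across $i$, deduce $\E[Q_j\mid\X_j]=(1-\xi)\|\X_j\|$ and the conditional second moment, then average over $\X$ using $\|\X_j\|^2=(r^2+\sigma^2)B$ with $B\sim\chi^2_b$ and independence across batches. The only cosmetic difference is that the paper records the full conditional law $Q_j\mid\X\sim\mathcal{N}\bigl(\tfrac{\sigma^2\|\X_j\|}{\sigma^2+r^2},\tfrac{\sigma^2 r^2}{\sigma^2+r^2}\bigr)$ whereas you compute the first two conditional moments directly; both yield the identical entrywise covariance.
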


\begin{proof}
Recall that inside the $j$th batch, we have
\begin{align}
    \YY_{ij}= r\cdot \alpha_{ij} + W_{ij} \sim \mathcal{N}(0, r^2 + \sigma^2),
\end{align}
where $\alpha_{ij}$ is the first entry of the random vector $\h_{ij}$.
Therefore, $W_{ij}| \X \sim \mathcal{N}\left(\frac{\sigma^2\cdot \YY_{ij}}{\sigma^2+r^2}, \frac{\sigma^2r^2}{\sigma^2+r^2}\right)$ and $W_{ij}| \X $, $i=1.\cdots,b$, are mutually independent. 
Then we get
\begin{align}
    Q_j|\X \sim \mathcal{N}\left(\frac{\sigma^2\cdot \|\X_j\|}{\sigma^2+r^2}, \, \frac{\sigma^2r^2}{\sigma^2+r^2}\right). 
\end{align}
The $(i,j)$-th entry of the covariance matrix of $\Q$ given $\X$ is then
\begin{align}
    \left[\E (\Q\Q^T | \X)\right]_{ij} = \sigma^2\cdot \left\{\begin{array}{cc}
         \frac{\sigma^2\|\X_i\|\|\X_j\|}{(\sigma^2+r^2)^2} & i\neq j \\
         \frac{\sigma^2 \frac{\|\X_j\|^2}{\sigma^2+r^2} + r^2}{\sigma^2+r^2} & i = j 
    \end{array}\right..
\end{align}
Note that $\frac{\|\X_j\|^2}{r^2+\sigma^2}\sim \chi^2(b)$  has a $\chi^2$-distribution with $b$ degrees of freedom and is independent between different batches.
Therefore, $\E \|\X_j\|^2/(r^2+\sigma^2) = b$, and the unconditional covariance is given by 
\begin{align}
    \E (\Q\Q^T) = \sigma^2\cdot \left(\frac{\sigma^2 b + r^2 - \sigma^2q^2}{\sigma^2+r^2}\cdot I +  \frac{\sigma^2q^2}{\sigma^2+r^2}\cdot \bs{1}\bs{1}^T\right) 
\end{align}
\end{proof}

The following lemma shows that when the first column of the modified features (i.e., the direction of the parameters vector) is removed, we get weak convergence properties for the excess noise.

\begin{lemma}\label{lem:QFQ_converge}
    Let $\Q$ and $\F$ be as in Lemma~\ref{lem:norm_AS_conv} and let $\t$ be the first column of $\F$, and $\F'$ be the matrix obtained from $\F$ by removing $\t$. Then 
    \begin{align}
        \Q^T\left(\F'\F'^T\right)^{-1}\Q&\inL1  \sigma^2\cdot \frac{b-(b-1)\xi}{(\gamma b-1)},
        \\ \t^T\left(\F'\F'^T\right)^{-1}\t&\inL1 \frac{1+(b-1)\xi}{(\gamma b-1)},
    \end{align}
\end{lemma}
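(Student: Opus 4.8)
The plan is to exploit the fact that deleting the first column of $\F$ removes the only coordinate coupled to the parameter, leaving a matrix with clean independent Gaussian structure. Since scaling the rows of $\F$ and the entries of $\Q,\t$ by the same factors $\|\X_j\|^{-1}$ leaves $\Q^T(\F'\F'^T)^{-1}\Q$ and $\t^T(\F'\F'^T)^{-1}\t$ unchanged, I would work with the normalized quantities from the proof of Lemma~\ref{lem:norm_AS_conv}. For $k\ge 2$ the $k$-th entry of row $j$ of $\F'$ is $\|\X_j\|^{-1}\sum_{i=1}^b \YY_{ij}\,h_{ij,k}$, where $h_{ij,k}$ is independent of $\YY_{ij}$ (the latter depending only on the first feature coordinate $\alpha_{ij}$ and the noise $W_{ij}$). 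Hence, conditionally on $\X$, each row $\bs f'_j$ is a combination of independent standard Gaussians with unit-norm coefficients, so $\bs f'_j\mid\X\sim\mathcal N(0,\I_{p-1})$, a law that does not depend on $\X$. Consequently $\F'$ has i.i.d.\ $\mathcal N(0,1)$ entries and, crucially, is independent of $(\{\alpha_{ij}\},\{W_{ij}\})$, hence of both $\Q$ and $\t$.

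With this independence in hand, I would apply the standard distributional identity for white Wishart matrices: for any $\bs v$ independent of $\F'$,
\begin{align}
  \bs v^T(\F'\F'^T)^{-1}\bs v \overset{d}{=} \frac{\|\bs v\|^2}{V},\qquad V\sim\chi^2_{p-n/b},
\end{align}
with $V$ independent of $\bs v$. This follows by conditioning on $\bs v$, using the rotational invariance of the law of $\F'\F'^T$ under $M\mapsto QMQ^T$ for orthogonal $Q$ to reduce to the $(1,1)$ entry of the inverse Wishart, and identifying that entry as the reciprocal of the squared norm of the component of a row of $\F'$ orthogonal to the span of the remaining $n/b-1$ rows, which is $\chi^2_{(p-1)-(n/b-1)}=\chi^2_{p-n/b}$. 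Taking $\bs v=\Q$ and $\bs v=\t$ reduces the two targets to the ratios $\|\Q\|^2/V$ and $\|\t\|^2/V$ of independent factors.

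It then remains to compute the per-coordinate second moments and pass to the limit. Conditioning on $\X$ and using the posteriors $W_{ij}\mid\YY_{ij}\sim\mathcal N\!\left(\tfrac{\sigma^2}{\sigma^2+r^2}\YY_{ij},\tfrac{\sigma^2r^2}{\sigma^2+r^2}\right)$ and $\alpha_{ij}\mid\YY_{ij}\sim\mathcal N\!\left(\tfrac{\xi}{r}\YY_{ij},1-\xi\right)$, together with $\E\|\X_j\|^2=b(r^2+\sigma^2)$, gives $\E Q_j^2=\sigma^2\big(b-(b-1)\xi\big)$ and $\E t_j^2=1+(b-1)\xi$. Since the $Q_j$ (resp.\ $t_j$) are i.i.d., the law of large numbers yields $\tfrac bn\|\Q\|^2\to\sigma^2(b-(b-1)\xi)$ and $\tfrac bn\|\t\|^2\to 1+(b-1)\xi$, while $\tfrac bn V\to\gamma b-1$ by $\chi^2$ concentration; Slutsky's theorem then gives convergence in probability to the claimed limits. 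To upgrade to $L^1$, I would instead use the independence of $\|\bs v\|^2$ and $V$ with the inverse-$\chi^2$ moments $\E[1/V]=(p-n/b-2)^{-1}$ and $\E[1/V^2]=\big((p-n/b-2)(p-n/b-4)\big)^{-1}$ to verify directly that $\E[\bs v^T(\F'\F'^T)^{-1}\bs v]\to c$ and its second moment $\to c^2$; since $\E\|\bs v\|^4=(\E\|\bs v\|^2)^2+O(n/b)$ and the fluctuation is suppressed by the $(p-n/b)^{-2}$ factor, this gives $L^2$, hence $L^1$, convergence to the constant $c$.

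The step I expect to demand the most care is the decoupling in the first paragraph: arguing that $\F'$ is genuinely independent of $(\Q,\t)$ --- not merely conditionally independent given $\X$ --- which rests on the conditional law of $\F'$ given $\X$ being free of $\X$, and on tracking the normalization so that the two quadratic forms are preserved. The remaining ingredients are routine once one notes that $Q_j$ and $t_j$ have finite moments of all orders: by Cauchy--Schwarz $|Q_j|=|\langle\X_j,\W_j\rangle|/\|\X_j\|\le\|\W_j\|$ and $|t_j|\le\|\bs\alpha_j\|$, where $\bs\alpha_j$ collects the first feature coordinates of batch $j$, so all the moment manipulations above are justified.
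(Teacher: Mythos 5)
Your proposal is correct, and it reaches the two limits by a genuinely different device than the paper for the convergence step. The common part: both arguments normalize the rows by $\|\X_j\|^{-1}$ (which leaves the quadratic forms invariant), observe that conditionally on $\X$ each row of $\F'$ is $\mathcal N(0,\I_{p-1})$ with a law free of $\X$, hence that $\F'$ is i.i.d.\ standard Gaussian and independent of $(\Q,\t)$, and both ultimately rest on the per-coordinate second moments $\E Q_j^2=\sigma^2(b-(b-1)\xi)$ and $\E t_j^2=1+(b-1)\xi$ (your computation of the latter via the posterior of $\alpha_{ij}$ given $\YY_{ij}$ is correct and fills in a step the paper leaves implicit when it says the proof for $\t$ ``follows the same steps''). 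Where you diverge is in establishing concentration: the paper computes the mean of the quadratic form via $\E[(\tfrac{b}{n}\F'\F'^T)^{-1}]=\tfrac{1}{\gamma b-1-2b/n}\I$ and then kills the variance using the classical formula for $\Var(\bs{x}^T A\bs{x})$ of a conditionally Gaussian vector together with trace estimates on the entries of the inverse-Wishart second moment, whereas you invoke the exact distributional identity $\bs v^T(\F'\F'^T)^{-1}\bs v\overset{d}{=}\|\bs v\|^2/V$ with $V\sim\chi^2_{p-n/b}$ independent of $\bs v$, reducing everything to a ratio of independent scalars and finishing with the law of large numbers and inverse-$\chi^2$ moments. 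Your route is arguably cleaner and more self-contained (no quadratic-form variance formula, no off-diagonal inverse-Wishart estimates), at the cost of relying on the $\chi^2$ representation of $\bs a^T W^{-1}\bs a$, which is specific to white Wishart matrices; the paper's route generalizes more readily to non-identity covariances. The one point you rightly flag as delicate --- that $\F'$ is unconditionally, not merely conditionally, independent of $(\Q,\t)$ --- is handled correctly: the conditional law of $\F'$ given $\X$ does not depend on $\X$, and $(\Q,\t)$ is conditionally independent of $\F'$ given $\X$, which together give full independence. Your moment bounds $|Q_j|\le\|\W_j\|$ and $|t_j|\le\|\bs\alpha_j\|$ suffice to justify all the exchanges of limit and expectation. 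No gap.
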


\begin{proof}
We prove for $\Q^T\left(\F'\F'^T\right)^{-1}\Q$, the proof for $\t^T\left(\F'\F'^T\right)^{-1}\t$ follows the same steps. 
First note that normalizing $Q_j$ and $\bs{f}_j$ in Lemma~\ref{lem:norm_AS_conv} by $\|\X_j\|$ does not change the value of $\Q^T\left(\F'\F'^T\right)^{-1}\Q$. Therefore we prove the above for the normalized $\Q$ and $\F'$ as given in \eqref{eq:normalized_Q}.   
 Recall that $\F'$ is independent of $\X$ and therefore of $\Q$.  Then
    \begin{align}
   \frac{1}{n/b}\E\Q^T&\left(\frac{1}{n/b}\F'\F'^T\right)^{-1}\Q
   =  \frac{1}{n/b}\E\left[\Q^T\E\left(\frac{1}{n/b}\F'\F'^T\right)^{-1}\Q\right]
 \\&=  
 \frac{1}{\gamma b -1-\frac{2b}{n}}\cdot \frac{1}{n/b}\E\left[\Q^T\Q\right]
\\  & =  \frac{1}{\gamma b -1-\frac{2b}{n}}\cdot \sigma^2\frac{\sigma^2b+r^2}{\sigma^2+r^2}
    \\& \longrightarrow \frac{\sigma^2(\sigma^2b+r^2)}{(\gamma b-1) (\sigma^2+r^2)}\\
    &= \sigma^2\cdot \frac{b-(b-1)\xi}{\gamma b-1},
\end{align}
where we used the fact that $(\F'\F'^T)^{-1}$ is an inverse-Wishart matrix and therefore \cite{mardia1979multivariate}
\begin{align}
    \E\left[ \left(\frac{1}{n/b}\F'\F'^T \right)^{-1}\right] =\frac{1}{\gamma b -1-\frac{2b}{n}}\I.
\end{align}
Next, we show the convergence in probability, then because $\Q^T\left(\F'\F'^T\right)^{-1}\Q$ are none negative, $L^1$ convergence will follow.  
To that end, we will show that the variance of the variable $\Q^T(\F'\F'^T)^{-1}\Q$ decays, that is, \\$\Var\left(\Q^T(\F'\F'^T)^{-1}\Q\right)\rightarrow 0$.
Recall that $ Q_j|\X \sim \mathcal{N}\left(\frac{\sigma^2\|Y_j\|}{\sigma^2+r^2}, \, \frac{\sigma^2r^2}{\sigma^2+r^2}\right)$. Therefore:
\begin{align}
    \mu_2 &\triangleq \E\left[Q_i-\E\left[Q_i|\X\right]\Big|\X\right]^2 =\frac{\sigma^2r^2}{\sigma^2+r^2}
    \\ \mu_3 &\triangleq \E\left[Q_i-\E\left[Q_i|\X\right]\Big|\X\right]^3= 0,
    \\ \mu_3 &\triangleq \E\left[Q_i-\E\left[Q_i|\X\right]\Big|\X\right]^4 = 3\mu_2^2.
\end{align}
Then we have \cite{seber2003linear}
\begin{align}
    \Var\biggl(\frac{1}{n/b}&\Q^T\biggl(\frac{1}{n/b}\F'\F'^T\biggr)^{-1}\Q\Big| \X\biggr)
     =\frac{1}{n^2}\Big( 2\mu_2^2\trace\left(\E\left(\frac{1}{n/b}\F'\F'^T\right)^{-2}\right)
        \\&\qquad +4\mu_2\bs{\nu}^T\E\left(\frac{1}{n/b}\F'\F'^T\right)^{-2}\bs{\nu}\Big),
\end{align}
where $\bs{\nu}=\E\Q|\X$, and we used the fact that $\F'$ is independent of $\Q$. Since $(\frac{1}{n/b}\F'\F'^T)^{-1}$ is inverse Wishart we have that all the off-diagonal elements of $\E[(\frac{1}{n/b}\F'\F'^T)^{-2}]$ are $O(1/n)$ and the diagonal elements are $\Theta(1)$ \cite{mardia1979multivariate}. Moreover, the entries of $\E[\bs{\nu}\bs{\nu}^T]$ are all $\Theta(1)$ (and do not depend on $n$), therefore we get
\begin{align}
     \Var\left(\frac{1}{n/b}\Q^T\left(\frac{1}{n/b}\F'\F'^T\right)^{-1}\Q\right)& = O(1/n).
\end{align}
Convergence in probability then follows again from Chebyshev's inequality.
\end{proof}

We are now ready to prove the upper and lower bounds of the excess noise.
\begin{proof}[Proof of Lemma~\ref{lem:QFFQ}.]
Denote 
\begin{align}\label{eq:F_tag_t}
    \F= \left[\bs{t}\;\; \F'\right],
\end{align}
that is, $\bs{t}$ is the first column of $\F$ and $\F'$ is the matrix composed from the remaining $p-1$ columns of $\F$. 
Then, from Proposition~\ref{prop:Sherman_morison_ineq} we get
\begin{align}\label{eq:QFQ_QHQ}
    \Q^T\left(\bs{F}\bs{F}^T\right)^{-1}\Q=\Q^T\left(\F'\F'^T\right)^{-1}\Q-\Q^T\bs{H}\Q,
\end{align}
with 
\begin{align}
    \bs{H} =& \frac{1}{n/b+\bs{t}^T\left(\frac{1}{n/b}\F'\F'^T\right)^{-1}\bs{t}}\times
    \\ &\left(\frac{1}{n/b}\F'\F'^T\right)^{-1}\bs{t}\bs{t}^T\left(\frac{1}{n/b}\F'\F'^T\right)^{-1}.
\end{align}
From Lemma~\ref{lem:QFQ_converge} we get 
\begin{align}\label{eq:EQFQ}
 \E\Q^T\left(\F'\F'^T\right)^{-1}\Q \longrightarrow \sigma^2\cdot \frac{b-(b-1)\xi}{\gamma b-1},
\end{align}
which along with \eqref{eq:QFQ_QHQ} establishes the upper bound.
Next,
\begin{align}
    \E\left[\Q^T\bs{H}\Q\right] &=\E\left[\frac{\left(\bs{t}^T\left(\F'\F'^T\right)^{-1}\Q\right)^2}{1+\bs{t}^T\left(\F'\F'^T\right)^{-1}\bs{t}}\right] 
    \\& \hspace{-0.5cm}\leq \E\left[\Q^T\left(\F'\F'^T\right)^{-1}\Q\cdot\frac{\bs{t}^T\left(\F'\F'^T\right)^{-1}\t}{1+\bs{t}^T\left(\F'\F'^T\right)^{-1}\bs{t}}\right] 
\end{align}
From Lemma~\ref{lem:QFQ_converge} we get that $\bs{t}^T\left(\F'\F'^T\right)^{-1}\t$ and $\Q^T\left(\F'\F'^T\right)^{-1}\Q$ converges in $L^1$ to a constant limit, therefore
\begin{align}
\Q^T\left(\F'\F'^T\right)^{-1}&\Q\cdot\frac{\bs{t}^T\left(\F'\F'^T\right)^{-1}\t}{1+\bs{t}^T\left(\F'\F'^T\right)^{-1}\bs{t}}
\inP \sigma^2\cdot \frac{b-(b-1)\xi}{\gamma b-1}\cdot \frac{\frac{1+(b-1)\xi}{\gamma b-1}}{1+\frac{1+(b-1)}{\gamma b-1}}.
\end{align}
Moreover, 
\begin{align}   \label{eq:CS} 
\Q^T\left(\F'\F'^T\right)^{-1}&\Q\cdot\frac{\bs{t}^T\left(\F'\F'^T\right)^{-1}\bs{t}}{1+\bs{t}^T\left(\F'\F'^T\right)^{-1}\bs{t}}
 \leq \Q^T\left(\F'\F'^T\right)^{-1}\Q.
\end{align}
Since $\Q^T\left(\F'\F'^T\right)^{-1}\Q$ converges in $L^1$, the l.h.s of \eqref{eq:CS} also uniformly converges in $L^1$, and we get
\begin{align}
\E\biggl[\Q^T\left(\F'\F'^T\right)^{-1}&\Q\cdot\frac{\bs{t}^T\left(\F'\F'^T\right)^{-1}\t}{1+\bs{t}^T\left(\F'\F'^T\right)^{-1}\bs{t}}\biggr]
\to \sigma^2\cdot \frac{b-(b-1)\xi}{\gamma b-1}\cdot \frac{\frac{1+(b-1)\xi}{\gamma b-1}}{1+\frac{1+(b-1)}{\gamma b-1}},
\end{align}
which yields
\begin{align}\label{eq:EQHQ}
   \lim_{p\to\infty} \E\left[\Q^T\bs{H}\Q\right]\leq \sigma^2\cdot \frac{b-(b-1)\xi}{\gamma b-1}\cdot \frac{\frac{1+(b-1)\xi}{\gamma b-1}}{1+\frac{1+(b-1)}{\gamma b-1}}.
\end{align}
Plugging \eqref{eq:EQFQ} and \eqref{eq:EQHQ} back in \eqref{eq:QFQ_QHQ} we get 
\begin{align}\label{eq:EQFFQ}
  \lim_{p\to\infty}  \Q^T\left(\bs{F}\bs{F}^T\right)^{-1}\Q\geq \sigma^2\cdot \frac{b-(b-1)\xi}{\gamma b-1}\cdot C_{\gamma,\xi,b}.
\end{align}

\end{proof}

\bibliography{min_batch}
\bibliographystyle{ieeetr}

\end{document}